\pgfplotsset{compat=newest}
\crefname{hypothesis}{Hypothesis}{Hypotheses}
\newtheorem{assumption}{Assumption}
\newcommand{\mcl}{\mathcal}
\newcommand{\mbf}{\mathbf}
\newcommand{\mbb}{\mathbb}
\newcommand{\eps}{\epsilon}
\newcommand{\tH}{H'}
\newcommand{\tC}{C'}
\newcommand{\tPhi}{\Phi'}
\newcommand{\up}{\mbf u^+}
\newcommand{\um}{\mbf u^-}
\newcommand{\Ppp}{P^{\tiny{++}}}
\newcommand{\Pmm}{P^{\tiny{--}}}
\newcommand{\Ppm}{P^{\tiny{+-}}}
\newcommand{\Pmp}{P^{\tiny{-+}}}
\newcommand{\tchi}{\bar{\pmb \chi}}
\newcommand{\sgn}{\text{\rm{sgn}}}
\newcommand{\iidsim}{\stackrel{iid}{\sim}}
\newcommand{\msf}[1]{{\mathsf #1}}
\newcommand{\bu}{\mbf u}
\newcommand{\bv}{\mbf v}
\newcommand{\be}{\mbf e}
\newcommand{\ba}{\mbf a}
\newcommand{\bc}{\mbf c}
\newcommand{\tG}{\tilde{G}}
\newcommand{\tX}{\tilde{X}}
\newcommand{\tW}{\tilde{W}}
\newcommand{\tL}{\tilde{L}}
\newcommand{\tZ}{\tilde{Z}}
\newcommand{\ta}{\breve{a}}
\newcommand{\tba}{\breve{\ba}}
\newcommand{\tbeta}{\breve{\beta}}
\newcommand{\teta}{\breve{\eta}}
\newcommand{\tPsi}{\breve{\Psi}}
\newcommand{\tbb}{\breve{\mbf{b}}}
\newcommand{\tD}{\tilde{D}}
\newcommand{\asto}{\xrightarrow{\rm{a.s.}}}
\definecolor{darkred}{rgb}{.7,0,0}
\definecolor{darkgreen}{rgb}{.15,.55,0}
\definecolor{darkblue}{rgb}{0,0,0.7}
\DeclareMathOperator*{\argmin}{arg\,min}
\DeclareMathOperator*{\argmax}{arg\,max}
\newcommand{\as}[1]{{#1}}
\title{Consistency of  Semi-Supervised Learning\\ Algorithms on Graphs:
  Probit and One-Hot Methods
%  \thanks{Submitted to the editors December 2017.
%\funding{AMS is grateful to DARPA and ONR. DS is grateful to NSF.}}
}
\author{Franca Hoffmann\and
Bamdad Hosseini\and Zhi Ren \and
Andrew M. Stuart 
\thanks{Computing and Mathematical Sciences, Caltech, Pasadena, CA
  (\email{fkoh@caltech.edu},
\email{bamdadh@caltech.edu}, \email{zren@caltech.edu}, \email{astuart@caltech.edu}).}} 
\begin{document}
\date{today}
\maketitle

%% ------------------------------------------------------------------
%% ABSTRACT
%% ------------------------------------------------------------------
\begin{abstract}
Graph-based semi-supervised learning is the problem of propagating 
labels from a small number of labelled data points to a larger set of
unlabelled data. This paper is concerned with the consistency
of optimization-based techniques for such problems,
in the limit where the labels have small noise and the
underlying unlabelled data is well clustered.
We study graph-based
probit for binary classification, and a natural generalization of
this method to multi-class classification using one-hot encoding.
The resulting objective function to be optimized comprises
the sum of a quadratic form defined through a rational
function of the graph Laplacian, involving only the unlabelled
data, and a fidelity term involving only the labelled data. 
The consistency analysis sheds light on the choice of the rational function
defining the optimization.
\end{abstract}

\begin{keywords}
Semi-supervised learning, classification, consistency, graph Laplacian, probit, spectral analysis.
\end{keywords}

\begin{AMS}
  62H30, 68T10, 68Q87, 91C20.
\end{AMS}

%% ------------------------------------------------------------------
%% END HEADER
%% ------------------------------------------------------------------

\section{Introduction}
Semi-supervised learning (SSL) is the problem of labelling all the points in a dataset,
by leveraging correlations and geometric information in the data points, together with
explicit knowledge of a subset of noisily observed labels. The primary goal of this article is to 
analyze  the probit and one-hot methods for transductive SSL. 
\as{We elaborate conditions under which 
these methods consistently recover the correct labels
attached to the unlabelled dataset. We do this in an idealized setting in which the unlabelled data is 
approximately clustered, and there is an unobserved latent variable
which determines labels and which is observed in a small noise regime. We prove consistency in a limit in which the data becomes
more clustered and the label noise goes to zero.} 
The formulation and analysis demonstrates how ideas from unsupervised learning
and, in particular spectral clustering, can be used as prior information;
this prior information is enhanced, or sets up a competition with, labelled 
data. In so doing, our analysis also elucidates the role of parameter
choices made when setting up the balance between labelled and unlabelled data. 
Furthermore, we exhibit useful properties of the probit and one-hot methods, 
including a representer theorem for the classifier, and a natural dimension
reduction which follows from this theorem and is impactful in practice.

\subsection{Background And Literature Review}

We start by giving informal statements of the problem to be solved,
and  a brief literature review. 
Consider a set of nodes $Z = \{ 1, \cdots, N \}$
and an associated
set of {\it feature vectors}
$X = \{ x_1, x_2, \cdots, x_N\}$. Each feature
vector $x_j$ is assumed to be a point in $\mbb R^d$. 
$X$ may thus be viewed as a function $X: Z \mapsto \mbb R^d$ or 
as an element of $\mbb R^{d \times N}.$ We refer to $X$ as {\it unlabelled data}.
Suppose there exists 
a function $l: Z \mapsto \{1, 2, \cdots, M\}$ that assigns one of $M$ distinct labels
to each point in $Z$. That is, for every point $j \in Z$
the value $l(j) = m$ indicates that $j$ belongs to class $m$ or is 
{\it labelled} as $m$.
Throughout
this article we assume that every point in $Z$ belongs to one class only.

Now let $Z' \subseteq Z$ be a subset of the nodes with  $|Z'| = J \le N$ and define
$y: Z' \mapsto \{1, 2, \cdots, M\}$ to be a {\it noisily observed  label} of each point
in $Z'$. We refer to $y$ as {\it labelled data}. With this setup we may define the SSL problem.
%that is $\tilde{y}(j)= \tilde{S}( l(j))$ where $\tilde{S}$
%is a random mapping that accounts for possible errors in the observed labels.

\begin{problem}[Semi-Supervised Learning]\label{SSL-general-form}
  Suppose $Z, Z', X$ and $y$ are known. 
  Find $l: Z \mapsto \{1, 2, \cdots, M\}$. 
\end{problem}

In order to solve this problem, which is highly ill-posed, it is necessary
to introduce some form of regularity on the labels, guided by the correlations
in $X$ for example, and to make assumptions
about the errors in the labels provided. One approach, which
we study here, is to assume that the labels on $Z$ are defined through a 
latent variable $u:Z \mapsto \mbb R^M$, whose regularity is defined through
the unlabelled data $X$, and a function $S: \mbb R^M \mapsto
 \{1, 2, \cdots, M\}$. Specifically we assume that there is a ground truth
function $u^\dagger: Z \mapsto \mbb R^M$ for which 
\begin{equation}
  \label{generalized-probit-error-model}
  y(j) := S\bigl(u^\dagger(j)  + \eta(j)\bigr), \qquad j \in Z',
\end{equation}
where $\eta(j) \iidsim \psi$ and
$\psi$ is the Lebesgue density of a 
zero-mean random variable on $\mbb R^M$. 
We may now introduce the following relaxation of the SSL problem.

\begin{problem}[Relaxed Semi-Supervised Learning]\label{relaxed-SSL-general-form}
Suppose $Z, Z', X$ and $y$ are known, together with the function $S$ and the
density $\psi.$ Find $u: Z \mapsto \mbb R^ M$ and define 
$l=S\circ u: Z \mapsto \{1, 2, \cdots, M\}$.
\end{problem}

In Problem \ref{probit-SSL-general-form} below we will define a class of
optimization functionals for $u$, giving an explicit instantiation of 
Problem \ref{relaxed-SSL-general-form}, and focus on the resulting optimization 
problems in our analysis. Before doing so we give a literature review explaining the 
context for this optimization approach.

The consistency of classification methods in the setting of
supervised learning is well-developed; see \cite{tewari2007consistency}
for a literature review and results applying to both binary
and multi-class classification, as well as the preceding work
in \cite{steinwart2001influence,steinwart2005consistency,wu2006analysis}
which establishes the problem in the framework of Vapnik \cite{vapnik1998statistical}.
The paper \cite{xu2009robustness} discusses the robustness
of such supervised classification methods, allowing for a small
fraction of adversarially labelled data points.  
There has been some recent analysis of logistic regression, and the reader
may access the literature on this subject via the recent papers
\cite{candes2020phase,sur2018modern}. All of this work on
supervised classification focuses
on the large data/large number of features setting, 
and often starts from assumptions that the unlabelled data is linearly separated.
None of it leverages the power of graph-based techniques to extract
geometric information in large unlabelled data sets.
To make the connection to graph-based techniques we need to discuss
unsupervised graph-based learning
\cite{belkin2002laplacian,von2007tutorial}. This is a subject
that has seen significant analysis in relation to consistency. The papers
\cite{spielmat1996spectral, spielman2007spectral} perform a careful analysis
of the spectral gaps of graph Laplacians resulting from clustered data,
studying recursive methods for multi-class clustering. The paper
\cite{ng2002spectral} introduced a way of thinking about, and analyzing,
multi-class unsupervised learning based on perturbing a perfectly clustered case;
we will leverage similar ideas in our work on SSL.
The paper \cite{von2008consistency} introduced the idea of studying the
consistency of spectral clustering in the limit of large i.i.d. data
sets in which the graph Laplacian converges to a limiting integral
operator; and the work \cite{trillos2018error, trillos2016variational} has
taken this further by working with localizing weight functions  designed
so that the limit of the graph Laplacian is a differential operator.

SSL is a methodology which combines the methods of
unsupervised learning and of supervised classification.
According to the definition in \cite{kostopoulos2018semi}
``SSL can be categorized into two somewhat different settings, namely inductive and 
transductive learning $\dots$ inductive SSL attempts to predict the labels on unseen 
future data, while transductive SSL attempts to predict the labels on unlabeled instances 
taken from the training set.'' In this paper our focus is on transductive SSL.
Initial attempts to solve the SSL problem employed combinatorial algorithms \cite{blum2001learning},
based on an explicit mathematical formulation stemming from Problem \ref{SSL-general-form}. 
Zhu and collaborators
introduced a relaxation similar to Problem \ref{relaxed-SSL-general-form}, leading to the 
influential papers \cite{zhu2003semi,zhu2003combining}. Their approach is most easily
described in the binary case in which they assume $S:\mbb R \mapsto \mbb R$ is the identity 
function and the labels are given in the form $\pm 1.$ From a modeling viewpoint this 
approach is unnatural because the categorical data is assumed to also lie in the real-valued 
space of the latent variable.
Bertozzi and Flenner \cite{{bertozzi2012diffuse}} introduced an interesting relaxation of 
this assumption, by means of a Ginzburg-Landau penalty term which favours real-values
close to $\pm 1$ but does not enforce the categorical values $\pm 1$ exactly. 
The probit approach to classification, described in the classic text on Gaussian process
regression \cite{rasmussen2006gaussian}, does not make the unnatural modeling assumption
underlying Zhu's work; instead it is based on taking $S$ to be the sign function. However the basic
form of probit in \cite{rasmussen2006gaussian} does not use unlabelled data
to extend labels outside the labelled data set, but instead does so through regular
Gaussian process regression: inductive SSL.

The extension of the probit method to
graph-based transductive SSL is described in \cite{bertozzi2018uncertainty}, where both Bayesian and optimization-based formulations are described;
in that paper, \eqref{generalized-probit-error-model} 
is also generalized to the level set form
\begin{equation}
  \label{generalized-probit-error-model2}
  y(j) := S\bigl(u^\dagger(j)\bigr)  + \eta(j), \qquad j \in Z',
\end{equation}
and a Bayesian formulation of the Ginzburg-Landau relaxation of  \cite{bertozzi2012diffuse}
is introduced.
The close relationship between level set and probit formulations is
discussed in \cite{stuart-zeronoiseSSL}.
The work of Belkin \cite{belkin2004regularization, belkin2002laplacian, belkin2006manifold}
demonstrates how both Gaussian process regression and graph-based
SSL can be used simultaneously; in the sense of the definition in \cite{kostopoulos2018semi},
transductive and inductive SSL are combined. All of the approaches which followed from the work of Zhu are 
readily generalized from the binary case to the multi-class setting, using the
idea of one-hot encoding, explained in detail in
subsection~\ref{sec:31}, in which each label is identified with 
a standard unit basis vector in ${\mbb R}^M$.
%and $S$ assigns a unit vector (label) 
%to a latent variable in $\mbb R^M$ according to \bh{its  largest component}.
%\bhtodo{This description was very confusing and I shortened it. I'm still not
%sure if it is clear. Perhaps it is best to simply call it ``one-hot encoding'' and reference subsection~\ref{sec:31}?}

A large number of approaches to SSL have been developed in the literature
and a detailed discussion of all of them is outside the scope of this article.
We refer the reader to the  review articles  \cite{zhu2005semi}
and \cite{kostopoulos2018semi} for, respectively, the state-of-the-art in 2005
and a more recent appraisal of the field that categorizes various inductive 
and transductive approaches to SSL and semi-supervised regression.
The idea of regularization by graph Laplacians for SSL was developed in different
contexts such as manifold regularization
\cite{belkin2006manifold}, Tikhonov regularization \cite{belkin2004regularization}
and local learning regularization \cite{wu2007transductive}. 
However, while graph regularization methods are widely applied in practice
the rigorous analysis of their properties,
and in particular asymptotic consistency, 
is not well-developed within the context of SSL.
Indeed, to the best of our knowledge the consistency analysis of the
probit and one-hot methods has not been tackled  before.
SSL may be viewed as a method for boosting, refining or questioning
unsupervised graph-based learning, through labelling information; our
analysis sheds light on this process.

There has been other analysis of SSL methods, not concerning consistency.
In \cite{stuart-zeronoiseSSL} the authors studied the large data and zero
noise limits of the probit method. They derive a continuum
inverse problem using the methodology of \cite{trillos2018error, trillos2016variational}
that characterizes SSL when the number of
vertices of the graph and the number of observed labels is fixed, or goes to infinity
in a manner insuring a fixed fraction of labels. 
The authors also study the zero noise limit of
probit and level-set methods for SSL and show that both problems
approach the same limit as the noise variance goes to zero. 
In forthcoming papers \cite{HHOS1,HHOS2} we will build on this body of work
to study consistency of graph-based SSL in the
limit of large unlabelled data sets.

\subsection{Problem Setup And Preliminaries}\label{sec:problem-statement}
Our focus in this paper is on the analysis of algorithms built from
the introduction of  real-(vector)-valued latent functions, leading to precise
mathematical formulations of Problem \ref{relaxed-SSL-general-form}.
To make actionable algorithms we need to specify precisely how
the unlabelled data $X$ and the labelled data $y$ are used. The approach
we study here is to define the desired latent variable $u$ 
as the minimizer of a function comprised of two terms, one of which
enforces correlations and geometric information in the unlabelled data $X$, 
and the other which enforces consistency with
the label data $y$, on the assumption that they are related to $u$
as in \eqref{generalized-probit-error-model}.
To this end we view $X$ as a point cloud in $\mbb R^d$ and
associate a weight matrix $W =(w_{ij})$ to tuples $(x_i, x_j)$ in $X\times X$.
The weights $w_{ij}$, which are assumed to be non-negative, 
are chosen to measure affinities between $x_i$ and $x_j$.
Since similarity between data points is a symmetric relationship, we assume $w_{ij} = w_{ji}$ so that $W$ is a symmetric
matrix and define a proximity graph $G = \{ X, W\}$
with vertices $X$ and edge weights $W$. From $W$ we will construct a covariance
operator $C$ on spaces of functions $H=\{u:Z \mapsto \mbb R^ M\}$, using a graph Laplacian implied by $W$. 
We also define a misfit function $\Phi(\cdot\,; \cdot\,): H \times \{1, \cdots ,M\}^{J} \mapsto \mbb R$ which encodes
the assumption \eqref{generalized-probit-error-model} about the relationship between the labels and the
latent function. With these objects we then   
formulate the SSL problem as a regularized optimization problem.

\begin{problem}[Relaxed Semi-Supervised Learning As Optimization]\label{probit-SSL-general-form}
Suppose $Z, Z', X$ and $y$ are known, together with the function $S$, the covariance
operator $C$ and the misfit $\Phi$. Find the function $u^\ast$ defined by
  \begin{equation}\label{generic-optimization-SSL-minimizer}
    u^\ast = \argmin_{u \in H} \frac{1}{2} \langle u, C^{-1} u \rangle_{H} +
    \Phi(u; y).
  \end{equation}
\end{problem}
  
This optimization problem may be viewed as the MAP estimator associated
to the Bayesian inverse problem of finding the distribution of $u|y$
when the prior on $u$ is a Gaussian random measure on $H$ with covariance $C$
and $\Phi(u;y)$ is the negative log-likelihood of $y$ conditioned on $u$, i.e.
  \begin{equation}\label{likelihood-definition}
    \mathbb{P}(y |u) \propto \exp\bigl( - \Phi(u;y)\bigr), \qquad \text{assuming} \qquad
    {y}(j) = S\bigl(u(j) + \eta(j)\bigr).
  \end{equation}
We refer to $\Phi$ as the {\it likelihood potential}.

\subsection{Main Contributions}
The key question at the heart of this article is to identify conditions
under which the minimizer $u^\ast$ of Problem~\ref{probit-SSL-general-form}
correctly identifies the labels. To this end, we define the following
notion of consistency.

\begin{definition}[SSL asymptotic consistency]
  We say that Problem~\ref{probit-SSL-general-form} is asymptotically
  consistent if, for all $j\in Z$,
  \begin{equation*}
    S(u^\ast(j)) \asto S(u^\dagger(j)), \qquad\text{as} \qquad
    {\rm{std}} (\eta(j)) \downarrow 0,
  \end{equation*}
where $u^\dagger$ is the latent variable underlying the labelled
data \eqref{generalized-probit-error-model}.
\end{definition} 

In the above and throughout the rest of the
article $\asto$ denotes almost sure (a.s.) convergence
with respect to a common probability space on which the
measurement noise $\eta(j)$ are defined (see subsections~\ref{sec:binary-SSL-consistency}
and \ref{sec:one-hot-SSL-consistency} for a formal discussion of this mode of convergence).
We primarily focus on the probit and one-hot methods for SSL, corresponding to specific
choices of the function $S.$
As mentioned earlier probit is an optimization approach for binary classification that formulates
Problem~\ref{probit-SSL-general-form} with $M=2$. The one-hot method is a generalization of
 probit for multi-class classification when $M\ge 2$. We outline
these methods in detail in sections~\ref{sec:binary-classification}
and \ref{sec:multi-class-classification}.  
We show that probit and one-hot methods are asymptotically consistent
in the case where the graph $G$ is \as{nearly-disconnected} in the following sense.

\begin{definition}[\as{Nearly-disconnected} graph]\label{def:nearly-seperable-graph}
  A weighted graph $G=\{X, W\}$ is \as{nearly-disconnected} with $K$ clusters
  if there exist connected components
  $\tG_k= \{ \tX_k, \tW_k\}$ for $k \in \{1,\cdots, K\}$ so that the edges within
  each $\tG_k$ are $\mcl O(1)$,  but the edges between elements in different $\tG_k$ 
are $\mcl O(\eps)$ for a
  small parameter $\eps >0$. In other words, up to a reordering of the index set $Z$,
  the matrix $W$ is nearly block diagonal.
\end{definition}

Working in such a setting is a natural way of representing nearly clustered data,
and was exploited in the paper \cite{ng2002spectral} concerning unsupervised 
learning. The number of clusters $K$ is an inherent
geometric property of the unlabelled data $X$; determining a suitable choice of $K$ in practice can be challenging and depends on the scale one is interested in.
In the following informal statement of  our main  result we
assume that each component $\tG_k$ is associated with at least one pre-assigned label. The result shows that if $G$ is \as{nearly-disconnected} and the ground 
truth function $u^\dagger$ assigns the same label to all points within each 
component $\tG_k$ then the probit and one-hot methods are asymptotically 
consistent for an appropriate choice of matrix $C$ so long as
  at least one label is observed in each component $\tG_k$. Below, $L$ denotes the \emph{graph Laplacian}, a discrete diffusion operator acting on functions defined on the graph $G$, see section~\ref{sec:covariance-operator-definition} for a precise definition. 

\begin{theorem}[Consistency of probit and one-hot]\label{informal-asymptotic-consistency}
Suppose $G$ is a \as{nearly-disconnected} graph and let $L$ be a graph Laplacian
on $G$. Define the matrix $C = \tau^{2\alpha}(L + \tau^2I)^{-\alpha}$ with parameters
$\tau^2, \alpha >0$.  Assume  $S(u^\dagger)$ is constant on the components $\tG_k$ 
and at least one label is observed in each component $\tG_k$.
Then the probit and one-hot formulations are asymptotically consistent
for any sequence $(\eps, \tau, {\rm{std}} (\eta)) \downarrow 0$ 
along which $\eps = o(\tau^{2}).$
\end{theorem}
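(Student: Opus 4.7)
The plan is to reduce the minimization in \eqref{generic-optimization-SSL-minimizer} to a $K$-dimensional problem on the span of the lowest $K$ eigenvectors of $L$, decouple this reduced problem into $K$ per-cluster sub-problems, and then invoke the zero-noise limit of the fidelity within each cluster to identify the correct label. The first step is a spectral analysis of $L$: when $\eps = 0$ the graph is exactly block-decomposed, so $L$ has exactly $K$ zero eigenvalues with eigenvectors proportional to the cluster indicators $\mathbbm 1_{\tX_k}$, while $\lambda_{K+1}$ is bounded below independently of $\eps$. For $\eps > 0$, Weyl's inequality and the Davis--Kahan theorem give that the $K$ smallest eigenvalues of $L$ are $\mcl O(\eps)$ and that their eigenspace $V_K$ is $\mcl O(\eps)$-close in $\ell^2$ to $\text{span}\{\mathbbm 1_{\tX_k}\}_{k=1}^K$.

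Next, I would run a spectral truncation argument driven by $C$. The eigenvalues of $C=\tau^{2\alpha}(L+\tau^2 I)^{-\alpha}$ are $\tau^{2\alpha}/(\lambda_i(L)+\tau^2)^\alpha$: under $\eps=o(\tau^2)$ the cluster eigenvalues give eigenvalues of $C$ tending to $1$, while bulk eigenvalues give factors of order $(\tau^2/\lambda_{K+1})^\alpha \to 0$. Hence $\tfrac{1}{2}\langle u, C^{-1} u\rangle$ remains $\mcl O(1)$ on $V_K$ but diverges on $V_K^\perp$, so any sequence of minimizers $u^\ast$ whose objective is controlled by the value attained at the true cluster-constant function must satisfy $\|P_{V_K^\perp} u^\ast\| \to 0$. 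Writing $u^\ast = \sum_{k=1}^K c_k^\ast v_k + o(1)$, and using that the fidelity $\Phi(\,\cdot\,;y)$ is, via \eqref{likelihood-definition}, a sum over $j\in Z'$ of per-datum negative log-likelihoods, the reduced problem decouples into $K$ sub-problems, the $k$-th involving only the unknown $c_k$ and the observations $\{y(j)\}_{j\in Z'\cap\tX_k}$; the hypothesis that every $\tX_k$ contains at least one observed label guarantees each sub-problem is non-trivial.

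Finally, as $\text{std}(\eta)\downarrow 0$, almost surely $y(j)\to S(u^\dagger(j))=m_k^\dagger$ on $\tX_k$, and the $k$-th sub-problem's minimizer $c_k^\ast$ lies in the correct decision region: for probit ($S=\sgn$, $M=2$) this follows from the strict monotonicity and coercivity of the probit negative log-likelihood across the origin, and for one-hot ($M\ge 2$) from unimodality of the limiting negative log-likelihood inside the open cone $\{c\in\mbb R^M : S(c)=m_k^\dagger\}$. Transferring back via the perturbed eigenvectors $v_k$, which are bounded away from zero on $\tX_k$ uniformly for small $\eps$, yields $S(u^\ast(j))=S(u^\dagger(j))$ a.s.\ for every $j\in Z$. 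The principal obstacle will be calibrating the three simultaneous limits: the $\mcl O(\eps)$ eigenvector perturbation must be dominated by the effective spectral gap induced by $C$, which is precisely the role of the scaling $\eps=o(\tau^2)$, and the a.s.\ convergence of $y$ must be tied quantitatively to the coercivity of the fidelity along this joint limit. A secondary difficulty is the one-hot case, where the decision regions are open cones in $\mbb R^M$ and the scalar monotonicity argument available for probit has to be replaced by a direction-selecting argument inside the correct cone, using the assumed structure of the density $\psi$.
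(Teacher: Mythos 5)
Your strategy reaches the right conclusion but by a genuinely different route from the paper. You argue variationally: compare the objective at a cluster-constant test function, use the divergence of the eigenvalues of $C^{-1}$ on $V_K^{\perp}$ (which are of order $(\lambda_{K+1}/\tau^2)^{\alpha}\to\infty$) to force $\|P_{V_K^{\perp}}u^\ast\|\to 0$, and then solve a decoupled $K$-dimensional problem. The paper instead never compares energies: it proves a representer theorem (Proposition~\ref{binary-representer-theorem}, Corollary~\ref{p:binary-representer-theorem}) showing the minimizer is exactly $u^\ast=\sum_{j\in Z'}\ta_j\mbf c_{j}$ with $\ta_j$ solving a $J$-dimensional fixed-point system, and then controls the geometry of the columns $\mbf c_{j,\eps}$ of $C_{\tau,\eps}$ itself (Proposition~\ref{geometry-of-covaraince-functions}), which are $\mcl O(\eps/\tau^2+\tau^{2\alpha}+\eps)$-close to weighted cluster indicators. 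Working with $C$ rather than $C^{-1}$ is what lets the paper avoid any quantitative interplay between the eigenvector perturbation and the large eigenvalues of $C^{-1}$: small eigenvalues of $C$ simply contribute little to its columns. Your route, if completed, is arguably more transparent about why the low-frequency subspace dominates; the paper's route buys an exact finite-dimensional reduction to the labelled set and explicit sign information on the coefficients via the Euler--Lagrange equations (the sign of $F_j(u_j^\ast)$ equals $y(j)$), which is how it identifies the decision region without a monotonicity/unimodality analysis of the reduced objective.

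One step of your sketch would fail as literally written and needs repair. You propose to bound the minimizer's objective by "the value attained at the true cluster-constant function." But the exact cluster indicators are only $\mcl O(\eps)$-close to $V_K$, and the $C^{-1}$-energy of the residual is of order $\eps^2\cdot\tau^{-2\alpha}\sigma_{N}^{\alpha}$ (the residual sees the top of the spectrum of $C^{-1}$). Under the sole assumption $\eps=o(\tau^2)$ this is not bounded when $\alpha>2$: e.g.\ $\eps^2\tau^{-2\alpha}=(\eps/\tau^2)^2\tau^{4-2\alpha}$ can diverge. The fix is to take the test function inside $V_K$ itself, i.e.\ the projection $P_{V_K}\bar{\pmb\chi}_k$ of the indicator onto the span of the first $K$ eigenvectors of $L_\eps$; its $C^{-1}$-energy is at most $\lambda_{K,\eps}\le(1+\Xi\eps/\tau^2)^{\alpha}\to 1$, and by the $\mcl O(\eps)$ eigenvector perturbation it still has the correct sign structure on $\tX_k$, so the fidelity evaluated there remains bounded. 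With that replacement your concentration argument, the per-cluster decoupling, and the zero-noise sign/cone argument go through (you should also record the hypotheses the paper uses implicitly here: log-concavity of $\psi$ so that the labels become a.s.\ exact, and $|u^\dagger_j|$ bounded away from the decision boundary on $Z'$).
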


\begin{remark}
\as{Conceptually the parameter $\epsilon$ should be thought of as an 
inherent measure of how clustered the unlabelled data is; in this
paper we consider a specific set-up in which $\epsilon$ is defined 
as a measure of the size of edge weights between clusters. 
We also connect the labelled and unlabelled data 
via a model involving an unobserved latent variable, perturbed by 
noise $\eta$. Our consistency results are proven in the setting in
which $\epsilon$ and ${\rm{std}} (\eta))$ both tend to zero. This
is a strong assumption which, whilst allowing a precise theory,
may be difficult to apply directly in practice.
We believe that similar consistency results will hold under
different modeling assumptions which characterize clustering and
label noise in more general ways.  
Furthermore our consistency results demonstrate the importance of
choosing the hyperparameter $\tau$ in a data-dependent fashion.
Small $\epsilon$ induces a spectral gap in $L$ and for this
to translate into a spectral gap in $C$ we require $\tau$ to be
small too. However we also require $\eps = o(\tau^{2})$ so that the
number of eigenvalues of $C$ which are at, or close to, $1$ is 
the same as the number of clusters in the data. 
We also give theory and numerical evidence
showing that when $\eps = \Theta(\tau^{2})$ consistency may be lost.
Our theoretical results are asymptotic in nature and therefore cannot
apply directly to any one given data set. However our analysis
provides insights
into both algorithmic parameters choices, and algorithmic performance, 
in practical non-asymptotic set-ups. Indeed the papers 
\cite{bertozzi2012diffuse,bertozzi2018uncertainty} demonstrate
the use of optimization methodologies of the type introduced
here in practical non-asymptotic set-ups for real data problems, 
and the papers 
\cite{bertozzi2018uncertainty,chen2018robust,qiao2019uncertainty}
demonstrate analogous set-ups for related Bayesian approaches. 
An important conclusion of the theory and numerical experiments is that
careful choice of parameter $\tau$ is crucial for effective SSL.
The take-home message here is that the use of
hierarchical Bayesian methods, which tune $\tau$ automatically to the data,
can be beneficial; as demonstrated in practical experiments in \cite{chen2018robust}.}
\end{remark}

Formal statement and proof of the preceding main theorem
is given in Theorem~\ref{multiple-observation-consistency} 
(together with Corollaries~\ref{probit-asymptotic-consistency-single-observation} and 
\ref{probit-asymptotic-consistency-multiple-observations})
for the probit method and in Theorem~\ref{one-hot-consistency-multiple-observation}
(together with Corollary~\ref{one-hot-asymptotic-consistency-multiple-observations})
for the one-hot method.

As a secondary result, accompanying the preceding theorem,
we identify a natural dimension reduction
for probit and one-hot optimization problems. More precisely, we show that
finding $u^\ast \in H=\{u: Z \mapsto \mbb R^M\}$ is equivalent to
a similar optimization problem for a function $b^\ast \in \tH=\{b: Z' \mapsto \mbb R^M\}$.
Thus we can reduce the size of the optimization problems from
$N \times M$ to $J \times M$. This result, which is a discrete representer theorem,
has significant practical consequences when $J \ll N$.

\begin{theorem}[Dimension reduction for probit and one-hot]\label{thm:main2}
  The problem of finding $u^\ast$ is equivalent to an optimization problem of the form
  \begin{equation*}
    b^\ast  := \argmin_{b \in  \tH} \frac{1}{2} \langle b, (\tC)^{-1} b \rangle_{\tH}
    +  \tPhi( b; y),
  \end{equation*}
  where $\tC$ is a submatrix of $C$ after restriction of rows and columns to $Z'$,
and $\tPhi$ is defined from $\Phi$.
\end{theorem}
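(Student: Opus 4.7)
The plan is to profile out the unlabelled nodes from the quadratic part of the objective, reducing the $(N\times M)$-dimensional minimization to a $(J\times M)$-dimensional one. The crucial input is that, by the likelihood model \eqref{likelihood-definition}, the misfit $\Phi(u;y)$ depends on $u$ only through its restriction to $Z'$: this is true for both probit and one-hot because $y$ is observed only on $Z'$ and the noise model couples $y(j)$ to $u(j)$ alone. Consequently, setting $\tPhi(b;y):=\Phi(\hat u;y)$ for any extension $\hat u\in H$ with $\hat u|_{Z'}=b$ gives a well-defined functional on $\tH$, and only the quadratic regulariser has to be manipulated.

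First, I would partition $Z = Z'\cup Z''$ with $Z''=Z\setminus Z'$, and correspondingly split $u=(\up,\um)$ with $\up=u|_{Z'}\in\tH$, $\um=u|_{Z''}$. Blocking the symmetric positive definite matrix $C^{-1}$ conformably,
\[
C^{-1} \;=\; \begin{pmatrix} \Ppp & \Ppm \\ \Pmp & \Pmm \end{pmatrix},
\]
the objective becomes
\[
\tfrac{1}{2}\langle \up,\Ppp\up\rangle + \langle \up,\Ppm\um\rangle + \tfrac{1}{2}\langle \um,\Pmm\um\rangle + \tPhi(\up;y).
\]
Since $\tPhi$ does not see $\um$, the minimization factorizes: for each fixed $b\in\tH$, minimize the quadratic in $\um$ alone, then minimize the resulting profile functional over $b$. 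The inner stationarity condition $\Pmm\um+\Pmp b=0$ yields $\um_\ast(b)=-\Pmm^{-1}\Pmp b$, with $\Pmm$ invertible because it is a principal submatrix of the positive definite matrix $C^{-1}$.

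Substituting and using symmetry $(\Pmp)^{\!\top}=\Ppm$, the profile quadratic equals
\[
\tfrac12\bigl\langle b, (\Ppp - \Ppm\Pmm^{-1}\Pmp)\,b\bigr\rangle_{\tH}.
\]
The standard Schur complement identity for the inverse of a block matrix gives $\Ppp - \Ppm\Pmm^{-1}\Pmp = (\tC)^{-1}$, where $\tC$ is the $Z'\times Z'$ principal submatrix of $C$. This yields precisely the reduced problem as stated, and the full minimizer on $Z$ is reconstructed from $b^\ast$ via the explicit formula $\um_\ast(b^\ast)=-\Pmm^{-1}\Pmp b^\ast$; this establishes the equivalence in both directions.

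The argument is essentially a Gaussian-conditioning / representer-theorem manipulation, so there is no serious conceptual obstacle; the main care is bookkeeping. In the multi-class setting $u$ takes values in $\mbb R^M$ while the partition of degrees of freedom is only along $Z$, so one must verify that the Schur complement computation respects the $\mbb R^M$ tensor factor. In the typical case where $C$ has the form $C_L\otimes I_M$, coming from a scalar graph operator applied componentwise, the block identity reduces to the scalar case and $\tC=\tC_L\otimes I_M$; more generally, the computation above is purely linear-algebraic and uses nothing about the $\mbb R^M$ structure. Finally, the argument never invokes the specific form of $S$ or of $\Phi$ — only that the likelihood localizes to $Z'$ — so the representer theorem applies uniformly to probit and one-hot.
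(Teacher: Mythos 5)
Your proposal is correct. It does, however, take a genuinely different route from the paper's formal proof. The paper proves Theorem~\ref{thm:main2} (in Corollary~\ref{p:binary-representer-theorem} and Proposition~\ref{one-hot-dimension-reduction}) by first deriving the Euler--Lagrange equation $C_\tau^{-1}\mbf u^\ast=\sum_{j\in Z'}F_j(u_j^\ast)\mbf e_j$, multiplying through by $C_\tau$ to obtain the representer expansion $\mbf u^\ast=\sum_{j\in Z'}\ta_j\mbf c_j$ in the columns of $C_\tau$, and then reading off the reduced nonlinear system for $\mbf b=\tC_\tau\mbf a$; your argument instead eliminates $\um$ directly from the objective by iterated minimization and invokes the block-inversion identity $\Ppp-\Ppm(\Pmm)^{-1}\Pmp=(\tC)^{-1}$. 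The paper is aware of your route --- it is sketched verbatim in the remark on ``Variable Elimination And Gaussian Process Regression'', including the Schur-complement observation --- but does not use it as the proof. Your version is more elementary and slightly more general: it needs only that $\Phi$ is a function of $u|_{Z'}$ and that $C^{-1}$ is symmetric positive definite (so $\Pmm$ is invertible), and in particular it does not require differentiability of $\Phi$, whereas the EL route does. What the paper's route buys in exchange is the explicit expansion of $\mbf u^\ast$ in the representers $\mbf c_j=C_\tau\mbf e_j$, which is the object actually used in the consistency analysis (via Propositions~\ref{perfectly-separated-c-j} and \ref{geometry-of-covaraince-functions} on the geometry of the $\mbf c_j$); your reconstruction formula $\um=-(\Pmm)^{-1}\Pmp\,\up$ is equivalent but would need one further linear-algebra step to be put in that column form. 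Your handling of the multi-class case is also right: the one-hot quadratic $\frac12\langle C_\tau^{-1},U^TU\rangle_F=\frac12\sum_{m=1}^M\langle \mbf u_{m\cdot},C_\tau^{-1}\mbf u_{m\cdot}\rangle$ acts componentwise in $m$, so the scalar Schur-complement computation applies row by row, consistent with the $C\otimes I_M$ structure you describe.
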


\as{A formal statement and proof of this theorem is presented in Corollary~\ref{p:binary-representer-theorem}
  for probit and Proposition~\ref{one-hot-dimension-reduction} for the one-hot method.
  These results also provide identities that relate $b^\ast$ to $u^\ast$ and vice versa. More precisely,
   pointwise values of the functions $b^\ast$ and 
   $u^\ast$ coincide on the labelled set $Z'$. Conversely, $u^\ast$ can be viewed as a smooth extension
 of $b^\ast$ from the labelled set $Z'$ to the entire index set $Z$.}

Finally we perform numerical experiments to illustrate the
behavior of probit and one-hot methods beyond the theoretical setting.   
In particular we demonstrate that when  $\eps = \Theta(\tau^2)$ these methods 
are not always consistent. An interesting observation we make is a sharp 
phase transition in the accuracy of both methods. More precisely, we observe a curve in the $(\eps/\tau^2,\alpha)$-plane  across which the probit and one-hot methods transition rapidly
from being consistent into inconsistent solutions based on majority label
  propagation, i.e.,  labelling all points in the dataset according to the label that is observed most often (see Figures~\ref{num-exp-probit-consistency} and
   \ref{num-exp-one-hot-accuracy-multiple-observations-in-clusters}). 
Intuitively this happens because, for larger values of $\eps/\tau^2$, it is
  cheaper to minimize the quadratic regularization term in the optimization problem of Theorem~\ref{thm:main2} than  to minimize  the misfit term $\Phi$.

% \subsection{Outline And Notation}

\subsection{Outline}
Section~\ref{sec:binary-classification} is devoted to analysis of the probit method
where $M=2$.  The problem is formulated as inference for a latent
real-valued function on the nodes of a graph, with the sign determining the assignation
of a binary label. An optimization approach is
employed in which the graph Laplacian constructed from the unlabelled data
is used for regularization, and a
generic zero-mean log-concave label measurement noise is assumed; this results in a
convex data misfit term.  
We study the properties of this optimization problem, showing that the related
optimization functional is convex. We prove a
representer theorem and then study asymptotic consistency of the method
in Corollary \ref{probit-asymptotic-consistency-single-observation},
Theorem~\ref{multiple-observation-consistency} and 
Corollary~\ref{probit-asymptotic-consistency-multiple-observations}, the
precise statements of Theorem~\ref{informal-asymptotic-consistency}
in the probit case.

Section~\ref{sec:multi-class-classification} has the same structure as section~\ref{sec:binary-classification} but focuses on the multi-class setting (i.e., $M \ge 2$) and
employs the one-hot method to link a real-vector-valued latent variable to the labels. 
The key results here are Corollary \ref{one-hot-asymptotic-consistency-single-observation},
Theorem~\ref{one-hot-consistency-multiple-observation} and Corollary~\ref{one-hot-asymptotic-consistency-multiple-observations}, the precise versions of
 Theorem~\ref{informal-asymptotic-consistency} in case of the one-hot method.
  Section~\ref{sec:NE} contains numerical experiments confirming the
  key theoretical results from the two preceding sections, and illustrates the
  behavior of probit and one-hot methods beyond the theoretical setting.
In section \ref{sec:conc} we summarize and discuss future work.

\subsection{Notation}

Throughout we use $Z$ to denote the nodes of a graph carrying a pre-assigned unlabelled data    point at each node, and $Z'$ the subset of nodes which also carry a label.
\as{We use $G_0 = \{ X, W_0\}$ to denote a disconnected graph with $K$ disconnected subgraphs (clusters) $\tG_k = \{ \tX_k, \tW_k\}$ for $k=1, \dots, K$. The $\tX_k$ are
  a subset of the points in $X$ with indices $\tZ_k \subset Z$ while $\tW_k$ are submatrices of $W_0$. We also use $\tZ_k'$ to denote the
  subset of labelled points within $\tZ_k$. Subsequently we denote the graph Laplacian matrices of the subgraphs $\tG_k$ by $\tL_k$.
  We also introduce a nearly-disconnected graph $G_\eps = \{ X, W_\eps\}$ with the weight matrix $W_\eps$ that is considered
  to be a perturbation of $W_0$ and use $L_\eps$ to denote the graph Laplacian on this nearly-disconnected graph. These concepts are
introduced and discussed in subsection~\ref{sec:covariance-perturbation} and used extensively in the rest of the article.}
  
We use
$u$ to denote real-(vector)-valued functions on $Z$ which are acted upon by a
nonlinear classifier to assign labels.  
We use $|\cdot|$ to denote the cardinality of a set; $\langle \cdot\,,\cdot\,\rangle, \|\cdot\|$
denote the Euclidean inner-product and norm unless stated otherwise.
We employ the standard $\Theta$, $\mcl O$ and $o$ notations as in \cite{cormen2009introduction}:
given positive functions $f(s), g(s)$, we  write
\begin{itemize}
\item  $f(s) = \Theta(g(s))$ if there 
exist constants $c_1, c_2, s_0>0$ so that $$0 \le c_1 g(s) \le f(s) \le c_2 g(s)\qquad \forall\, s \in(0, s_0]\,,$$
\item  $f(s) = \mcl O(g(s))$ if there exists $c, s_0> 0$ so that 
$$0 \le f(s) \le c g(s)\qquad \forall\,s \in(0, s_0]\,,$$
\item \as{$f(s) = o(g(s))$} if for any 
constant $c >0$ there exists $s_0(c) >0$ so that $$0 \le f(s) < c g(s)\qquad \forall\,s \in (0, s_0(c)]\,.$$
\end{itemize}
%given positive functions $f(s), g(s)$ we  write $f(s) = \Theta(g(s))$ if there 
%exist constants $c_1, c_2, s_0>0$ so that $0 \le c_1 g(s) \le f(s) \le c_2 g(s)$
%for all $s \in(0, s_0]$. We write $f(s) = \mcl O(g(s))$ if there exists $c, s_0> 0$ so that 
%$0 \le f(s) \le c g(s)$
%for all $s \in(0, s_0]$. Finally $f(\eps) = o(g(\eps))$ implies that for any 
%constant $c >0$ there exists $s_0(c) >0$ so that $0 \le f(s) < c g(s)$ for all $s \in (0, s_0(c)]$.

\section{Binary Classification: The Probit Method}\label{sec:binary-classification}

In subsection \ref{sec:21} we set up the probit methodology, noting that the
binary classification problem ($M=2$) can be formulated using a latent variable function
which is $\mbb R^{M-1}-$valued rather than  $\mbb R^{M}-$valued. In
subsection \ref{ssec:22} we study the likelihood contribution to the optimization
problem, resulting from the labelled data,
and in subsection \ref{sec:covariance-operator-definition} 
the quadratic regularization resulting from the unlabelled data.
In subsection \ref{sec:properties-binary-proibit-minimizer} we
study the probit minimization problem, formulating the results via a discrete representer
theorem, and in subsection \ref{sec:covariance-perturbation} we study the
properties of the representers via the properties of the eigenstructure of
the covariance, exploiting the \as{nearly-disconnected} graph structure.
subsection \ref{sec:binary-SSL-consistency} concludes the analysis of the probit method,
studying consistency in some detail.

\subsection{Set-Up}
\label{sec:21}

We start with the case of binary classification where the nodes $Z$ belong to only two
classes. For simplicity we assume that $l(j) \in \{ -1, +1 \}$ for all $j \in Z$
rather than taking $l(j) \in \{1, 2\}$.
This assumption is at odds with our notation in subsection~\ref{sec:problem-statement}
but allows for a simpler formulation of Problem~\ref{probit-SSL-general-form}.
 Since the classes
are identified with the integers $+1$ and $-1$ a natural choice for
the classifier function $S$ is the sign function:
\begin{equation}
  \label{S-sign-func}
  S: \mbb R \mapsto \{-1, +1\}, \qquad
  S(t) = \sgn(t) := \left\{
    \begin{aligned}
      &+1 ,\quad \text{ if } t  \ge 0, \\
      & -1, \quad \text{ if } t < 0.
    \end{aligned}\right.
\end{equation}
With the above choice for $S$ we can take the latent variable $u$
to be a real valued function on $Z$, i.e., $u: Z \mapsto \mbb{R}$.
We can then naturally  identify the function $u$ with a vector $\mbf u \in \mbb{R}^N$
where $\mbf u = ( u_1, u_2, \cdots, u_N )^T$ and $u_j = u(j)$ for $j \in Z$.
This allows to view Problems~\ref{relaxed-SSL-general-form} and \ref{probit-SSL-general-form} as
the inverse problem of
finding a vector $\mbf u^\ast$ in $\mbb R^N$.
In the remainder of this section we will utilize this vector notation
for convenience.

\subsection{The Probit Likelihood}
\label{ssec:22}
Let us begin by 
deriving the likelihood potential $\Phi(u;y)$ for the probit method.
Let $S$ be as in \eqref{S-sign-func}
and recall \eqref{generalized-probit-error-model}, then
\begin{equation*}
  {y}(j) = \sgn(u_j + \eta_{j}), \qquad \eta_{j} \iidsim \psi, \quad j \in Z,
\end{equation*}
wherein we have identified the noise $\eta$ with a vector 
$\pmb \eta =(\eta_1, \dots, \eta_N)^T \in \mbb R^N$.
Suppose  $\psi$ is a symmetric  probability density function on $\mbb R$ and denote the
the cumulative distribution function (CDF) of $\psi$ by
$\Psi$. Then,
\begin{equation*}
  \mbb P (y(j) = +1 | u_j) = \mbb P  ( -u_j \le \eta_j ) = \mbb P ( -u_j y(j) \le \eta_j)
   = \Psi(u_j y(j)).
 \end{equation*}
 For more details on this calculation, see similar arguments for the multi-class case in section~\ref{ssec:32}.
 Similarly,
\begin{equation*}
  \mbb P (y(j) = -1 | u_j) = \mbb P  ( -u_j > \eta_j ) = \mbb P ( u_j  y(j) > \eta_j)
   = \Psi(u_j  y(j)).
 \end{equation*}
 From \eqref{likelihood-definition} it follows that the {\it probit likelihood} potential
 $\Phi(u;y)$ has the form
 \begin{equation}
   \label{binary-probit-likelihood}
   \Phi(u;y) = -\sum_{j\in Z'} \log \Psi(u_jy(j)).
 \end{equation}

 \subsection{Quadratic Regularization Via Graph Laplacians (Binary Case)}
 \label{sec:covariance-operator-definition}
 Let us now formulate a quadratic regularization term for the probit method.
Recall our encoding of the nodes $Z$ and their similarities
via a weighted graph $G = \{ X, W\}$ with vertices at $x_j$ and edge weights $w_{ij}= w_{ji}$
for $i,j \in Z$. We denote by $d_i$ the degree of each node $i \in Z$ as
\begin{equation*}
  d_i := \sum_{j\in Z} w_{ij},
\end{equation*}
and further define the diagonal matrix $D := {\rm diag} (d_i) \in \mbb R^{N \times N}$. Finally, given constants
$p, q \in \mbb R$ we define
the graph Laplacian operator on $G$
\begin{equation}
  \label{graph-Laplacian}
  L := D^{-p} ( D - W) D^{-q} \in \mbb R^{N\times N}.
\end{equation}
Different choices of $p$ and $q$ result in different normalizations of the
graph Laplacian, see \cite{Belkin2006ConvergenceOL, chung1997spectral, CoifmanLafon2006, stuart-zeronoiseSSL, trillos2018error, trillos2016variational, shimalik2000, slepcev2019analysis, von2007tutorial, von2008consistency} and the references therein \as{as well as \cite[Sec.~5]{HHOS1} where a detailed discussion
  around various weightings of graph Laplacians and their connection to a family of elliptic
operators is laid out.}  For example, $p = q=0$ leads to the usual {\it unnormalized} graph Laplacian,
when $ p = q = 1/2$ we obtain the {\it symmetric normalized} graph Laplacian, 
and
$p = 1$ and $q = 0$ gives the {\it random walk} graph Laplacian.
%A number of other choices for $p$ and $q$ appear in the literature. 
%In \cite{}, general choices of $p$ and $q$ are investigated in the context of diffusion maps. 
Different normalizations of the graph Laplacian have been used for spectral clustering in the literature, but a thorough understanding of the advantages and disadvantages of certain parameter choices is still lacking, see \cite{von2007tutorial}.
%%%%%%%%%%%%%%%%%%%%%%%%%%%%
%Throughout the
%article we will not make any assumptions regarding the value of $p,q$ other than
%taking them to be fixed parameters and our results remain valid for any choice of $p,q$.
Throughout we enforce $p=q$ in order to make $L$ symmetric with respect to the
Euclidean inner-product, making no other assumptions regarding the value of $p,q$; however our results can be generalized to $p \ne q$
by using appropriate $D-$weighted inner-products.
For $p=q$, we can then write for any vector $\mbf x\in \mbb{R}^N$,
\begin{equation}\label{xLx} 
    \langle \mbf x,L \mbf x \rangle 
    =\frac12 \sum_{i,j=1}^N w_{ij} \left|\frac{\mbf x_i}{d_i^p} - \frac{\mbf x_j}{d_j^p} \right|^2\,.
\end{equation}
Given a graph Laplacian $L$ and parameters $\alpha, \tau^2 > 0$ we define a family
of covariance operators
\begin{equation}
  \label{covariance-matrix}
  C_{\tau} = \tau^{2\alpha}(L + \tau^2 I)^{-\alpha} \in \mbb R^{N\times N},
\end{equation}
where $I \in \mbb R^{N \times N}$ denotes the identity matrix. We then use this covariance
matrix to define the quadratic regularization term in Problem~\ref{probit-SSL-general-form}.
To this end note that in the binary case we may identify $H=\mbb R^N$; we make this
identification in what follows in this section, 
and $\langle \cdot\,,\cdot\, \rangle$ then denotes
the standard Euclidean inner-product.

\begin{remark}
  We use the term covariance operator to refer to the matrix $C_{\tau}$ following the
  connection between optimization problems of the form  \eqref{generic-optimization-SSL-minimizer}
  and MAP estimators within the Bayesian formulation of probit given in \cite{bertozzi2018uncertainty}.
  In the Bayesian perspective  $C_{\tau}$ is the covariance operator of a Gaussian prior
  measure on $\bu$,  and $\bu^\ast$ coincides with the 
  MAP estimator of $\bu^\dagger$.
\as{We note that the covariance $C_{\tau}$ may be viewed as a form of
discrete Mat{\'e}rn covariance, in the framework of \cite{Rue}.
The scaling of $C_{\tau}$ that we adopt ensures that the spectrum of $C_{\tau}$ lies
in $[0,1]$ and hence controls the total variance of samples $u$ from the
prior: $\mathbb{E}^{N(0,C_{\tau})} \|u\|^2 \approx K$, where $K$ is the number of clusters in the disconnected or
nearly-disconnected graph setting. \as{Further study of the connections between $C_\tau$ and Mat{\'e}rn
  kernels is outside the scope of this article and is postponed  to our companion papers \cite{HHOS2,HHOS1}
  where the continuum limits of graph Laplacian and covariance matrices such as $C_{\tau}$
  are studied when the elements of $X$ are drawn i.i.d. 
at random from a probability measure, and $N \to \infty,$
building on \cite{trillos2016variational}.} }
\end{remark}

\subsection{Properties Of The Probit Minimizer}\label{sec:properties-binary-proibit-minimizer}
With the likelihood $\Phi$ and covariance matrix $C_\tau$ identified we
can now  discuss properties of the {\it probit functional}
\begin{equation}
  \label{probit-functional}
  {\msf J}(\mbf u) := \frac{1}{2} \langle \mbf u, C_\tau^{-1} \mbf u \rangle + \Phi(\mbf u; y),
  \qquad \mbf u \in \mbb R^N.
\end{equation}
\begin{remark}
In the following we will study the problem of minimizing ${\msf J}$. \as{We 
highlight the fact that related optimization problems for objective
functions of the form 
\begin{equation}
  \label{probit-functional_2}
  {\msf J}(\mbf u) := \frac{1}{2} \langle \mbf u, L \mbf u \rangle + \Upsilon(\mbf u; y),
  \qquad \mbf u \in E
\end{equation}
have been defined and studied in 
\cite{bertozzi2012diffuse,bertozzi2018uncertainty, zhu2003semi}, although
asymptotic consistency has not been investigated there.
In order to give these methods a Bayesian interpretation we need to
define a covariance $C=L^{-1}$, noting that $L$ is invertible on the
set $E=\{\mbf u \in  \mbb R^N: \langle \mbf u,  D_0^p \mbf 1 \rangle=0\}$, 
where  $\mbf 1 \in \mbb R^N$ denotes the vector of ones. 
$L$ is invertible on $E$ because $D_0^p \mbf 1$ spans
the null-space of $L$ when the graph is pathwise connected.
Introduction of $C_{\tau}$ with $\tau>0$ not only circumvents the need to
work on $E$ but also allows for consistent prior modeling of the situation
in which multiple clusters have the same prior label. Furthermore
the parameter $\alpha$ is needed when the large data limit $N \to
\infty$ is considered; see \cite{HHOS1}.}
\end{remark}

Our first task is to prove existence and uniqueness of the minimizers of $\msf J$
by proving it is strictly convex.
The following proposition follows directly from
  \cite[Thm.~1]{bagnoli2005log} and
states that the CDF of a log-concave probability distribution function (PDF) is also log-concave.

\begin{proposition}[Convexity of the likelihood potential $\Phi$]\label{convexity-of-Phi}
  Let $\psi$ be a continuously differentiable, symmetric and strictly log-concave PDF with
  full support on $\mbb R$.
  Then $\Psi$  is also strictly log-concave and so $\Phi (\cdot; y):
  \mbb R^N \mapsto \mbb R $ is strictly convex.
\end{proposition}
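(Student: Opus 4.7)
The plan is to chain three ingredients: (i) strict log-concavity of $\psi$ is inherited by the CDF $\Psi$, which is essentially the content of \cite[Thm.~1]{bagnoli2005log}; (ii) strict convexity of $-\log\Psi$ is preserved under composition with the affine maps $u\mapsto y(j)u_j$ since $|y(j)|=1$; and (iii) a sum of strictly convex functions in distinct coordinates is strictly convex in those coordinates.

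First I would invoke \cite[Thm.~1]{bagnoli2005log} to obtain log-concavity of $\Psi$ on $\mbb R$ and then upgrade this to strict log-concavity. Since by hypothesis $\psi\in C^1$ is strictly positive on $\mbb R$, $\Psi$ is smooth and strictly increasing, so
\begin{equation*}
(\log \Psi)''(t) \;=\; \frac{\psi'(t)\Psi(t) - \psi(t)^2}{\Psi(t)^2}.
\end{equation*}
Strict log-concavity of $\psi$ implies that $\psi'/\psi = (\log \psi)'$ is strictly decreasing; combining this with the identity $\Psi(t) = \int_{-\infty}^t \psi(s)\,ds$ and a Cauchy--Schwarz-type estimate (which is exactly the mechanism behind the Bagnoli--Bergstrom proof) shows that $(\log \Psi)''(t) < 0$ for every $t \in \mbb R$, i.e., $\Psi$ is strictly log-concave. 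An equivalent route is to note that $\log\psi$ being strictly concave gives $\psi$ strictly log-concave and to apply the Pr\'ekopa--Leindler inequality to the convolution $\Psi(t) = (\psi \ast \mathbbm{1}_{(-\infty,0]})(-t)$, which preserves strict log-concavity on the interior of the support.

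Next, for each $j \in Z'$, the summand $u \mapsto -\log \Psi(y(j)\, u_j)$ is the composition of the strictly convex function $-\log \Psi$ with the affine map $u \mapsto y(j) u_j$; since $y(j)^2 = 1$, a one-variable chain-rule computation shows this composition is strictly convex in $u_j$ and constant in the remaining coordinates. Summing over $j \in Z'$ yields strict convexity of $\Phi(\cdot; y)$ jointly in the block of coordinates $(u_j)_{j \in Z'}$. When $Z' = Z$ this gives strict convexity on all of $\mbb R^N$; in general $\Phi$ is only convex (not strictly so) on the full space since it does not depend on $u_j$ for $j \in Z \setminus Z'$. The strictness on those remaining coordinates is supplied later by the positive-definite quadratic term $\tfrac12\langle u, C_\tau^{-1} u\rangle$ in the probit functional \eqref{probit-functional}, which is the setting in which the proposition is ultimately used.

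The main obstacle I anticipate is the upgrade from log-concavity to \emph{strict} log-concavity of $\Psi$: \cite[Thm.~1]{bagnoli2005log} is most often cited in its non-strict form, so one must either reopen its proof or give a short independent argument using the continuous differentiability and full-support assumptions on $\psi$ to rule out any flat piece of $\log\Psi$. Once this upgrade is secured, the remaining steps reduce to routine convexity calculus.
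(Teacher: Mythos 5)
Your proposal is correct and follows essentially the same route as the paper, whose proof consists of citing \cite[Thm.~1]{bagnoli2005log} for (strict) log-concavity of $\Psi$ and then summing. Your caveat is well taken and in fact sharpens the statement: since $\Phi$ is constant in the coordinates $u_j$ for $j \in Z\setminus Z'$, it is strictly convex only in the block $(u_j)_{j\in Z'}$ and merely convex on all of $\mbb R^N$ when $Z' \subsetneq Z$ --- and the paper's subsequent use of the proposition (in the proof of the representer theorem) indeed invokes only convexity of $\Phi$ and obtains strict convexity of ${\msf J}$ from the positive-definite quadratic term, exactly as you describe.
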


Convexity of the quadratic regularization term in \eqref{probit-functional}
follows directly from Lemma~\ref{C-is-strict-positive-definite}
that establishes that the matrix $C_\tau$ is strictly positive-definite whenever $\tau^2, \alpha >0$. 
With the convexity of both terms in the definition of  $\msf J$ established
we can now characterize its minimizer.

 \begin{proposition}[Representer theorem for the probit functional]\label{binary-representer-theorem}
  Let $G = \{ X, W\}$ be a weighted graph  and
  let $\psi$ be a PDF that is continuously differentiable, symmetric and  strictly log-concave  with full support on $\mbb R$. Suppose the likelihood potential $\Phi$ is
  given by \eqref{binary-probit-likelihood} and the
  matrix $C_\tau$ is given by \eqref{covariance-matrix} with parameters $\tau^2, \alpha >0$.
  Then the following hold.
  \begin{enumerate}[(i)]
  \item The probit functional $\msf J$ has a unique minimizer
    $\mbf u^\ast \in \mbb R^N$.

  \item The minimizer $\mbf u^\ast$ satisfies the Euler-Lagrange (EL) equations
    \begin{equation}\label{binary-EL}
      C_\tau^{-1} \mbf u^\ast = \sum_{j \in Z'} F_j( u^\ast_j) \mbf e_j,
    \end{equation}
    \as{where  $\mbf e_j$ is the $j$-th standard
    coordinate vector in $\mbb R^N$ and
    \begin{equation}\label{def-F-j}
    F_j(s):= \frac{y(j) \psi(s y(j))}{\Psi(s y(j))}.
  \end{equation}
}

  \item The minimizer $\mbf u^\ast$ has a sparse representation 
    \begin{equation}\label{binary-minimizer-expansion}
      \mbf u^\ast = \sum_{j \in Z'} \ta_j \mbf c_j,
    \end{equation}
    where $ C_\tau \mbf e_j =: \mbf c_j = (c_{1j},\cdots, c_{Nj})^T$ are a subset of the column space of $C_\tau=(c_{ij})_{i,j\in Z}$ and $\ta_j \in \mbb R.$

    \item The vector $\mbf u^\ast$ defined in \eqref{binary-minimizer-expansion}
    solves \eqref{binary-EL} if and only if the coefficients  $\ta_j$ satisfy the
    non-linear system of equations
    \begin{equation*}
     \ta_j =  F_j\left( \sum_{k \in Z'}  \ta_k c_{jk} \right), \qquad \forall j \in Z'.
    \end{equation*}
\end{enumerate}

\end{proposition}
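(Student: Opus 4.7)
The plan is to establish the four claims in sequence, using strict convexity for part (i), stationarity for part (ii), and then purely algebraic manipulations for parts (iii) and (iv). The structural observation driving the entire argument is that the likelihood potential $\Phi(\cdot;y)$ defined by \eqref{binary-probit-likelihood} depends on $\mbf u$ only through the coordinates indexed by $Z'$, which is exactly what forces the representer expansion in (iii) to be supported on the columns $\mbf c_j$ with $j \in Z'$.

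For part (i), I would argue that $\msf J$ is strictly convex and coercive on $\mbb R^N$, so the minimizer exists and is unique by standard convex-optimization results. Lemma~\ref{C-is-strict-positive-definite} supplies $C_\tau \succ 0$, hence $C_\tau^{-1} \succ 0$, so the quadratic regularization term is strictly convex with bounded sublevel sets. Proposition~\ref{convexity-of-Phi} supplies convexity of each summand $-\log \Psi(u_j y(j))$, and each such term is non-negative since $\Psi \in (0,1)$; thus $\Phi \ge 0$, and the sum inherits strict convexity from the quadratic part and coercivity from the same source. For part (ii), I would compute $\nabla \msf J(\mbf u^\ast) = 0$ directly: the quadratic term contributes $C_\tau^{-1}\mbf u^\ast$, while differentiation of $-\log \Psi(u_j y(j))$ with respect to $u_j$ yields $-y(j)\psi(u_j y(j))/\Psi(u_j y(j)) = -F_j(u_j^\ast)$ for $j \in Z'$ and zero for $j \notin Z'$, producing exactly \eqref{binary-EL}.

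Part (iii) is then immediate upon applying $C_\tau$ to both sides of \eqref{binary-EL} and setting $\ta_j := F_j(u_j^\ast)$, which yields $\mbf u^\ast = \sum_{j \in Z'} \ta_j \mbf c_j$. For part (iv), substituting this expansion into the identity $\ta_j = F_j(u_j^\ast)$ together with the pointwise identity $u_j^\ast = \sum_{k \in Z'} \ta_k c_{jk}$ produces the nonlinear system; the converse direction follows by defining $\mbf u^\ast$ via \eqref{binary-minimizer-expansion}, computing $C_\tau^{-1} \mbf u^\ast = \sum_{j \in Z'} \ta_j \mbf e_j$ directly from the definition of $\mbf c_j$, and recognizing that the nonlinear system then reads as a coordinatewise check of \eqref{binary-EL}. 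I do not anticipate a serious obstacle: the non-trivial analytic input is strict log-concavity of $\Psi$ (giving strict convexity of $\Phi$ through Proposition~\ref{convexity-of-Phi}) together with $C_\tau \succ 0$ (Lemma~\ref{C-is-strict-positive-definite}); everything else is bookkeeping specific to $\Phi$ depending on $\mbf u$ only through its restriction to $Z'$.
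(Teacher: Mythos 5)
Your proposal is correct and follows essentially the same route as the paper's proof: strict convexity of the quadratic term plus convexity and non-negativity of $\Phi$ for (i), first-order optimality for (ii), and multiplication by $C_\tau$ (respectively $C_\tau^{-1}$) for the algebraic equivalences in (iii)--(iv). Your explicit appeal to coercivity of the quadratic term is a welcome small tightening, since the paper's existence argument (bounded below plus strict convexity) implicitly relies on it.
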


\begin{proof}
 (i)  Since $\Psi$ is the CDF of a random variable on $\mbb R$ with
  full support then $\Psi(s) \in (0,1)$. Thus, $-\log \Psi \ge 0$ and so
  $\Phi$ is bounded from below. Furthermore, $\Phi$ is  convex following
  Proposition~\ref{convexity-of-Phi}.
  On the other hand, the matrix $C_\tau^{-1}$ is strictly positive definite
  following Lemma~\ref{C-is-strict-positive-definite} and so
  the quadratic term $\frac{1}{2} \langle \mbf w, C_\tau^{-1} \mbf w \rangle$ is strictly
  convex and positive. Thus, since the functional $\msf J$ is bounded from below and is the
  sum of strictly convex functions then $\msf J$ is strictly
  convex and has a unique minimizer. 

  (ii) Since $\psi$ is $C^1(\mbb R)$ the CDF $\Psi$ is $C^2(\mbb R)$ and 
  $\psi/\Psi$ is $C^1(\mbb R)$ 
  and locally bounded since $\psi$ has full support. Then $\msf J: \mbb R^N \mapsto \mbb R$
  is differentiable and the minimizer $\mbf u^\ast$ satisfies
  the first order optimality condition $\nabla \msf J(\mbf u^\ast) = 0$.
  The statement now follows by directly computing the gradient of $\msf J(\bu)$
  with respect to $\bu$.

  (iii--iv) Multiply \eqref{binary-EL} by $C_\tau$ to get
  \begin{equation*}
    \mbf u^\ast = \sum_{j\in Z'} F_j(u^\ast_j) C_\tau \mbf e_j = \sum_{j\in Z'} \ta_j \mbf c_j,
  \end{equation*}
  where we set $\ta_j = F_j(u^\ast_j)$ for $j \in Z'$. Now substitute the expansion of $\mbf u^\ast$
  into the definition of $\ta_j$ to get
  \begin{equation}\label{binary-aj-nonlinear-system}
    \ta_j = F_j\left( \left(\sum_{k \in Z'} \ta_k \mbf c_k \right)_j \right),
  \end{equation} This establishes the ``only if'' statement in (iv).
  In order to establish the converse, suppose the $\ta_j$ satisfy 
  \eqref{binary-aj-nonlinear-system}. Multiply this equation by $\mbf c_k$ and
  sum over $j \in Z'$ to get
  \begin{equation*}
    \sum_{j \in Z'} \ta_j \mbf c_j = \sum_{j \in Z'}
    F_j\left( \sum_{k \in Z'}  \ta_j c_{jk} \right) \mbf c_j.
  \end{equation*}
  now define $\mbf u^\ast = \sum_{j \in Z'}  \ta_j \mbf c_j$ to get  
  \begin{equation*}
    \mbf u^\ast = \sum_{j \in Z'} F_j( u^\ast_j) \mbf c_j.
  \end{equation*}
  The claim follows by multiplying
  this equation by $C_\tau^{-1}$. 
\end{proof}

\begin{remark}[Connection to kernel regression]
We note that Proposition~\ref{binary-representer-theorem} is closely related to the 
representer theorem in Gaussian process and kernel regression \cite[Sec.~6.2]{rasmussen2006gaussian}. Similar result to ours can be 
found in \cite[Thm.~1]{smola1998kernel} and  \cite{scholkopf2002learning}.
\end{remark}

Part (iv) of Proposition~\ref{binary-representer-theorem} suggests  that
the problem of minimizing $\msf J$ is analogous to a low-dimensional
optimization problem. To this end we now define
a one-to-one reordering
\begin{equation}\label{reordering-index-map}
  \pi: Z' \mapsto \{ 1, 2, \cdots, J\}, \qquad  \pi^{-1}: \{1, 2, \cdots, J\} \mapsto
  Z',
\end{equation}
that allows us to 
associate the coefficients $\{ \ta_j \}_{j \in Z'}$
with a vector $\mbf a = (a_1, \cdots a_J)^T\in \mbb R^J$ via 
\begin{equation*}
   a_{\pi(j)} = \ta_j, \qquad j \in Z',
\end{equation*}
and define submatrix $\tC \in \mbb R^{J\times J}$ by
the identity
\begin{equation}\label{sub-matrix-C-tilde}
  (\tC_\tau)_{\pi(i), \pi(j)} = c_{ij}', \qquad i,j \in Z'.
\end{equation}
That is, $\tC_\tau$ is the matrix $C_\tau$ with the rows and columns
of the indices in  $Z \setminus Z'$  removed.
Finally, we define $\mbf b := \tC_\tau \mbf a$. We then have the following
natural dimension reduction for the  probit optimization problem.

\begin{corollary}[Probit dimension reduction]\label{probit-dimension-reduction} \label{p:binary-representer-theorem}Suppose the conditions of Proposition~\ref{binary-representer-theorem} are
  satisfied. Then the following hold.  
  \begin{enumerate}[(i)]
  \item The problem of finding the minimizer
    $\mbf u^\ast \in \mbb R^N$ of the functional $\msf J$ is
    equivalent to the problem of finding the vector
    $\mbf b^\ast \in \mbb R^J$ that solves
  \begin{equation}\label{dimension-reduced-binary-EL}
    (\tC_\tau)^{-1} \mbf b^\ast = F'(\mbf b^\ast),
  \end{equation}
  where the map $F': \mbb R^J \mapsto \mbb R^J$ is defined as
  \begin{equation*}
    F'(\bv) = (f_1(v_1), \cdots, f_J(v_J))^T, \qquad 
    f_k(v_k):= F_{\pi^{-1}(k)}( v_k)
  \end{equation*}
\as{and $F_j$ are defined in \eqref{def-F-j}.}
  \item Moreover, the vector $\mbf b^\ast$ solves the optimization problem
  \begin{equation*}
   \mbf b^\ast = \argmin_{\mbf v \in \mbb R^J} \: {\msf J'}(\mbf v),
 \end{equation*}
 where
 \begin{equation*}
{\msf J'}(\mbf b) := \frac{1}{2} \langle \mbf b, (\tC_\tau)^{-1} \mbf b \rangle +
\tPhi(\mbf b;y),
\end{equation*}
and
$$\tPhi(\mbf b;y)=-\sum_{j=1}^J \log \Psi\Big( b_{j} y(\pi^{-1}(j)) \Big).$$
\item The two solutions $\mbf b^\ast \in \mbb R^J$ and
  $\mbf u^\ast \in \mbb R^N$ satisfy the relationship
  \begin{equation}\label{u-ast-from-b-ast}
     \mbf u^\ast = \sum_{j \in Z'} \left((\tC_\tau)^{-1} \mbf b^\ast\right)_{\pi(j)} \mbf c_j,
  \end{equation}
  and
  \begin{equation*}
    b^\ast_k = u^\ast_{\pi^{-1}(k)}, \qquad k = \{1, 2, \cdots, J\}.
  \end{equation*}
\end{enumerate}
\end{corollary}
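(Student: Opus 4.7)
The plan is to leverage Proposition~\ref{binary-representer-theorem} directly, using the sparse expansion of $\mbf u^\ast$ to rewrite the Euler--Lagrange system as a low-dimensional fixed-point equation, and then recognize that equation as the EL condition of $\msf J'$.

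First, I would unpack the indexing. Let $\mbf a \in \mbb R^J$ be the vector whose $\pi(j)$-th entry is $\ta_j$ for $j\in Z'$, and set $\mbf b := \tC_\tau \mbf a$. The sparse expansion $\mbf u^\ast = \sum_{j\in Z'} \ta_j \mbf c_j$ from \eqref{binary-minimizer-expansion}, when evaluated at any index $i \in Z'$, gives $u^\ast_i = \sum_{j\in Z'} \ta_j c_{ij} = (\tC_\tau \mbf a)_{\pi(i)} = b_{\pi(i)}$ by the definition \eqref{sub-matrix-C-tilde} of $\tC_\tau$. This immediately gives the pointwise identity $b^\ast_k = u^\ast_{\pi^{-1}(k)}$ claimed in (iii). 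Substituting this into the nonlinear system \eqref{binary-aj-nonlinear-system} from Proposition~\ref{binary-representer-theorem}(iv) and using $\ta_j = a_{\pi(j)} = ((\tC_\tau)^{-1}\mbf b)_{\pi(j)}$, the equation becomes $((\tC_\tau)^{-1}\mbf b)_k = F_{\pi^{-1}(k)}(b_k) = f_k(b_k)$ for every $k\in\{1,\dots,J\}$, which is exactly \eqref{dimension-reduced-binary-EL}.

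Next, for (ii), I would show that $\msf J'$ is strictly convex and that $(\tC_\tau)^{-1}\mbf b = F'(\mbf b)$ is precisely its first-order optimality condition. Strict convexity of the quadratic term follows because $\tC_\tau$ is the principal submatrix of the strictly positive-definite matrix $C_\tau$ (Lemma~\ref{C-is-strict-positive-definite}) indexed by $\pi^{-1}$, hence itself strictly positive definite; the misfit $\tPhi$ inherits strict convexity from Proposition~\ref{convexity-of-Phi} applied coordinate-wise. A direct gradient calculation, analogous to that used in part (ii) of Proposition~\ref{binary-representer-theorem}, gives $\nabla \msf J'(\mbf b) = (\tC_\tau)^{-1}\mbf b - F'(\mbf b)$, so the unique critical point $\mbf b^\ast$ is the minimizer and coincides with the unique solution of \eqref{dimension-reduced-binary-EL}.

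Finally, for the reconstruction formula \eqref{u-ast-from-b-ast}, I would simply unpack $\ta_j = a_{\pi(j)} = ((\tC_\tau)^{-1}\mbf b^\ast)_{\pi(j)}$ and substitute into \eqref{binary-minimizer-expansion}. The converse direction of the equivalence in (i) follows from Proposition~\ref{binary-representer-theorem}(iv): given a solution $\mbf b^\ast$ of \eqref{dimension-reduced-binary-EL}, the coefficients $\ta_j := ((\tC_\tau)^{-1}\mbf b^\ast)_{\pi(j)}$ satisfy the nonlinear system \eqref{binary-aj-nonlinear-system}, and hence the vector defined by \eqref{u-ast-from-b-ast} is the unique minimizer of $\msf J$. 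The only mild obstacle I anticipate is bookkeeping around the reordering map $\pi$ and verifying invertibility of $\tC_\tau$; both are minor since principal submatrices of strictly positive-definite matrices are strictly positive definite, and the rest is essentially a transcription of the representer identities into the reduced coordinate system.
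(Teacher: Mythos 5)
Your proposal is correct and follows essentially the same route as the paper, whose proof of this corollary is simply the remark that it ``follows from Proposition~\ref{binary-representer-theorem} and direct computations''; your write-up supplies exactly those computations (the identification $u^\ast_i = b_{\pi(i)}$ via \eqref{sub-matrix-C-tilde}, the rewriting of \eqref{binary-aj-nonlinear-system} as \eqref{dimension-reduced-binary-EL}, the gradient identity $\nabla \msf J'(\mbf b) = (\tC_\tau)^{-1}\mbf b - F'(\mbf b)$, and the positive-definiteness of the principal submatrix $\tC_\tau$). No gaps.
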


\begin{proof}
This result follows from Proposition~\ref{binary-representer-theorem} and direct computations.
\end{proof}

\begin{remark}[Variable Elimination And  Gaussian Process Regression]
There is a simple explanation for the finite dimensional representer theorem 
which underlies Proposition~\ref{binary-representer-theorem} and 
Proposition~\ref{p:binary-representer-theorem}. If we re-order the variables
in $\mbf u$ into components $\up$ in $Z'$ and $\um$ in $Z\setminus Z'$, and re-order
the components of the precision matrix $P=C_\tau^{-1}$ 
then setting the gradient of $\msf J$ to $0$ in this 
re-ordered set of variables gives equations of the form
\begin{equation*}
\left(
\begin{array}{cc}
\Ppp & \Ppm \\
\Pmp & \Pmm
\end{array}
\right) \left(
\begin{array}{c}
\up\\
\um
\end{array}
\right)=\left(
\begin{array}{c}
{\mbf g}(\up)\\
0
\end{array}
\right).
\end{equation*}
This follows from the fact that $\Phi(\mbf u)$ does not depend on $\um;$ the term 
${\mbf g}(\up)$ results from the gradient of $\Phi(\mbf u)$ with respect to $\up.$
From this re-ordering of the equations several things are apparent: (i) the bottom
row provides a linear mapping from $\up$ to $\um$ since $\Pmm$ is invertible
whenever $C_\tau$ is; (ii) using this linear mapping it is possible
to obtain a closed nonlinear equation for $\up$ only, from the top row,
and the linear part of this equation has a  Schur complement form; 
(iii) the unknown $\um$ is recovered by solving
a linear equation; (iv) the nonlinear  equation for 
$\up$ may be viewed as the equation for a critical point of a functional of $\up$ only.  
These four points are encapsulated in the previous two theorems, where they are
rendered in a form familiar from Gaussian process regression and 
representer theorems \cite{rasmussen2006gaussian}.
Ideas analogous to those described in this remark underlie all representer theorems,
but are not so transparent in the infinite-dimensional setting. 
We present the results in the abstract form of
Proposition~\ref{binary-representer-theorem} and   
Proposition~\ref{p:binary-representer-theorem} to highlight the formal analogies 
with our companion papers \cite{HHOS2,HHOS1} which study the limiting continuum optimization 
problems that arise in the $N \to \infty$ limit.
\end{remark}

The expansion \eqref{binary-minimizer-expansion} indicates that the minimizer
$\mbf{ u}^\ast \in \text {span } \{ \mbf c_j \}_{j \in Z'}$; we refer to the
$\mbf c_j$ as representers. In other words,
the minimizer $\mbf u^\ast$ belongs to a subspace of the column space of the
covariance matrix $C_\tau$. Recall that by definition $C_\tau =\tau^{2\alpha} (L + \tau^2 I)^{-\alpha}$
and so we can compute the vectors $\mbf c_j$ by solving the 
linear equations,
\begin{equation}\label{c-j-identity}
  ( L + \tau^2I)^\alpha \mbf c_j = \tau^{2\alpha}\mbf e_j, \qquad j \in Z',
\end{equation}
that cost $J$ linear solves involving an $N \times N$ matrix.
With the $\{\mbf c_j\}_{j \in Z'}$ at hand we can extract the matrix $\tC_\tau$
and solve the nonlinear system \eqref{dimension-reduced-binary-EL} for $\mbf b^\ast$
and in turn compute the solution $\mbf u^\ast$ by \eqref{u-ast-from-b-ast};
note that $F_j$ is defined in 
Proposition \ref{binary-representer-theorem}(ii).  
Then, whenever $J \ll N$ solving  the dimension reduced problem
\eqref{dimension-reduced-binary-EL} is typically much faster than 
solving the
full nonlinear system \eqref{binary-EL}. We present evidence of  this
improved efficiency in subsection~\ref{sec:num-exp-one-hot}
in the context of the one-hot method for multi-class classification.

We now proceed to exploit the geometry in the problem dictated by
the \as{nearly-disconnected} graph structure that forms the basic assumption
underlining our consistency analysis.
It is clear that the geometry of $\mbf u^\ast$ is dictated by the
geometry of the vectors $\mbf c_j$. It is then natural for us to
try to identify the geometry of the $\mbf c_j$.  By Lemma~\ref{c-j-eigen-expansion} 
we have the expansion 
  \begin{equation}
    \label{c-j-eigenvector-expansion}
    \mbf c_j =  \sum_{k=1}^N \frac{1}{\lambda_k} (\pmb \phi_k)_j \pmb \phi_k,
  \end{equation}
  where $\{ \lambda_k, \pmb \phi_k\}$ are the eigenpairs of $C_\tau^{-1}$.
Therefore, 
by analyzing the spectrum of $C_\tau$ we can identify the geometry of the vectors $\mbf c_j$
which together with the vector $\mbf b^\ast$
allow us to identify the minimizer $\mbf u^\ast$ and eventually prove
consistency of the probit minimizer.
Spectral analysis of $C_\tau$  is outlined in Appendix~\ref{appendix:perturbation-theory}, and
in the next subsection
we present the main propositions and assumptions that are used
in the remainder of the article.

\subsection{Perturbation Theory For Covariance Operators}
\label{sec:covariance-perturbation}
Consider
a disconnected graph $G_0 = \{ X, W_0\}$ consisting of $K < N$ connected  components $\tG_k$, i.e.,  the
subgraphs $\tG_k$ are connected  but there exist no edges between
pairs of components $\tG_i$, $\tG_k$ with $i \ne k.$
  Without loss of generality assume the nodes in $Z$ are ordered so that
  $Z = \{ \tZ_1, \tZ_2, \cdots, \tZ_K\}$ and the  $\tZ_k$ collect the 
  nodes in the $k$-th subgraph $\tG_k$. We refer to the $\tZ_k$ as {\it clusters}.
  Thus, the weight matrix $W_0 = (w^{(0)}_{ij})$ satisfies
  \begin{equation}\label{separate-clusters-weight-matrix}
    \left\{\begin{aligned}
      &w^{(0)}_{ij} \ge 0 \qquad \text{ if } i \neq j \text{ and } i, j \in \tZ_k \text{ for some } k,\\
      &w^{(0)}_{ij} = 0 \qquad \text{ if } i = j \text{ or }  i \in \tZ_k, j \in \tZ_\ell, \text{ for } k \neq \ell.
  \end{aligned}\right.
\end{equation}
We will show that  when $\tau$ is small the geometry of $\mbf c_j$ is dominated by
indicator functions of the clusters $\tZ_k$. First, let us collect some assumptions
on the graph $G_0$.

\begin{assumption}\label{assumptions-on-G-0}
The  graph $G_0= \{ X, W_0 \}$ satisfies the following conditions with $K < N$:
  \begin{enumerate}[(a)]
  \item The weight matrix $W_0$ satisfies \eqref{separate-clusters-weight-matrix}
    and has a block diagonal form $W_0 = \text{diag}(\tilde{W}_1, \cdots,
    \tilde{W}_K)$ where $\tW_k$ are the weight matrices 
of the subgraphs $\tG_k$. 
% \item Let $L_0$ be a graph Laplacian on $G_0$. With a possible
%   reordering of $Z$ we can write
%   $L_0 = \text{diag}(\tilde L_1, \cdots, \tilde L_K)$
%   where $\tilde L_k$ are themselves graph Laplacian matrices.
 
\item Let $\tL_k$ be the
  graph Laplacian matrices of the subgraphs $\tG_k$, i.e.,
  \begin{equation*}
  \tL_k:= \tD_k^{-p}(\tD_k - \tW_k) \tD_k^{-p}
\end{equation*}
with $\tD_k$ denoting the degree matrix of $\tW_k$.
  There exists a uniform constant $\theta >0$ so that for $j=1,\cdots, K$ the
 submatrices $\tL_j$ have a uniform spectral gap, i.e.,
 \begin{equation}\label{uniform-spectral-gap-sub-matrices}
     \langle \mbf x, \tL_j \mbf x \rangle \ge \theta \langle \mbf x, \mbf x \rangle,
   \end{equation}
   for all vectors $\mbf x \in \mbb R^{N_k}$ and $\mbf x \bot \tD_k^{p} \mbf 1_k$
   where $\mbf 1_k\in \mbb R^{N_k}$ are vectors of ones.
  \end{enumerate}
\end{assumption}

Note that the preceding assumption means that the clusters $\tG_k$
are pathwise connected.
%there exists a path of positive edge weights between any $i$ and $j$ in the same cluster.
Further, condition~\eqref{uniform-spectral-gap-sub-matrices} excludes the possibility of outliers, that is, nodes of zero degree. This means that the inner product as expressed in \eqref{xLx} is well defined.
In the following and throughout the remainder of the article we introduce the following notation:
We define the graph Laplacian in terms of the weight matrix $W_0$ 
and the associated degree matrix $D_0 := {\rm diag}(d_i^{(0)})$:
\begin{equation}
  \label{graph-Laplacian_x}
L_0 := D_0^{-p} ( D_0 - W_0) D_0^{-p} \in \mbb R^{N\times N}.
\end{equation}
Let $Z = \{ \tZ_1, \tZ_2, \cdots, \tZ_K\}$ and define the weighted
indicator functions 
  \begin{equation}\label{chi-k-definition}
(\pmb \chi_k)_j := \left\{
    \begin{aligned}
      %&(D_0)_{jj}^{p}, \qquad \text{ if } j \in Z_k,\\
      & \left(d^{(0)}_j\right)^p, \qquad &&\text{ if } j \in \tZ_k,\\
      &0, \qquad &&\text{ otherwise}, 
  \end{aligned}\right.
  \qquad \text{and} \qquad \bar{\pmb \chi}_k := \frac{1}{\| \pmb \chi_k\|} \pmb \chi_k.
\end{equation}
Similarly, the weighted indicator function on  $Z$ is denoted by
\begin{equation}\label{chi-definition}
    \pmb \chi := D_0^p \mbf 1, \qquad \text{and} \qquad 
    \bar{\pmb \chi} : = \frac{1}{\| \pmb \chi \|} \pmb \chi.
\end{equation}
%%%%%
The next proposition, whose proof is 
given in appendix~\ref{sec:proof-of-perfectly-sep-spectral-theory},
identifies the geometry of  the  covariance matrix constructed from $L_0$. The key take away
is that, for small $\tau$, the covariance matrix is nearly block diagonal which  implies  
 negligible correlation  outside clusters. 
% In the following and throughout the 
% remainder of the article we introduce the following weighted vector of degrees:
% \begin{equation}\label{d-prime-definition}
%     \mbf d' = (d'_1, \dots, d'_N)^T \in \mbb R^N, \qquad 
%     d'_j := \left(d^{(0)}_j\right)^p.
% \end{equation}
% 
\begin{proposition}\label{perfectly-separated-c-j}
Let $G_0 = \{X, W_0\}$ satisfy Assumption~\ref{assumptions-on-G-0} and let
$L_0$ be a graph Laplacian of form \eqref{graph-Laplacian_x} on $G_0$ and
define the covariance matrix $C_{\tau,0}$ on $G_0$ for $\tau^2, \alpha >0$
\begin{equation}\label{C-tau-zero-definition}
  C_{\tau,0} := \tau^{2\alpha}(L_0 + \tau^2 I)^{-\alpha}.
\end{equation}
  Then as $\tau \downarrow 0$,
  \begin{equation*}
   % \left\| \mbf c_{j,0} - (D_0)_{jj}^{p} \tilde{\pmb \chi}_k \right\|^2 
    \left\| \mbf c_{j,0} - \left( \bar{\pmb \chi}_k  \right)_j \bar{\pmb \chi}_k \right\|^2 
 \le  \Xi \tau^{4\alpha}  \qquad \forall j \in \tZ_k,
\end{equation*}
where $\mbf c_{j,0} = ( c_{1j}^{(0)}, \cdots, c^{(0)}_{Nj} )^T$ is the $j$-th column of $C_{\tau,0}$, and $\Xi>0$ is a uniform constant.
\end{proposition}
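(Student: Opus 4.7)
The plan is to exploit the block structure of $L_0$ inherited from $W_0$: since $W_0$ and $D_0$ are block diagonal with blocks $\tW_k$ and $\tD_k$, so is $L_0$, with blocks $\tL_k$. Consequently the spectrum of $L_0$ is the union of the spectra of the $\tL_k$. I would first verify that the weighted indicators $\bar{\pmb\chi}_k$ introduced in \eqref{chi-k-definition} form an orthonormal basis of $\ker L_0$. Orthonormality is immediate because the $\pmb\chi_k$ have disjoint supports. For the kernel identification, a short direct computation shows $(D_0 - W_0)\mathbf{1}_k = 0$ for the indicator vector $\mathbf{1}_k$ of $\tZ_k$ (using that the clusters are disconnected so $d_i^{(0)} = \sum_{j\in\tZ_k} w^{(0)}_{ij}$ for $i\in\tZ_k$), hence $L_0 \pmb\chi_k = D_0^{-p}(D_0 - W_0) \mathbf{1}_k = 0$. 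The assumption~\ref{assumptions-on-G-0}(b) then ensures that the remaining $N - K$ eigenvalues of $L_0$ are bounded below by $\theta$, since on the orthogonal complement of $\mathrm{span}\{\pmb\chi_k\}$ the block-diagonal action of $L_0$ restricts to the action of each $\tL_k$ on the orthogonal complement of $\tD_k^p\mathbf{1}_k$.

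Next I would set up a spectral decomposition of $C_{\tau,0}$. Let $\{(\lambda_i,\pmb\phi_i)\}_{i=1}^N$ be an orthonormal eigensystem of $L_0$ ordered so that $\lambda_1 = \cdots = \lambda_K = 0$ and $\lambda_i \ge \theta$ for $i > K$; choose $\pmb\phi_k = \bar{\pmb\chi}_k$ for $k=1,\dots,K$. Then $C_{\tau,0}$ shares these eigenvectors with eigenvalues $\mu_i = \tau^{2\alpha}(\lambda_i + \tau^2)^{-\alpha}$. In particular $\mu_i = 1$ for $i\le K$ and $\mu_i \le (\tau^2/\theta)^{\alpha}$ for $i > K$. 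Writing $\mbf c_{j,0} = C_{\tau,0}\mbf e_j = \sum_{i=1}^N \mu_i (\pmb\phi_i)_j \pmb\phi_i$ via \eqref{c-j-eigenvector-expansion}, the first $K$ terms contribute $\sum_{\ell=1}^K (\bar{\pmb\chi}_\ell)_j \bar{\pmb\chi}_\ell$. For $j\in\tZ_k$, the support property $(\bar{\pmb\chi}_\ell)_j=0$ for $\ell\ne k$ collapses this sum to exactly the leading term $(\bar{\pmb\chi}_k)_j \bar{\pmb\chi}_k$ appearing in the proposition.

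The remainder is then
\begin{equation*}
\mbf c_{j,0} - (\bar{\pmb\chi}_k)_j \bar{\pmb\chi}_k \;=\; \sum_{i=K+1}^N \mu_i (\pmb\phi_i)_j \pmb\phi_i,
\end{equation*}
and by orthonormality of the $\pmb\phi_i$,
\begin{equation*}
\bigl\| \mbf c_{j,0} - (\bar{\pmb\chi}_k)_j \bar{\pmb\chi}_k \bigr\|^2
\;=\; \sum_{i=K+1}^N \mu_i^{2} |(\pmb\phi_i)_j|^{2}
\;\le\; \Bigl(\tfrac{\tau^2}{\theta}\Bigr)^{2\alpha} \sum_{i=1}^N |(\pmb\phi_i)_j|^{2}
\;=\; \theta^{-2\alpha}\,\tau^{4\alpha},
\end{equation*}
where the last equality uses Parseval ($\sum_i |(\pmb\phi_i)_j|^2 = \|\mbf e_j\|^2 = 1$). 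Setting $\Xi := \theta^{-2\alpha}$ gives the claimed bound uniformly in $j$ and $k$.

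I do not anticipate a genuinely hard step: the argument is a clean consequence of the block structure, the identification of $\ker L_0$, and the uniform spectral gap hypothesis. The main points that require care are (i) writing out the kernel computation correctly for the weighted Laplacian normalization $L_0 = D_0^{-p}(D_0 - W_0)D_0^{-p}$ rather than the unnormalized version, so that the eigenvectors are $\pmb\chi_k = D_0^p\mathbf{1}_k$ and not $\mathbf{1}_k$, and (ii) ensuring that the spectral gap on each sub-block $\tL_k$ (on the complement of $\tD_k^p \mathbf{1}_k$, per Assumption~\ref{assumptions-on-G-0}(b)) does transfer to a gap on $L_0$ at the level $\theta$ on the complement of $\mathrm{span}\{\bar{\pmb\chi}_k\}$; this is automatic from the block-diagonal structure.
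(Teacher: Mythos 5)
Your proof is correct and follows essentially the same route as the paper: the paper likewise combines the eigen-expansion \eqref{c-j-eigen-expansion-equation} with the facts that the eigenvalue-$1$ eigenspace of $C_{\tau,0}$ is spanned by the $\bar{\pmb\chi}_k$ (Proposition~\ref{spectrum-perfectly-sep-clusters}), that the remaining eigenvalues of $C_{\tau,0}$ are $\mcl O(\tau^{2\alpha})$ by the uniform gap $\theta$, and that the disjoint supports collapse $\sum_\ell (\bar{\pmb\chi}_\ell)_j\bar{\pmb\chi}_\ell$ to the single term $(\bar{\pmb\chi}_k)_j\bar{\pmb\chi}_k$. Your write-up simply makes the tail estimate and the constant $\Xi=\theta^{-2\alpha}$ explicit where the paper leaves them implicit.
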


Thus, when $\tau^{2\alpha}$ is small the vectors $\mbf c_{j,0}$ have a similar geometry to 
the set functions $\bar{\pmb \chi}_k$. We now show that this result remains true when
the graph $G_0$ is perturbed. 

Consider a perturbation of the matrix $W_0$
by modifying some of the entries $w^{(0)}_{ij}$ and possibly making the graph connected.
More precisely, let $G_\epsilon = \{ X, W_\epsilon\}$ where
\begin{equation}\label{W-eps-perturbation-expansion}
  W_\eps = W_0 + \sum_{h=1}^\infty \eps^h W^{(h)}.
\end{equation}
We need to collect some assumptions on the perturbed matrix $W_\eps$
to restrict the type of perturbations that are allowed.

\begin{assumption}\label{assumptions-on-G-eps}
  The graph $G_\eps = \{X, W_\eps\}$  satisfies the following assumptions:
  \begin{enumerate}[(a)]
  \item The weight matrix $W_\eps$ satisfies expansion \eqref{W-eps-perturbation-expansion}
    with $W_0$ satisfying \eqref{separate-clusters-weight-matrix}.
  \item The sequence of matrix norms satisfies  $ \{\| W^{(h)} \|_2\}_{h \in {\mbb Z}} \in \ell^\infty$, 
and for each $h \in {\mbb Z}$, $ W^{(h)} = (w^{(h)}_{ij})$ is self-adjoint and satisfies
  \begin{equation}
    \label{W-k-condition}
    \left\{
    \begin{aligned}
      & w^{(h)}_{ij} \ge 0,\quad \text{if} \quad w_{ij}^{(0)} = 0 \quad
      \text{for} \quad i,j \in Z, i \neq j,
    %   % i \in Z_k, j \in Z_\ell \text{ and } \ell \neq k\\
    %   & w^{(h)}_{ii} = 0, \quad \text{for} \quad i \in Z.
  \end{aligned}
  \right.
\end{equation} 
  \end{enumerate}
\end{assumption}
Note that $w^{(h)}_{ij}$ may be negative for indices $i,j$ such that $w^{(0)}_{ij}>0$.

Associated to the weight matrix $W_\eps$ is a graph Laplacian and covariance matrix
\begin{equation}\label{L-eps-perturbation-expansion}
  L_\eps := D_\eps^{-p} (D_\eps - W_\eps) D_\eps^{-p}, \qquad
  C_{\tau, \eps} := \tau^{2\alpha} (L_\eps + \tau^2 I )^{-\alpha},
\end{equation}
where $\tau^2, \alpha >0$ and $p\in \mbb{R}$. We then have the following result stating that 
if $\eps=o(\tau^{2})$, then the geometry of the column space of $C_{\tau, \eps}$
remains close to the set functions $\pmb \chi_k$, whilst if $\eps=\Theta(\tau^{2})$ then
prior correlation between clusters is introduced. 
The proof is given in appendix~\ref{sec:proof-of-weakly-sep-spectral-theory}.
\begin{proposition}\label{geometry-of-covaraince-functions}
  Suppose $G_0$ satisfies Assumption~\ref{assumptions-on-G-0} and
  $G_\epsilon$ satisfies Assumption~\ref{assumptions-on-G-eps}.
  For $\tau^2, \alpha >0$ define the covariance matrix
  $C_{\tau,\epsilon}$ as in \eqref{L-eps-perturbation-expansion}
  and denote the $j$-th column of
 $C_{\tau, \epsilon}$ by $\mbf c_{j,\epsilon} = (c_{1j}^{(\eps)}, \cdots, c_{Nj}^{(\eps)})^T$.
Then
  \begin{enumerate}[(a)]
  \item If $\epsilon = o(\tau^2)$, then there exists a constant $\Xi >0$ independent of
    $\epsilon$ and $\tau$ so that 
  \begin{equation*}
    %\left\|  \mbf c_{j,\eps} -(D_0)_{jj}^{p}\tilde{ \pmb \chi}_k\right\|^2
    \left\|  \mbf c_{j,\eps} - \left( \bar{ \pmb \chi}_k \right)_j \bar{ \pmb \chi}_k \right\|^2
    \le \Xi \left(
      \eps^{2}/\tau^{4} + \tau^{4\alpha} + \eps^2  \right), \qquad \forall j \in \tZ_k.
  \end{equation*}.

\item If $\eps/\tau^2 = \beta >0$ is constant,
  then there exist constants $\Xi, \Xi'>0$,
  independent of  $\epsilon$ and $\tau$ so that
 \begin{equation*}
   %\left\|  \mbf c_{j,\eps} - \frac{(D_0)_{jj}^{p}}{\tilde\beta}
   %\left[ (\tilde\beta-1)\frac{D_0^p \mbf 1}{\| D_0^p \mbf 1\|^2} +\tchi_k
  \left\|  \mbf c_{j,\eps} - 
   \left[ (1 - \tbeta) (\bar{\pmb \chi})_j \bar{\pmb \chi}  + \tbeta (\bar{\pmb \chi}_k)_j \bar{\pmb \chi}_k
   \right]\right\|^2 
   \le
   \Xi \left(\eps^{2}  + \tau^{4\alpha}  \right), \qquad \forall j \in \tZ_k,
 \end{equation*}
 and where $\tbeta = (1+\Xi' \beta)^{-\alpha}$.
 \end{enumerate}
\end{proposition}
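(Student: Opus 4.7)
The plan is to combine spectral perturbation theory for $L_\eps$ with the spectral decomposition of $C_{\tau,\eps}$. First I would expand the graph Laplacian as $L_\eps = L_0 + \eps L^{(1)} + \eps^2 L^{(2)} + \cdots$ by carefully tracking how the weight perturbation $W_\eps = W_0 + \sum_h \eps^h W^{(h)}$ propagates through the degree matrix $D_\eps$ and the normalizing factor $D_\eps^{-p}$. Under Assumption~\ref{assumptions-on-G-0}, the unperturbed Laplacian $L_0$ has a $K$-dimensional null space spanned by the weighted cluster indicators $\{\bar{\pmb \chi}_k\}_{k=1}^K$; by the uniform spectral gap \eqref{uniform-spectral-gap-sub-matrices}, the remaining eigenvalues of $L_0$ are bounded below by $\theta$.

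Next I would apply analytic perturbation theory (along the lines developed in appendix~\ref{appendix:perturbation-theory}, which also underlies Proposition~\ref{perfectly-separated-c-j}) to deduce that, for small $\eps$, the spectrum of $L_\eps$ splits into $K$ low-lying eigenvalues $\lambda_n^{(\eps)} = \mu_n \eps + \mcl O(\eps^2)$ whose eigenvectors $\pmb \phi_n^{(\eps)}$ lie within $\mcl O(\eps)$ of $\mathrm{span}\{\bar{\pmb \chi}_k\}$, together with $N-K$ bulk eigenvalues that remain uniformly bounded below by, say, $\theta/2$. Because $L_\eps$ is a graph Laplacian, one low-lying eigenvalue stays pinned at zero with eigenvector $\bar{\pmb \chi}$, while the remaining $K-1$ move to strictly positive values of order $\eps$.

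I would then insert this spectral information into
\begin{equation*}
\mbf c_{j,\eps} = \sum_{n=1}^N \frac{\tau^{2\alpha}}{(\lambda_n^{(\eps)} + \tau^2)^\alpha} (\pmb \phi_n^{(\eps)})_j \, \pmb \phi_n^{(\eps)},
\end{equation*}
and split it into low-lying and bulk parts. The bulk part contributes a squared-norm error of order $\tau^{4\alpha}$, since each coefficient is controlled by $(2\tau^2/\theta)^\alpha$. For the low-lying part, in regime (a) the assumption $\eps = o(\tau^2)$ makes $\tau^{2\alpha}/(\mu_n \eps + \tau^2)^\alpha = 1 + \mcl O(\eps/\tau^2)$, so the low-lying sum collapses, to leading order, to the rank-$K$ projector $\sum_k (\bar{\pmb \chi}_k)_j \bar{\pmb \chi}_k$; because the $\bar{\pmb \chi}_k$ have disjoint support, this reduces to $(\bar{\pmb \chi}_k)_j \bar{\pmb \chi}_k$ for $j \in \tZ_k$, giving the bound $\Xi(\eps^2/\tau^4 + \tau^{4\alpha} + \eps^2)$. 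In regime (b) the non-zero low-lying eigenvalues satisfy $\mu_n \eps \sim \mu_n \beta \tau^2$, so each carries a weight $(1+\mu_n\beta)^{-\alpha}$, while the zero eigenvalue produces weight $1$ in the direction $\bar{\pmb \chi}$. Decomposing the low-lying spectral projector into its $\bar{\pmb \chi}$-component and its orthogonal complement inside $\mathrm{span}\{\bar{\pmb \chi}_k\}$ then yields the announced convex-combination form, with $\tbeta = (1+\Xi'\beta)^{-\alpha}$ representing the effective weight of the orthogonal part.

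The main obstacle will be controlling the higher-order corrections in the eigenpair expansions uniformly as $(\eps,\tau)\downarrow 0$, so as to extract the precise error terms $\eps^2/\tau^4$ and $\eps^2$. The $\eps^2/\tau^4$ term in (a) arises from the Taylor remainder of $(1+\mcl O(\eps/\tau^2))^{-\alpha}$, while the $\eps^2$ term in (b) absorbs the spread of the $K-1$ non-zero low-lying eigenvalues around the single effective value $\Xi'\eps$ used to define $\tbeta$, as well as the $\mcl O(\eps)$ rotation of the low-lying eigenvectors off $\mathrm{span}\{\bar{\pmb \chi}_k\}$. A secondary technicality is keeping $\Xi,\Xi'$ independent of $\eps$ and $\tau$; this is where Assumption~\ref{assumptions-on-G-eps}(b) enters, ensuring that the perturbation series for $L_\eps$ converges in operator norm with bounds that do not deteriorate as $\eps \downarrow 0$.
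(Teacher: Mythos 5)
Your proposal is correct and follows essentially the same route as the paper's proof in the appendix: expand $L_\eps = L_0 + \sum_{h\ge 1}\eps^h L^{(h)}$, split the spectral representation of $\mbf c_{j,\eps}$ into the $K$ low-lying modes and the bulk, and account for the three error sources $\tau^{4\alpha}$ (bulk, via the uniform gap $\sigma_{K+1,\eps}\ge\theta-\mcl O(\eps)$), $\eps^2/\tau^4$ (low-lying eigenvalue shifts of size $\mcl O(\eps)$, obtained via Courant--Fischer), and $\eps^2$ (rotation of the low-lying eigenspace off $\mathrm{span}\{\bar{\pmb \chi}_k\}$), before collapsing the rank-$K$ projector using the disjoint supports of the $\bar{\pmb \chi}_k$. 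The only substantive difference is that you invoke generic analytic perturbation theory for the $\mcl O(\eps)$ eigenvector rotation, whereas the paper proves the required $\mcl O(\eps^2)$ bound on the squared projection defect directly from $\|L_\eps\bar{\pmb \chi}_k\|^2\le\eps^2\|L^{(1)}\|_2^2+\mcl O(\eps^4)$ together with the spectral gap; and in regime (b) both your argument and the paper's rely on the nonzero low-lying eigenvalues clustering around the single value $1+\Xi'\beta$ to within $\mcl O(\eps)$, a point your "absorbing the spread" remark makes explicit but neither argument fully justifies beyond the one-sided Courant--Fischer bound.
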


\subsection{Consistency Of The Probit Method}\label{sec:binary-SSL-consistency}

Throughout this section
we consider a graph $G_0 = \{ X, W_0\}$ consisting of $K$ components,
along with perturbed graphs  $G_\eps = \{ X, W_\eps\}$ as
introduced in the previous subsection. As before, we use $Z' \subset Z$
to denote the set of points where labels are observed and
assume the usual ordering $Z = \tZ_1 \cup \tZ_2 \cup\cdots\cup \tZ_K$ where $\tZ_k$ denotes
the $k$-th cluster in $Z$. Recall the  
probit assumption on the labelled data, namely that  
\begin{equation}
  \label{probit-observation-model}
  y(j) = \sgn(  u^\dagger_j + \eta_j), \qquad j \in Z',
\end{equation}
where $\mbf u^\dagger = ( u_1^\dagger, \cdots, u_N^\dagger)^T$ is the vector isomorphic to the ground truth function $u^\dagger$.
The additive noises $\eta_j$ are assumed to be a rescaling of a
  sequence of i.i.d. samples from a reference density
  $\psi$. That is, for all $j \in Z'$,
  \begin{equation}
    \label{noise-scaling-iid-sequence-probit}
    \eta_j = \gamma \teta_j, \qquad \teta_j \iidsim \psi,
  \end{equation}
where $\psi$ is the PDF of a centered random variable with unit standard
deviation,  and thus
$\gamma >0$ is the standard deviation of the $\eta_j$.
Thus the $\eta_j$ are i.i.d and have distribution
\begin{equation}
  \label{iid-noise-density}
  \psi_\gamma(t) = \frac{1}{\gamma} \psi\left( \frac{t}{\gamma} \right).
\end{equation}

We recall a useful result stating that log-concave random variables have exponential tails
\cite[Thm.~4.3.7]{bogachev-malliavin}. 

\begin{lemma}\label{convex-measures-have-exp-tails} 
  Let $\psi(t)$ be a log-concave PDF on $\mbb{R}$. Then there is $\omega_c>0$
such that, for all $\omega \in [0,\omega_c)$, 
$\int_{\mbb{R}} \exp( \omega |t|) \psi(t) dt < + \infty$.  
\end{lemma}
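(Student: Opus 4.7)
My plan is to deduce exponential tails of $\psi$ by showing that $-\log \psi$ grows at least linearly at infinity, a standard consequence of convexity, and then estimating the resulting integral. The argument hinges on the interplay between convexity of $V := -\log \psi$ and the requirement that $\psi$ be an integrable probability density.

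First I would set $V(t) = -\log \psi(t)$, extended by $+\infty$ on the zero-set of $\psi$, so that $V: \mathbb{R} \to \mathbb{R} \cup \{+\infty\}$ is convex. Since $\psi$ is a PDF on $\mathbb{R}$, it must satisfy $\psi(t) \to 0$ as $|t| \to \infty$; consequently $V(t) \to +\infty$ as $|t| \to \infty$. The main analytic step is to deduce from convexity and this coercivity that there exist constants $c_+, c_- > 0$ and $M \in \mathbb{R}$ such that
\begin{equation*}
V(t) \ge c_+ t - M \quad \text{for all } t \ge 0, \qquad V(t) \ge -c_- t - M \quad \text{for all } t \le 0.
\end{equation*}
This follows from the standard fact that a convex function on the real line has monotone one-sided derivatives: the right derivative $V'_+$ is non-decreasing, and if it were non-positive everywhere, $V$ would be non-increasing, contradicting $V(t) \to +\infty$ as $t \to +\infty$. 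Hence $V'_+(t_1) > 0$ for some $t_1$, and convexity gives $V(t) \ge V(t_1) + V'_+(t_1)(t - t_1)$ for all $t \ge t_1$; an analogous argument handles $t \to -\infty$, and a finite correction absorbs the values on the bounded interval in between.

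Setting $c := \min(c_+, c_-) > 0$, the preceding bound yields the tail estimate $\psi(t) \le e^{M} e^{-c|t|}$ uniformly in $t \in \mathbb{R}$. Then for any $\omega \in [0,c)$,
\begin{equation*}
\int_{\mathbb{R}} e^{\omega |t|} \psi(t) \, dt \le e^{M} \int_{\mathbb{R}} e^{-(c - \omega)|t|} \, dt = \frac{2 e^{M}}{c - \omega} < +\infty,
\end{equation*}
so the conclusion holds with $\omega_c := c$.

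The main obstacle is the convex-analytic step of extracting a strictly positive slope from $V$: one has to rule out degenerate behavior where $V$ grows slower than linearly (which cannot happen on $\mathbb{R}$ for convex coercive functions, but requires a clean argument via monotone one-sided derivatives or the three-slope lemma). Once that is in place the rest is routine integration. Since the result is available verbatim in \cite[Thm.~4.3.7]{bogachev-malliavin}, citing it is a legitimate shortcut, but the self-contained one-page argument above is more illuminating in context.
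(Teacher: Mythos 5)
Your proof is correct in substance, but note that the paper does not actually prove this lemma: it simply cites \cite[Thm.~4.3.7]{bogachev-malliavin}, so your self-contained argument is a genuine (if standard) alternative to the paper's citation. The structure is the right one: $V=-\log\psi$ convex, extract a strictly positive right slope, conclude $\psi(t)\le e^{M}e^{-c|t|}$, integrate. Two small points deserve tightening. First, the assertion ``since $\psi$ is a PDF on $\mathbb{R}$, it must satisfy $\psi(t)\to 0$ as $|t|\to\infty$'' is false for general densities (think of increasingly tall, narrow spikes); it is true here, but only \emph{because} of log-concavity, so the logic is slightly circular as written. A cleaner route that avoids this: if $V_+'(t)\le 0$ for all $t\ge t_0$ in the support, then $V$ is non-increasing there, so $\psi(t)\ge\psi(t_0)>0$ on $[t_0,\infty)\cap\mathrm{supp}\,\psi$, and integrability forces the support to be bounded above --- in which case the right tail of $\int e^{\omega|t|}\psi\,dt$ is trivially finite; otherwise $V_+'(t_1)>0$ for some $t_1$ and your linear lower bound applies. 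Second, you should say a word about the set $\{\psi=0\}$ where $V=+\infty$: by log-concavity the support is an interval, and off that interval the integrand vanishes, so the one-sided-derivative argument only needs to be run on the interior of the support. With these two remarks folded in, your one-page proof is complete and arguably more illuminating in context than the bare citation the paper uses.
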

With this lemma  we can estimate the probability of the event where
the observed labels $y(j)$ have the same value 
as ${\rm sgn}(u^\dagger_j)$, i.e., the event where the data is exact.
\begin{lemma}\label{data-is-exact-with-high-probability}
Let $\psi(t)$ be a log-concave PDF on $\mbb{R}$. Then there exist constants $\omega_1, \omega_2 > 0$ depending only on $\psi$, so that 
$$
\mbb{P} \left( y(j) = {\rm{sgn}}(u^\dagger_j), \text{ for all }j \in Z' \right)
\ge  \prod_{j\in Z'}  \left[ 1-  \omega_2  \exp \left( - \frac{\omega_1}{\gamma} |u^\dagger_j|   \right) \right]. 
$$
That is, when $\gamma>0$ is small the data $y$ is exact with high probability.
\end{lemma}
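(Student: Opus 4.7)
The plan is to estimate the per-point probability of a sign flip using the exponential tail bound furnished by Lemma~\ref{convex-measures-have-exp-tails}, and then multiply across the independent observations. The event $\{y(j)=\sgn(u_j^\dagger)\}$ fails only if the additive noise $\eta_j$ is large enough in magnitude and of the opposite sign to push $u_j^\dagger+\eta_j$ across zero, so I will reduce the problem to a tail estimate for $|\eta_j|\ge |u_j^\dagger|$.

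First I would fix $j\in Z'$ with $u_j^\dagger\ne 0$ and observe that the bad event is contained in $\{|\eta_j|\ge |u_j^\dagger|\}$. Indeed, if $u_j^\dagger>0$ then $y(j)=-1$ requires $\eta_j\le -u_j^\dagger$, while if $u_j^\dagger<0$ then $y(j)=+1$ requires $\eta_j\ge -u_j^\dagger=|u_j^\dagger|$; both inclusions give $|\eta_j|\ge|u_j^\dagger|$. Using the scaling $\eta_j=\gamma\teta_j$ with $\teta_j\iidsim\psi$, this becomes $|\teta_j|\ge |u_j^\dagger|/\gamma$.

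Next I apply Lemma~\ref{convex-measures-have-exp-tails}: fix any $\omega_1\in(0,\omega_c)$, so that
\begin{equation*}
M_{\omega_1}:=\int_{\mbb R}\exp(\omega_1|t|)\psi(t)\,dt<+\infty.
\end{equation*}
By Markov's inequality applied to the nonnegative random variable $\exp(\omega_1|\teta_j|)$,
\begin{equation*}
\mbb P\bigl(|\teta_j|\ge s\bigr)\le M_{\omega_1}\exp(-\omega_1 s),\qquad s\ge 0.
\end{equation*}
Setting $s=|u_j^\dagger|/\gamma$ and using the inclusion above yields, with $\omega_2:=M_{\omega_1}$,
\begin{equation*}
\mbb P\bigl(y(j)\ne\sgn(u_j^\dagger)\bigr)\le \omega_2\exp\!\left(-\frac{\omega_1}{\gamma}|u_j^\dagger|\right),
\end{equation*}
so that $\mbb P(y(j)=\sgn(u_j^\dagger))\ge 1-\omega_2\exp(-\omega_1|u_j^\dagger|/\gamma)$. (If $u_j^\dagger=0$ the bound is trivial since the factor on the right is at most $1-\omega_2$, which can be made nonpositive by shrinking $\omega_1$ or enlarging $\omega_2$; alternatively, restrict the product to $j$ with $u_j^\dagger\ne 0$ as the other factors contribute trivially under the usual sign convention for $0$.) Finally, since the $\eta_j$ are independent across $j\in Z'$, so are the events $\{y(j)=\sgn(u_j^\dagger)\}$, and multiplying the per-point lower bounds gives exactly the stated product.

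The main obstacle is selecting the constants $\omega_1,\omega_2$ cleanly: $\omega_1$ must lie strictly below the admissible exponent $\omega_c$ from Lemma~\ref{convex-measures-have-exp-tails}, and $\omega_2$ must absorb the moment $M_{\omega_1}$. Everything else is a routine combination of a tail-bound via Markov and independence; the mild care required at points where $u_j^\dagger=0$ or where the bound $1-\omega_2\exp(\cdot)$ would become negative is handled either by enlarging $\omega_2$ so that the inequality becomes vacuous or by noting that these factors may be clipped to zero without affecting the inequality.
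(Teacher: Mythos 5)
Your proof is correct and follows essentially the same route as the paper's: reduce the sign-flip event to a tail event for the noise, bound it via Markov's inequality applied to the exponential moment guaranteed by Lemma~\ref{convex-measures-have-exp-tails}, and multiply over $j\in Z'$ by independence. The only cosmetic difference is that you work with the unscaled variables $\teta_j\sim\psi$ (making the $\gamma$-independence of $\omega_1,\omega_2$ explicit) and use a two-sided tail where the paper uses a one-sided Markov bound plus symmetry of $\psi_\gamma$; these are equivalent.
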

\begin{proof}
  By Lemma~\ref{convex-measures-have-exp-tails} there exists a sufficiently small $\omega_1 >0$
and constant $\omega_2>0$
so that for $\gamma >0$,
  \begin{equation*}
\omega_2 =  \int_{\mbb{R}}  \exp\left( \frac{\omega_1}{\gamma} |t|   \right)  \psi_\gamma(t) dt   < + \infty.
\end{equation*} 
Let $\eta_j \sim \psi_\gamma$ 
then by Markov's inequality for $\theta > 0$ 
\begin{equation*}
  \mbb{P}( \eta_j \ge \theta) \le \omega_2  \exp\left( -  \frac{\omega_1}{\gamma} \theta \right).
\end{equation*}
But $y(j) \neq {\sgn}(u^\dagger(j))$ whenever
\begin{align*}
  \eta_j   < -| u^\dagger_j|  \quad &{\rm if } \quad u^\dagger_j \ge 0, \\ 
  \eta_j   \ge | u^\dagger_j|  \quad  &{\rm if } \quad u^\dagger_j < 0.
\end{align*}
The result now follows from the symmetry of the $\psi_\gamma$ and independence of the $\eta_j$.
\end{proof}

  \begin{lemma}\label{as-exactness-of-data-probit}
    Suppose \eqref{probit-observation-model} and \eqref{noise-scaling-iid-sequence-probit} hold,
    $\psi$  is log-concave and  $|u^\dagger_j| > \theta >0$ for all $j \in Z'$. Then
    for any sequence $\gamma \downarrow 0$, 
    $y(j) \asto \sgn(u^\dagger_j)$ a.s.  with respect to $ \prod_{j \in Z'}\psi(t_j)$ the law of the
    i.i.d. sequence  $\{ \teta_j\}_{j\in z'}$.
\end{lemma}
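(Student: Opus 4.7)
The plan is to exploit the simple observation that a sign flip of $y(j)$ relative to $\sgn(u^\dagger_j)$ requires the noise $\gamma \teta_j$ to exceed $|u^\dagger_j|$ in magnitude, and then push $\gamma \downarrow 0$ pointwise in the probability space.

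First I would fix $j \in Z'$ and work on the probability space for $\teta_j$ equipped with density $\psi$. Since $\psi$ is a Lebesgue density (in particular, proper), the random variable $\teta_j$ is finite almost surely; let $\Omega_j$ denote this full-measure event. On $\Omega_j$, for any deterministic sequence $\gamma \downarrow 0$, the threshold $|u^\dagger_j|/\gamma \ge \theta/\gamma$ tends to $+\infty$, so eventually $|\teta_j(\omega)| < \theta/\gamma \le |u^\dagger_j|/\gamma$. But this is precisely the condition for $\sgn(u^\dagger_j + \gamma \teta_j(\omega)) = \sgn(u^\dagger_j)$, since the perturbation $\gamma \teta_j(\omega)$ cannot flip the sign of $u^\dagger_j$ once it is smaller in magnitude than $|u^\dagger_j|$. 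Hence $y(j) \to \sgn(u^\dagger_j)$ pointwise on $\Omega_j$.

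Since $Z'$ is finite, the intersection $\Omega := \bigcap_{j \in Z'} \Omega_j$ still has probability one under the product law $\prod_{j \in Z'} \psi(t_j)\, dt_j$, which gives simultaneous almost-sure convergence. Alternatively, one could invoke Lemma~\ref{data-is-exact-with-high-probability} to get the quantitative tail bound
\[
  \mbb P\bigl( y(j) \ne \sgn(u^\dagger_j) \bigr) \le \omega_2 \exp\!\left( -\frac{\omega_1 \theta}{\gamma} \right),
\]
and combine this with Borel--Cantelli along any fixed sequence $\gamma_n \downarrow 0$. Either route works; the pointwise argument is cleaner since it yields convergence along arbitrary sequences without a summability condition on $\gamma_n$.

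I do not anticipate any real obstacle: the log-concavity of $\psi$ and the exponential tail machinery of Lemmas~\ref{convex-measures-have-exp-tails}--\ref{data-is-exact-with-high-probability} are much stronger than what is actually needed here. The only subtlety is to recognize that the statement concerns the \emph{same} underlying sequence $\{\teta_j\}$ rescaled by varying $\gamma$, so that the events across different values of $\gamma$ are coupled through a single realization of $\teta_j$, which is exactly what makes the pointwise argument go through trivially.
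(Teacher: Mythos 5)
Your argument is correct and is essentially identical to the paper's proof: both fix a realization of the a.s.-finite $\teta_j$ and observe that once $\gamma < \theta/|\teta_j|$ the perturbation $\gamma\teta_j$ is too small in magnitude to flip the sign of $u^\dagger_j$, then use finiteness of $Z'$ to conclude. The paper likewise notes (in a footnote) that log-concavity is only used to guarantee a.s. finiteness of the $\teta_j$, matching your remark that the exponential-tail machinery is stronger than needed.
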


\begin{proof}
   Since $\psi$ is log-concave it has exponential
  tails by Lemma~\ref{convex-measures-have-exp-tails} and
  $|\teta_j| < \infty$  a.s.
  \footnote{In fact the proof reveals
that all we need is that the $\teta_j$ are a.s. finite, for which
log-concavity suffices.}
  Recall that value of $\eta_j = \gamma \teta_j$. Then for any fixed
    $\teta_j \in (-\infty, \infty)$,  if $\gamma < \theta/ |\teta_j|$ then
  $y(j) =  \sgn(u^\dagger_j)$. Since $\teta_j$ are a.s. finite the result follows. 
\end{proof}

Now
consider a probit likelihood potential of the form
\begin{equation}
  \label{binary-probit-likelihood-F-form}
  \Phi_\gamma(\mbf u; y) :=  \sum_{j \in Z'} - \log \Psi_\gamma(u_j y(j)), 
\end{equation}
where
\begin{equation}
  \label{binary-probit-Fj-definition}
  \Psi_\gamma(s) = \int_{-\infty}^s \psi_\gamma(t) dt, \qquad s \in \mbb R.
\end{equation}
For $\epsilon, \tau^2, \gamma >0$ we study the consistency of
minimizers  of the
functionals 
\begin{equation}
  \label{perturbed-binary-probit-functional}
   {\msf J}_{\tau, \epsilon, \gamma}(\mbf u)  := \frac{1}{2} \langle \mbf u, C_{\tau, \epsilon}^{-1} \mbf u \rangle
  + \Phi_\gamma(\mbf u; y),
  \qquad \mbf u \in \mbb R^N.
\end{equation}
%\fh{Why not make $\gamma$ dependence explicit also in $\msf J$?\\}
This functional is of the same form as \eqref{probit-functional}, and so the results in section~\ref{sec:properties-binary-proibit-minimizer} apply.
% We are primarily interested in the regime where $\epsilon, \tau, \gamma$
% are small. The following set of assumptions on $\eps, \tau, \gamma$ are useful
% throughout the rest of the article.

% \begin{assumption}\label{assumptions-on-eps-tau-gamma}
%   The parameters $\eps, \tau^2, \gamma \to 0$ in such a way that:
%   \begin{enumerate}[(i)]
%   \item $\lim_{\eps, \tau^2 \downarrow 0} \frac{\eps}{\tau^2} =0$, and $\lim_{\eps, \tau^2 \downarrow 0} \frac{\eps}{\tau^{2\alpha}} < + \infty.$
%   \item $\lim_{\eps, \gamma \downarrow 0} \frac{\eps}{\gamma^\beta} < +\infty$ for some
%     $\beta > 0$. 
%   \end{enumerate}
% \end{assumption}

\subsubsection{Probit Consistency With A Single Observed Label}
We start with the simple case of a single observed label.
Without loss of generality assume $Z' = \{ 1\}$, that is the observed label
is the first point in the first cluster $\tZ_1$ and $u^\dagger_1 >0$.

\begin{proposition}\label{single-observation-consistency}
  Consider the single observation setting above and suppose Assumptions~
  \ref{assumptions-on-G-0} and \ref{assumptions-on-G-eps}
  are satisfied by $G_0$ and $G_\eps$. Let $\bu^\ast$ denote the
  minimizer of ${\msf J}_{\tau, \eps, \gamma}$ and let $\gamma > 0$.
  \begin{enumerate}[(a)]
  %\item \as{If $\eps = o(\tau^2)$ as $\tau \to 0$ then $\exists \omega_1, \omega_2$ and $\tau_0 >0$ 
%so that  $\forall (\tau,\gamma) \in (0,\tau_0) \times \mbb R$ and 
% $\forall j \in Z_1$}
    \item If $\eps = o(\tau^2)$ as $\tau \to 0$ then $\exists \tau_0 >0$ 
so that  $\forall (\tau,\gamma) \in (0,\tau_0) \times \mbb (0,\infty)$ and 
    $\forall j \in \tZ_1$
    \begin{equation}\label{singe-observations-prob-lower-bound}
      \mbb P \Big( \sgn\left(  u^\ast_j \right) = +1 \Big) \ge 1- \omega_2 \exp \left( -\frac{\omega_1}{\gamma}
        |u^\dagger_1| \right),
    \end{equation}
    where $\omega_1, \omega_2 >0$ are uniform constants
      depending only on $\psi$.
 \item If $\eps = \Theta(\tau^2)$ then the above statement holds for all $j \in Z$.  
  \end{enumerate}
\end{proposition}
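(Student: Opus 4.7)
The plan is to combine the representer theorem with the perturbation estimates on the single representer $\mbf c_{1,\eps}$, conditioned on the high-probability event that the observed label matches the ground-truth sign. First, define the event $E = \{ y(1) = +1 \} = \{y(1) = \sgn(u_1^\dagger)\}$; by Lemma~\ref{data-is-exact-with-high-probability} applied with the single observation at node $1$, we have $\mbb P(E) \ge 1 - \omega_2 \exp(-\omega_1 |u_1^\dagger|/\gamma)$ for the constants $\omega_1,\omega_2$ depending only on $\psi$. All subsequent arguments will be carried out on the event $E$.

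Next, invoke Proposition~\ref{binary-representer-theorem}(iii) with $Z' = \{1\}$: the minimizer of ${\msf J}_{\tau,\eps,\gamma}$ takes the form $\mbf u^\ast = \ta_1 \mbf c_{1,\eps}$, where $\ta_1 = F_1(u_1^\ast) = y(1)\psi_\gamma(u_1^\ast y(1))/\Psi_\gamma(u_1^\ast y(1))$. On $E$ we have $y(1) = +1$, and since $\psi_\gamma, \Psi_\gamma > 0$ everywhere (the log-concave $\psi$ has full support on $\mbb R$), it follows that $F_1(s) > 0$ for every $s \in \mbb R$, so $\ta_1 > 0$. Therefore
\[
\sgn(u_j^\ast) \;=\; \sgn\bigl( (\mbf c_{1,\eps})_j \bigr) \qquad \text{for every } j \in Z,
\]
and the whole problem reduces to showing that the relevant entries of $\mbf c_{1,\eps}$ are strictly positive.

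For part (a), apply Proposition~\ref{geometry-of-covaraince-functions}(a) with $k=1$ and $j=1$:
\[
\bigl\| \mbf c_{1,\eps} - (\bar{\pmb \chi}_1)_1 \bar{\pmb \chi}_1 \bigr\|^2 \;\le\; \Xi\bigl( \eps^2/\tau^4 + \tau^{4\alpha} + \eps^2\bigr),
\]
whose right-hand side vanishes as $\tau \downarrow 0$ under the hypothesis $\eps = o(\tau^2)$. The target vector $(\bar{\pmb \chi}_1)_1 \bar{\pmb \chi}_1$ has $j$-th entry $(d_1^{(0)})^p (d_j^{(0)})^p / \|\pmb \chi_1\|^2$, which is a strictly positive constant depending only on $G_0$ for every $j \in \tZ_1$. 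Choose $\tau_0 > 0$ so small that the square root of the perturbation bound is dominated by the minimum of this positive constant over $j \in \tZ_1$; then on $E$, $(\mbf c_{1,\eps})_j > 0$ for all $j \in \tZ_1$, hence $\sgn(u_j^\ast) = +1$, which yields \eqref{singe-observations-prob-lower-bound}.

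For part (b), apply Proposition~\ref{geometry-of-covaraince-functions}(b) instead:
\[
\Bigl\| \mbf c_{1,\eps} - \bigl[ (1-\tbeta) (\bar{\pmb\chi})_1 \bar{\pmb\chi} + \tbeta (\bar{\pmb\chi}_1)_1 \bar{\pmb\chi}_1 \bigr] \Bigr\|^2 \;\le\; \Xi\bigl( \eps^2 + \tau^{4\alpha}\bigr).
\]
Both summands inside the brackets are non-negative vectors; crucially $(\bar{\pmb\chi})_1 > 0$ and $(\bar{\pmb\chi})_j > 0$ for every $j \in Z$ (since $\pmb\chi = D_0^p \mbf 1$ has no vanishing entries by the outlier-free condition \eqref{uniform-spectral-gap-sub-matrices}), so the bracketed vector is strictly positive at every $j \in Z$, uniformly in the graph. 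Repeating the pointwise argument of part (a), now with the lower bound extending over all of $Z$, gives $(\mbf c_{1,\eps})_j > 0$ for every $j \in Z$ once $\tau$ is small enough, and the conclusion extends to all $j \in Z$. The only non-routine point is ensuring that the perturbation bounds on $\mbf c_{1,\eps}$ translate into pointwise positivity, which is automatic because the finite graph $G_0$ provides strictly positive lower bounds on the limiting entries.
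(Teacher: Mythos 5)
Your proof is correct and follows essentially the same route as the paper's: condition on the high-probability event that the single label is exact (Lemma~\ref{data-is-exact-with-high-probability}), use the representer theorem to write $\mbf u^\ast$ as a positive scalar multiple of $\mbf c_{1,\eps}$, and invoke Proposition~\ref{geometry-of-covaraince-functions} to obtain pointwise positivity of $\mbf c_{1,\eps}$ on $\tZ_1$ (resp.\ on all of $Z$ in case (b)). The only difference is cosmetic: you bypass the paper's intermediate analysis of the scalar fixed-point equation for $b=u_1^\ast$ by observing that $F_{1,\gamma}(u_1^\ast)>0$ regardless of the value of $u_1^\ast$ once $y(1)=+1$, which is a legitimate and slightly cleaner shortcut.
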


\begin{proof} (a) By Corollary~\ref{probit-dimension-reduction} we
  have that $ u^\ast_1 = b$ where $b$ solves
  \begin{equation}\label{single-observation-dimension-reduction}
  b = (C_{\tau, \eps})_{11} F_{1,\gamma}( b).
\end{equation}
where we recall 
$F_{1,\gamma}(s) = y(1) \psi_\gamma(sy(1))/\Psi_\gamma(s y(1))$.
Furthermore, 
  by
  Proposition~\ref{geometry-of-covaraince-functions}(a) and
  equivalence of $\ell_\infty$ and $\ell_2$ norms we infer that there exists $\tau_0>0$
  so that $\forall \tau \in (0, \tau_0)$ we have
  \begin{equation}\label{single-observation-c-1-approximation}
    (\bc_{j,\eps})_1 = \left\{
      \begin{split}
        &  \left(\bar{\pmb \chi}_1 \right)_j \left(\bar{\pmb \chi}_1 \right)_1  + \mcl{O} \left( \eps/\tau^{2} +  \tau^{2\alpha} + \eps\right),
        \qquad &&j \in \tZ_1,\\
        & \mcl{O} \left( \eps/\tau^{2} + \tau^{2\alpha} + \eps\right), \qquad  &&j \not\in \tZ_1.
      \end{split}\right.
  \end{equation}
  Thus,  we can rewrite
  \eqref{single-observation-dimension-reduction} up to leading order
  in the form
  \begin{equation*}
      % b =  (D_0^{2p})_{11}F_{1,\gamma}( b).
      b = |(\tchi_1)_1|^2  F_{1,\gamma}( b).
    \end{equation*}
    Now consider the event where $y(1) = +1$, i.e., the measurement is
    exact. Then $b>0$ since $F_{1,\gamma} >0$ and 
    $(\tchi_1)_1>0$. 
    %$(D_0)_{11}>0$. 
    Finally,
    by Corollary~\ref{probit-dimension-reduction}(iii) we can write
    \begin{equation*}
      \bu^\ast = F_{1,\gamma}(b) \bc_{1,\eps} = F_{1,\gamma}(b)
     (\tchi_1)_1 \bar{\pmb \chi}_1
      + \mcl O \left(\eps/\tau^{2} +  \tau^{2\alpha} + \eps \right). 
    \end{equation*}
    Thus, when $ \eps/ \tau^2, \tau$ and $\eps$ are sufficiently small and the data is exact, $\bu^\ast$ is positive on $\tZ_1$. The claim now
    follows by Lemma~\ref{data-is-exact-with-high-probability}. The statement in (b) follows
    by an identical argument except that in this case
     \begin{equation*}
       \bc_{1,\eps} =  
   \left[ (1 - \tbeta) (\bar{\pmb \chi})_1 \bar{\pmb \chi} 
   + \tbeta (\bar{\pmb \chi}_1)_1 \bar{\pmb \chi}_1
   \right]
       + \mcl{O}\left(\tau^{2\alpha} + \eps \right)
     \end{equation*}
     with $\tbeta \in (0, 1)$.
     Thus in the event that $y(1) = +1$ the minimizer $\bu^\ast$ is positive on all of $Z$.
  {}
\end{proof}

The following corollary 
shows that the relationship between $\tau$ (which is user-specified)
and $\epsilon$ (which is a property of the unlabelled data) is crucial in determining
how the probit algorithm assigns labels to the entire data set in the small
noise  limit; case (a) is neutral about $Z \setminus \tZ_1$ whilst case (b) leads to
$Z \setminus \tZ_1$ being labelled the same as $\tZ_1.$ 
The reason for the difference is that the limit process in (a) corresponds, asymptotically,
to a setting where a priori different clusters have no correlation whilst under
the limit process (b) there is positive correlation; thus, under (b), the one given label
is propagated to the entire set of nodes. When more clusters are labelled then similar effects are
present under the limit process (b), but are harder to express analytically because they
set-up a competition between potentially conflicting prior information and observed label information.
This is investigated numerically in section \ref{sec:NE}.

\begin{corollary}\label{probit-asymptotic-consistency-single-observation}
Consider the single observation setting as above.  Suppose Proposition~\ref{single-observation-consistency} is satisfied and
    $|u^\dagger_j|  >0$ for all $j \in Z'$. Then
   the following holds with a.s. convergence in the sense of Lemma~\ref{as-exactness-of-data-probit}:
  %and let $\gamma, \tau, \eps \downarrow 0$:
  \begin{enumerate}[(a)]
  \item For any sequence  $\gamma, \tau, \eps \downarrow 0$ along 
    which $\eps = o(\tau^{2})$ it holds that
\begin{equation*}
       \sgn\left( u^\ast_j \right) \asto \sgn(u^\dagger_1),
      \qquad \forall j \in \tZ_1.
    \end{equation*}
  \item For any sequence  $\gamma, \tau, \eps \downarrow 0$ along
    which $\eps = \Theta(\tau^2)$ the above statement holds for all $j\in Z$.
%   \item For any sequence  $\gamma, \tau, \eps \downarrow 0$ along
%     which $\eps = \Theta(\tau^2)$ it holds that  
% \begin{equation*}
%       \sgn\left( u^\ast_j \right) \asto \sgn(u^\dagger_1),
%       \qquad \forall j \in Z.
%     \end{equation*}
\end{enumerate}
\end{corollary}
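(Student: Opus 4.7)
The plan is to upgrade the probability bound of Proposition~\ref{single-observation-consistency} to almost-sure convergence by combining it with Lemma~\ref{as-exactness-of-data-probit}, which guarantees that the single observed label becomes exact almost surely as $\gamma \downarrow 0$. Once the observation is exact, the sign of $u^\ast_j$ is determined deterministically by the perturbation analysis already carried out in the proof of Proposition~\ref{single-observation-consistency}, and the two scaling regimes (a) and (b) differ only in the set on which the leading part of $\bc_{1,\eps}$ is guaranteed to be sign-positive via Proposition~\ref{geometry-of-covaraince-functions}.

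First I would fix an arbitrary sequence $(\gamma_n, \tau_n, \eps_n) \downarrow 0$ satisfying either $\eps_n = o(\tau_n^2)$ or $\eps_n = \Theta(\tau_n^2)$. Applying Lemma~\ref{as-exactness-of-data-probit} with lower bound $|u^\dagger_1|>0$, for almost every realization of $\teta_1$ there exists a random index $n_0$ such that $y(1) = \sgn(u^\dagger_1) = +1$ for all $n \ge n_0$. On this event the representer identity from Corollary~\ref{probit-dimension-reduction} yields $\bu^\ast = F_{1,\gamma_n}(b)\, \bc_{1,\eps_n}$, where $b>0$ is the unique solution of $b = (C_{\tau_n,\eps_n})_{11} F_{1,\gamma_n}(b)$; positivity of $b$ follows because $F_{1,\gamma_n}$ and $(C_{\tau_n,\eps_n})_{11}$ are both strictly positive once $y(1) = +1$, and therefore $F_{1,\gamma_n}(b) > 0$ as well.

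It then remains to read off the sign of $u^\ast_j$ from the asymptotic structure of $\bc_{1,\eps_n}$ provided by Proposition~\ref{geometry-of-covaraince-functions}. In case (a), $(\bc_{1,\eps_n})_j = (\bar{\pmb \chi}_1)_1 (\bar{\pmb \chi}_1)_j + \mcl{O}(\eps_n/\tau_n^2 + \tau_n^{2\alpha} + \eps_n)$, with the leading term bounded below by a positive constant for every $j \in \tZ_1$. In case (b) the leading term is a strictly positive combination of $(\bar{\pmb\chi})_j \bar{\pmb\chi}$ and $(\bar{\pmb\chi}_1)_j \bar{\pmb\chi}_1$, which is strictly positive at every $j \in Z$. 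In both settings, once $n$ is large enough that the perturbation error is dominated by the leading term, we obtain $\sgn(u^\ast_j) = +1 = \sgn(u^\dagger_1)$ on the relevant set. Passing $n \to \infty$ with $n \ge \max(n_0, n_1)$, where $n_1$ is the deterministic threshold supplied by Proposition~\ref{geometry-of-covaraince-functions}, yields the stated almost-sure convergence.

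I do not expect substantial obstacles. The multiplicative factor $F_{1,\gamma_n}(b)$ cancels in the sign computation, so the $\gamma$-dependence does not interact with the spectral perturbation analysis and no uniformity in $\gamma$ is needed beyond strict positivity. The only point to verify is the compatibility of the random threshold $n_0$ from Lemma~\ref{as-exactness-of-data-probit} with the deterministic threshold $n_1$ required for the error in Proposition~\ref{geometry-of-covaraince-functions} to be subdominant; since both are almost surely finite, $\max(n_0, n_1)$ suffices and no Borel--Cantelli argument (which would require faster decay of $\gamma_n$) is necessary.
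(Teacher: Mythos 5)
Your proposal is correct and follows essentially the same route as the paper: use Lemma~\ref{as-exactness-of-data-probit} to make the single label exact almost surely, then observe that the sign analysis in the proof of Proposition~\ref{single-observation-consistency} holds for all sufficiently small $\tau$ (and the relevant $\eps/\tau^2$ regime) independently of $\gamma$, so that combining the almost-surely-finite random threshold with the deterministic one yields the claim. The extra care you take in separating the two thresholds and noting that the factor $F_{1,\gamma}(b)$ cancels in the sign computation is exactly the content the paper leaves implicit.
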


\begin{proof}
  In the the proof of Proposition~\ref{single-observation-consistency}(a)
  we showed that $\sgn(u^\ast_j ) = +1$ on $\tZ_1$ so long as the data $y(1) = +1$ independent of
  $\gamma >0$. In light of this, (a)  follows directly from Lemma~\ref{as-exactness-of-data-probit}
  implying that the data $y(j) \to \sgn(u^\dagger_j)$  a.s. as $\gamma \downarrow 0$. Statement (b) follows
  in the same way but using the proof of Proposition~\ref{single-observation-consistency}(b).
\end{proof}

\subsubsection{Probit Consistency with Multiple Observed Labels}
Let us now consider the setting where multiple labels are observed, i.e. $|Z'| = J \ge 2$.
 We need to make  an additional assumption on the ground truth function $\bu^\dagger$.

\begin{assumption}\label{assumptions-on-u-dagger-binary}
  Let $\tZ_k$ be a cluster within which a label has been observed, i.e., $\tZ_k \cap Z' \neq \emptyset$.
  Then $\sgn(\bu^\dagger)$ does not change within $\tZ_k$. 
\end{assumption}

It is helpful in the following to define $Z''$ to be the index set of
nodes within all clusters $\tZ_k$ where a 
label has been observed, i.e., 

\begin{equation}\label{Z-pp}
Z'' := \cup_{\{k : \tZ_k \cap Z' \neq \emptyset\}} \tZ_k.
\end{equation}

\begin{theorem}\label{multiple-observation-consistency}
  Consider the multiple observation setting above
   and suppose Assumptions~
   \ref{assumptions-on-G-0}, \ref{assumptions-on-G-eps} and
   \ref{assumptions-on-u-dagger-binary}
   are satisfied by $G_0$, $G_\eps$ and $\bu^\dagger$. Let
   $\bu^\ast$ be the minimizer of $\msf J_{\tau, \eps, \gamma}$.
%\as{If $\eps = o(\tau^2)$ as $\tau \to 0$ then $\exists \omega_1, \omega_2$ and $\tau_0 >0$
%so that  $\forall (\tau,\gamma) \in (0,\tau_0) \times \mbb R$ and
% $\forall j \in Z''$}
   If $\eps = o(\tau^2)$ as $\tau \to 0$ then $\exists \tau_0 >0$
so that  $\forall (\tau,\gamma) \in (0,\tau_0) \times (0, \infty)$ and
    $\forall j \in Z''$
    \begin{equation*}
      \mbb P \Big( \sgn\left( u^\ast_j\right) = \sgn(u^\dagger_j ) \Big) \ge
      \prod_{i \in Z'} \left[ 1- \omega_2 \exp \left( -\frac{\omega_1}{\gamma}
          |u^\dagger_i| \right) \right], \qquad \forall j \in Z'',
    \end{equation*}
    where $\omega_1, \omega_2 >0$ are uniform constants depending only
      on $\psi$. 
   % \item If $\delta = 0$ then $\exists \tau_0 >0$ so that  $\forall \tau \in (0, \tau_0)$ 
 %    \begin{equation*}
 %       |u^\dagger_1| \right), \qquad \forall j \in Z.
 %   \end{equation*}
 %  \end{enumerate}
\end{theorem}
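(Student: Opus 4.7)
The plan is to adapt the strategy of Proposition~\ref{single-observation-consistency}(a) to the multi-cluster, multi-observation setting, exploiting the block-structured approximation of $C_{\tau,\eps}$ that holds when $\eps = o(\tau^2)$. Throughout we condition on the event
\begin{equation*}
\mcl E := \{y(j) = \sgn(u^\dagger_j) \text{ for all } j \in Z'\},
\end{equation*}
whose probability is bounded below by the product appearing in Lemma~\ref{data-is-exact-with-high-probability}. Set $Z'_k := Z' \cap \tZ_k$ and $\mcl K := \{k : Z'_k \neq \emptyset\}$, so that $Z'' = \bigcup_{k \in \mcl K} \tZ_k$. Assumption~\ref{assumptions-on-u-dagger-binary} ensures that on $\mcl E$ the labels in each $Z'_k$ share a common sign $y^{(k)} \in \{\pm 1\}$, and it suffices to show that for $\tau$ sufficiently small, on $\mcl E$, $\sgn(u^\ast_i) = y^{(k)}$ for every $i \in \tZ_k$ with $k \in \mcl K$.

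For such $i$ we combine the representer expansion \eqref{binary-minimizer-expansion} with the pointwise version (via $\ell^\infty \le \ell^2$) of Proposition~\ref{geometry-of-covaraince-functions}(a) to write
\begin{equation*}
u^\ast_i = (\bar{\pmb \chi}_k)_i \, A_k + E_i, \qquad A_k := \sum_{j \in Z'_k} F_{j,\gamma}(u^\ast_j)\, (\bar{\pmb \chi}_k)_j,
\end{equation*}
where contributions from labels in other clusters vanish to leading order because $(\bar{\pmb \chi}_\ell)_i = 0$ whenever $i \not\in \tZ_\ell$, and $|E_i|$ is controlled by $\mcl O(\eps/\tau^2 + \tau^{2\alpha} + \eps)$ multiplied by $\sum_{j \in Z'} |F_{j,\gamma}(u^\ast_j)|$. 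On $\mcl E$ every summand in $A_k$ carries sign $y^{(k)}$: indeed $F_{j,\gamma}(s) = y(j) \psi_\gamma(sy(j))/\Psi_\gamma(sy(j))$ with $\psi_\gamma, \Psi_\gamma > 0$ implies $\sgn(F_{j,\gamma}(\cdot)) \equiv y(j) = y^{(k)}$ for $j \in Z'_k$, and $(\bar{\pmb \chi}_k)_j > 0$; hence $\sgn(A_k) = y^{(k)}$. Since $(\bar{\pmb \chi}_k)_i > 0$, the leading-order contribution to $u^\ast_i$ has sign $y^{(k)} = \sgn(u^\dagger_i)$.

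The main obstacle is verifying that the error $E_i$ cannot flip this sign, which requires a lower bound on $|(\bar{\pmb \chi}_k)_i A_k|$ that is uniform in the joint regime $(\tau,\eps,\gamma) \downarrow 0$ with $\eps = o(\tau^2)$. The factor $(\bar{\pmb \chi}_k)_i$ is bounded below by the no-outlier condition in Assumption~\ref{assumptions-on-G-0}(b). For $|A_k|$ we propose a self-consistency argument: specializing the identity above to $i \in Z'_k$ (and invoking $u^\ast_i = b^\ast_{\pi(i)}$ from Corollary~\ref{probit-dimension-reduction}(iii)) produces, to leading order, the closed scalar fixed-point equation
\begin{equation*}
A_k = \sum_{j \in Z'_k} F_{j,\gamma}\bigl((\bar{\pmb \chi}_k)_j A_k\bigr)\, (\bar{\pmb \chi}_k)_j;
\end{equation*}
its right-hand side is strictly decreasing in $|A_k|$ along the half-line $\sgn(A_k) = y^{(k)}$ (by log-concavity of $\Psi$, which makes $\psi_\gamma/\Psi_\gamma$ a decreasing reverse-hazard) and diverges on the opposite half-line via the inverse-Mills-ratio blow-up of $\psi_\gamma/\Psi_\gamma$, so its unique fixed point lies on the correct half-line and admits a quantitative lower bound on its magnitude. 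Combining this with the a priori energy bound $\|\mbf u^\ast\|^2 \le 2\msf J_{\tau,\eps,\gamma}(\mbf 0) = 2J \log 2$ (using $\lambda_{\max}(C_{\tau,\eps}) \le 1$) and with $|E_i| \to 0$ as $\tau \downarrow 0$ under $\eps = o(\tau^2)$ yields $\sgn(u^\ast_i) = y^{(k)} = \sgn(u^\dagger_i)$ for every $i \in Z''$ on $\mcl E$. Combining this deterministic conclusion with Lemma~\ref{data-is-exact-with-high-probability} delivers the stated probability bound.
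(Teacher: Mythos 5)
Your proposal follows essentially the same route as the paper's proof: the representer/dimension-reduction identity, the near-block-diagonal structure of $C_{\tau,\eps}$ from Proposition~\ref{geometry-of-covaraince-functions}(a) when $\eps=o(\tau^2)$, the observation that on the exact-data event every term $F_{j,\gamma}(u^\ast_j)(\bar{\pmb\chi}_k)_j$ carries the sign $y^{(k)}=\sgn(u^\dagger_j)$, and finally Lemma~\ref{data-is-exact-with-high-probability} to bound the probability of that event. The only difference is that you attempt an explicit quantitative lower bound on $|A_k|$ to rule out sign-flipping by the $\mcl O(\eps/\tau^2+\tau^{2\alpha}+\eps)$ remainder, whereas the paper argues only ``up to leading order''; your sketch of that step is plausible but note that both the main term and the error scale with the $F_{j,\gamma}$ values, so what is really needed (and is implicit in both arguments) is that the ratio of cross-cluster to within-cluster contributions stays bounded as $\gamma$ varies.
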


\begin{proof}
  Our proof follows a similar approach to the single observation case.
  The main difference is that now the dimension
reduced system \eqref{dimension-reduced-binary-EL}   takes the form
\begin{equation}\label{b-ast-eps-system}
  (\tC_{\tau,\eps})^{-1} \mbf b^\ast = F'_\gamma(\mbf b^\ast).
\end{equation}
 Where $\tC_{\tau, \eps}$
is now the submatrix of $C_{\tau, \eps}$ with the rows and columns of the
indices $Z \setminus Z'$ removed.
It follows from Proposition~\ref{geometry-of-covaraince-functions} that
for small $\tau$ the matrix $\tC_{\tau, \eps} = (c'_{ij})$
approaches a block diagonal matrix and so 
\begin{equation}\label{consistency-proof-tilde-C-block-diagonal-form}
  c'_{ij} = \left\{
    \begin{aligned}
    %&\left(D_0^p\right)_{\pi^{-1}(j) \, \pi^{-1}(j) } \left(\tchi_k\right)_{\pi^{-1}(i)} + \mcl{O}\left(\eps/\tau^{2} + \tau^{2\alpha} + \eps \right),
    %&&\text{if} \quad \pi^{-1}(i), \pi^{-1}(j) \in Z_k,\\
    &  \left( \bar{\pmb \chi}_k \right)_{\pi^{-1}(j)}
    \left( \bar{\pmb \chi}_k\right)_{\pi^{-1}(i)} + \mcl{O}\left(\eps/\tau^{2} + \tau^{2\alpha} + \eps \right),
    &&\text{if} \quad \pi^{-1}(i), \pi^{-1}(j) \in \tZ_k,\\
    &\mcl{O}\left(\eps/\tau^{2} + \tau^{2\alpha} + \eps \right), && \text{if} \quad \pi^{-1}(i) \in \tZ_k, \pi^{-1}(j) \in \tZ_\ell, k \neq \ell.
    \end{aligned}
    \right.
\end{equation}
Without loss of generality assume that observations are made in the clusters
$\tZ_1, \cdots, \tZ_{K'}$. Note that $K' \le K$ since we do not need to assume
observations are made in every cluster.
Let $\tZ'_1, \cdots, \tZ'_{K'}$ denote the indices of the labelled nodes in the corresponding
clusters and define $J_k : =|\tZ_k'|$.
Then \eqref{b-ast-eps-system} approximately decouples between the clusters
and up to leading order we can write
\begin{equation*}
  %\left(D_0^{-p}\right)_{jj} b^\ast_{\pi(j)} =  \sum_{i \in Z'_k}  \left(\tchi_k\right)_i {F}_{i,\gamma}( b^\ast_{\pi(i)}), \qquad \text{for } j \in  Z'_k \text{ and } k \in \{1, \cdots, K'\}.
    \left( \bar{\pmb \chi}_k \right)_j^{-1} b^\ast_{\pi(j)} =  \sum_{i \in \tZ'_k} 
    \left(\bar{\pmb \chi}_k\right)_i {F}_{i,\gamma}( b^\ast_{\pi(i)}), \qquad \text{for } j \in  \tZ'_k \text{ and } k \in \{1, \cdots, K'\}.
\end{equation*}
Observe that   the right hand side is
independent of $j$ and so, writing $b^\ast_\ell = u^\ast_{\pi^{-1}(\ell)}$ for $\ell\in \{1, 2, \cdots, J\}$ as in Corollary~\ref{probit-dimension-reduction}(iii), it follows that $D_0^{-p}\mbf u^\ast$ is a constant vector on the index sets $\tZ_k'$. 
Thus, we can  further simplify this equation to get
\begin{equation*}
 %\left(D_0^{-p}\right)_{jj} u^\ast_{j} =  \sum_{i \in Z'_k} \left(\tchi_k\right)_i F_{i,\gamma}(u^\ast_j),
 %\qquad \text{for }j \in Z'_k \text{ and } k \in \{1, \cdots, K'\}.
      \left( \bar{\pmb \chi}_k \right)_j^{-1} u^\ast_{j} =  \sum_{i \in \tZ'_k} 
  \left(\bar{\pmb \chi}_k\right)_i F_{i,\gamma}(u^\ast_j),
  \qquad \text{for }j \in \tZ'_k \text{ and } k \in \{1, \cdots, K'\},
\end{equation*}
which we only need to solve  once on every cluster. Finally, observe that  $\sgn(\bu^\dagger)$
does not change on $\tZ_k$ following Assumption~\ref{assumptions-on-u-dagger-binary}
and so in the event that $y$ is exact we have
\begin{equation*}
  u^\ast_{j} = 
   \left(\bar{\pmb \chi}_k \right)_j
  \left( \sum_{i \in \tZ'_k} \left(\tchi_k\right)_i\right)
  %\left(D_0^{p}\right)_{jj}
  \frac{ \sgn(u^\dagger_j) \psi_\gamma\left( \sgn(u^\dagger_j) u^\ast_{j} \right) }
  { \Psi_\gamma \left(  \sgn(u^\dagger_j) u^\ast_{j} \right)}  \,,
  \qquad \text{for }j \in \tZ'_k \text{ and } k \in \{1, \cdots, K'\}.
\end{equation*}
To this end, $\sgn(\mbf u^\ast)$ agrees with $\sgn(\bu^\dagger)$ on the observation nodes.
Once again using Corollary~\ref{probit-dimension-reduction}(iii)
we see that
\begin{equation*}
  \bu^\ast = \sum_{j \in \tZ'}
    \frac{ \sgn(u^\dagger_j) \psi_\gamma\left( \sgn(u^\dagger_j) b^\ast_{\pi(j)} \right) }
    { \Psi_\gamma \left(  \sgn(u^\dagger_j) b^\ast_{\pi(j)} \right)} \bc_{j,\eps}
    = \sum_{k=1}^{K'} \ta_k 
    %\tchi_k,
    \bar{\pmb \chi}_k,
  \end{equation*}
  where the last identity is once more up to leading order following
  Proposition~\ref{geometry-of-covaraince-functions} and for  coefficients
  $$
  \ta_k := \sum_{j\in Z'}  \frac{ \sgn(u^\dagger_j) \psi_\gamma\left( \sgn(u^\dagger_j) b^\ast_{\pi(j)} \right) }
    { \Psi_\gamma \left(  \sgn(u^\dagger_j) b^\ast_{\pi(j)} \right)} 
    %\left(\tchi_k\right)_j
    \left(\bar{\pmb \chi}_k\right)_j,
  $$
  such that $\sgn (\ta_k)$ agrees with $\sgn( \bu^\dagger)$ on $\tZ_k$
  for $k =1, \cdots, K'$. Finally, the claim  follows by applying
  Lemma~\ref{data-is-exact-with-high-probability} to compute the probability of the
  event where the data is exact. 
\end{proof}

Similarly to Corollary~\ref{probit-asymptotic-consistency-single-observation}
the next corollary follows from the proof of Theorem~\ref{multiple-observation-consistency}
and Lemma~\ref{as-exactness-of-data-probit}.

\begin{corollary}\label{probit-asymptotic-consistency-multiple-observations}
  Suppose Theorem~\ref{multiple-observation-consistency} is satisfied and 
  $|u^\dagger_j|  > 0$ for $j \in Z'$.
    Then for any sequence $\gamma, \tau, \eps \downarrow 0$
    along which $\eps = o(\tau^2)$ it holds that, 
    \begin{equation*}
       \sgn\left(u^\ast_j \right) \asto \sgn( u^\dagger_j) \qquad \forall j \in Z'',
\end{equation*}
 with a.s. convergence  in the sense of Lemma~\ref{as-exactness-of-data-probit}.
\end{corollary}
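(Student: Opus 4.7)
The plan is to extract the deterministic content of the proof of Theorem~\ref{multiple-observation-consistency} and combine it with the almost sure exactness of the labels supplied by Lemma~\ref{as-exactness-of-data-probit}. Specifically, inside the proof of Theorem~\ref{multiple-observation-consistency} one works on the event
\[
E_{\rm exact} := \bigl\{\,y(j) = \sgn(u^\dagger_j) \text{ for every } j \in Z'\,\bigr\},
\]
and shows that on this event, for $\tau$ small enough and $\eps = o(\tau^2)$, the minimizer $\mbf u^\ast$ takes (up to vanishing corrections of order $\eps/\tau^2 + \tau^{2\alpha} + \eps$) the form $\sum_{k=1}^{K'} \ta_k\,\bar{\pmb \chi}_k$ with coefficients $\ta_k$ whose signs match $\sgn(\mbf u^\dagger)$ on the cluster $\tZ_k$. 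Since $\bar{\pmb \chi}_k$ is supported on $\tZ_k$ and strictly positive there, this yields $\sgn(u^\ast_j)=\sgn(u^\dagger_j)$ for every $j \in Z''$.

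First, I would carefully verify that the threshold $\tau_0>0$ and the implied constants in the leading-order expansion are \emph{independent of $\gamma$}. This is the only subtle point, and it is built into the structure of the argument: the error terms in Proposition~\ref{geometry-of-covaraince-functions} depend only on the graph data, and the log-concavity of $\psi$ ensures that the signs of $F_{j,\gamma}(\,\cdot\,)$ depend only on $y(j)$, not on $\gamma$. Hence the conclusion $\sgn(u^\ast_j)=\sgn(u^\dagger_j)$ on $Z''$ holds \emph{for all sufficiently small $\tau$ with $\eps=o(\tau^2)$, uniformly in $\gamma>0$, whenever $E_{\rm exact}$ occurs}.

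Second, I would invoke Lemma~\ref{as-exactness-of-data-probit}, which under the hypothesis $|u^\dagger_j|>0$ for every $j \in Z'$ gives $y(j) \asto \sgn(u^\dagger_j)$ as $\gamma \downarrow 0$. Because $Z'$ is finite, a countable intersection of a.s. events is a.s., so
\[
\mbb P\Bigl(\, \exists\, \gamma_0(\omega) > 0 \text{ such that } E_{\rm exact} \text{ holds for all } \gamma < \gamma_0(\omega)\,\Bigr) = 1.
\]

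Finally, I would assemble the almost sure statement. Fix $\omega$ in the full-measure set identified in the previous step. Along any sequence $(\gamma_n,\tau_n,\eps_n) \downarrow 0$ with $\eps_n = o(\tau_n^2)$, for $n$ large enough we have both $\gamma_n < \gamma_0(\omega)$ (so that $E_{\rm exact}$ holds) and $\tau_n < \tau_0$ with $\eps_n/\tau_n^2$ small (so that the deterministic conclusion of Theorem~\ref{multiple-observation-consistency} applies). Combining these two, $\sgn(u^\ast_j)(\omega) = \sgn(u^\dagger_j)$ for every $j\in Z''$ and every sufficiently large $n$, which is precisely the claimed almost sure convergence. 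The main (mild) obstacle is the uniformity-in-$\gamma$ check noted above; the rest is a direct glueing of the probabilistic and deterministic pieces.
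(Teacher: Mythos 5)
Your proposal is correct and follows essentially the same route as the paper: the paper's proof likewise observes that, on the event that all observed labels are exact, the sign conclusion of Theorem~\ref{multiple-observation-consistency} holds for all sufficiently small $\tau$ with $\eps = o(\tau^2)$ independently of $\gamma$ (since the sign of $F_{j,\gamma}$ is determined by $y(j)$ alone), and then combines this with Lemma~\ref{as-exactness-of-data-probit} to conclude almost sure convergence along any such sequence. The uniformity-in-$\gamma$ check you flag is indeed the only delicate point, and you resolve it the same way the paper does.
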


\section{Multi-Class Classification: The One-Hot Method}\label{sec:multi-class-classification}
In the multi-class setting $M>2$  we can no longer use the $\sgn( \cdot )$ function to reduce the 
dimension of the latent variable to $\mbb R^{M-1}$ as we did for binary $M=2$ classification in 
section~\ref{sec:binary-classification}.  Instead we use one-hot encoding and work directly
with latent variables taking value in $\mbb R^M$. We set up the one-hot methodology 
in subsection~\ref{sec:31} assuming that $M \ge 2$, though it would be 
un-necessary to use it for $M=2$ and Gaussian label noise when it reduces to probit. 
In subsection \ref{ssec:32} we study the form of the one-hot likelihood that appears in
the optimization problem, resulting from the labelled data.
In subsection~\ref{sec:one-hot-quadratic-penalty} 
we introduce a quadratic regularization term  for the one-hot method
that uses the covariance matrix $C_\tau$ of \eqref{covariance-matrix} 
and is analogous to the quadratic penalty used in the probit method.
In subsection~\ref{sec:properties-multiclass-onehot-minimizer} 
we
study the one-hot minimization problem, formulating a discrete representer
theorem for the one-hot method. 
Subsection~\ref{sec:one-hot-SSL-consistency} concludes the analysis of the one-hot method,
studying consistency in some detail by putting together the results of
previous subsection with the spectral theory introduced in section~\ref{sec:covariance-perturbation}.

\subsection{Set-Up}\label{sec:31}
We now turn our attention to the multi-class classification problem, i.e.,
 where the label function $l: Z \mapsto \{1, \cdots, M\}$ assigns one of
$M\ge 1$ classes to each point in $X$. In this case the
sign function from section~\ref{sec:binary-classification} is no longer 
an appropriate classifying function and we need a different method.
We shall utilize the 
 {\it one-hot mapping} 
\begin{equation}
  \label{one-hot-S}
  S(\mbf v) = \argmax_{k} v_k, \qquad
  \mbf v = (v_1, \cdots, v_M) \in \mbb R^M.
\end{equation}
In the case of two maximal elements $v_{k_1}=v_{k_2}$, we take the smallest index. As with probit, the case of a near-tie is prone to misclassification by perturbation. For the purpose of consistency analysis, we later make assumptions that ensure a tie for the maximal element cannot occur.

The latent variable $u: Z \mapsto \mbb R^M$ 
is  isomorphic to a matrix $U = (u_{mj}) \in \mbb R^{M \times N}$.
We use $\bu_{j}$ to denote the $j$-th column of $U$ as a  vector in  $\mbb R^M$.
With this notation at hand  we consider the following  model for observed labels $y$:
\begin{equation}
  \label{one-hot-label-model}
  y(j) = S(\bu_{j} + {\pmb \eta}_j), \qquad j \in Z',
\end{equation}
where
\begin{equation*}
  {\pmb \eta}_j = ( \eta_{1j}, \cdots, \eta_{Mj})^T \in \mbb R^M, \quad \text{and} \quad
  \eta_{mj} \iidsim \psi.
\end{equation*}
Here $\psi$ is a  probability density function on $\mbb R$ as before.

\begin{remark}
  Note that the assumption that $\eta_{mj}$ are i.i.d. is not needed in general and
  one can consider correlations in the observation noise both between different classes
  and also amongst different points in the dataset. However, for simplicity  we
  only consider i.i.d. noise and leave the correlated noise setting for future study.
\end{remark}

\subsection{The One-Hot Likelihood}\label{ssec:32}
We begin by identifying the likelihood potential $\Phi$ for the model  \eqref{one-hot-label-model}.
For $j \in Z'$ and $m, \ell \in \{1, \cdots, M\}$ we have 
\begin{equation*}
  \begin{split}
    \mbb P[ y(j) = m | U] & = \mbb P[ u_{mj} + \eta_{mj} \ge u_{\ell j} +
    \eta_{\ell j}, \quad
    \forall \ell \in \{1, \cdots, M\}] \\
    & = \mbb P[ \eta_{\ell j} \le \eta_{mj} +  u_{mj} - u_{\ell j}, \quad \forall \ell
    \in \{1, \cdots, M\}]\\
    & = \mbb E \Big[ \mbb P[ \eta_{\ell j} \le \eta_{mj} +  u_{mj} - u_{\ell j}, \quad \forall \ell
    \in \{1, \cdots, M\}] \Big| \eta_{mj}\Big] \\
    & = \mbb E \Big[ \prod_{\ell \neq m} \mbb P[ \eta_{\ell j} \le \eta_{mj} +  u_{mj} - u_{\ell j}]
    \Big| \eta_{mj} \Big]\\
    & = \int_{\mbb R} \psi(t) \prod_{\ell \neq m} \Psi(t + u_{mj} - u_{\ell j})  dt
    =: \tPsi(\bu_j;m).
\end{split}
\end{equation*}
% Since the $\eta_{j\ell}$ are symmetric and log-concave random variables
% then $\eta_{j\ell} - \eta_{jk}$ are also  symmetric log-concave random variables.
% Let $\psi$ denote the Lebesgue density of the difference of two independent $\psi$ distributed random variables.
% Then it follows from the independence of the $\eta_{j\ell}$ that
% \begin{equation}\label{multi-class-Psi}
%   \begin{split}
%     \mbb P[ y(j) = k | U]
%     & = 
%     \int_{A_{k}} \prod_{\ell = 1, \ell \neq k}^K\psi(t_\ell) dt_\ell
%  = \prod_{\ell \neq k} \Psi(u_{j k} - u_{j\ell}) =: \Xi(\bar \bu_j; k),
%   \end{split}
% \end{equation}
where $\Psi$ is the CDF of $\psi$ as in the binary case.
To this end, we define the likelihood potential $\Phi(U; y)$ as

\begin{equation}\label{one-hot-likelihood}
  \Phi(U; y)
  = - \sum_{j\in Z'} \log \tPsi(  \bu_j; y(j))
  = - \sum_{j\in Z'} \log \left( \int_{\mbb R} \psi(t) \prod_{\ell \neq y(j)} \Psi( t + u_{y(j) j} - u_{\ell j})
    dt
    \right),
\end{equation}
which is in a similar form to \eqref{binary-probit-likelihood}.

\subsection{Quadratic Regularization Via Graph Laplacians (Multi-Class Case)}
\label{sec:one-hot-quadratic-penalty}
Recall, the  matrix $C_\tau$ defined in \eqref{covariance-matrix} based on the graph
Laplacian $L$.  In a similar way to the probit method we  define a quadratic regularization term
for
matrices $U\in \mbb R^{M \times N}$ of the form
\begin{equation}\label{multi-class-quad-term}
  %\sum_{m=1}^M \langle \bu_{ m}, C_\tau^{-1} \bu_{m} \rangle
  \langle C_\tau^{-1},  U^T U \rangle_F
  = \sum_{j,\ell =1}^N (C_{\tau}^{-1})_{j\ell} (U^T U)_{j\ell} 
  = \sum_{m=1}^M \sum_{j,\ell=1}^N (C_{\tau}^{-1})_{j\ell} u_{m \ell} u_{mj},
 \end{equation}
where $\langle \cdot, \cdot \rangle_F$ is the Frobenius inner product.
In the following we will us this quadratic term to regularize Problem~\ref{probit-SSL-general-form}
in the multi-class setting.

\begin{remark}
  If we think of $C_\tau^{-1}$ as a smoothing operator then  
 the above choice for the regularization term promotes smoothness of the rows of $U$
 while the columns of $U$ can be discontinuous. This means that each component of the
 function $u: Z \mapsto \mbb R^M$ isomorphic to $U$ is smooth amongst the vertices of 
 $G$ while the components themselves are allowed to be discontinuous at each node. 
\end{remark}

\subsection{Properties Of The One-Hot Minimizer}
\label{sec:properties-multiclass-onehot-minimizer}
Putting together the one-hot likelihood in \eqref{one-hot-likelihood} and
the quadratic regularization term \eqref{multi-class-quad-term} we
define  the {\it one-hot functional} 
\begin{equation}
  \label{one-hot-functional}
  \mcl J ( U) : = \frac{1}{2} \langle C_\tau^{-1}, U^TU \rangle_F
  + \Phi(U; y), \qquad U \in \mbb R^{M \times N}.
\end{equation}
We will see shortly that the one-hot functional has very similar properties
to the probit
functional $\msf J$ in binary classification. In particular, the regularization
term \eqref{multi-class-quad-term} is strictly convex and provides stability and geometric information
via the operator $C_\tau$ and
the one-hot likelihood $\Phi$ is also convex  and makes sure the minimizer of $\mcl J$
is a good predictor of observed labels. We start by showing the convexity of the
likelihood potential. 

\begin{proposition}[Convexity of the one-hot likelihood]\label{one-hot-likelihood-is-convex}
  Let $\psi$ be a log-concave PDF on $\mbb R$.
  Then the function
  $$\tPsi(\bv; m) = \int_{\mbb R} \psi(t) \prod_{\ell \neq m} \Psi( t + v_{m} - v_{\ell})  dt, \qquad
  \bv \in \mbb R^M,
  $$
  is log-concave on $\mbb R^M$ for all $m \in \{1, \cdots, M\}$, and where $\Psi$ is the CDF of $ \psi$.
\end{proposition}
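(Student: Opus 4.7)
The plan is to realize $\tilde\Psi(\bv; m)$ as the marginal of a log-concave function on $\mbb R \times \mbb R^M$ and then invoke Prékopa's theorem on the preservation of log-concavity under marginalization.

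First I would consider the integrand viewed as a function of $(t, \bv) \in \mbb R \times \mbb R^M$,
\begin{equation*}
  f(t,\bv) := \psi(t) \prod_{\ell \neq m} \Psi(t + v_m - v_\ell),
\end{equation*}
and argue that each factor is log-concave jointly in $(t,\bv)$. The factor $\psi(t)$ is log-concave on $\mbb R$ by hypothesis, and since it is constant in $\bv$, its lift to $\mbb R\times \mbb R^M$ is log-concave. For each $\ell \neq m$, Proposition~\ref{convexity-of-Phi} (which invokes the Bagnoli--Bergstrom result) ensures that $\Psi$ is log-concave and nondecreasing on $\mbb R$; composing with the affine map $(t,\bv) \mapsto t + v_m - v_\ell$ preserves log-concavity, since log-concavity is preserved under affine precomposition. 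Taking the product over $\ell \neq m$ of log-concave functions yields a log-concave function on $\mbb R \times \mbb R^M$, so $f(t,\bv)$ is log-concave.

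Next I would apply Prékopa's theorem: if $f:\mbb R^{k_1}\times \mbb R^{k_2}\to [0,\infty)$ is log-concave, then the marginal $\bv \mapsto \int f(t,\bv)\,dt$ is log-concave on $\mbb R^{k_2}$. Applying this with $k_1 = 1$ and $k_2 = M$ to our $f$ gives that
\begin{equation*}
  \bv \mapsto \tilde\Psi(\bv;m) = \int_{\mbb R} f(t,\bv)\, dt
\end{equation*}
is log-concave on $\mbb R^M$, which is precisely the claim.

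The only delicate point is verifying the joint log-concavity of each composed factor $\Psi(t + v_m - v_\ell)$; the main nontrivial input is already packaged in Proposition~\ref{convexity-of-Phi}, so this step reduces to the standard observation that affine reparametrization preserves log-concavity. I do not anticipate a real obstacle: Prékopa's theorem and the closure of the log-concave class under products and affine precomposition do all the work. One small subtlety worth flagging is ensuring $\tilde\Psi(\bv;m) > 0$ so that $\log \tilde\Psi$ is well defined, but this is immediate since $\psi > 0$ and $\Psi \in (0,1)$ everywhere on $\mbb R$ (as $\psi$ has full support), so the integrand is strictly positive.
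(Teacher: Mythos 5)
Your proposal is correct and follows essentially the same route as the paper's proof: establish joint log-concavity of the integrand $f(t,\bv)=\psi(t)\prod_{\ell\neq m}\Psi(t+v_m-v_\ell)$ via closure of log-concavity under affine precomposition and finite products, then conclude by Pr\'ekopa's marginalization theorem. The additional remark on strict positivity of $\tPsi$ is a reasonable (if minor) extra check that the paper leaves implicit.
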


\begin{proof}
 By \cite[Thm.~1]{bagnoli2005log} the functions $\Psi$ are
  log-concave whenever $\psi$ is log-concave. Furthermore, since
  log-concavity is preserved under affine transformations 
  and finite products  \cite[Sec~3.1]{saumard2014log}
  we conclude that
  $f(t, \bv) = \psi(t) \prod_{\ell \neq m} \Psi(t + v_m - v_\ell)$ is log-concave
  on $\mbb R^{M+1}$. The result now follows from the fact that
  the marginals of a log-concave function are also log-concave
  \cite[Thm.~3]{Prekopa80logarithmicconcave} and 
 $\tPsi(\bv; m)$ is precisely the marginal of $f(t, \bv)$
  over the $t$ variable.
\end{proof}

 Putting this result together with the
fact that the quadratic term in \eqref{one-hot-functional} is strictly convex
whenever $C_\tau^{-1}$ is strictly positive definite 
(which is true when $\tau^2 >0$)
gives the following result.

\begin{proposition}\label{one-hot-functional-is-strictly-convex}
  Let $\psi$ be a continuous and log-concave PDF on $\mbb R$ and
  let $C_\tau^{-1}$ be a strictly positive-definite matrix on $\mbb R^N$. Then
  the functional $\mcl J$ defined in \eqref{one-hot-functional} with
  $\Phi$ given by  \eqref{one-hot-likelihood} is
  strictly convex. 
\end{proposition}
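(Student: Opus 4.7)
The plan is to decompose $\mcl J$ into the quadratic regularization term and the likelihood term $\Phi$, show that the first is strictly convex while the second is (merely) convex, and then invoke the fact that the sum of a strictly convex and a convex function is strictly convex.

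First I would rewrite the quadratic regularization using \eqref{multi-class-quad-term} by exchanging the order of summation, namely
\begin{equation*}
\frac{1}{2} \langle C_\tau^{-1}, U^T U \rangle_F = \frac{1}{2} \sum_{m=1}^M \langle \mbf u_m, C_\tau^{-1} \mbf u_m \rangle,
\end{equation*}
where $\mbf u_m \in \mbb R^N$ denotes the $m$-th row of $U$. Since $C_\tau^{-1}$ is strictly positive-definite by hypothesis, each summand is a strictly convex function of $\mbf u_m$, and the sum is a strictly convex function of $U = (\mbf u_1^T, \ldots, \mbf u_M^T)^T \in \mbb R^{M\times N}$ because the decomposition of $U$ into its rows is a linear bijection onto $(\mbb R^N)^M$.

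Next I would handle the likelihood term. By Proposition~\ref{one-hot-likelihood-is-convex}, for each $m \in \{1, \ldots, M\}$ the function $\tPsi(\bv; m)$ is log-concave on $\mbb R^M$, so $-\log \tPsi(\bv; m)$ is convex in $\bv$. Because $\psi$ is continuous and has full support, $\tPsi(\bv; m) > 0$ and the logarithm is well-defined. Writing $\Phi(U; y) = -\sum_{j\in Z'} \log \tPsi(\bu_j; y(j))$ exhibits $\Phi$ as a finite sum of functions, each of which depends only on one column $\bu_j$ of $U$ and is convex in that column; hence $\Phi$ is convex (though not strictly, in general) as a function of $U \in \mbb R^{M\times N}$.

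Finally, $\mcl J$ is the sum of a strictly convex function (the quadratic regularizer) and a convex function ($\Phi$), and therefore is strictly convex. The only subtle point, and the one I would emphasize, is the row/column decoupling: the quadratic term decouples across rows of $U$ while the likelihood term decouples across columns, so one must check convexity in the full matrix variable $U$ rather than in either slicing separately. Both the reduction of the Frobenius inner product and the observation that convexity in each column implies convexity of the column-wise sum on the product space $\mbb R^{M\times N}$ handle this, completing the argument.
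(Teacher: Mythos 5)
Your proof is correct and follows essentially the same route as the paper, which likewise combines strict convexity of the quadratic term (from strict positive-definiteness of $C_\tau^{-1}$) with convexity of the likelihood supplied by Proposition~\ref{one-hot-likelihood-is-convex}, and concludes that the sum is strictly convex. Your explicit row/column decoupling of the Frobenius term and the columnwise likelihood merely fills in details the paper leaves implicit.
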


We are now set  to prove an analog of Proposition~\ref{binary-representer-theorem}
for the one-hot functional.

\begin{proposition}[Representer theorem for one-hot functional]\label{one-hot-representer-theorem}
  Let $G = \{ X, W\}$ be a weighted graph and let $\psi$ be a
  log-concave PDF. Suppose $\Phi$ is given by
   \eqref{one-hot-likelihood} and the matrix $C_\tau$
  is given by \eqref{covariance-matrix} with parameters $\tau^2, \alpha >0$. Then,
  \begin{enumerate}[(i)]
  \item The one-hot functional  $\mcl J$ has a unique minimizer
    $U^\ast \in  \mbb R^{M \times N}$.
  \item The minimizer $U^\ast$ satisfies the EL equations
    \begin{equation}
      \label{one-hot-EL}
        C_\tau^{-1} U^{\ast T}  = \sum_{j \in Z'} \be_j (\mbf f_j(\mbf u^\ast_j))^T,
    \end{equation}
    where $\be_j$ is the $j$-th standard coordinate vector in $\mbb R^N$,
    the vector $\bu^\ast_j$ denotes the $j$-th column of $U^\ast$ and the
    functions $\mbf f_j : \mbb R^M \mapsto \mbb R^M$ are defined as
    \begin{equation}\label{bf-definition}
     \mbf f_j(\bv) = (f_{1j}(\bv), \cdots, f_{Mj}(\bv))^T, \qquad
      f_{mj} (\bv) = \frac{1}{\tPsi(\bv; y(j))} \frac{\partial \tPsi
      (\bv; y(j))}{
        \partial v_m},%\qquad \bv = (v_1, \cdots, v_m) \in \mbb R^M,
    \end{equation}
    for vectors $\bv = (v_1, \cdots, v_M)^T$
    and 
    \begin{equation*}
      \frac{\partial \tPsi
      ( \bv; m)}{\partial v_{i}} = 
      \left\{
        \begin{split}
          -& \int_{\mbb R} \psi(t) \psi(t+ v_{m} - v_{i}) \prod_{\ell \neq i, m}
          \Psi(t+ v_{m} - v_{\ell})
           dt \quad &&\text{if}
          \quad i \neq m, \\
          \sum_{k \neq m} & \int_{\mbb R} \psi(t)
          \psi(t+v_{m} - v_{k}) \prod_{\ell \neq k, m} \Psi(t + v_{m} - v_{\ell})  dt
         \quad &&\text{if}
          \quad i = m.
        \end{split}
        \right.
      \end{equation*}
      % and where
      % \begin{equation*}
      %   \begin{split}
      %   \tilde{\Psi}_k(\bar \bu_{j}; y(j)) & := \int_{A_{k y(j)}}
      %   \prod_{m = 1, m \neq  k, y(j) }^K \psi(t_m) dt_m,
      %   \\
      %   A_{k y(j)} &:= \cap_{m \neq k, y(j)} \{ t_m \le u_{j y(j)} - u_{jm} \} \subset \mbb R^{K-2}.
      % \end{split}
      % \end{equation*}
  \item The minimizer $U^\ast$ can be represented using the expansion
    \begin{equation*}
      U^\ast = \sum_{j \in Z'}  \tba_j \bc_j^T,
    \end{equation*}
   where $\tba_j \in \mbb R^{M}$ and
    $\bc_j = C_\tau \be_j \in \mbb R^N$.

  \item The matrix $U^\ast$ solves \eqref{one-hot-EL} if and only if the vectors
    $\tba_j = (\ta_{1j}, \cdots, \ta_{Mj})^T \in \mbb R^M$ solve the nonlinear
    system of equations
    \begin{equation}
      \label{dimension-reduced-system-one-hot}
      \breve \ba_j = \mbf f_j\left( \sum_{k \in Z'} c_{jk} \breve \ba_k\right), \qquad \forall j \in Z',
    \end{equation}
    where $c_{ij}$ denote the entries of $C_\tau$.
  \end{enumerate}
\end{proposition}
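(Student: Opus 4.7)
The proof will follow the template set by Proposition~\ref{binary-representer-theorem} in the binary case, adapted to the matrix-valued setting. First, for existence and uniqueness of $U^\ast$, I would invoke Proposition~\ref{one-hot-functional-is-strictly-convex}, which gives strict convexity of $\mcl J$ under the stated assumptions on $\psi$ and $C_\tau$. Boundedness from below is immediate: the quadratic term is non-negative because $C_\tau^{-1}$ is strictly positive definite (Lemma~\ref{C-is-strict-positive-definite}), while $-\log \tPsi(\,\cdot\,;m) \ge 0$ since $\tPsi(\,\cdot\,;m) \in (0,1)$ as an expectation of products of CDFs. Coercivity then follows from the strict positive definiteness of $C_\tau^{-1}$, so a unique minimizer exists.

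To establish the EL equation (ii), I would compute the gradient of $\mcl J$ component-wise. Writing the quadratic term as $\frac{1}{2}\mathrm{tr}(C_\tau^{-1} U^T U)$ and using symmetry of $C_\tau^{-1}$ gives
\begin{equation*}
\frac{\partial}{\partial U}\,\tfrac{1}{2}\langle C_\tau^{-1}, U^T U\rangle_F = U C_\tau^{-1}\,.
\end{equation*}
Differentiating $\Phi(U;y) = -\sum_{j\in Z'} \log \tPsi(\bu_j; y(j))$ column-wise produces $\partial_U \Phi = -\sum_{j\in Z'} \mbf f_j(\bu_j) \be_j^T$, where the definition of $\mbf f_j$ in \eqref{bf-definition} comes out naturally from the chain rule, and the formulas for $\partial_{v_i} \tPsi(\bv; m)$ follow by differentiating under the integral sign using the product rule on $\prod_{\ell\ne m}\Psi(t+v_m-v_\ell)$. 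Setting $\nabla \mcl J(U^\ast)=0$ and transposing yields the stated EL equation \eqref{one-hot-EL}.

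For (iii), I would left-multiply \eqref{one-hot-EL} by $C_\tau$ to obtain
\begin{equation*}
U^{\ast T} = \sum_{j\in Z'} C_\tau \be_j\, \mbf f_j(\bu^\ast_j)^T = \sum_{j\in Z'} \bc_j \mbf f_j(\bu^\ast_j)^T\,,
\end{equation*}
and then transpose to obtain the claimed expansion with $\tba_j := \mbf f_j(\bu^\ast_j) \in \mbb R^M$.

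For (iv), reading off the $j$-th column of the representation $U^\ast = \sum_k \tba_k \bc_k^T$ gives $\bu^\ast_j = \sum_{k \in Z'} c_{jk}\,\tba_k$ for any $j\in Z$. Substituting this into the definition $\tba_j = \mbf f_j(\bu^\ast_j)$ yields \eqref{dimension-reduced-system-one-hot}. For the converse direction, given $\{\tba_j\}$ satisfying the nonlinear system, define $U^\ast := \sum_{j\in Z'} \tba_j \bc_j^T$; then $\bu^\ast_j = \sum_{k\in Z'} c_{jk}\tba_k$ so $\tba_j = \mbf f_j(\bu^\ast_j)$, and multiplying the identity $U^{\ast T} = \sum_{j\in Z'} \bc_j \tba_j^T$ by $C_\tau^{-1}$ recovers the EL equation \eqref{one-hot-EL}. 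The main technical nuisance (rather than a genuine obstacle) will be careful bookkeeping of transposes and row/column conventions when differentiating the Frobenius quadratic form; everything else mirrors the scalar probit argument, with convexity of $\tPsi$ supplied by Proposition~\ref{one-hot-likelihood-is-convex}.
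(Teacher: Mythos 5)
Your proposal is correct and takes essentially the same route as the paper, whose proof simply states that the argument mirrors Proposition~\ref{binary-representer-theorem}: (i) via Proposition~\ref{one-hot-functional-is-strictly-convex}, (ii) by computing the first-order optimality conditions, and (iii)--(iv) by solving the EL equations directly. Your gradient computation of the Frobenius quadratic form, the left-multiplication by $C_\tau$, and the column-reading argument with $\tba_j = \mbf f_j(\bu^\ast_j)$ are precisely the details the paper leaves implicit.
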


\begin{proof}
  The method of proof is very similar to Proposition~\ref{binary-representer-theorem}.
  (i) Follows directly from Proposition~\ref{one-hot-functional-is-strictly-convex}.
  (ii) Observe that $\mcl J(U)$ is continuously differentiable, and so \eqref{one-hot-EL} follows by directly computing the
  first order optimality conditions $\nabla \mcl J (U^\ast) = 0$.
  Proof of (iii) and (iv) is very similar to Proposition~\ref{binary-representer-theorem}(iii) and (iv) and is essentially the result of solving the EL
  equations \eqref{one-hot-EL} directly. 
\end{proof}

\begin{proposition}[One-hot dimension reduction]\label{one-hot-dimension-reduction}
  Suppose the conditions of Proposition~\ref{one-hot-representer-theorem} hold. Then 
  \begin{enumerate}[(i)]
  \item The problem of finding the matrix $U^\ast \in \mbb R^{M \times N}$ the minimizer
    of the one-hot functional $\mcl J$, is equivalent to the problem of finding the
    Matrix $B^\ast \in \mbb R^{M \times J}$ that solves
    \begin{equation}
      \label{one-hot-dimension-reduced-EL}
       (\tC_\tau)^{-1} B^{\ast T} = \mcl F'( B^\ast), 
    \end{equation}
    where $\tC_\tau$ is as in \eqref{sub-matrix-C-tilde} and,
    for matrices $B \in \mbb R^{M \times J}$ the map $\mcl F': \mbb R^{M \times J}
    \mapsto \mbb R^{J \times M}$ is defined by
    \begin{equation*}
      (\mcl F'( B))_{im} := f_{m \pi^{-1}(i)}( \mbf b_i) \quad
      \text{for} \quad (i,m) \in \{1, \cdots, J\} \times
  \{1, \cdots, M\},
\end{equation*}
where the reordering map
$\pi$ is as in \eqref{reordering-index-map} and $\mbf b_i$ denotes the
$i$-th column of  $B$.

  \item Moreover, the matrix $B^\ast$ solves the optimization problem
    \begin{equation*}
      B^\ast = \argmin_{B \in \mbb R^{J \times M}} {\mcl J'}( B ),
    \end{equation*}
    where
    \begin{equation*}
      {\mcl J'}(B) := \frac{1}{2} \langle (\tC_\tau)^{-1}, B^T B \rangle_F
      - \sum_{i=1}^J  \log \tPsi \Big( \mbf b_i; y \left(\pi^{-1}(i) \right) \Big)
    \end{equation*}

  \item  The matrices $B^\ast$ and $U^\ast$ satisfy the relationship
    \begin{equation*}
      U^{\ast} = \sum_{j \in Z'}  \tbb^{\ast}_j \bc_j^T ,
    \end{equation*}
     where $\tbb^\ast_j$ denotes the $\pi(j)$-th column of
      $ B^{\ast}(\tC_{\tau})^{-T}$
    and
    \begin{equation*}
      \mbf b^\ast_k = \bu^\ast_{\pi^{-1}(k)}, \qquad k =\{1,\cdots, J\}.
    \end{equation*}
  \end{enumerate}
\end{proposition}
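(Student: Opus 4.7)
The plan is to adapt the argument of Corollary~\ref{probit-dimension-reduction} to the matrix-valued setting; the mathematical content is essentially identical, but with more intricate bookkeeping due to the one-hot encoding and the Frobenius-inner-product structure of the quadratic term.

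The starting point is the representer expansion from Proposition~\ref{one-hot-representer-theorem}(iii): $U^\ast = \sum_{j \in Z'} \tba_j \bc_j^T$, together with the nonlinear system \eqref{dimension-reduced-system-one-hot} satisfied by the vectors $\tba_j \in \mbb R^M$. First I would restrict $U^\ast$ to the labelled set via $\pi$. Extracting the $\pi^{-1}(i)$-th column gives
\begin{equation*}
  \bu^\ast_{\pi^{-1}(i)} \;=\; \sum_{j \in Z'} (\bc_j)_{\pi^{-1}(i)}\, \tba_j \;=\; \sum_{k=1}^{J} (\tC_\tau)_{ik}\, \tba_{\pi^{-1}(k)}\,,
\end{equation*}
using the definition \eqref{sub-matrix-C-tilde} of $\tC_\tau$. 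Collecting these into $B^\ast$ with columns $\mbf b^\ast_k := \bu^\ast_{\pi^{-1}(k)}$ yields the matrix identity $B^{\ast T} = \tC_\tau \breve A$, where $\breve A \in \mbb R^{J \times M}$ is the matrix with rows $\tba_{\pi^{-1}(k)}^T$. Since $\tC_\tau$ is a principal submatrix of the positive-definite $C_\tau$ it is invertible, giving $\breve A = (\tC_\tau)^{-1} B^{\ast T}$, which is the key bridge between the full and reduced problems.

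Substituting this into \eqref{dimension-reduced-system-one-hot} and using the definition of $\mcl F'$ immediately produces the reduced Euler-Lagrange system \eqref{one-hot-dimension-reduced-EL}, establishing (i). For (ii), I would verify that $\mcl J'$ is strictly convex: the quadratic term is strictly convex because $(\tC_\tau)^{-1}$ is positive definite (a principal submatrix of a positive-definite matrix is positive definite, so its inverse is as well), and each term $-\log \tPsi(\mbf b_i; y(\pi^{-1}(i)))$ is convex by Proposition~\ref{one-hot-likelihood-is-convex}. Computing the first-order condition $\nabla \mcl J'(B^\ast) = 0$ directly recovers \eqref{one-hot-dimension-reduced-EL}, so its unique minimizer coincides with the $B^\ast$ constructed above. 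Finally (iii) follows by inserting the relation $\tba_j = ((\tC_\tau)^{-1} B^{\ast T})_{\pi(j),\,\cdot}^{\,T}$, i.e.\ the $\pi(j)$-th column of $B^\ast (\tC_\tau)^{-T}$, back into the representer expansion $U^\ast = \sum_{j \in Z'} \tba_j \bc_j^T$.

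The main obstacle is notational rather than conceptual: the matrix-valued latent variable, the reordering map $\pi$, and the transpose that appears because the Frobenius pairing acts on $U^T U$ (rather than on $\bu$ as in the probit case) must all be threaded through consistently. No new analytic ingredients are needed beyond Propositions~\ref{one-hot-representer-theorem} and \ref{one-hot-likelihood-is-convex} and the positive-definiteness of principal submatrices of $C_\tau$.
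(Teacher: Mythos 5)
Your proposal is correct and follows essentially the same route as the paper's proof: pass through the representer expansion of Proposition~\ref{one-hot-representer-theorem}(iii)--(iv), identify $B^{\ast T}=\tC_\tau \breve A$ with $\breve A$ built from the $\tba_j$, invert $\tC_\tau$ to obtain the reduced Euler--Lagrange system, and read off the first-order conditions of $\mcl J'$. Your explicit verification of strict convexity of $\mcl J'$ (via positive-definiteness of the principal submatrix and Proposition~\ref{one-hot-likelihood-is-convex}) is a small but worthwhile addition that the paper leaves implicit.
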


\begin{proof}
  (i)
  Let $A = (a_{mi}) \in \mbb R^{M \times J}$ be the matrix with
  entries $a_{mi} = \tba_{m \pi^{-1}(i)}$. That is, the columns of $A$ are the $\tba_j$
  vectors. 
Then we can rewrite \eqref{dimension-reduced-system-one-hot} as
\begin{equation}\label{almost-dimension-reduced-system-one-hot}
  A = -\left(\mcl F'( \tC_{\tau} A^T)\right)^T\,,  
\end{equation}
Let
\begin{equation}\label{B-ast-and-A-identity}
B^{\ast T} = \tC_\tau A^T \in \mbb R^{J \times M}
\end{equation}
then we can rewrite \eqref{almost-dimension-reduced-system-one-hot} as
\begin{equation*}
  (\tC_{\tau})^{-1}  B^{\ast T} = \mcl F'( B^\ast).
\end{equation*}
(ii) Denote by $b^\ast_{mi}$ the entries of $B^\ast$.
Then we can directly verify that
\begin{equation*}
  (\mcl F'(B^\ast))_{im} = -  \frac{\partial}{\partial b^\ast_{mi}}\sum_{i=1}^J
  \log \tPsi\Big( \mbf b^\ast_i; y(\pi^{-1}(i)) \Big),
\end{equation*}
from which we  infer that the matrix $B^\ast$ indeed solves the following optimization problem
\begin{equation*}
  B^\ast = \argmin_{B \in \mbb R^{J \times M}} \frac{1}{2} \langle (\tC_\tau)^{-1},
  B^T B \rangle_F - \sum_{i =1}^J \log \tPsi \Big({\mbf b}_{i}; y(\pi^{-1}(j)) \Big).
\end{equation*}
(iii) Following  \eqref{B-ast-and-A-identity}  $A = B^\ast (\tC_{\tau})^{-T}$.
Let $\ba_i$ denote the columns of $A$.
Then by Proposition~\ref{one-hot-representer-theorem}(iii),
\begin{equation*}
  U^\ast = \sum_{j \in Z'} \tba_{j} \mbf c_j^T =
  \sum_{i =1}^J \tba_{\pi^{-1}(i)} \mbf c_{\pi^{-1}(i)}^T
  = \sum_{i =1}^J \ba_i \mbf c_{\pi^{-1}(i)}^T
  = \sum_{j\in Z'} \tbb_j^\ast \mbf c_{j}^T.
\end{equation*}
On the other hand, using $ B^\ast =  A (\tC_\tau)^T$ and Proposition~\ref{one-hot-representer-theorem}(iii) we can write
\begin{equation*}
  b^\ast_{mk} = \sum_{i = 1}^J a_{m \pi^{-1}(i)} c_{\pi^{-1}(i), \pi^{-1}(k)} = u_{m \pi^{-1}(k)}, 
\end{equation*}
which gives the desired identity connecting $\mbf b^\ast_k$ and $\bu^\ast_{\pi^{-1}(k)}$.
{}
\end{proof}
\subsection{Consistency Of The One-Hot Method}
\label{sec:one-hot-SSL-consistency}
In analogy with  binary classification we now discuss consistency of
multi-class classification using the one-hot method. Our results here make
use of the perturbation theory developed in subsection~\ref{sec:covariance-perturbation}.
As in subsection~\ref{sec:binary-SSL-consistency}
we consider a graph $G_0 = \{ X, W_0\}$ consisting of $K$ connected
clusters $\tZ_1, \cdots, \tZ_K$ and let $G_\eps = \{ X, W_\eps\}$ be a
family of graphs parameterized by $\eps >0$ that are perturbations of $G_0$. We denote by $C_{\tau,\eps}$ the covariance matrix corresponding to the graph $G_\eps$ as defined in \eqref{L-eps-perturbation-expansion}. 
Further, we assume the data $y$ is generated by a ground truth function $U^\dagger \in \mbb R^{M \times N}$
that is,
\begin{equation}\label{one-hot-noise-model-rescaled-std}
  y(j) = S(  \bu^\dagger_j + {\pmb \eta}_j), \qquad j \in Z',
\end{equation}
where $S$ is defined in \eqref{one-hot-S} 
and we define the noise $\eta_{mj}$ through a reference random variable
analogous to \eqref{noise-scaling-iid-sequence-probit}:
\begin{equation}\label{noise-scaling-iid-sequence-one-hot}
  \eta_{mj} = \gamma \teta_{mj}, \qquad \teta_{mj} \iidsim \psi,
\end{equation}
where $\psi$ is a mean zero PDF with unit standard deviation.
In the same spirit as subsection~\ref{sec:binary-SSL-consistency}, our first task is to estimate the probability of the event where the observed labels $y(j)$
are exact, i.e., $y(j)$ coincides with the index of the maximal element in the $j$-th column of $U^\dagger$.

\begin{lemma}\label{data-is-exact-with-high-probability-multiclass}
  Suppose $\psi$ is log-concave then 
there exist constants $\omega_1, \omega_2 > 0$ so that 
\begin{equation*}
  \begin{split}
  \mbb P \big( y(j) &= S(\bu_j^\dagger),\: \forall j \in Z'\big)\\
 & \ge \int_{\mbb R} \psi_\gamma(t)
  \prod_{j \in Z'} \bigg[  1 - \omega_2
    \exp \left( - \frac{\omega_1}{\gamma} \left(t +
        \min_{k\neq y(j)}\left\{ u^\dagger_{y(j)j} - u^\dagger_{kj} \right\}\right)  \right) \bigg]^{M-1}  dt.
  \end{split}
\end{equation*}
That is,
if  $\gamma$ is  small the data $y$ is exact  with high probability.
\end{lemma}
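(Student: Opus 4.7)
The plan is to condition on the noise coordinate corresponding to the true label at each observation node, use independence of the remaining coordinates to express the conditional probability as a product of CDFs, and then apply a one-sided exponential tail estimate for $1-\Psi_\gamma$ that stems from log-concavity of $\psi$.

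First, I would fix $j \in Z'$, set $m^\ast := S(\bu^\dagger_j)$, and rewrite the event $\{y(j) = m^\ast\}$ as the event that $\eta_{\ell j} \leq \eta_{m^\ast j} + u^\dagger_{m^\ast j} - u^\dagger_{\ell j}$ holds for every $\ell \neq m^\ast$. Conditioning on $\eta_{m^\ast j} = t$ and invoking independence of the remaining $M-1$ noise entries yields
$$
\mbb P\bigl(y(j) = m^\ast \,|\, \eta_{m^\ast j} = t\bigr) \,=\, \prod_{\ell \neq m^\ast} \Psi_\gamma\bigl(t + u^\dagger_{m^\ast j} - u^\dagger_{\ell j}\bigr),
$$
an identity that parallels the single-coordinate computation in subsection~\ref{ssec:32}.

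Second, I would derive a uniform one-sided exponential tail bound for the survival function. By Lemma~\ref{convex-measures-have-exp-tails}, log-concavity of $\psi$ supplies finite exponential moments on a non-trivial interval, and a Markov-inequality argument identical to the one used in the proof of Lemma~\ref{data-is-exact-with-high-probability} produces constants $\omega_1, \omega_2 > 0$ depending only on $\psi$ such that
$$
1 - \Psi_\gamma(s) \,\leq\, \omega_2 \exp\bigl(-\omega_1 s / \gamma\bigr), \qquad s \in \mbb R.
$$
Writing $\Psi_\gamma = 1 - (1-\Psi_\gamma)$ and replacing each $\ell$-dependent gap $u^\dagger_{m^\ast j} - u^\dagger_{\ell j}$ by the minimal gap $\delta_j := \min_{k \neq m^\ast}\{u^\dagger_{m^\ast j} - u^\dagger_{k j}\}$ gives the uniform lower bound
$$
\mbb P\bigl(y(j) = m^\ast \,|\, \eta_{m^\ast j} = t\bigr) \,\geq\, \Bigl[1 - \omega_2 \exp\bigl(-\tfrac{\omega_1}{\gamma}(t + \delta_j)\bigr)\Bigr]^{M-1}.
$$

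Third, I would marginalize in $\eta_{m^\ast j}$ against its density $\psi_\gamma$ and combine the single-node estimates across $j \in Z'$ using joint independence of the noise vectors $\{\pmb \eta_j\}_{j\in Z'}$ to assemble the claimed joint lower bound. The ``high-probability'' conclusion in the small-$\gamma$ regime is then immediate: whenever the ground-truth maximizer at $j$ is unique, $\delta_j > 0$, so the exponential term inside each bracket tends to $0$ for $t$ in the effective support of $\psi_\gamma$ and dominated convergence drives each integral to one. The only nontrivial step is the exponential tail estimate for $1 - \Psi_\gamma$ with constants uniform in $\gamma$; this is a one-sided analog of the symmetric estimate in the proof of Lemma~\ref{data-is-exact-with-high-probability}, relying only on symmetry and log-concavity of $\psi$. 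The rest is routine: conditional independence, monotonicity of $\Psi_\gamma$, and Fubini.
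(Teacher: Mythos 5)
Your proposal is correct and follows essentially the same route as the paper's proof: condition on the noise coordinate of the true label, use independence of the remaining $M-1$ coordinates to get a product of CDF evaluations, lower-bound each factor via the Markov-inequality exponential tail estimate supplied by log-concavity (Lemma~\ref{convex-measures-have-exp-tails}), replace the individual gaps by the minimal gap, and integrate out the conditioning variable while taking the product over $j\in Z'$. No substantive differences.
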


\begin{proof}
  Observe that $y(j) = S(\bu^\dagger_j)$ whenever $u^\dagger_{y(j) j} + \eta_{ y(j) j} \ge
  u^\dagger_{mj} + \eta_{mj}$ for all $m \neq y(j)$. Therefore,
  \begin{equation*}
    \begin{split}
      \mbb P \left( y(j) = S(\bu_j^\dagger)\: \forall j \in Z'\right)
      &= \mbb P ( \{ \eta_{mj} \le \eta_{y(j) j} + u^\dagger_{y(j) j} - u^\dagger_{mj} : j \in Z',
      m \neq y(j)\} )\\
      &= \mbb E \: \mbb P \left( \{ \eta_{mj} \le \eta_{y(j)j} + u^\dagger_{y(j)j} - u^\dagger_{mj},
        : j \in Z',  m \neq y(j)\}
       | \eta_{y(j)j} \right).
    \end{split}
  \end{equation*}
  Now by Lemma~\ref{convex-measures-have-exp-tails} and Markov's inequality
  (see proof of Lemma~\ref{data-is-exact-with-high-probability}) along with
  independence of the $\eta_{mj}$ we conclude
 there exist constants $\omega_1, \omega_2 >0$ so that
  for fixed $j \in Z'$,
  \begin{equation*}
    \begin{split}
    \mbb P
    &\big( \{ \eta_{mj} \le \eta_{y(j)j}  + u^\dagger_{y(j)j} - u^\dagger_{mj}, :  m \neq y(j)\}
    | \eta_{y(j)j} \big)\\
     & \ge \mbb P
     \big( \{ \eta_{mj} \le  \eta_{y(j)j}  + \min_{k\neq y(j) }\{ u^\dagger_{y(j)j} - u^\dagger_{kj}\} :
     m \neq y(j)\}
    | \eta_{y(j)j} \big)\\
    &\ge \prod_{m \neq y(j)} \bigg[  1 - \omega_2
    \exp \left( - \frac{\omega_1}{\gamma} \left(\eta_{y(j)j} +
        \min_{k\neq y(j)}\{ u^\dagger_{y(j)j} - u^\dagger_{kj} \}\right)  \right) \bigg] \\
    & = \bigg[  1 - \omega_2
    \exp \left( - \frac{\omega_1}{\gamma} \left(\eta_{y(j)j} +
        \min_{k\neq y(j)}\{ u^\dagger_{y(j)j} - u^\dagger_{kj} \}\right)  \right) \bigg]^{M-1}.
  \end{split}
\end{equation*}
Therefore,
\begin{equation*}
  \begin{split}
  &\mbb P \left( \{ \eta_{mj} \le \eta_{y(j)j} + u^\dagger_{y(j)j} - u^\dagger_{mj}, : j \in Z',  m \neq y(j)\}
    | \eta_{y(j)j} \right)\\
  &\ge \prod_{j \in Z'} \bigg[  1 - \omega_2
    \exp \left( - \frac{\omega_1}{\gamma} \left(\eta_{y(j)j} +
        \min_{k\neq y(j)}\left\{ u^\dagger_{y(j)j} - u^\dagger_{kj} \right\}\right)  \right) \bigg]^{M-1}.
   \end{split}
\end{equation*}
Integrating the above bound over $\eta_{y(j)j}$ gives the desired result.
\end{proof}

The following lemma is the analog of Lemma~\ref{as-exactness-of-data-probit}
for the one-hot  model \eqref{one-hot-noise-model-rescaled-std}.
The method of proof is identical to that lemma
and is therefore omitted. 

  \begin{lemma}\label{as-exactness-of-one-hot-data}
Suppose \eqref{one-hot-noise-model-rescaled-std} and \eqref{noise-scaling-iid-sequence-one-hot}
    hold, $\psi$ is log-concave and 
    \begin{equation}\label{one-hot-as-convergence-assumption}
   \as{ \min_{m,k\in \{1, \cdots, M\},  m \neq k} \{ |u^\dagger_{mj} - u^\dagger_{kj}| \} > \theta >0
    \qquad \forall j \in Z'. }
  \end{equation}
 Then for any sequence $\gamma\downarrow 0$, $y(j) \asto S(\bu^\dagger_j)$ with respect to $\prod_{(m,j)\in \{1, \dots, M\}\times Z'} \psi(t_{mj})$
   the law of the i.i.d. sequence  $\{ \teta_{mj}\}_{(m,j) \in \{1, \cdots, M\} \times Z'}$.
\end{lemma}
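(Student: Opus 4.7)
The plan is to mimic the proof of Lemma~\ref{as-exactness-of-data-probit} from the probit case, adapting it to the multi-class setting where the event of interest is that perturbing each column $\bu_j^\dagger$ by a small noise vector does not change which coordinate achieves the maximum. The assumption \eqref{one-hot-as-convergence-assumption} ensures a strict gap between the maximal coordinate $S(\bu_j^\dagger)$ and every other coordinate, which gives room for a small perturbation to be absorbed without flipping the $\argmax$.

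First I would observe that since $\psi$ is log-concave, Lemma~\ref{convex-measures-have-exp-tails} ensures that $\teta_{mj}$ has finite exponential moments, so in particular $|\teta_{mj}| < \infty$ almost surely for every $(m,j) \in \{1,\dots,M\} \times Z'$. Since $Z'$ and $\{1,\dots,M\}$ are finite, there is an event of full measure on which all of the $\teta_{mj}$ are simultaneously finite; restrict attention to this event.

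Next, fix $j \in Z'$ and let $m^\star = S(\bu_j^\dagger)$, which is well-defined and unique by \eqref{one-hot-as-convergence-assumption}. By definition of $S$ in \eqref{one-hot-S}, we have $y(j) = S(\bu_j^\dagger)$ precisely when
\begin{equation*}
u^\dagger_{m^\star j} + \gamma \teta_{m^\star j} \;\ge\; u^\dagger_{kj} + \gamma \teta_{kj} \qquad \forall\, k \neq m^\star,
\end{equation*}
or equivalently $\gamma(\teta_{kj} - \teta_{m^\star j}) \le u^\dagger_{m^\star j} - u^\dagger_{kj}$. By \eqref{one-hot-as-convergence-assumption} the right-hand side exceeds $\theta$ for every $k \neq m^\star$, so a sufficient condition is
\begin{equation*}
\gamma < \frac{\theta}{\max_{k \neq m^\star}|\teta_{kj} - \teta_{m^\star j}| + 1}.
\end{equation*}
Since all $\teta_{mj}$ are a.s. finite and $Z'$ is finite, the right-hand side is a.s. a positive random variable; taking $\gamma$ smaller than the minimum of these quantities over all $j \in Z'$ (again a positive a.s. quantity) guarantees $y(j) = S(\bu_j^\dagger)$ simultaneously for all $j \in Z'$.

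The only subtlety, which I do not view as a serious obstacle, is that the one-hot classifier depends on \emph{relative} differences of noise coordinates rather than a single noisy scalar as in probit; this is handled cleanly by the uniform gap in \eqref{one-hot-as-convergence-assumption}, together with the fact that a finite collection of a.s. finite random variables is a.s. uniformly bounded. Putting everything together yields $y(j) \asto S(\bu_j^\dagger)$ along any sequence $\gamma \downarrow 0$, with respect to the stated product measure.
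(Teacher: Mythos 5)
Your proof is correct and follows exactly the route the paper intends: the paper omits the proof of this lemma, stating only that the method is identical to Lemma~\ref{as-exactness-of-data-probit}, and your argument is the natural adaptation of that probit proof (a.s.\ finiteness of the $\teta_{mj}$ from log-concavity, plus the uniform gap $\theta$ absorbing the perturbation of the $\argmax$ for all sufficiently small $\gamma$). Your handling of the relative noise differences and the tie-breaking issue is careful and sound.
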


With the above lemmata at hand we are now in a position to study consistency of minimizers
of the one-hot functional
\begin{equation}
  \label{one-hot-J-eps}
  \mcl J_{\tau, \eps, \gamma} ( U) : = \frac{1}{2} \langle C_{\tau, \eps}^{-1}, U^T U \rangle_F
  + \Phi_\gamma(U; y), \qquad U \in \mbb R^{M \times N},
\end{equation}
where
\begin{equation*}
  \Phi_{\gamma}(U;y) := - \sum_{j \in Z'} \log \tPsi_\gamma( \bu_j; y(j)),
  \qquad   \tPsi_\gamma( \bv; m) := \int_{\mbb{R}} \psi_\gamma(t)
  \prod_{\ell \neq m} \Psi_\gamma( t + v_m - v_\ell) dt
\end{equation*}
and $\Psi_\gamma$ is the CDF of $\psi_\gamma(\cdot):=\frac{1}{\gamma}\psi(\frac{\cdot}{\gamma})$ as before.
\subsubsection{One-Hot Consistency With A Single Observed Label}
Once again we start with the case of a single observed label with $Z' = \{1\}$
belonging to the first cluster $\tZ_1$ and without loss of generality we
assume $u^\dagger_{11} > u^\dagger_{m1}$ for all $m  \neq 1$ so that the correct
label of the observed node is $1$. 

\begin{proposition}\label{one-hot-consistency-single-observation}
   Consider the single observation setting above and suppose Assumptions~
  \ref{assumptions-on-G-0} and \ref{assumptions-on-G-eps}
  are satisfied by $G_0$ and $G_\eps$. Let $U^\ast$ be the minimizer of $\mcl J_{\tau, \eps, \gamma}$ and
  let $\gamma >0$.
  \begin{enumerate}[(a)]
  \item If $\eps = o(\tau^2)$ as $\tau \to 0$  then $\exists \tau_0 >0$ so that
    $\forall (\tau, \gamma) \in (0,\tau_0) \times (0,\infty)$ and $\forall j \in \tZ_1$
    \begin{equation}\label{one-hot-single-observation-prob-lower-bound}
      \mbb P \Big( S(\bu_j^\ast) = 1 \Big) \ge  
        \int_{\mbb R} \psi_\gamma(t)
   \bigg[  1 - \omega_2
    \exp \left( - \frac{\omega_1}{\gamma} \left(t +
        \min_{m\neq 1}\left\{ u^\dagger_{11} - u^\dagger_{m1} \right\}\right)  \right) \bigg]^{M-1}  dt,
  \end{equation}
  where $\omega_1, \omega_2 >0$ are uniform constants depending only on $\psi$.
 \item If $\eps = \Theta(\tau^2)$ then the above statement holds for all $ j \in Z$.
  \end{enumerate}
\end{proposition}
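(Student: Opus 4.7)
The plan is to mirror the structure of the probit single-observation proof (Proposition~\ref{single-observation-consistency}), replacing each scalar identity with its one-hot analogue and invoking the perturbation theory of subsection~\ref{sec:covariance-perturbation} together with the dimension-reduction in Proposition~\ref{one-hot-dimension-reduction}. With $Z'=\{1\}$ and $J=1$, the matrix $\tC_{\tau,\eps}$ degenerates to the positive scalar $c_\eps := (C_{\tau,\eps})_{11}$, so the Euler--Lagrange system \eqref{one-hot-dimension-reduced-EL} reduces to a single fixed-point equation for a vector $\mbf b^\ast \in \mbb R^M$,
\begin{equation*}
c_\eps^{-1}\, \mbf b^\ast \;=\; \mbf f_1(\mbf b^\ast),
\end{equation*}
and Proposition~\ref{one-hot-dimension-reduction}(iii) gives $\bu^\ast_j = (c_\eps^{-1}(\bc_{1,\eps})_j)\, \mbf b^\ast$ for every $j \in Z$. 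Hence, whenever the scalar $(\bc_{1,\eps})_j$ is strictly positive, $S(\bu^\ast_j) = S(\mbf b^\ast)$, and the whole problem reduces to (i) controlling the sign of $(\bc_{1,\eps})_j$ via perturbation theory and (ii) showing that $S(\mbf b^\ast)=1$ in the event that the data are exact.

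For step (i), in regime (a) I would apply Proposition~\ref{geometry-of-covaraince-functions}(a) to get
\[
(\bc_{1,\eps})_j \;=\; (\bar{\pmb\chi}_1)_1 (\bar{\pmb\chi}_1)_j \;+\; \mcl O\!\left(\sqrt{\eps^2/\tau^4 + \tau^{4\alpha}+\eps^2}\right),
\]
which is strictly positive for $j\in\tZ_1$ and all sufficiently small $\tau$ (and $\eps/\tau^2$), by Assumption~\ref{assumptions-on-G-0}(b) (no zero-degree nodes, so $(\bar{\pmb\chi}_1)_j$ is bounded away from zero on $\tZ_1$). For regime (b), Proposition~\ref{geometry-of-covaraince-functions}(b) gives the convex combination
\[
(\bc_{1,\eps})_j \;\approx\; (1-\tbeta)(\bar{\pmb\chi})_1(\bar{\pmb\chi})_j + \tbeta(\bar{\pmb\chi}_1)_1(\bar{\pmb\chi}_1)_j,\qquad \tbeta\in(0,1),
\]
which is strictly positive on the whole of $Z$ since the global weighted indicator $\bar{\pmb\chi}$ is everywhere positive.

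For step (ii), conditioning on the exact-data event $y(1)=1$, I would read off the sign of each component of $\mbf b^\ast$ directly from the EL equation. Inspecting the explicit formulas for $\partial \tPsi/\partial v_i$ in Proposition~\ref{one-hot-representer-theorem}(ii) shows that, when $y(1)=1$, the function $f_{11}(\mbf b)$ is a strictly positive integral and $f_{m1}(\mbf b)$ is a strictly negative integral for each $m\ne 1$; combined with $c_\eps>0$ this forces $b^\ast_1>0 > b^\ast_m$ for all $m\ne 1$, so $S(\mbf b^\ast)=1$. Propagating through $\bu^\ast_j = (c_\eps^{-1}(\bc_{1,\eps})_j)\, \mbf b^\ast$ yields $S(\bu^\ast_j)=1$ on $\tZ_1$ in regime (a), and on all of $Z$ in regime (b). Finally, Lemma~\ref{data-is-exact-with-high-probability-multiclass} converts the deterministic conclusion on the exact-data event into the probability bound \eqref{one-hot-single-observation-prob-lower-bound}.

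The main obstacle, compared with the probit case, is verifying the sign pattern of $\mbf b^\ast$ directly from the EL equations without relying on the scalar monotonicity argument available in binary classification. Once the sign pattern of the partial derivatives of $\tPsi(\,\cdot\,;1)$ is read off, however, the rest of the argument is a book-keeping exercise: keeping the perturbative error terms in $(\bc_{1,\eps})_j$ of strictly smaller order than the leading positive contribution, so that the classification $S(\bu^\ast_j)=S(\mbf b^\ast)$ is preserved. This is why the threshold $\tau_0$ and the requirement $\eps=o(\tau^2)$ (resp.\ $\eps=\Theta(\tau^2)$) appear exactly as in the probit proof.
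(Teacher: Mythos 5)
Your proposal is correct and follows essentially the same route as the paper's proof: reduce to the scalar fixed-point equation $\mbf b^\ast = (C_{\tau,\eps})_{11}\,\mbf f_1(\mbf b^\ast)$, read off the sign pattern $b^\ast_1>0>b^\ast_m$ ($m\ne 1$) from the explicit partial derivatives of $\tPsi(\cdot\,;1)$ on the exact-data event, use Proposition~\ref{geometry-of-covaraince-functions} to show $(\bc_{1,\eps})_j$ is positive (on $\tZ_1$ in case (a), on all of $Z$ in case (b)) so that $S(\bu^\ast_j)=S(\mbf b^\ast)=1$, and conclude with Lemma~\ref{data-is-exact-with-high-probability-multiclass}. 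Your observation that each column $\bu^\ast_j$ is a positive scalar multiple of $\mbf b^\ast$ is a slightly cleaner way of phrasing the paper's final step, but it is the same argument.
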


\begin{proof} (a)
  The method of proof is very similar to that of
  Proposition~\ref{single-observation-consistency}.
 The dimension reduced system \eqref{one-hot-dimension-reduced-EL} takes the
simpler form
\begin{equation}
  \label{one-hot-dimension-reduced-EL-single-observation}
  \mbf b^\ast_1 = (C_{\tau, \eps})_{11} \mbf f_1( \mbf b^\ast_1).
\end{equation}
  As before, if $\eps = o(\tau^2)$
  by Proposition~\ref{geometry-of-covaraince-functions}(a)
  there exists $\tau_0 >0$ so that $\forall \tau \in (0, \tau_0)$
  \eqref{single-observation-c-1-approximation} holds and so, up to leading
  order \eqref{one-hot-dimension-reduced-EL-single-observation} is equivalent to
  \begin{equation*}
     \mbf b^\ast_1 =  |\left(\tchi_1\right)_1|^2 \mbf f_1( \mbf b^\ast_1).
   \end{equation*}
   Now consider the event where $y(1) = S(\bu^\dagger_1) = 1$, i.e., the
   data is exact. Then, we immediately see from \eqref{bf-definition} and
   the fact that $\psi_\gamma$ and $\Psi_\gamma$ are positive that,
   the only entry of 
   $\mbf f_1( \mbf b^\ast_1)$ that is not negative is the first
   entry and so $b^\ast_{11} >0$ while $b^\ast_{m1} <0$ for all $m \neq 1$.
   Finally, by Proposition~\ref{one-hot-dimension-reduction}(iii) we
   can write
   \begin{equation*}
     U^{\ast  T}= \bc_{1, \eps} \cdot \mbf f_1( \mbf b^\ast_1)^T
     = \left(\tchi_1\right)_1\tchi_1  \cdot
     \mbf f_1( \mbf b^\ast_1)^T + \mcl O \left( \eps/\tau^{2} +  \tau^{2\alpha} + \eps\right).
   \end{equation*}
   It is then straightforward to see that when $\tau$ is sufficiently small then
   for all $j \in \tZ_1$ we have $S(\bu^\ast_j) = y(1) = S(\bu^\dagger_1) = 1$
   and the claim follows by bounding the probability of the event where $y(1) = 1$
    using Lemma~\ref{data-is-exact-with-high-probability-multiclass}.
   Part (b) follows by a very similar argument to proof of
   Proposition~\ref{single-observation-consistency}(b). 
 \end{proof}

 The following corollary is the analogue of Corollary~\ref{probit-asymptotic-consistency-single-observation} for the one-hot method. The proof follows from the proof of
 Proposition~\ref{one-hot-consistency-single-observation} and
 Lemma~\ref{as-exactness-of-one-hot-data}.
   % observing that the  right hand side of \eqref{one-hot-single-observation-prob-lower-bound}
   % converges to 1 exponentially fast as $\gamma \downarrow 0$.

\begin{corollary}\label{one-hot-asymptotic-consistency-single-observation}
  Suppose Proposition~\ref{one-hot-consistency-single-observation} and condition
    \eqref{one-hot-as-convergence-assumption} are satisfied.
    Then the following holds with a.s. convergence in the sense of Lemma~\ref{as-exactness-of-one-hot-data}:
  \begin{enumerate}[(a)]
  \item For any sequence $\gamma, \tau, \eps \downarrow 0$ along which
    $\eps = o(\tau^2)$ it holds that
      \begin{equation*}
       S\left( \bu^\ast_j \right) \asto S\left(\bu^\dagger_1\right),
      \qquad \forall j \in \tZ_1.
    \end{equation*}
  \item For any sequence $\gamma, \tau, \eps \downarrow 0$ along which
      $\eps = \Theta(\tau^{2})$ the above statement holds true for all $j\in Z$. 
%   \item For any sequence $\gamma, \tau, \eps \downarrow 0$ along which
%       $\eps = \Theta(\tau^{2})$ it holds that
%             \begin{equation*}
%       S\left( \bu^\ast_j \right) \asto  S\left(\bu^\dagger_1\right)\,,
%       \qquad \forall j \in Z.
%     \end{equation*}
\end{enumerate}
\end{corollary}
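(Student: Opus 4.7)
The plan is to extract the two claims directly from the proof of Proposition~\ref{one-hot-consistency-single-observation}, combined with the almost sure exactness of the labelled data guaranteed by Lemma~\ref{as-exactness-of-one-hot-data}. The structure will mirror that of Corollary~\ref{probit-asymptotic-consistency-single-observation}, and no genuinely new estimates should be needed.

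First I would isolate the deterministic content of Proposition~\ref{one-hot-consistency-single-observation}. Inspecting its proof, what is actually shown is: in case (a), there exists $\tau_0>0$ such that for every $\tau\in(0,\tau_0)$ and every $\gamma>0$, on the event $\{y(1)=S(\bu^\dagger_1)=1\}$ one has $S(\bu^\ast_j)=1$ for all $j\in\tZ_1$; in case (b), the analogous statement holds for all $j\in Z$. This follows from the representation $U^{\ast T}=\bc_{1,\eps}\cdot\mbf f_1(\mbf b^\ast_1)^T$ together with the leading order geometry of $\bc_{1,\eps}$ from Proposition~\ref{geometry-of-covaraince-functions}, noting that when the datum is exact the vector $\mbf f_1(\mbf b^\ast_1)$ has a strictly positive first entry and strictly negative other entries, so that the one-hot classifier returns $1$ on the support of $\tchi_1$ (in case (a)) or on the support of $\tchi$ (in case (b)). Crucially, this implication from exact data to correct classification is independent of the noise scale $\gamma$.

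Next I would invoke Lemma~\ref{as-exactness-of-one-hot-data}: under hypothesis \eqref{one-hot-as-convergence-assumption}, $y(1)\asto S(\bu^\dagger_1)$ as $\gamma\downarrow 0$, where the convergence is with respect to the product measure on the i.i.d.\ noise sequence $\{\teta_{m1}\}_{m=1}^M$. Therefore, along any sequence $(\gamma,\tau,\eps)\downarrow 0$ (with the prescribed relation between $\eps$ and $\tau$), almost surely there exists $\gamma_1>0$ such that for all $\gamma\in(0,\gamma_1)$ the labelled datum is exact, and simultaneously one may assume $\tau<\tau_0$. On this full-measure event, the deterministic conclusion of the preceding paragraph applies, yielding $S(\bu^\ast_j)\asto S(\bu^\dagger_1)$ on $\tZ_1$ in case~(a) and on $Z$ in case~(b).

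There is no serious obstacle: the only subtlety is being careful that the threshold $\tau_0$ from Proposition~\ref{one-hot-consistency-single-observation} is independent of $\gamma$, which is evident from its proof since $\tau_0$ is determined purely by the covariance asymptotics of Proposition~\ref{geometry-of-covaraince-functions}, and that the almost sure event provided by Lemma~\ref{as-exactness-of-one-hot-data} is defined on the same probability space used throughout the consistency definition. With these two observations in hand the proof reduces to one line, essentially identical to that of Corollary~\ref{probit-asymptotic-consistency-single-observation}.
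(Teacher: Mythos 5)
Your argument is correct and is essentially identical to the paper's proof, which likewise combines the $\gamma$-independent deterministic conclusion established in the proof of Proposition~\ref{one-hot-consistency-single-observation} with the almost sure exactness of the data from Lemma~\ref{as-exactness-of-one-hot-data}, exactly as in the probit analogue Corollary~\ref{probit-asymptotic-consistency-single-observation}. Your explicit remark that $\tau_0$ depends only on the covariance asymptotics and not on $\gamma$ is a useful clarification that the paper leaves implicit.
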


% \begin{corollary}\label{one-hot-asymptotic-consistency-single-observation}
%   Suppose Proposition~\ref{one-hot-consistency-single-observation} and 
%   \eqref{one-hot-as-convergence-assumption} are satisfied:
%   \begin{enumerate}[(a)]
%   \item For any sequence $\gamma, \tau, \eps \downarrow 0$ along which
%     $\eps = o(\tau^2)$ it holds that
%       \begin{equation*}
%        S\left( \bu^\ast_j \right) \probto S\left(\bu^\dagger_1\right),
%       \qquad \forall j \in Z_1.
%     \end{equation*}
%   \item For any sequence $\gamma, \tau, \eps \downarrow 0$ along which
%       $\eps = \Theta(\tau^{2})$ it holds that
%             \begin{equation*}
%        S\left( \bu^\ast_j \right) \probto \{1\},
%       \qquad \forall j \in Z.
%     \end{equation*}
% \end{enumerate}

% \end{corollary}

\subsubsection{One-Hot Consistency With  Multiple Observed Labels}
We finally consider the general setting where multiple labels are observed
and $|Z'| = J \ge 2$. We  need the analog of Assumption~\ref{assumptions-on-u-dagger-binary}
in multi-class classification:

\begin{assumption}\label{assumptions-on-u-dagger-multiclass}
  Let $\tZ_k$ be a cluster within which a label has been observed, i.e., $\tZ_k \cap Z' \neq \emptyset$.
  Then $S(\bu_j^\dagger)$ is constant for all  $j \in \tZ_k$. 
\end{assumption}

\begin{proposition}\label{one-hot-consistency-multiple-observation}
  Consider the multiple observation setting above
   and suppose Assumptions~
   \ref{assumptions-on-G-0}, \ref{assumptions-on-G-eps} and
   \ref{assumptions-on-u-dagger-multiclass}
   are satisfied by $G_0$, $G_\eps$ and $U^\dagger$. Let $U^\ast$ be the minimizer of
   $\mcl J_{\tau,\eps,\gamma}$  and  $Z''$ be as in \eqref{Z-pp} the set of nodes in clusters for which labels have been observed.
   If $\eps = o(\tau^2)$ as $\tau \to 0$ then $\exists \tau_0 >0$ so that
   $\forall (\tau, \gamma) \in (0, \tau_0) \times (0,\infty)$ and $\forall j \in Z''$
    \begin{equation*}
      \mbb P \Big( S\left( \bu^\ast_j \right) = S\left( \bu^\dagger_j \right)  \Big) \ge
      \int_{\mbb R} \psi_\gamma(t)
  \prod_{j \in Z'} \bigg[  1 - \omega_2
    \exp \left( - \frac{\omega_1}{\gamma} \left(t +
        \min_{m\neq y(j)}\left\{ u^\dagger_{y(j)j} - u^\dagger_{mj} \right\}\right)  \right) \bigg]^{M-1}  dt,
  \end{equation*}
  where $\omega_1, \omega_2 >0$ are uniform constants depending only on $\psi$.
   % \item If $\delta = 0$ then $\exists \tau_0 >0$ so that  $\forall \tau \in (0, \tau_0)$ 
 %    \begin{equation*}
 %      \mbb P \Big( \sgn\left( (\mbf u^\ast_{\tau,\eps})_j \right) = \sgn(u^\dagger_1) \Big) \ge 1- \omega_2 \exp \left( -\frac{\omega_1}{\gamma}
 %       |u^\dagger_1| \right), \qquad \forall j \in Z.
 %   \end{equation*}
 %  \end{enumerate}
\end{proposition}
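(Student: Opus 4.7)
The plan is to mirror the structure of the proof of Theorem~\ref{multiple-observation-consistency} for probit, replacing scalar quantities with the vector-valued one-hot analogues and relying on Proposition~\ref{one-hot-dimension-reduction}, Proposition~\ref{geometry-of-covaraince-functions}(a), and Lemma~\ref{data-is-exact-with-high-probability-multiclass}.

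First, I would pass to the dimension-reduced problem via Proposition~\ref{one-hot-dimension-reduction}, so that the minimizer $U^\ast$ is encoded in a matrix $B^\ast \in \mbb R^{M\times J}$ satisfying $(\tC_{\tau,\eps})^{-1} B^{\ast T} = \mcl F'(B^\ast)$. By Proposition~\ref{geometry-of-covaraince-functions}(a) applied under $\eps = o(\tau^2)$, for $\tau$ sufficiently small the entries $c'_{ij}$ of $\tC_{\tau,\eps}$ obey the block-diagonal approximation \eqref{consistency-proof-tilde-C-block-diagonal-form}. Letting $\tZ'_k = \tZ_k \cap Z'$ and $K'$ the number of observed clusters, the system approximately decouples across clusters, giving for each $j \in \tZ'_k$ and each $k \in \{1,\dots,K'\}$
\begin{equation*}
\left(\bar{\pmb\chi}_k\right)_j^{-1} \mbf b^\ast_{\pi(j)} \;=\; \sum_{i\in \tZ'_k} \left(\bar{\pmb\chi}_k\right)_i \mbf f_{i,\gamma}\!\left(\mbf b^\ast_{\pi(i)}\right),
\end{equation*}
where the right-hand side is independent of $j$. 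Hence $(\bar{\pmb\chi}_k)_j^{-1} \mbf b^\ast_{\pi(j)}$ is constant in $j \in \tZ'_k$, and using the identity $\mbf b^\ast_{\pi(j)} = \bu^\ast_j$ from Proposition~\ref{one-hot-dimension-reduction}(iii) we obtain a single cluster-wise equation for $\bu^\ast_j$.

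Next, I would exploit the structure of $\mbf f_j$ from \eqref{bf-definition}: since $\psi_\gamma$ and $\Psi_\gamma$ are positive, inspection of the formula for $\partial \tPsi_\gamma/\partial v_i$ shows that for every $\bv \in \mbb R^M$, the component $f_{y(j)j}(\bv)$ is strictly positive while $f_{mj}(\bv)$ is strictly negative for $m \neq y(j)$. In the event that $y$ is exact on $Z'$, Assumption~\ref{assumptions-on-u-dagger-multiclass} implies that all labels observed in $\tZ'_k$ coincide with a common value $m_k = S(\bu^\dagger_j)$ for $j\in\tZ_k$. Consequently $\sum_{i\in \tZ'_k}(\bar{\pmb\chi}_k)_i \mbf f_{i,\gamma}(\bu^\ast_j)$ has strictly positive $m_k$-th entry and strictly negative entries elsewhere, and therefore $S(\bu^\ast_j) = m_k$ for every $j \in \tZ'_k$.

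To extend from observation nodes to all of $Z''$, I would use the representer expansion $U^{\ast T} = \sum_{j\in Z'} \bc_{j,\eps} (\tbb^\ast_j)^T$ together with the block-diagonal estimate on $\bc_{j,\eps}$ from Proposition~\ref{geometry-of-covaraince-functions}(a). For $i \in \tZ_k$ with $k \leq K'$, this yields to leading order
\begin{equation*}
\bu^\ast_i \;\approx\; (\bar{\pmb\chi}_k)_i \sum_{j\in \tZ'_k}(\bar{\pmb\chi}_k)_j\, \tbb^\ast_j,
\end{equation*}
so $\bu^\ast_i$ is a positive scalar multiple of the vector already shown to have $S$-value $m_k$, giving $S(\bu^\ast_i) = m_k = S(\bu^\dagger_i)$ for all $i \in \tZ_k$, $k \leq K'$, i.e.\ for all $i \in Z''$. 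Taking $\tau_0$ small enough that all the $\mcl O(\eps/\tau^2 + \tau^{2\alpha} + \eps)$ remainders are dominated by the gaps between components of the leading-order vector completes the deterministic argument on the event that $y$ is exact. Finally, I would bound the probability of this event using Lemma~\ref{data-is-exact-with-high-probability-multiclass} to obtain the stated lower bound.

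The main obstacle I anticipate is the second step: justifying rigorously that the $\mcl O(\eps/\tau^2 + \tau^{2\alpha} + \eps)$ error terms in the approximate decoupling of $(\tC_{\tau,\eps})^{-1} B^{\ast T} = \mcl F'(B^\ast)$ can be absorbed uniformly, and ensuring that the cluster-wise reduced equation admits a solution whose $S$-image robustly equals $m_k$ despite these perturbations. This requires controlling $\mbf f_{j,\gamma}$ (which depends on $\gamma$ through $\psi_\gamma,\Psi_\gamma$) on a set where the leading-order vector has a strictly positive spectral gap between its largest component and the rest, an argument that parallels the scalar sign-preservation step in Theorem~\ref{multiple-observation-consistency} but is vector-valued and therefore requires a more careful coordinate-wise analysis.
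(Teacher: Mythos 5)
Your proposal is correct and follows essentially the same route as the paper's proof: dimension reduction via Proposition~\ref{one-hot-dimension-reduction}, the block-diagonal approximation of $\tC_{\tau,\eps}$ to decouple the system cluster-by-cluster, the sign structure of $\mbf f_j$ on the exact-data event to pin down $S(\bu^\ast_j)=y(j)$, and Lemma~\ref{data-is-exact-with-high-probability-multiclass} for the probability bound. The difficulty you flag about uniformly absorbing the $\mcl O(\eps/\tau^2+\tau^{2\alpha}+\eps)$ remainders is real, but the paper handles it no more rigorously than you do (it simply argues ``up to leading order''), so your treatment is on par with the published argument.
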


\begin{proof}
  The proof follows similar steps to the binary result in
  Proposition~\ref{multiple-observation-consistency} and we use the same
  notation as in the proof of that result.
Here the dimension reduced system \eqref{one-hot-dimension-reduced-EL} takes the form
\begin{equation}
\label{multiclass-multiple-observation-dimension-reduced-EL}
(\tC_{\tau, \eps})^{-1} B^{\ast T} = \mcl F'(B^\ast).
\end{equation}
  Then, Proposition~\ref{geometry-of-greens-functions}
  implies that $\tC_{\tau, \eps}$ is nearly block diagonal
  and \eqref{consistency-proof-tilde-C-block-diagonal-form} holds.
  Then, up to leading order \eqref{multiclass-multiple-observation-dimension-reduced-EL}
  takes the form
  \begin{equation*}
   \left(\tchi_k\right)_j^{-1} \mbf b^\ast_{\pi(j)} = 
    \sum_{i \in \tZ_k'} \left(\tchi_k\right)_i
    \mbf f_i(\mbf b^\ast_{\pi(i)}), \qquad \text{ for } j \in \tZ'_k \text{ and }
    k \in \{ 1, \cdots, K\}\,,
  \end{equation*}
  where we used the same notation as in \eqref{bf-definition} and made use of the approximation \eqref{consistency-proof-tilde-C-block-diagonal-form} for the elements of $\tC_{\tau, \eps}$.
  Once again the right hand side of the above expression is independent of $j$
  and so the $\left(\tchi_k\right)_j^{-1}\mbf b^\ast_{\pi(j)}$ vectors are constant for all $j \in \tZ'_k$.
  We then have
  \begin{equation*}
    \mbf b^\ast_{\pi(j)} = \left(\tchi_k\right)_j \sum_{i \in \tZ_k'} \left(\tchi_k\right)_i
    \mbf f_i\left( \frac{\left(\tchi_k\right)_i}{\left(\tchi_k\right)_j}\mbf b^\ast_{\pi(j)}\right), \qquad \text{ for } j \in \tZ'_k \text{ and }
    k \in \{ 1, \cdots, K\}.
  \end{equation*}
  Now consider the event where the data $y$ is exact which happens with
  high probability following Lemma~\ref{data-is-exact-with-high-probability-multiclass}.
  Since $U^\dagger$ satisfies Assumption~\ref{assumptions-on-u-dagger-multiclass}
  then $y(j)$ is constant for all $j \in \tZ'_k$.
  Using the definition of $\mbf f_j$ given in \eqref{bf-definition}, we directly
  verify that the only positive coordinate of $\mbf f_i\left( \frac{\left(\tchi_k\right)_i}{\left(\tchi_k\right)_j}\mbf b^\ast_{\pi(j)}\right)$
  is the $y(j)$-th coordinate while all other coordinates are negative. Since $\left(\tchi_k\right)_i$ and $\left(\tchi_k\right)_j$ are both strictly positive in the above, we conclude 
  $S( \mbf b^\ast_{\pi(j)}) = y(j) = S(\bu^\dagger_j) $. The desired result now follows
  by Proposition~\ref{one-hot-dimension-reduction}(iii).
\end{proof}

Similar to Corollary~\ref{one-hot-asymptotic-consistency-single-observation}
  we obtain the following result by applying Theorem~\ref{one-hot-consistency-multiple-observation}
  and Lemma~\ref{as-exactness-of-one-hot-data}.

\begin{corollary}\label{one-hot-asymptotic-consistency-multiple-observations}
Suppose Theorem~\ref{one-hot-consistency-multiple-observation}  and 
  condition \eqref{one-hot-as-convergence-assumption} are satisfied.
  Then for any sequence $\gamma, \tau, \eps \downarrow 0$ along which
  $\epsilon = o(\tau^2)$ it holds that
  \begin{equation*}
  S\left(\bu^\ast_j \right) \asto S\left(\bu^\dagger_j\right), \qquad \forall j \in Z'', 
\end{equation*}
with a.s. convergence in the sense of Lemma~\ref{as-exactness-of-one-hot-data}.
\end{corollary}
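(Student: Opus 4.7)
My plan is to derive the corollary by combining two ingredients already assembled in the paper: the finite-$\gamma$ probability bound from Proposition~\ref{one-hot-consistency-multiple-observation} and the almost sure exactness of the observed labels from Lemma~\ref{as-exactness-of-one-hot-data}. The strategy mirrors the deduction of Corollary~\ref{probit-asymptotic-consistency-multiple-observations} from its binary counterparts, transposed to the one-hot setting where $S$ is the argmax map.

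First I would revisit the proof of Proposition~\ref{one-hot-consistency-multiple-observation} and extract the following structural statement: there exists $\tau_0>0$ such that whenever $\tau\in(0,\tau_0)$, $\eps=o(\tau^2)$, and the event $\{y(j)=S(\bu^\dagger_j) \text{ for all } j\in Z'\}$ occurs, the minimizer $U^\ast$ of $\mcl J_{\tau,\eps,\gamma}$ satisfies $S(\bu^\ast_j)=S(\bu^\dagger_j)$ for every $j\in Z''$. This is a deterministic implication on the ``exact data'' event, holding uniformly in $\gamma>0$; it is implicit in the proof of Proposition~\ref{one-hot-consistency-multiple-observation}, where the sign of the one-hot minimizer on each cluster is read off from the argmax of $\mbf f_j$ once $y(j)=S(\bu^\dagger_j)$.

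Second, Lemma~\ref{as-exactness-of-one-hot-data} together with condition \eqref{one-hot-as-convergence-assumption} ensures that for almost every realization of the reference noise $\{\teta_{mj}\}_{(m,j)\in\{1,\dots,M\}\times Z'}$, there is a realization-dependent threshold $\gamma_0>0$ such that for all $\gamma\in(0,\gamma_0)$ the observed labels are exact; this uses only that log-concave densities have exponential tails, so $|\teta_{mj}|<\infty$ almost surely and the rescaled noise $\gamma\teta_{mj}$ is eventually smaller than the positive margin guaranteed by \eqref{one-hot-as-convergence-assumption}.

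Third I would combine these two ingredients along the prescribed sequence: since $(\gamma,\tau,\eps)\downarrow 0$ with $\eps=o(\tau^2)$, for a.e. realization of $\teta$ there is an index beyond which simultaneously $\tau<\tau_0$, $\eps=o(\tau^2)$, and $\gamma<\gamma_0$; on this tail the data is exact and the structural statement of the first step forces $S(\bu^\ast_j)=S(\bu^\dagger_j)$ for all $j\in Z''$. I do not anticipate a serious obstacle. The only delicate point is to verify that the threshold $\tau_0$ from Proposition~\ref{one-hot-consistency-multiple-observation} may be chosen independently of the shrinking noise scale $\gamma$, which is true because $\tau_0$ is supplied by the covariance-perturbation estimate in Proposition~\ref{geometry-of-covaraince-functions}, an entirely geometric statement about $C_{\tau,\eps}$ that does not involve $\gamma$.
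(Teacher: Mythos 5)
Your proposal is correct and follows essentially the same route as the paper: the corollary is obtained by combining the deterministic conclusion of the proof of Theorem~\ref{one-hot-consistency-multiple-observation} on the exact-data event (valid for all $\gamma>0$ once $\tau<\tau_0$ and $\eps=o(\tau^2)$) with the almost sure exactness of the labels from Lemma~\ref{as-exactness-of-one-hot-data}. Your observation that $\tau_0$ is supplied by the purely geometric covariance estimates and is therefore independent of $\gamma$ is the right justification for the uniformity needed in the final step.
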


% \begin{corollary}\label{one-hot-asymptotic-consistency-multiple-observations}
%   Suppose Theorem~\ref{one-hot-consistency-multiple-observation}  and the
%   condition \eqref{one-hot-as-convergence-assumption} are satisfied.
%   Then for any sequence $\gamma, \tau, \eps \downarrow 0$ along which
%   $\epsilon = o(\tau^2)$ it holds that
%   \begin{equation*}
%   S\left(\bu^\ast_j \right) \probto S\left(\bu^\dagger_j\right), \qquad \forall j \in Z''.
% \end{equation*}
% \end{corollary}

\section{Numerical Experiments}
\label{sec:NE}
In this section we turn our attention to numerical experiments that are designed to confirm
our theoretical findings, and to expand upon the behavior of the
probit and one-hot methods beyond our theory. In subsection~\ref{sec:num-exp-spectral-perturbation} we study the
spectrum of graph Laplacians on a graph consisting of three clusters that are weakly connected reaffirming the
analysis of appendix~\ref{appendix:perturbation-theory}. Subsection~\ref{sec:probit-num-exp} is dedicated to the
consistency of the probit method. Here we demonstrate a curve in the $(\eps/\tau^2, \alpha)$ plane
across which probit transitions from being consistent to propagating the majority label. Finally, we repeat similar
experiments for the one-hot method in subsection~\ref{sec:num-exp-one-hot} demonstrating a similar phase
transition curve in the $(\eps/\tau^2,\alpha)$ plane and showing that the curve is sensitive to the number of
observed labels in different clusters. 

\subsection{Spectrum Of Covariance Operators}
\label{sec:num-exp-spectral-perturbation}
We begin with a numerical demonstration of the perturbation theory
of appendix~\ref{appendix:perturbation-theory} and in particular
the result of Proposition~\ref{low-lying-spectrum-of-C}. At the same time
we will introduce a synthetic experiment which is used throughout this section for illustration.

We consider a random dataset using  points drawn from
a mixture of three Gaussian distributions centered at $(1,0,0)$, $(0, 1, 0)$ and
$(0,0,1)$ with variance $0.1$. We draw $N = 150$ points in total and
the dataset has three clusters with $50$ points in each one. In order 
to construct a weighted graph on this dataset we choose the kernel function
\begin{equation*}
  \kappa(t) = \mbf{1}_{\{t \le 0.25\}}(t),
\end{equation*}
and set $w^{(0)}_{ij} = \kappa(|x_i - x_j|)$ where $| \cdot |$ denotes the Euclidean
norm in $\mbb R^3$. Figure~\ref{num-exp-random-clusters}(a) shows the
data points as well as the connected components of the resulting weighted graph in this example.
With the weight matrix $W_0$ at hand we define the graph Laplacian operator $L_0 = D_0 - W_0$ and
let $C_{\tau, 0}^{-1} = \tau^{-2\alpha}( L_0 + \tau^{2} I)^{\alpha}$. In other words, we fix $p=0$ and so the functions $\{\tchi_k\}_{k=1}^K$ simplify to the indicator functions on the clusters $\tZ_1,...,\tZ_K$. We further define a perturbation of
this matrix by replacing $W_0$ with $W_\eps$ with entries $w^{(\eps)}_{ij} = \kappa_{\eps}( |x_i - x_j|)$ where
the perturbed kernel has the form
\begin{equation}\label{num-exp-perturbed-kernel}
  \kappa_{\eps}(t) = \kappa(t) + \eps  \exp \left( -\frac{t^2}{(0.25 + \eps)^2} \right).
\end{equation}
Clearly, the resulting weighted graph for any positive value of $\epsilon$ is fully connected but
the weight of edges connecting the three clusters are at most of order $\epsilon$.
We consider the perturbed covariance matrix $C_{\tau,\eps}^{-1} = \tau^{-2\alpha}( L_\eps + \tau^2I)^\alpha$
and study its low-lying spectrum. In the notation of appendix~\ref{appendix:perturbation-theory}
we use $\lambda_{2,\eps}$ and $\lambda_{3,\eps}$ to denote the first two non-trivial
eigenvalues of $C_{\tau, \eps}$. 
Figure~\ref{num-exp-random-clusters}(b) shows that as $\eps/\tau^2$ becomes small
$\lambda_{j,\eps} -1 $ vanishes linearly in $\eps/\tau^2$ for $j=2, 3$ which is
in perfect agreement with Proposition~\ref{low-lying-spectrum-of-C}(ii). We also observe that $\lambda_{4,\eps}$ blows up with $\tau^{-2}$ as predicted in Proposition~\ref{spectral-gap-existence}.
On the other hand, Figure~\ref{num-exp-random-clusters}(c) shows the
$\ell^2$ distance between the second and third eigenvectors of $C_{\eps, \tau}$
with their projection onto the span of the set functions $\{ \tchi_k\}_{k=1}^3$. We displayed the case $\alpha=1$, the behavior for other choices of $\alpha$ is similar.
Here $P_0$ denotes the projection onto the span of $\{ \pmb \chi_k \}_{k=1}^3$ following
the notation of appendix~\ref{appendix:perturbation-theory}. We observed that $\| ( I - P_0)
\pmb \phi_{j, \eps} \|_2$ 
goes to zero linearly in $\epsilon$ for $j=2,3$, which is precisely the rate  predicted
in Proposition~\ref{low-lying-spectrum-of-C}(iii) suggesting that
the bound is sharp. 

\begin{figure}[htp]
  \centering
  \begin{subfigure}[b]{.45 \textwidth}
    \includegraphics[width=1 \textwidth, clip =true, trim =15ex 1ex 15ex 1ex]{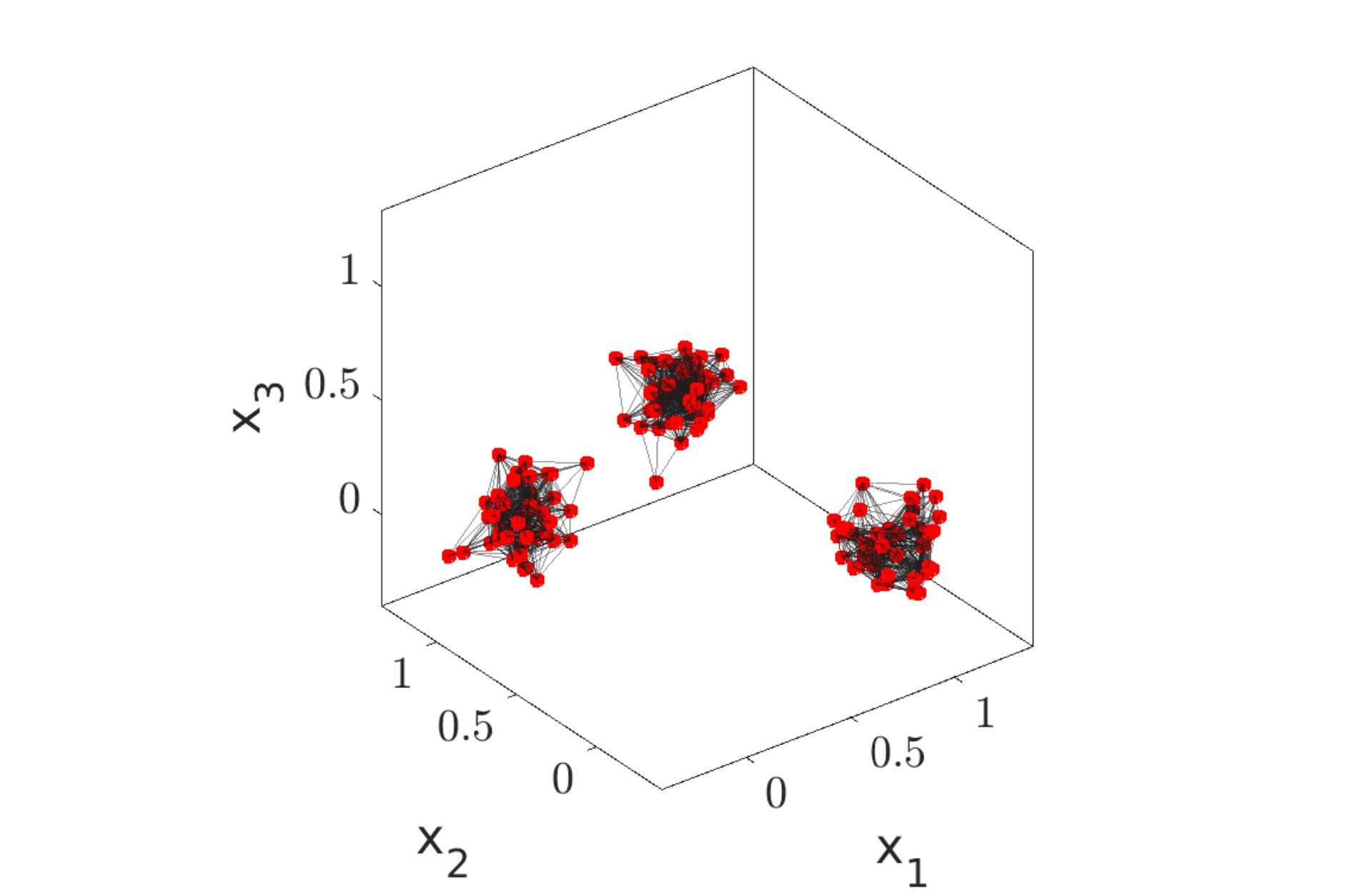}
    \caption{}
\end{subfigure} \\
\begin{subfigure}[b]{.45\textwidth}
  \includegraphics[width=1 \textwidth, clip =true, trim =15ex 1ex 15ex 1ex]{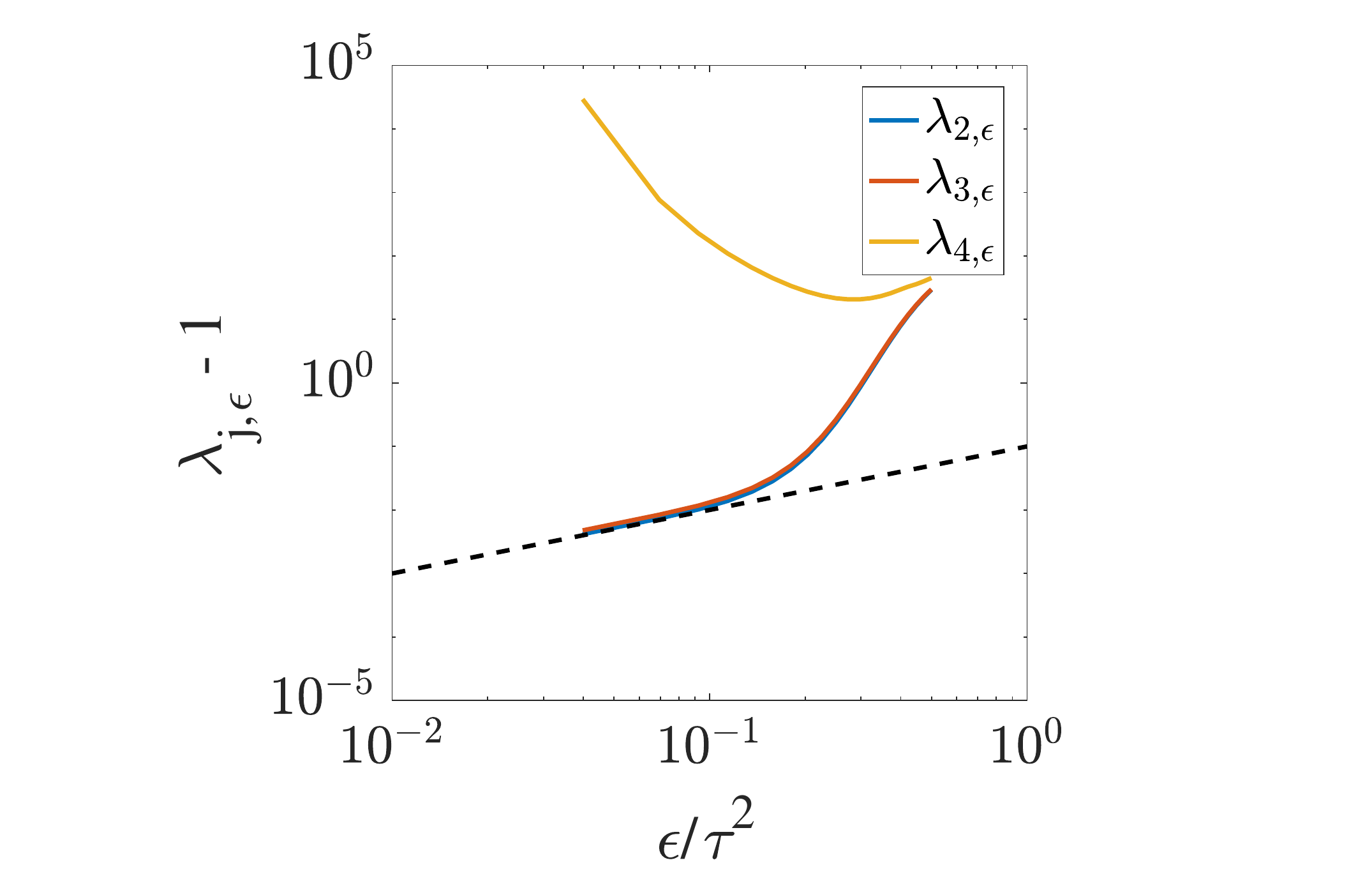}
  \caption{}
  \end{subfigure}
  \begin{subfigure}[b]{.45 \textwidth}
    \includegraphics[width=1 \textwidth, clip =true, trim =15ex 1ex 15ex 1ex]{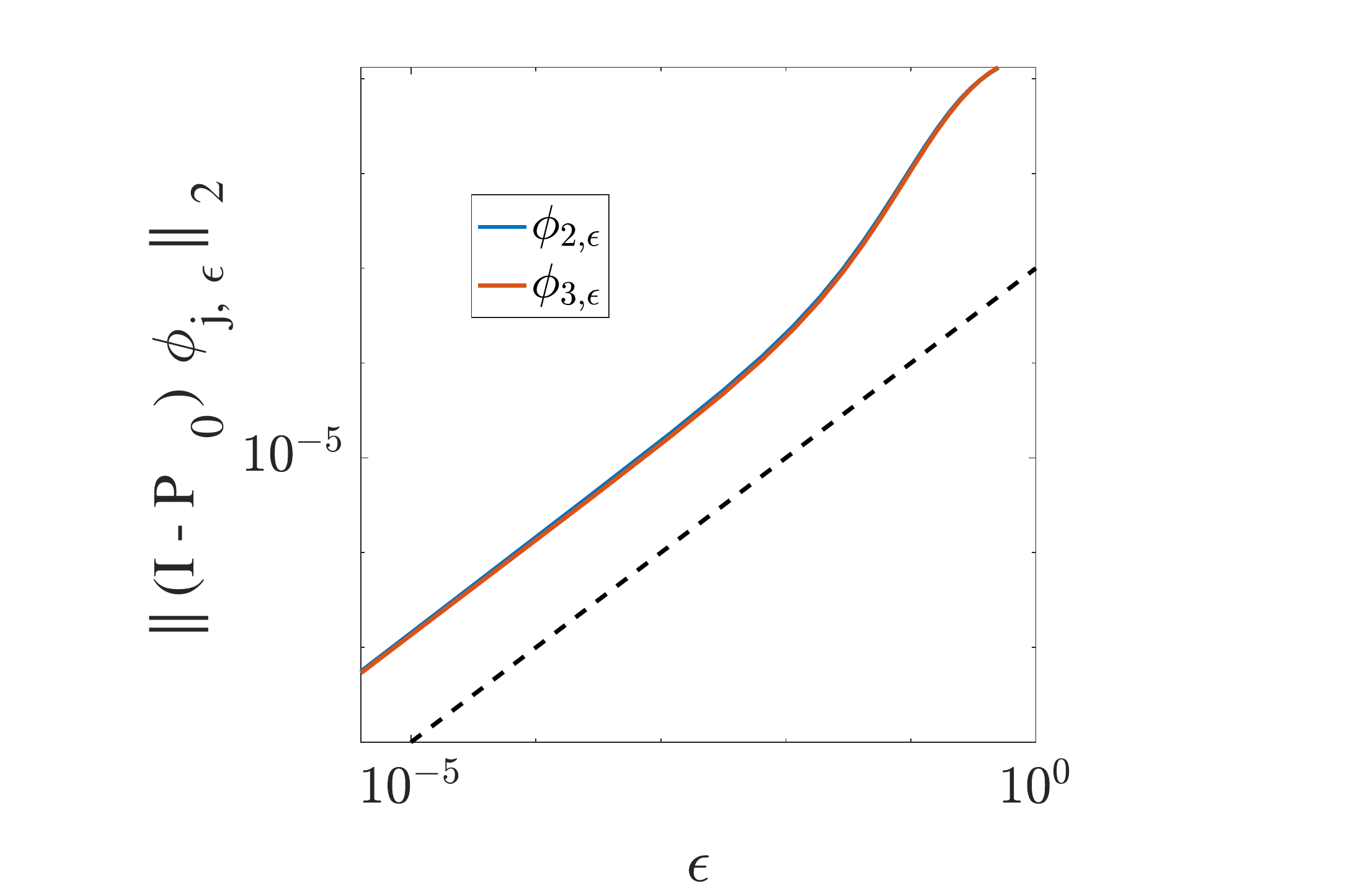}
    \caption{}
  \end{subfigure}
  \caption{(a) The \as{disconnected} graph $G_0$ constructed by random draws from
a mixture of three Gaussians. (b) The first three eigenvalues of the perturbed inverse 
covariance operator $C_{\tau, \eps}^{-1}$ with $\alpha=1$ for different choices of $\eps$ and $\tau^2$. 
The fourth eigenvector  $\lambda_{4,\epsilon}$ blows up with $\tau^{-2}$ as predicted 
while $\lambda_{2, \epsilon}$ and $\lambda_{3, \epsilon}$ converge to $1$ linearly in $\eps /\tau^2$. 
(c) The distance between $\pmb \phi_{2, \eps}$ and $\pmb \phi_{3,\eps}$ and their projections onto the span of 
the set functions $\{ \pmb \chi_j\}_{j=1}^3$. The distance vanishes linearly in $\epsilon$.}
  \label{num-exp-random-clusters}
\end{figure}

\subsection{Binary  Classification With Probit}
\label{sec:probit-num-exp}
We now consider a binary classification problem on the synthetic dataset of
subsection~\ref{sec:num-exp-spectral-perturbation}. Figure~\ref{num-exp-probit-consistency}(a)
shows the true label of the 150 points in the dataset. Blue points have label $+1$
while red points have label $-1$. For the SSL problem
we assume one label is observed within each cluster and that
the observed labels are correct, i.e., $y  = (+1, +1, -1)^T$.
We take the
observation noise $\eta_j$ to be i.i.d.  logistic random variables with mean zero. More precisely,
\begin{equation*}
  \psi_\gamma(t) = \frac{\exp(- t/\gamma) }{\gamma (1 + \exp( - t/ \gamma))}, \qquad
  \Psi_\gamma(t) = (1 + \exp(-t/\gamma))^{-1}.
\end{equation*}
We use the $\kappa_\eps$ kernel of \eqref{num-exp-perturbed-kernel} and construct
the graph Laplacian $L_\eps$ and the covariance operator $C_{\tau, \eps}$ as in
subsection~\ref{sec:num-exp-spectral-perturbation} above.
 Since the matrix $C_{\eps, \tau}^{-1}$ is
 ill-conditioned for small $\tau$ and large values of $\alpha$ we
 found it crucial to 
use a low-rank approximation to $C_{\eps, \tau}$ in order to solve
the EL equations \eqref{binary-EL} in a stable manner.
Following the expansion \eqref{c-j-eigenvector-expansion}
and the fact that there exists a uniform spectral gap between $\lambda_{3,\eps}$ and
$\lambda_{4,\eps}$ (recall Figure~\ref{num-exp-random-clusters}(b)) we consider
the truncated expansion
\begin{equation}\label{hat-C-eps-tau}
  \hat{C}_{\eps, \tau} = \sum_{k=1}^{n} \frac{1}{\lambda_{k,\eps}} \pmb \phi_{k, \eps}  \pmb \phi_{k,\eps}^T\,,
\end{equation}
where a suitable truncation $n<N$ will be chosen later,
and solve the approximate EL equations
\begin{equation*}
  \bu^\ast =  \sum_{j\in Z'} \hat{C}_{\eps, \tau} F_j(u^\ast_j) \be_j.
\end{equation*}
We used MATLAB's \texttt{fsolve} function for this task.
For our first set of experiments we fix the noise parameter
$\gamma = 0.5$ and used $n=10$ terms in the approximation
of $\hat{C}_{\tau, \eps}$. We then vary $\eps$, $\tau$ and $\alpha$.
We considered  $\tau \in (0.01, 1)$,
$\eps/\tau^2 \in (0.01, 0.5)$, and $\alpha \in (0.25, 10)$, i.e., for each value of $\alpha$ we pick two  sequences of 
$\tau$ and $\eps/\tau^2$ values and set $\eps = \tau^2 \times \eps/\tau^2$. 
Figure~\ref{num-exp-probit-consistency}(b) shows the
percentage of mislabelled points when $\sgn(\bu^\ast)$ is used
as the label predictor. The maximum error of $33\%$ corresponds to
all red points being labelled as blue. Our results suggest that 
when $\eps/\tau^2$ is large the Probit classifier $\bu^\ast$ tends to
assign the majority labels to all points in the dataset. Furthermore,
we observe a sharp transition between perfect label recovery and
assignment of majority labels. This effect is amplified for larger values of $\alpha$
in that the transition seems to happen for a smaller value of $\eps /\tau^2$.

We also consider the distance between the span of the set functions $\pmb \chi_j$
and the minimizer $\bu^\ast$. Figure~\ref{num-exp-probit-consistency}(c)
shows $\| ( I - P_0) \bu^\ast\|_{2}$ as a function of $\tau^2$ and for
different values of $\alpha$. We see that for smaller values of $\alpha$
the projection error is $\mcl O( \tau^{2\alpha})$ which is in line with the
predicted error between $\mbf c_{j,\eps}$ and the span of $\pmb \chi_j$
in Proposition~\ref{geometry-of-covaraince-functions}(a) since for smaller
values of $\alpha$ the leading order error term is $\mcl O(\tau^{2\alpha})$. When $\alpha$ is large 
the projection error is controlled by the $\mcl O(\eps/\tau^2 + \eps)$ terms in the
error bound of Proposition~\ref{geometry-of-covaraince-functions}(a)  and no longer depends on $\alpha$.

We noticed that the labelling accuracy is more or less independent of $\gamma$ so long
as the data $y$ is correct as demonstrated in
Figure~\ref{num-exp-consistency-with-different-gamma-and-n}.
We also note that, for the most part, the labelling accuracy is 
independent of $n$ as well. As shown in
Figure~\ref{num-exp-consistency-with-different-gamma-and-n} the labelling accuracy
increases slightly only for small values of $\alpha$ and larger values of $\eps/\tau^2$. 

\begin{figure}[htp]
  \centering
  \begin{subfigure}[htp]{.42 \textwidth}
    \includegraphics[width=1 \textwidth, clip =true, trim =20ex 1ex 20ex 1ex]
{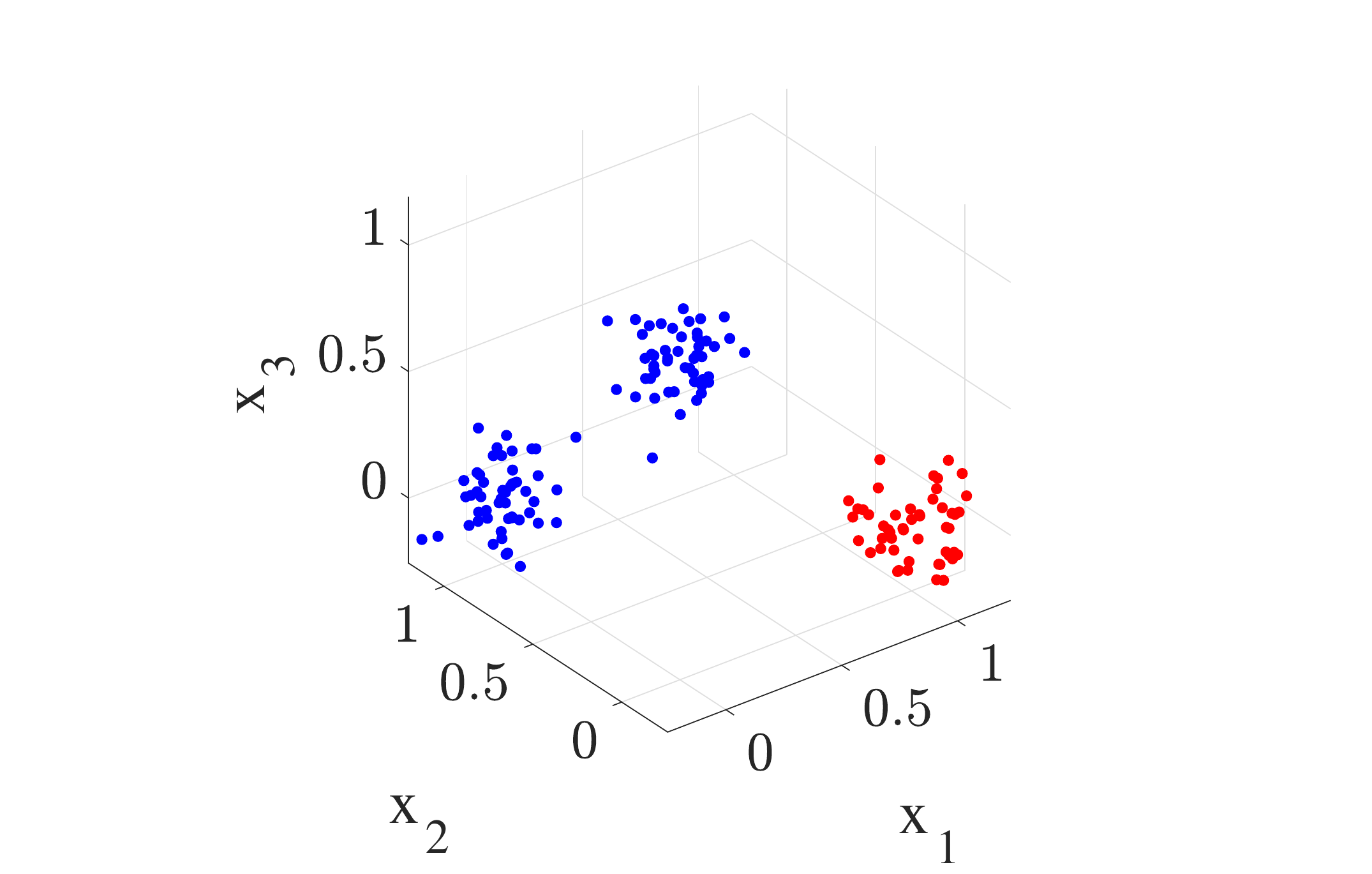}
    \caption{}
  \end{subfigure}\\
    \begin{subfigure}[htp]{.42 \textwidth}
    \includegraphics[width=1 \textwidth, clip =true, trim =20ex 1ex 20ex 1ex]
    {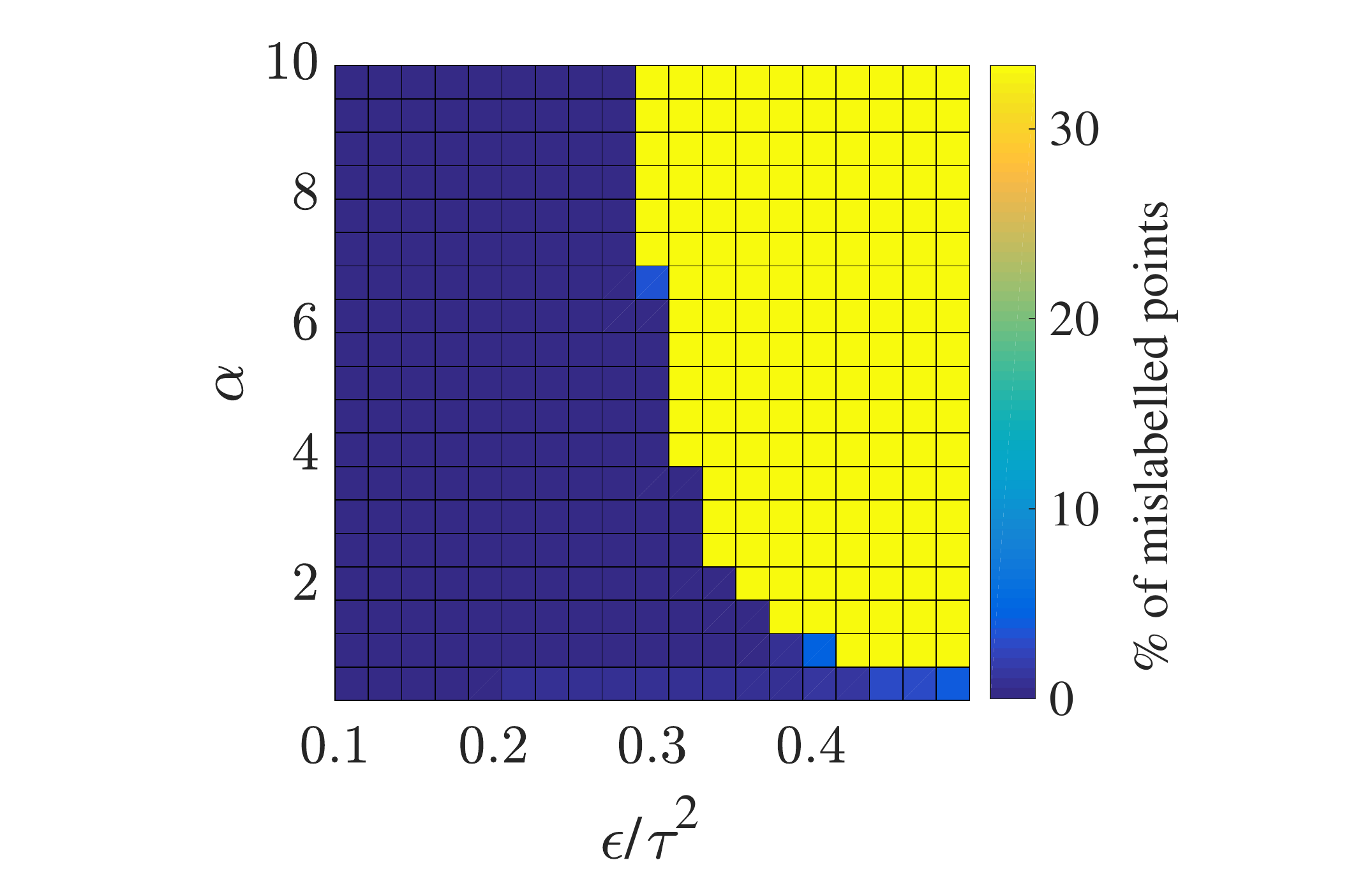}
    \caption{}
  \end{subfigure}
\begin{subfigure}[htp]{.45 \textwidth}
  \includegraphics[width=1 \textwidth, clip =true, trim =0ex 35ex 0ex 40ex]
  {./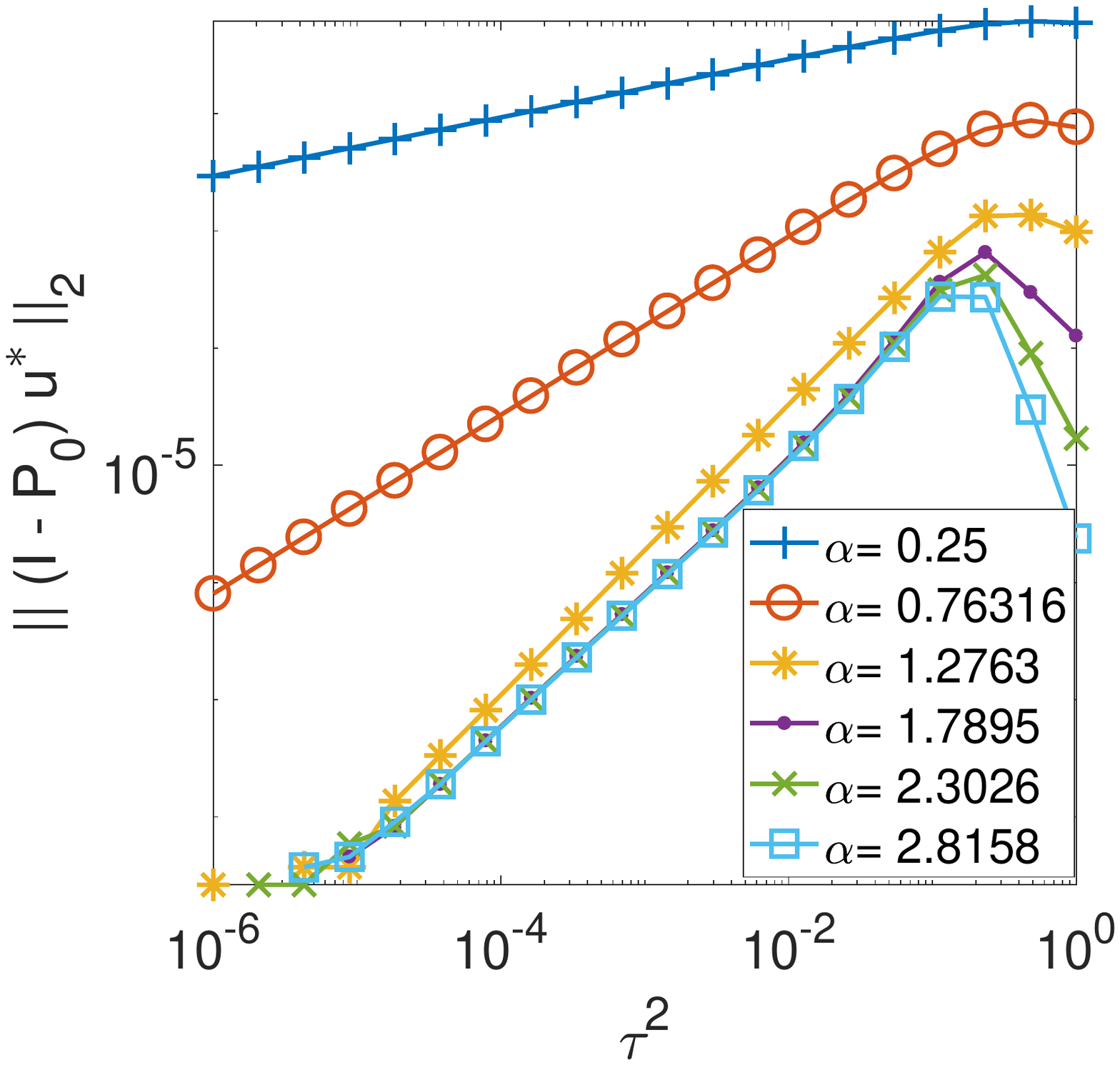}
  \caption{}
  \end{subfigure}
  \caption{(a) The true binary labels in the synthetic dataset of subsection~\ref{sec:probit-num-exp}.
    Blue points have label $+1$ and red points have label $-1$.
    (b) Heat map of the percentage of mislabelled points using the probit classifier.
    The $33\%$ error mark corresponds to assigning label $+1$ to all red points which
    constitute a third of the dataset. For fixed values of $\alpha$
    we observe a sharp transition as $\eps/\tau^2$ increases
    where we go from labelling all of the points correctly to labelling the red points as blue.
    (c) The distance between $\bu^\ast$ and the span of $\{ \pmb \chi_j\}_{j=1}^3$.
    This distance is $\mcl O(\tau^{2\alpha})$ when $\alpha$ is small and does not depend on $\alpha$ when it is large indicating 
that the distance is controlled by $\mcl O (\eps/\tau^2 + \eps)$.}
  \label{num-exp-probit-consistency}
\end{figure}

\begin{figure}[htp]
  \centering
  \begin{subfigure}[b]{.32 \textwidth}
    \includegraphics[width=1 \textwidth, clip =true, trim =20ex 1ex 20ex 1ex]
{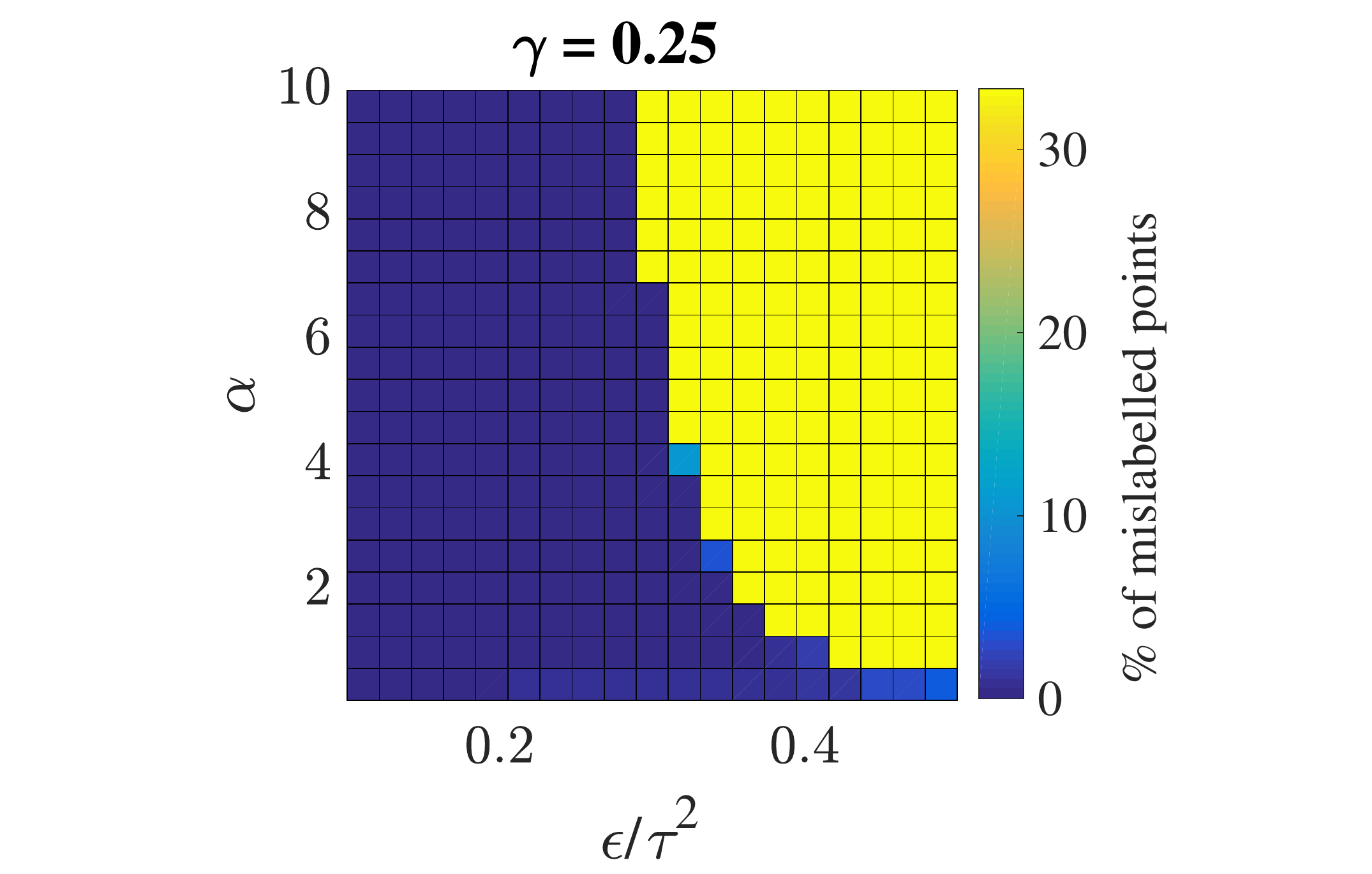}
  \end{subfigure}
    \begin{subfigure}[b]{.32 \textwidth}
    \includegraphics[width=1 \textwidth, clip =true, trim =20ex 1ex 20ex 1ex]
    {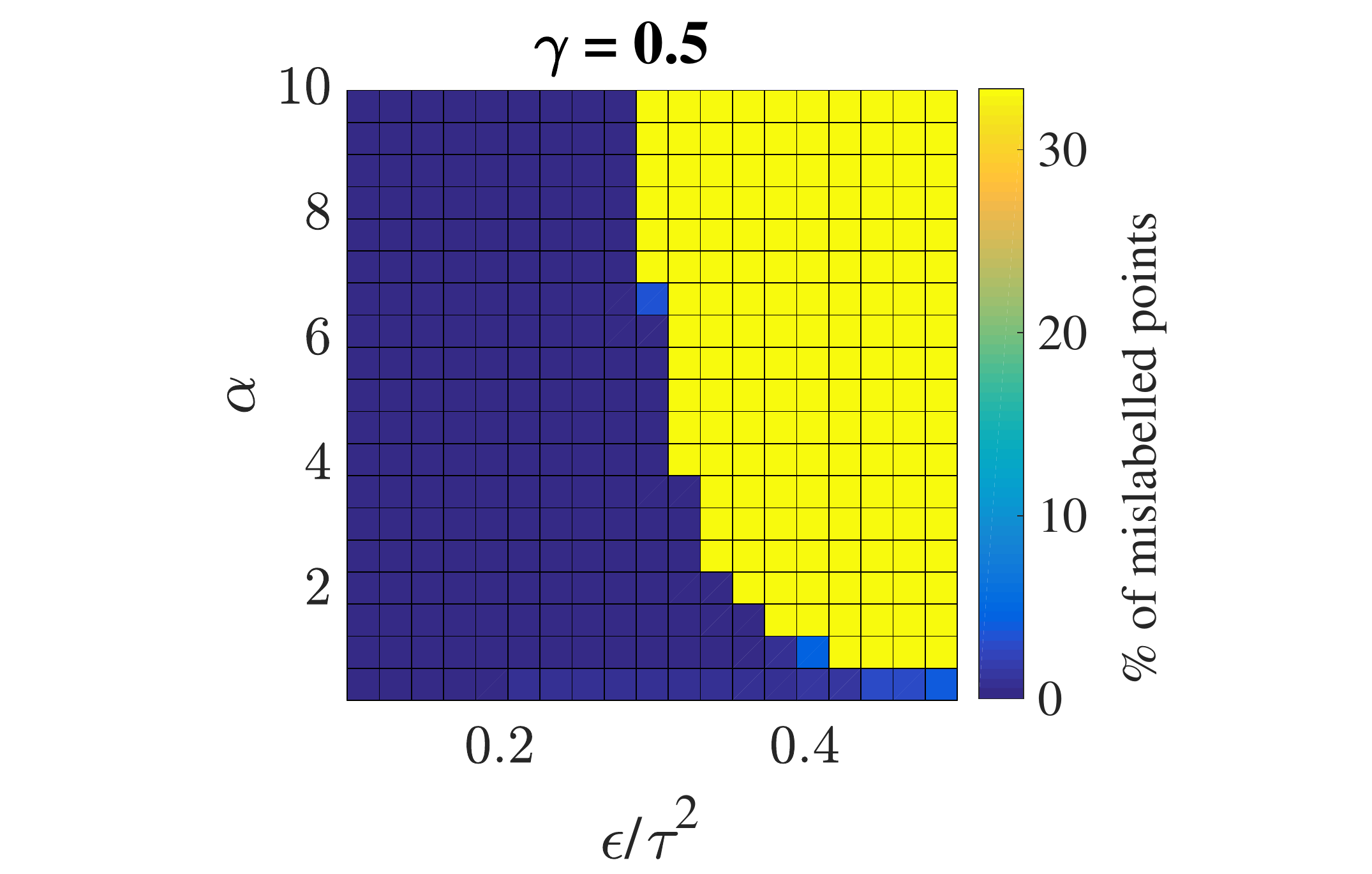}
  \end{subfigure}
\begin{subfigure}[b]{.32 \textwidth}
  \includegraphics[width=1 \textwidth, clip =true, trim =20ex 1ex 20ex 1ex]
  {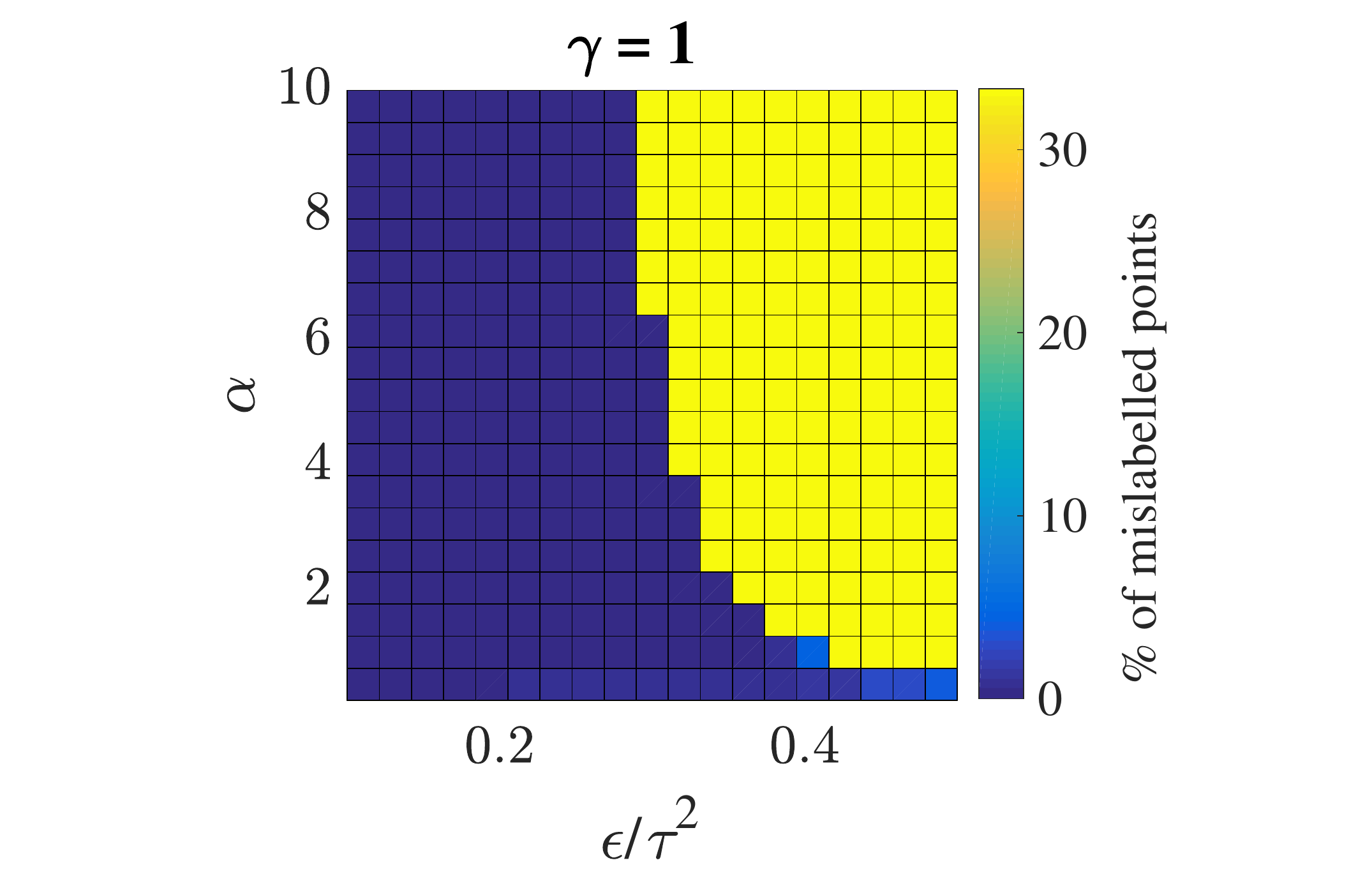}
\end{subfigure}
  \begin{subfigure}[b]{.32 \textwidth}
    \includegraphics[width=1 \textwidth, clip =true, trim =20ex 1ex 20ex 1ex]
{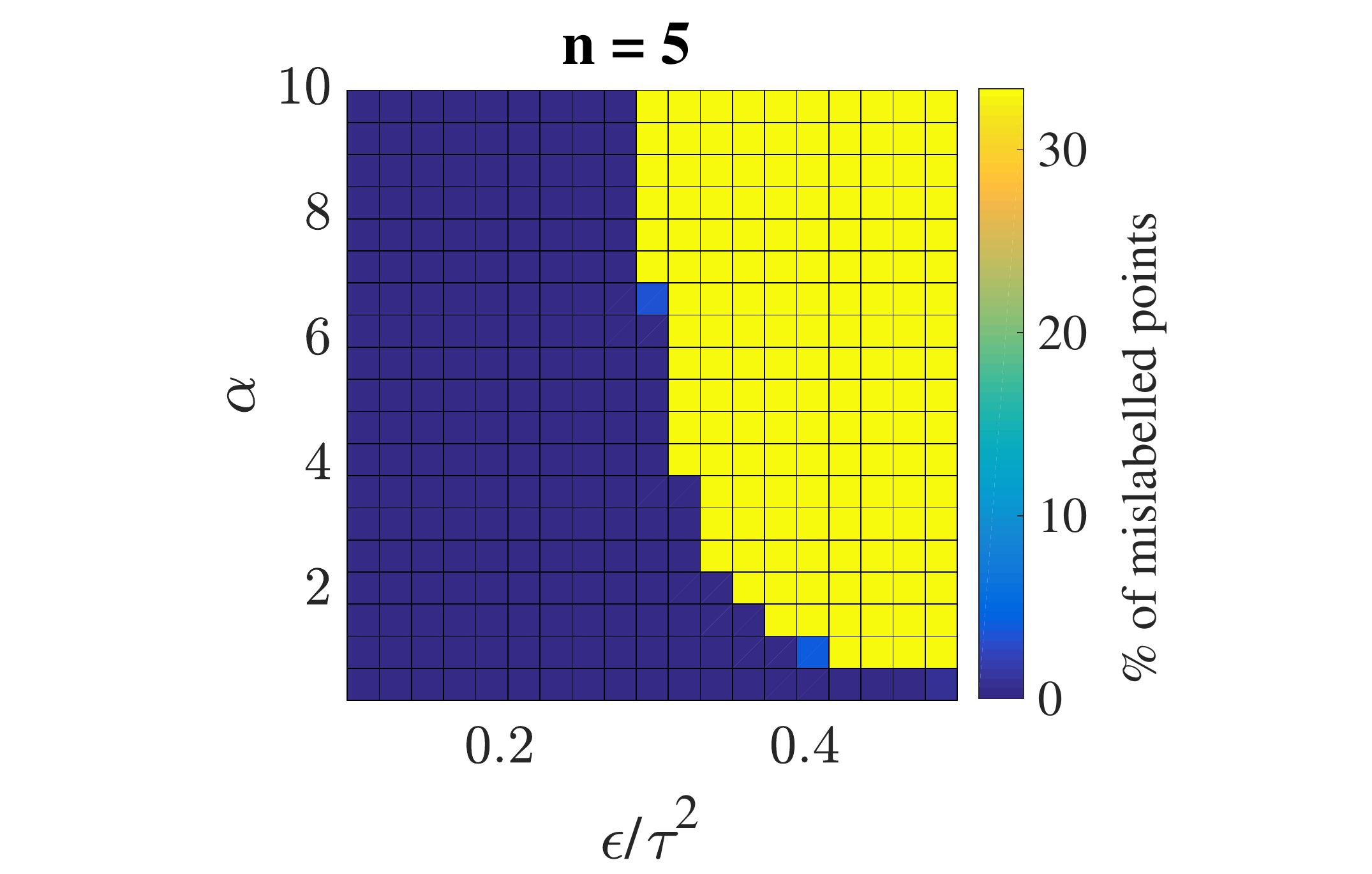}
  \end{subfigure}
    \begin{subfigure}[b]{.32 \textwidth}
    \includegraphics[width=1 \textwidth, clip =true, trim =20ex 1ex 20ex 1ex]
    {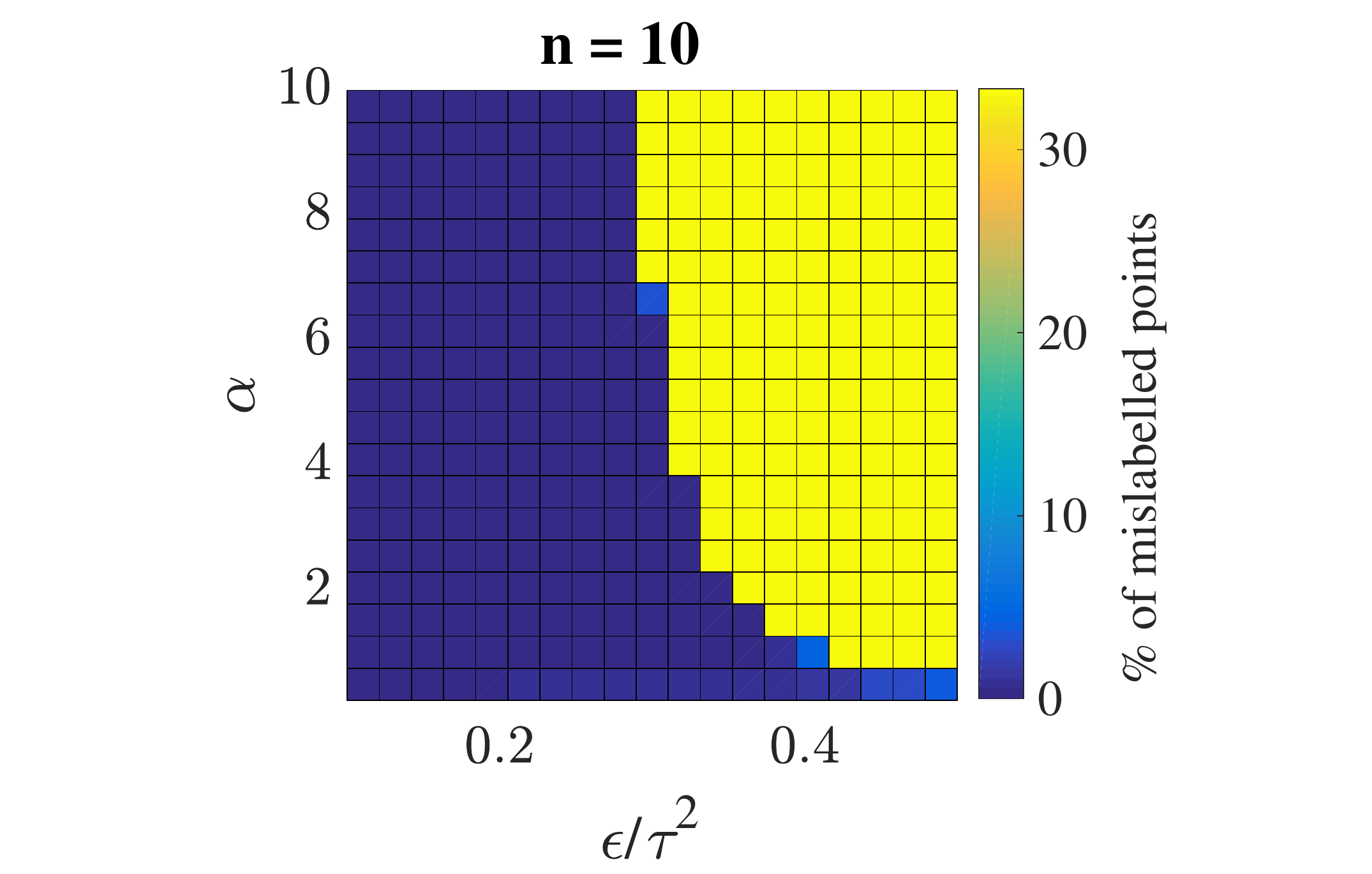}
  \end{subfigure}
\begin{subfigure}[b]{.32 \textwidth}
  \includegraphics[width=1 \textwidth, clip =true, trim =20ex 1ex 20ex 1ex]
  {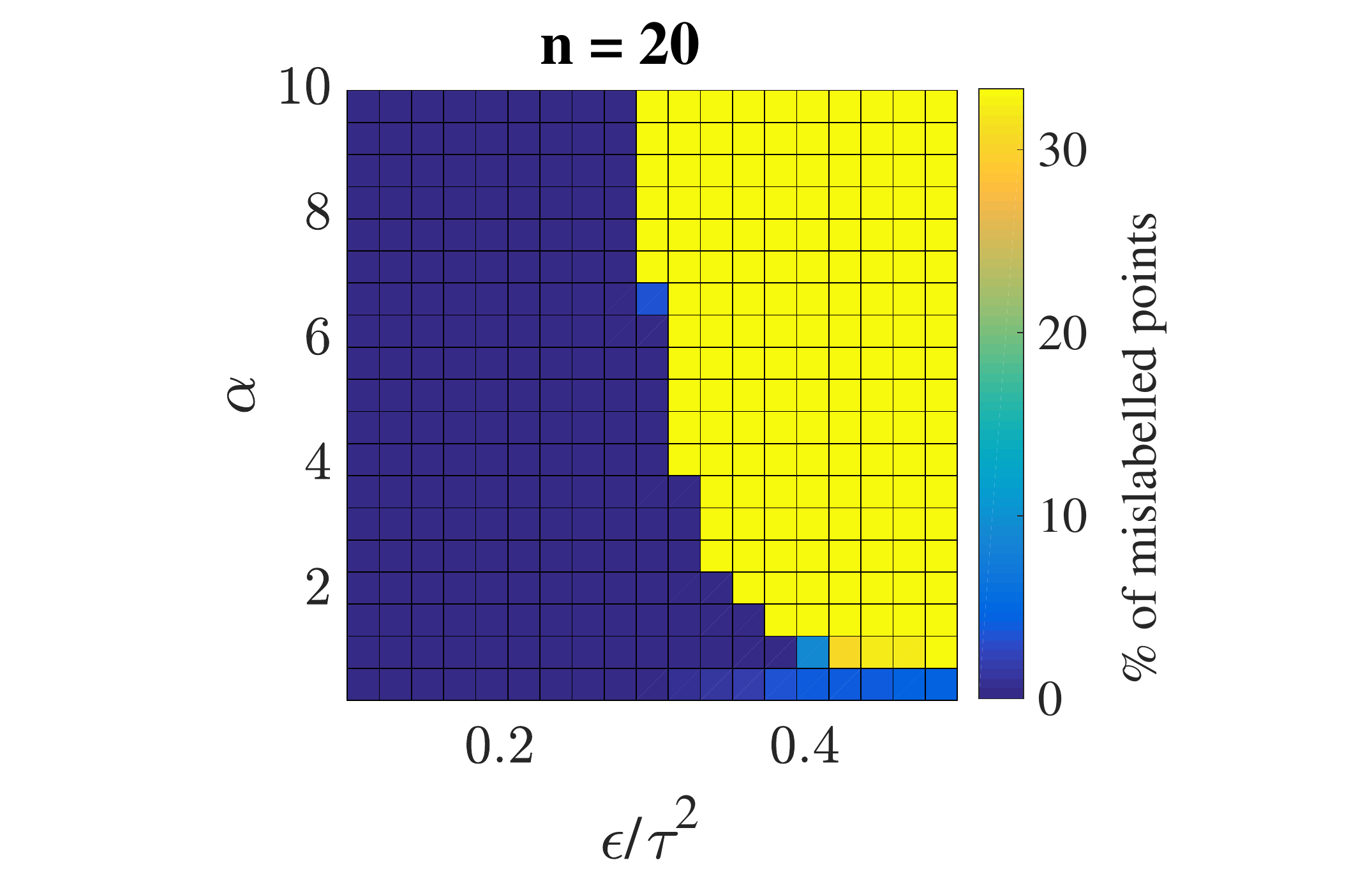}
\end{subfigure}
  \caption{Demonstrating the  labelling accuracy of probit as a function of
    measurement noise $\gamma$
    and the truncation parameter $n$. In the top row we fix $n=10$ and modify $\gamma$,
    while in the bottom row we fix $\gamma = 0.5$ and modify $n$. 
    We do not observe much sensitivity to $n$, in particular
    when $\alpha$ is large. When $\alpha$ is small we can see
    a slight increase in error for large values of $\eps /\tau^2$
    as $n$ grows larger. Modifying $\gamma$ does not have a significant
  impact so long as the data $y$ is correct.}
  \label{num-exp-consistency-with-different-gamma-and-n}
\end{figure}

For our final set of experiments we consider noisy data.
First, we fix $\tau = 0.5$, $\eps/\tau^2 = 0.1$, $n=10$, and take $\alpha \in (0.25, 10)$
and $\gamma \in (0.1, 1)$. For each value of $\alpha$ and $\gamma$ we
perform 100 experiments where we randomly perturb the data $y$ by drawing independent
measurement noise $\eta_j$ using the model
\eqref{probit-observation-model} and consider the labelling accuracy of the probit model.
If all labels are recovered correctly by $\sgn(\bu^\ast)$
we consider the experiment a success and otherwise a failure.

In Figure~\ref{num-exp-success-rate}(a), we plot the probability of success of predicting the correct
label of all points as a function of $\gamma$ and $\eps/\tau^2$ for fixed $\alpha = 2$. We chose
 $\tau = 0.5$, $\eps/\tau^2 \in (0.1, 0.6)$ and $\gamma \in (0.1, 1)$.
Here we see a clear transition in the success probability as a function of $\eps/\tau^2$.
When $\eps/ \tau^2$ is small the success probability is almost independent of
$\eps/\tau^2$ and depends only on $\gamma$ but for larger values of $\eps/\tau^2$
the success probability suddenly drops to zero meaning that
some points are always mislabelled. This behavior is in line with
Figure~\ref{num-exp-consistency-with-different-gamma-and-n}  where we observed a
sharp increase in the prediction error when $\eps/\tau^2$ is too large.
We emphasize that this behavior is also in line with
Proposition~\ref{multiple-observation-consistency} stating that
the probability of success is controlled only by $\gamma$ provided that
$\eps/\tau^{2}$ and $\eps$ are sufficiently small.

Next, we fixed $\eps/\tau^2 = 0.1$ and modified $\alpha$, see Figure~\ref{num-exp-success-rate}(b).
We do not observe any dependence of the success probability on $\alpha$.
This is in line with  Proposition~\ref{multiple-observation-consistency},
which states that if $\eps$ and  $\eps/\tau^{2}$ are sufficiently small then the
success probability is essentially controlled by the probability of the event
where the data is correct which depends only on $\gamma$.

\begin{figure}[htp]
  \centering
    \begin{subfigure}[b]{.45 \textwidth}
    \includegraphics[width=1 \textwidth, clip =true, trim =15ex 1ex 15ex 1ex]
    {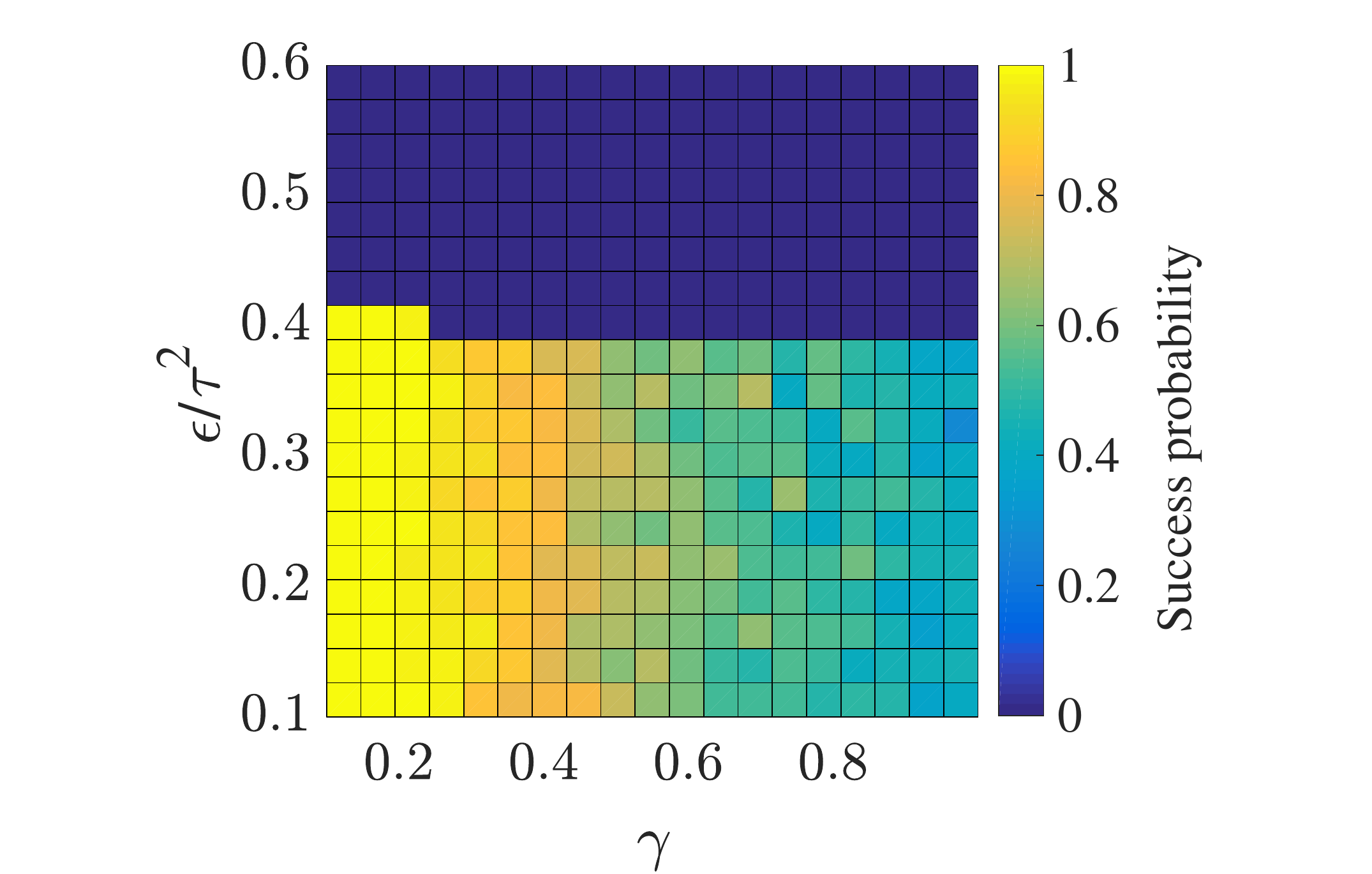}
    \caption{}
  \end{subfigure}
  \begin{subfigure}[b]{.45 \textwidth}
    \includegraphics[width=1 \textwidth, clip =true, trim =15ex 1ex 15ex 1ex]
    {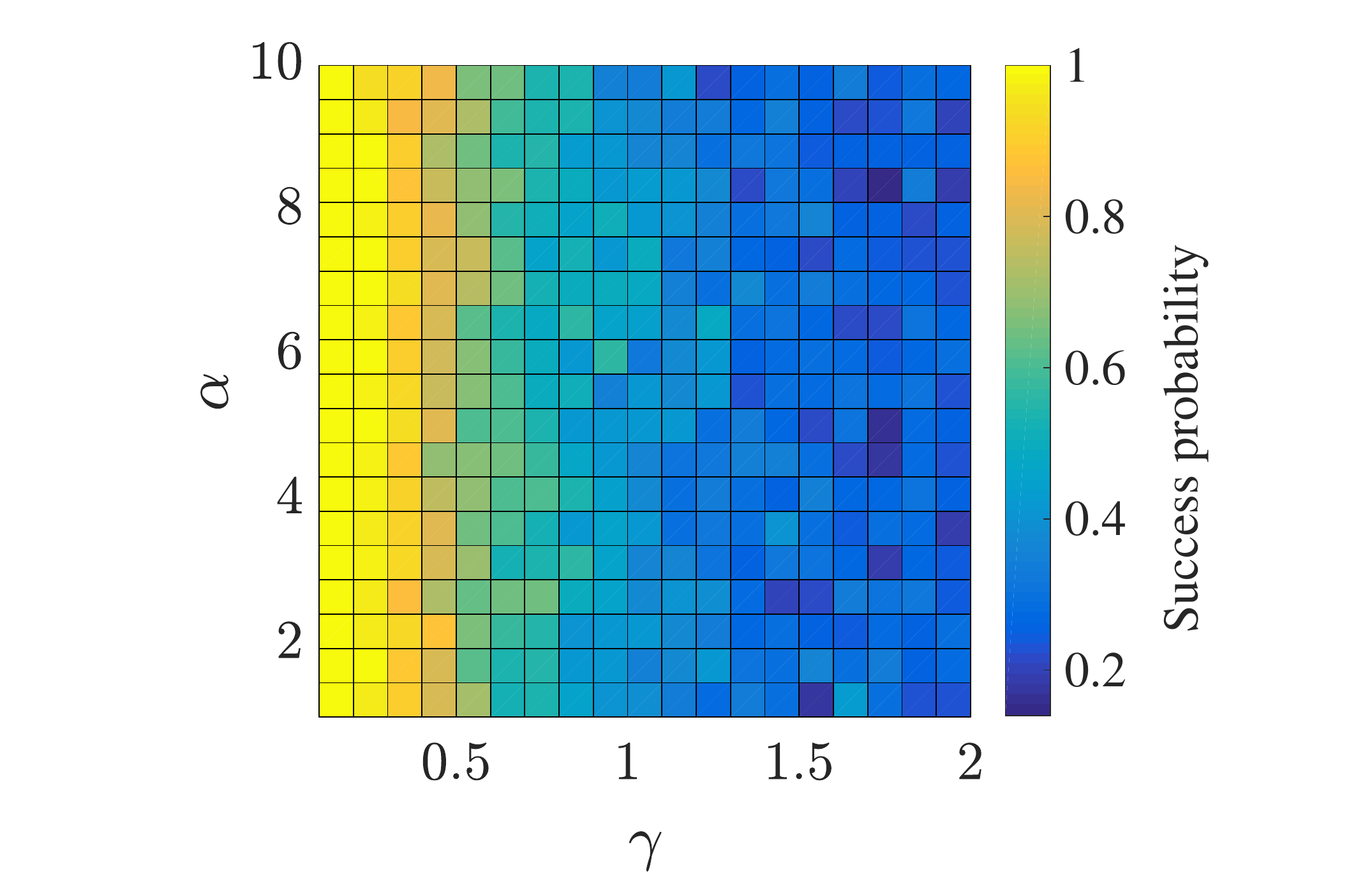}
    \caption{}
  \end{subfigure}
  \caption{Heat map of the probability of success of probit in predicting the correct label
    of all the points in the dataset. 
    (a) We plot the success probability of probit as a function of $\eps/\tau^2$
    and $\gamma$ for fixed value of $\alpha=2$. When $\eps/\tau^2$ is small
    the success probability appears to only depend on $\gamma$ but as $\eps/\tau^2$ increases we
    see a sharp transition where the success probability drops to zero indicating that some points
  are always mislabelled in this regime. 
    (b) The success probability for different
    values of $\alpha$ and $\gamma$ for  fixed value of $\eps/\tau^2 = 0.1$. We do not
    observe any strong dependence on $\alpha$ and the success probability appears to
    depend on $\gamma$ only. }
  \label{num-exp-success-rate}
\end{figure}

\subsection{Multi-Class Classification Using One-Hot}
\label{sec:num-exp-one-hot}
For the next set of numerical experiments we consider
multi-class classification with the One-hot method. Once again we
use the synthetic dataset of subsection~\ref{sec:num-exp-spectral-perturbation}
but now we assume there exist three classes within the graph as
depicted in Figure~\ref{num-exp-one-hot-clusters}.
Similar to previous sections we take $\psi_\gamma$ to be the logistic
distribution but this time use the  model \eqref{one-hot-label-model}
for the observed labels. We are assuming that we are given one label in each cluster, i.e. $J=3$. In the perfect measurement case
$y = (1, 2, 3)^T$. The main modification in this case, as compared to
binary classification is that now we need to minimize the one-hot functional 
which has a more complicated misfit function as in \eqref{one-hot-likelihood}.
Furthermore, the minimizer is now a matrix $U^\ast \in \mbb R^{M \times N}$
where $M=3$ and $N = 150$. In this case \eqref{one-hot-EL}
is a  nonlinear system of equations in $3 \times 150$ dimensions
which is slow to solve. Instead, we solve the
dimension reduced system \eqref{one-hot-dimension-reduced-EL} and
identify $U^\ast$ via $B^\ast$. We noticed that this approach offers significant
speedup in our calculations. The dimension reduced system
\eqref{one-hot-dimension-reduced-EL} is small enough that MATLAB's \texttt{fsolve}
can still be very effective. We highlight that we approximate
the matrix $\tC_{ \eps, \tau}$ by finding $\hat{C}_{\eps, \tau}$
as in \eqref{hat-C-eps-tau} and then keep only the rows and
columns of $\hat{C}_{\eps, \tau}$ that correspond to the observation vertices in $Z'$.

Overall we find that the one-hot method behaves similarly to probit
as expected following our analysis. In Figures~\ref{num-exp-one-hot-consistency-with-different-gamma-n}
 we
show the accuracy of the one-hot method in predicting the
correct label of the points using perfect observed labels $y$. Similarly to the probit case
we see little sensitivity to $n$ and $\gamma$ but a clear phase transition
in the prediction error as $\eps/\tau^2$ increases for each value of $\alpha$.
We see that the prediction error is either very small or close to $66 \%$. The
latter value is a result of all three clusters being labelled as the
same class. Overall, it seems that the prediction error is smaller and less
sensitive to $\eps/\tau^2$ when $\alpha$ is small.

Similarly to the probit case we also study the success probability of one-hot
for different values of $\alpha$, $\eps/\tau^2$ and $\gamma$. Here we
say that the one-hot classification is successful if all labels within the
dataset are
predicted correctly. As before we vary the value of $\gamma$ between $0$ and $1$
 and estimate
 the success probability of one-hot by averaging over 100 trial runs with
 randomly perturbed observed labels $y$. Figure~\ref{num-exp-one-hot-prob-success}(a)
 and (b) show a heat-map of success probability of one-hot for different
 values of $\eps/\tau^2$ and $\gamma$. Here we observe some dependence
 between the success probability and $\eps/\tau^2$. In fact, for fixed value of
 $\gamma$ we see a small increase in success probability as $\eps/\tau^2$
 increases. However, as depicted in Figure~\ref{num-exp-one-hot-prob-success}(b)
 increasing $\eps/\tau^2$ eventually leads to a sudden drop in probability of
 success. This behavior is again in line with the phase transition observed in
 Figure~\ref{num-exp-one-hot-consistency-with-different-gamma-n}.
 On the other hand, we observe in Figure~\ref{num-exp-one-hot-prob-success}(c)
 that for fixed values of $\eps/\tau^2$ the success probability is effectively
 independent of $\alpha$ and only controlled by $\gamma$. This is in line with our numerical results in the binary case, see Figure~\ref{num-exp-success-rate}(a).

\begin{figure}[htp]
  \centering
  \begin{subfigure}[b]{.45 \textwidth}
    \includegraphics[width=1 \textwidth, clip =true, trim =20ex 1ex 20ex 1ex]
    {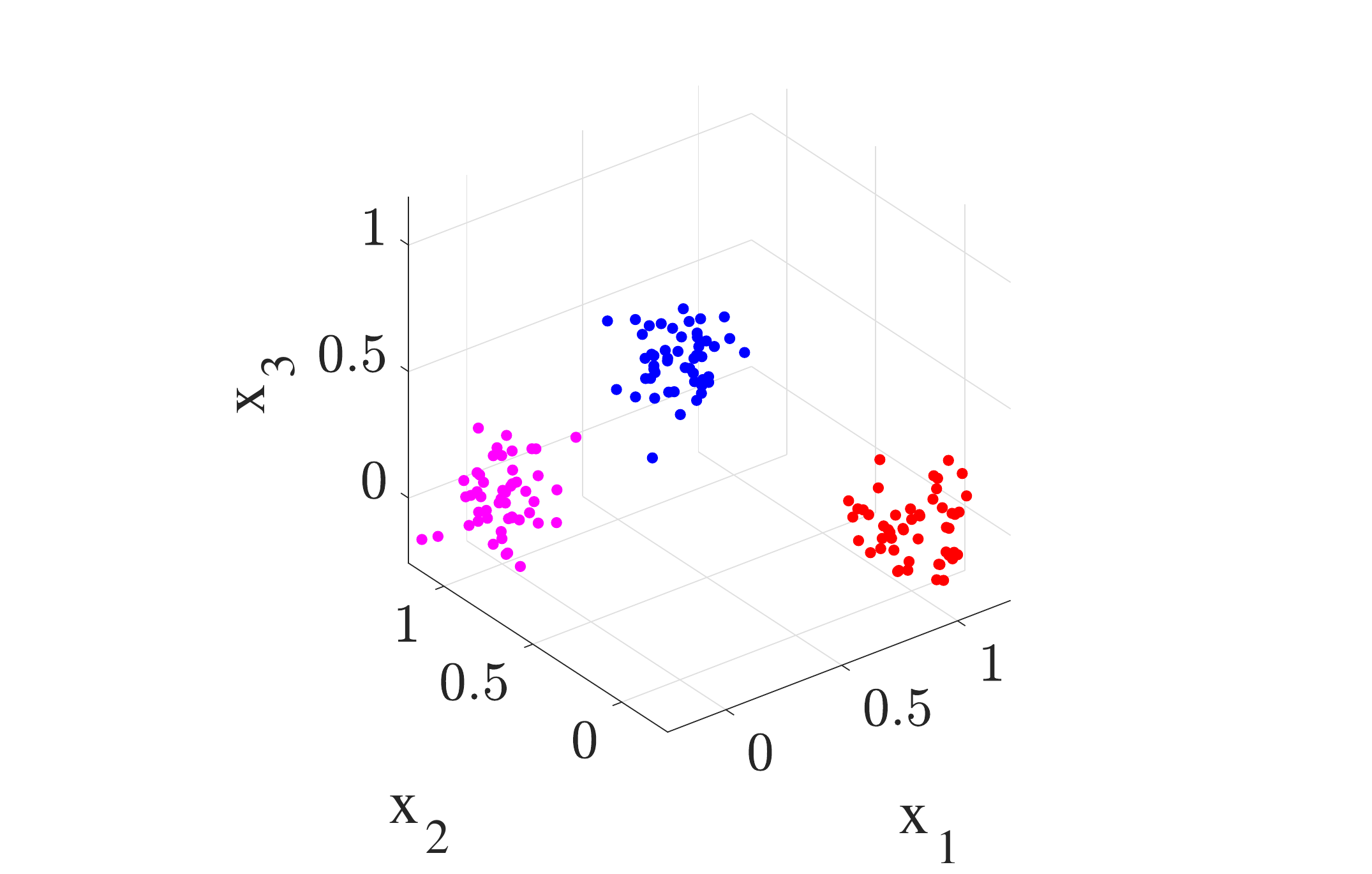}
  \end{subfigure}
  \caption{Visualization of the three classes within the three cluster dataset
  for multi-class classification using one-hot. We observe a single label in each cluster.}
  \label{num-exp-one-hot-clusters}
\end{figure}

\begin{figure}[htp]
  \centering
  \begin{subfigure}[b]{.32 \textwidth}
    \includegraphics[width=1 \textwidth, clip =true, trim =20ex 1ex 20ex 1ex]
{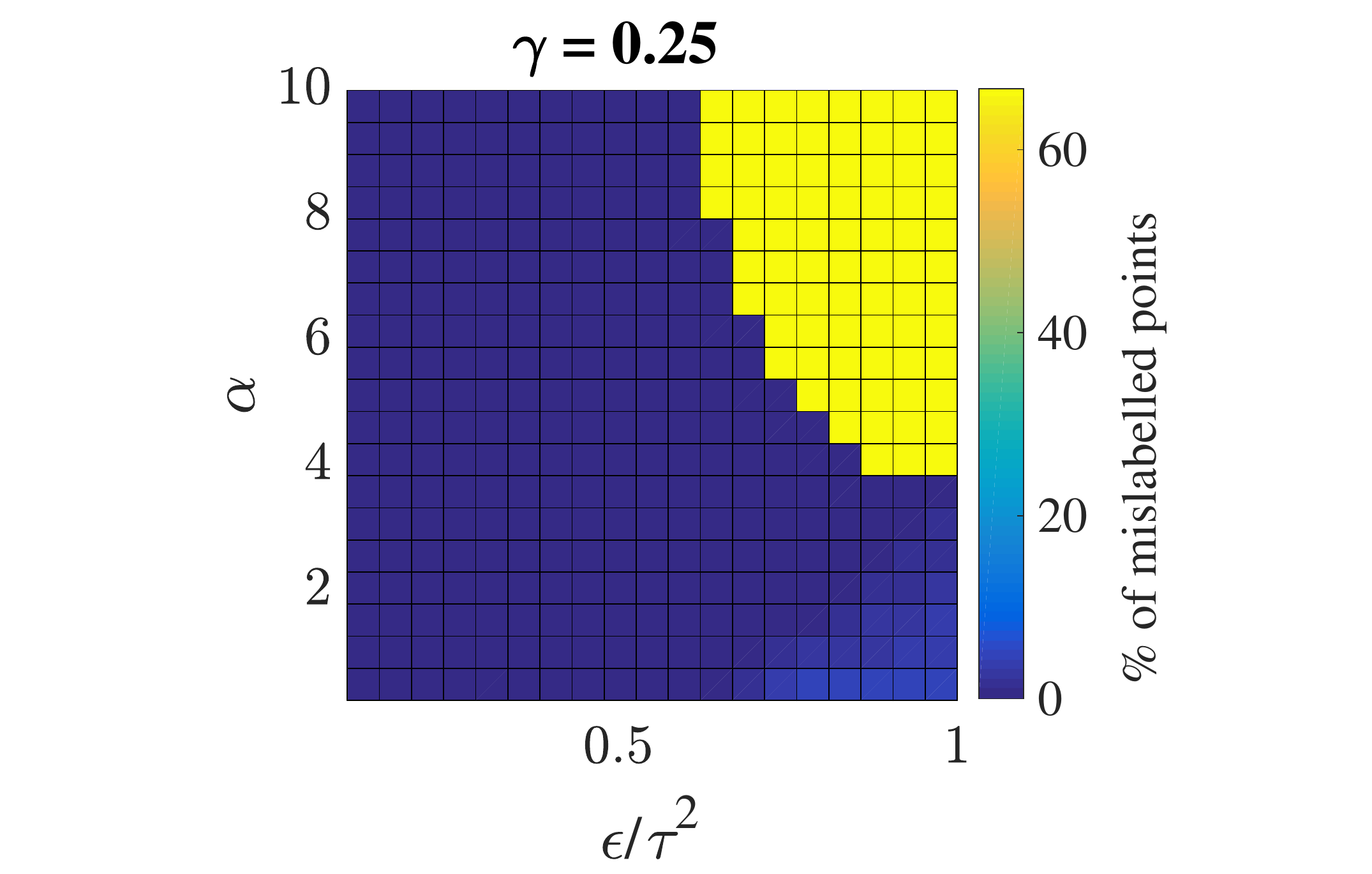}
  \end{subfigure}
    \begin{subfigure}[b]{.31 \textwidth}
    \includegraphics[width=1 \textwidth, clip =true, trim =20ex 1ex 20ex 1ex]
    {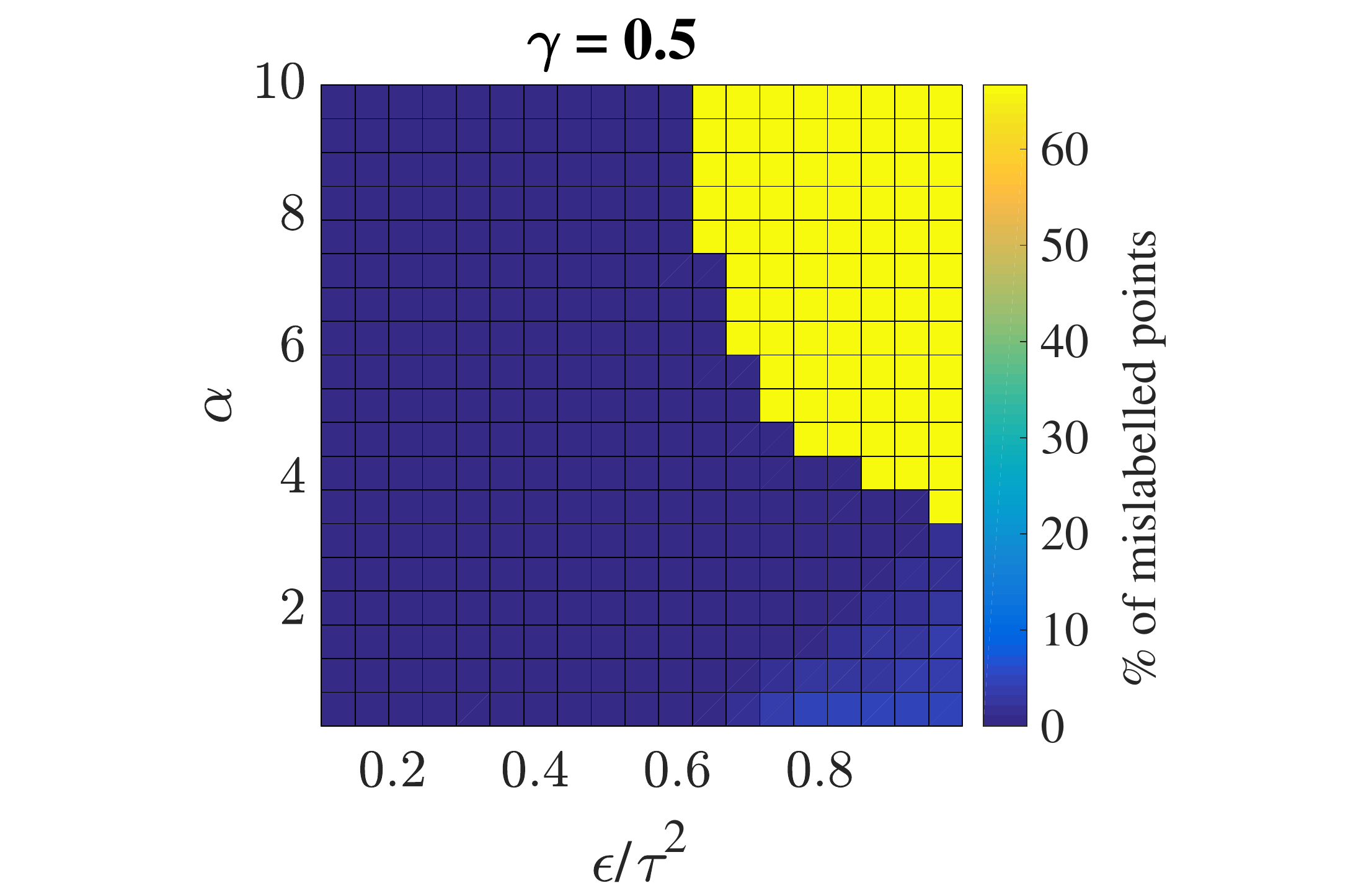}
  \end{subfigure}
\begin{subfigure}[b]{.32 \textwidth}
  \includegraphics[width=1 \textwidth, clip =true, trim =20ex 1ex 20ex 1ex]
  {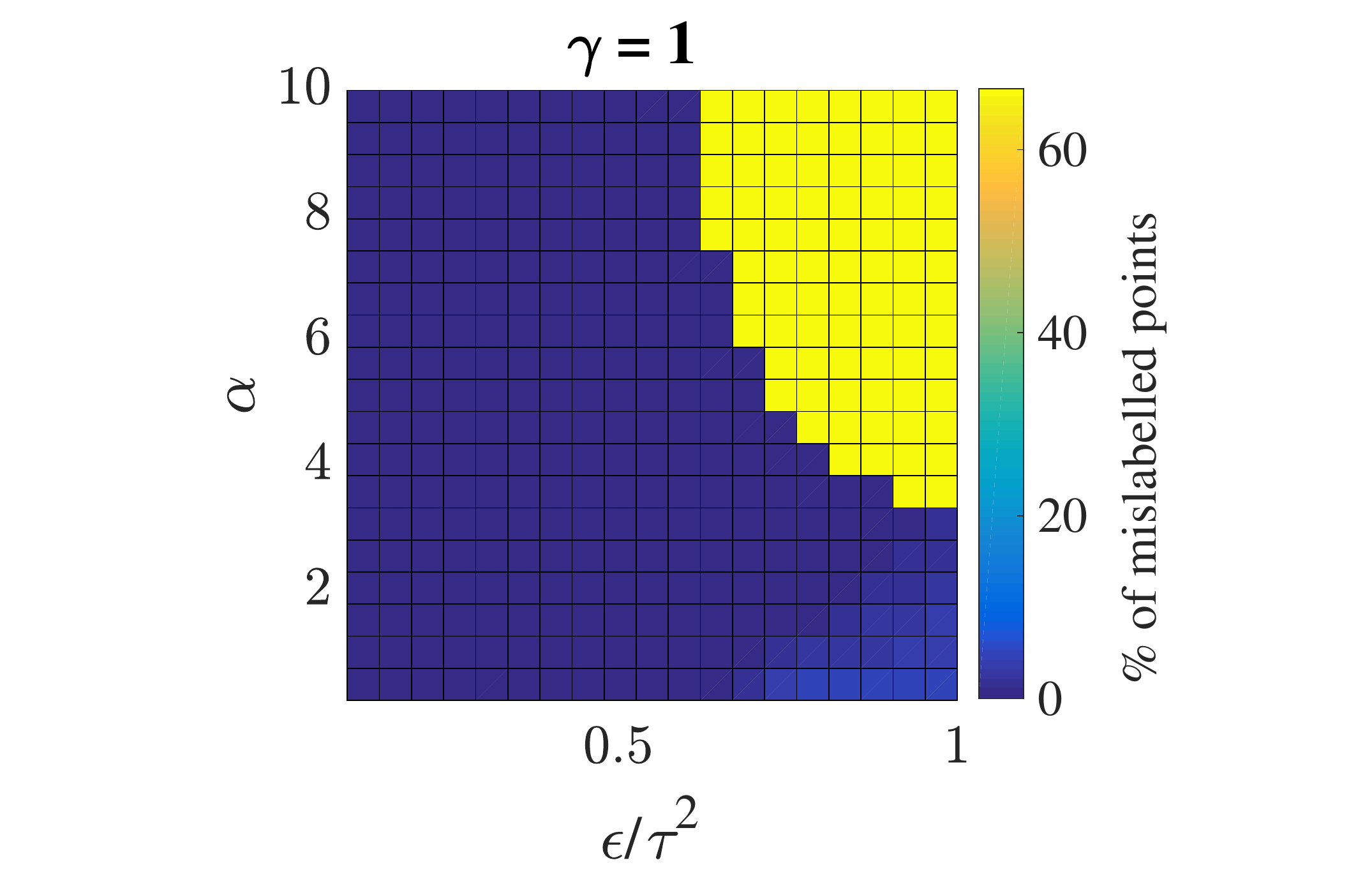}
\end{subfigure}\\
  \begin{subfigure}[b]{.32 \textwidth}
    \includegraphics[width=1 \textwidth, clip =true, trim =20ex 1ex 20ex 1ex]
{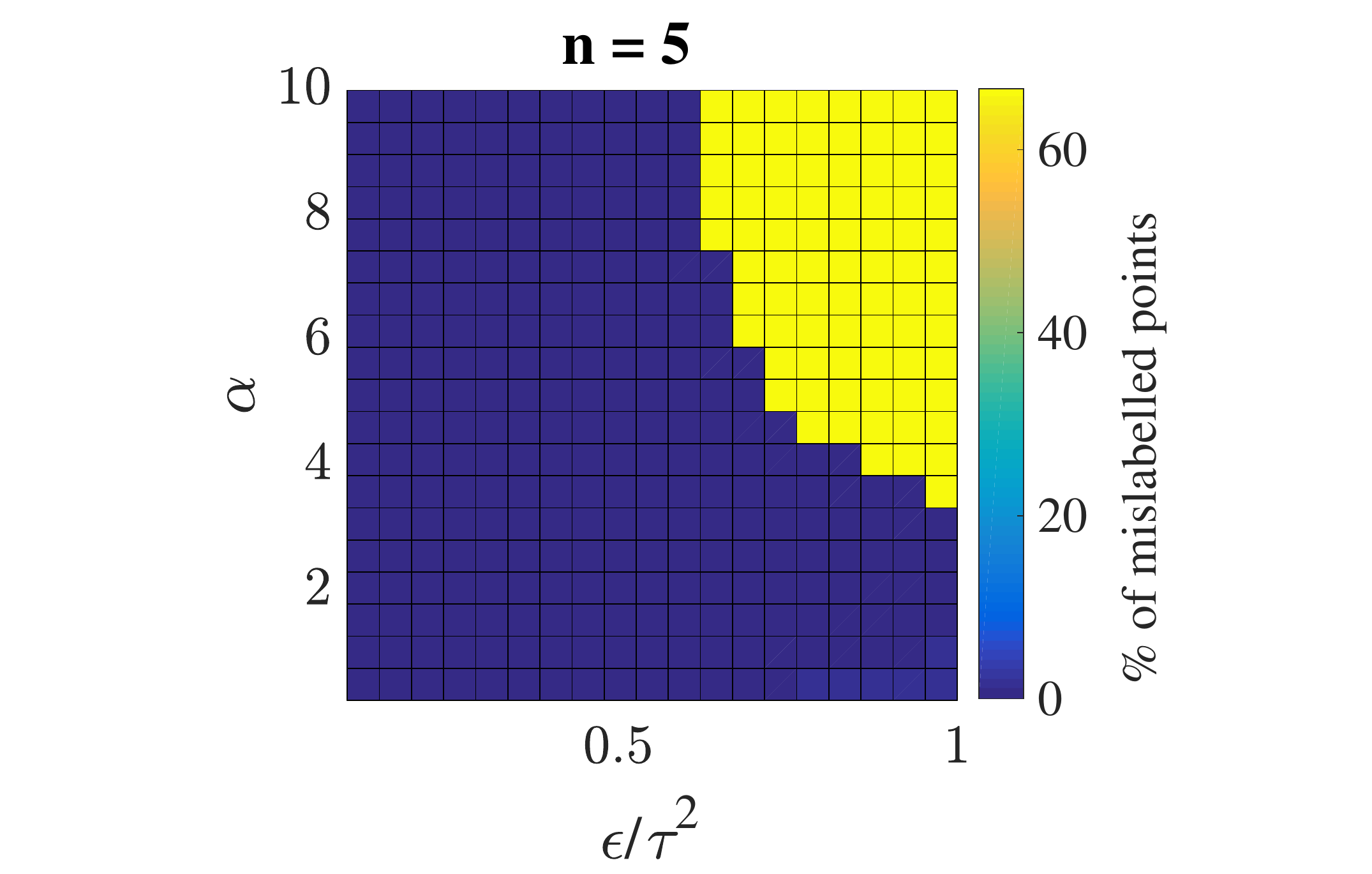}
  \end{subfigure}
    \begin{subfigure}[b]{.29 \textwidth}
    \includegraphics[width=1 \textwidth, clip =true, trim =20ex 1ex 20ex 1ex]
    {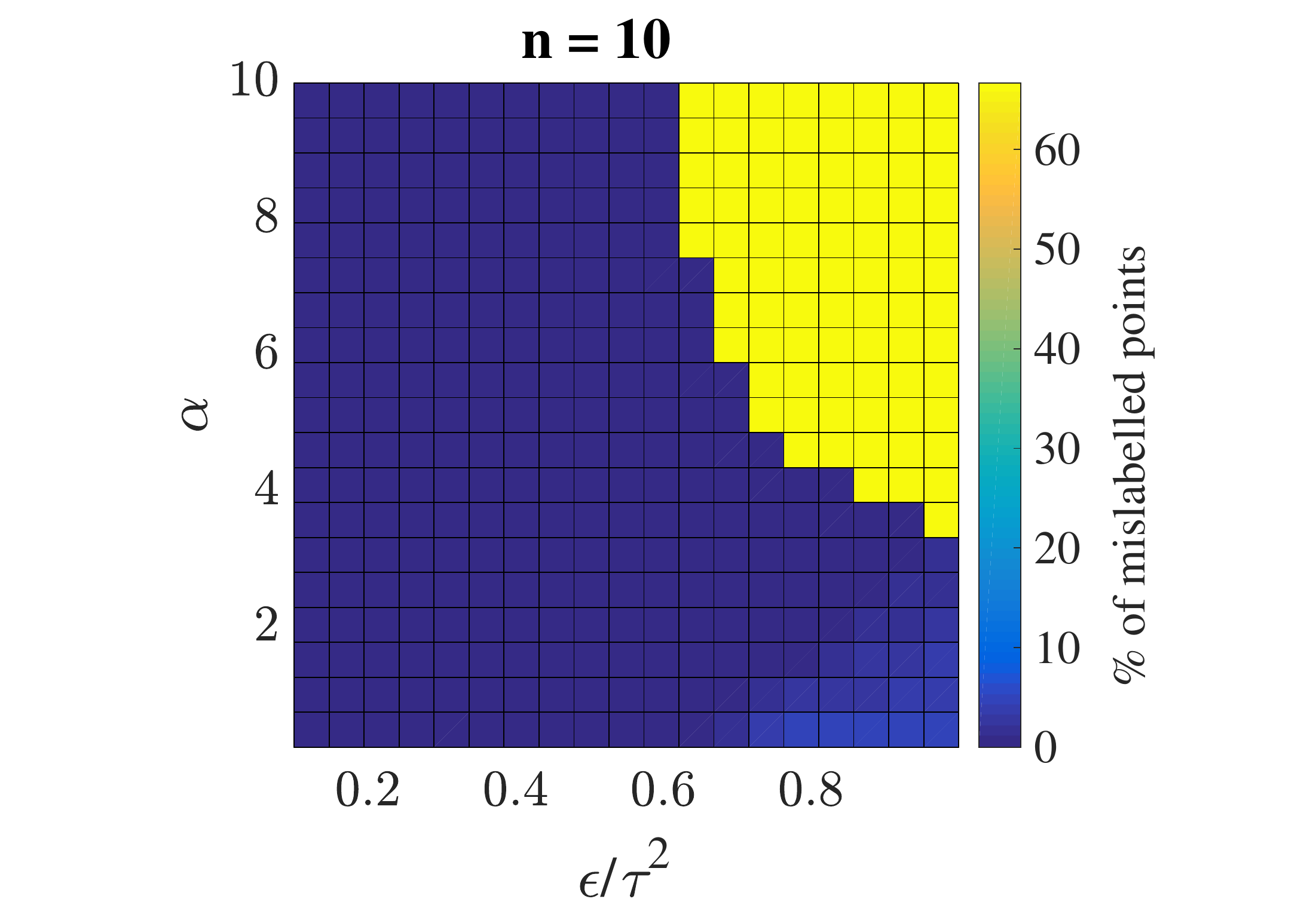}
  \end{subfigure}
\begin{subfigure}[b]{.32 \textwidth}
  \includegraphics[width=1 \textwidth, clip =true, trim =20ex 1ex 20ex 1ex]
  {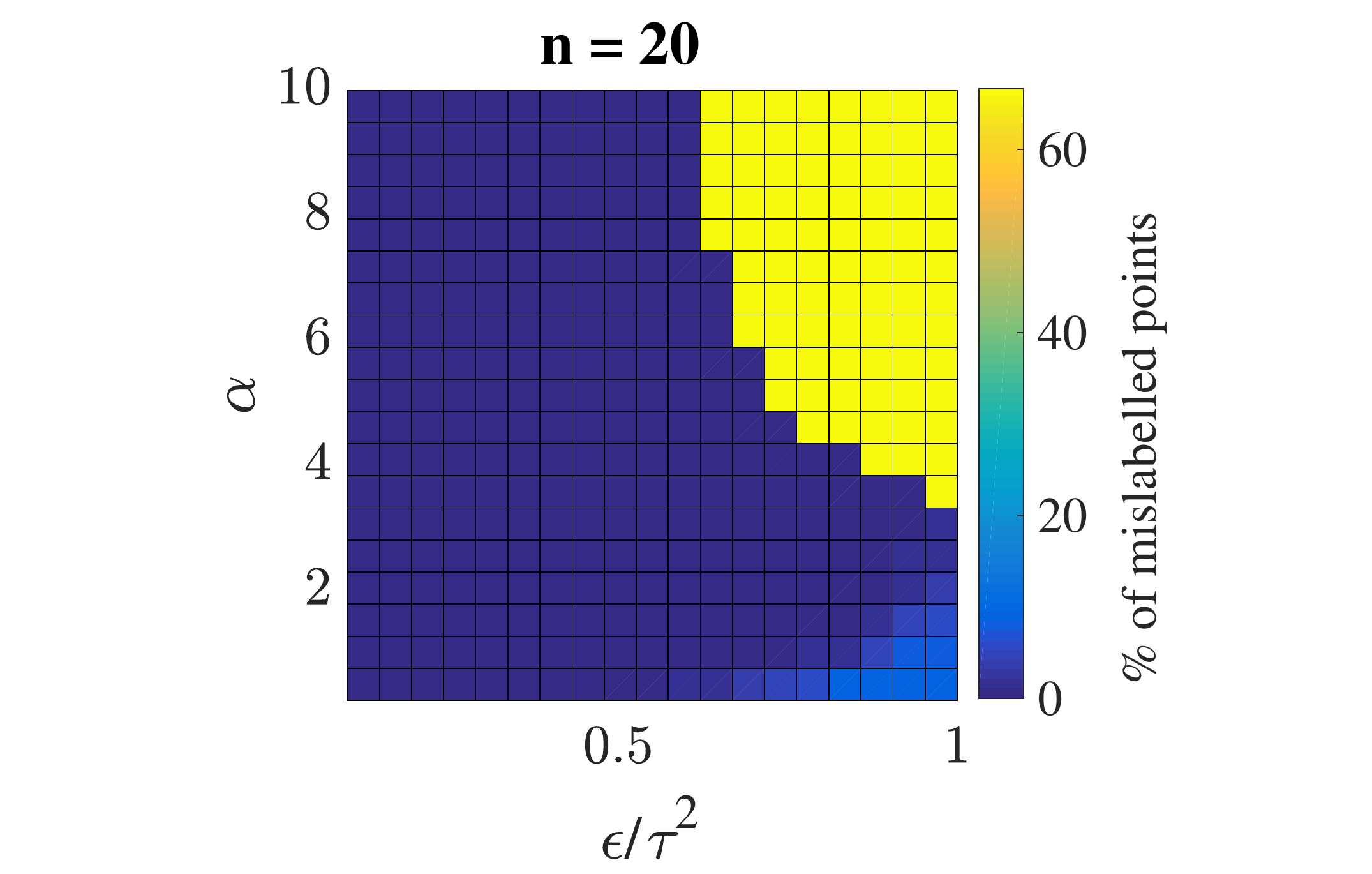}
  \end{subfigure}
  \caption{Prediction error of the one-hot classifier using perfect measurements
    for different values of $\gamma$ while fixing $n=10$ (top row) and values of $n$ while fixing $\gamma=0.5$ (bottom row).
    A clear phase transition is observed as $\eps/\tau^2$ increases for each value of $\alpha$. For large values of $\eps/\tau^2$ and small $\alpha$ we observe a
    slight increase in prediction error as $n$ increases (compare the figures in
 the bottom row when $\alpha \approx 1$)..}
  \label{num-exp-one-hot-consistency-with-different-gamma-n}
\end{figure}

\begin{figure}[htp]
  \centering
   \begin{subfigure}[b]{.32 \textwidth}
    \includegraphics[width=1 \textwidth, clip =true, trim =15ex 1ex 15ex 1ex]
    {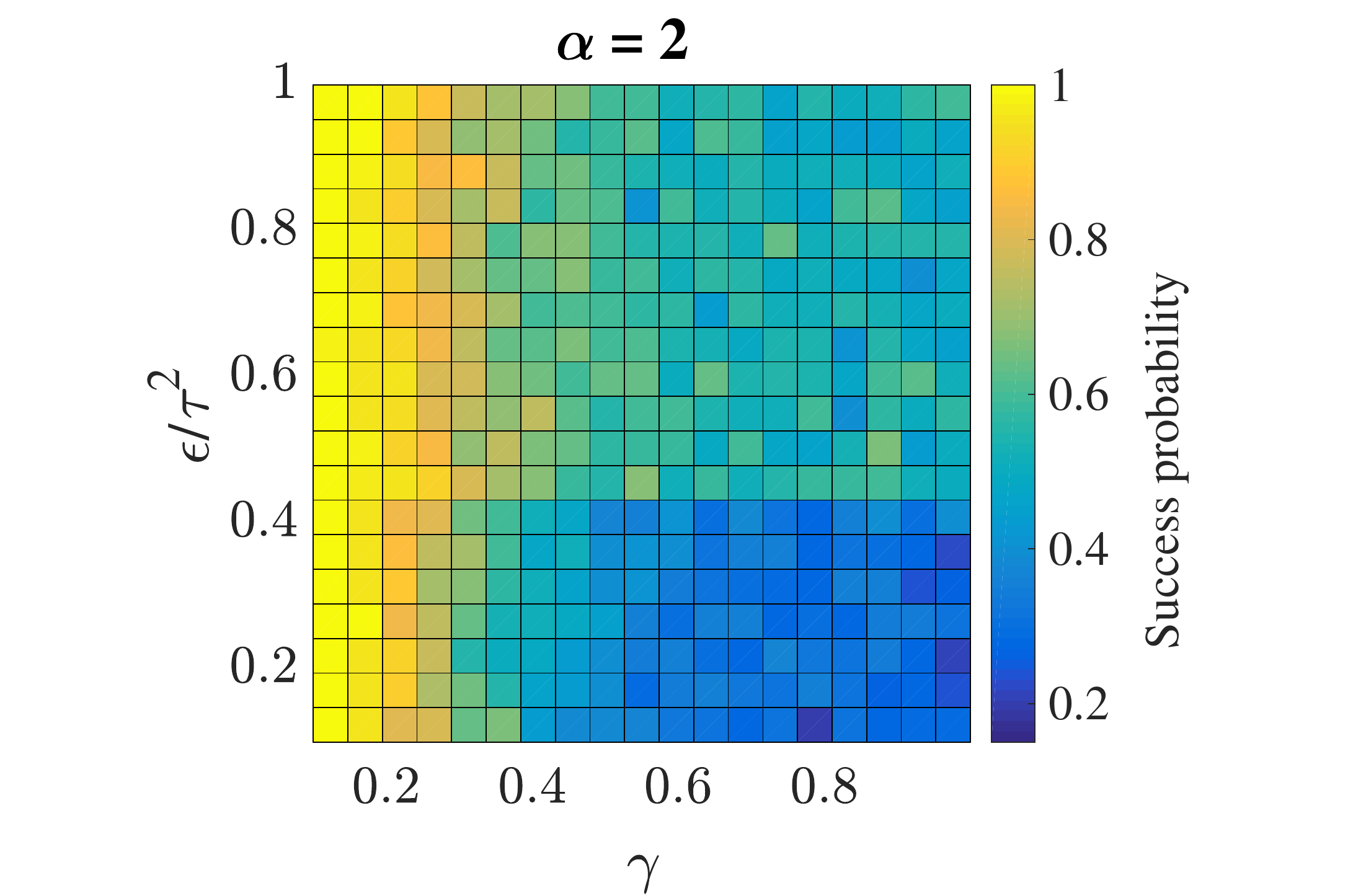}
    \caption{}
  \end{subfigure}
    \begin{subfigure}[b]{.32 \textwidth}
    \includegraphics[width=1 \textwidth, clip =true, trim =15ex 1ex 15ex 1ex]
    {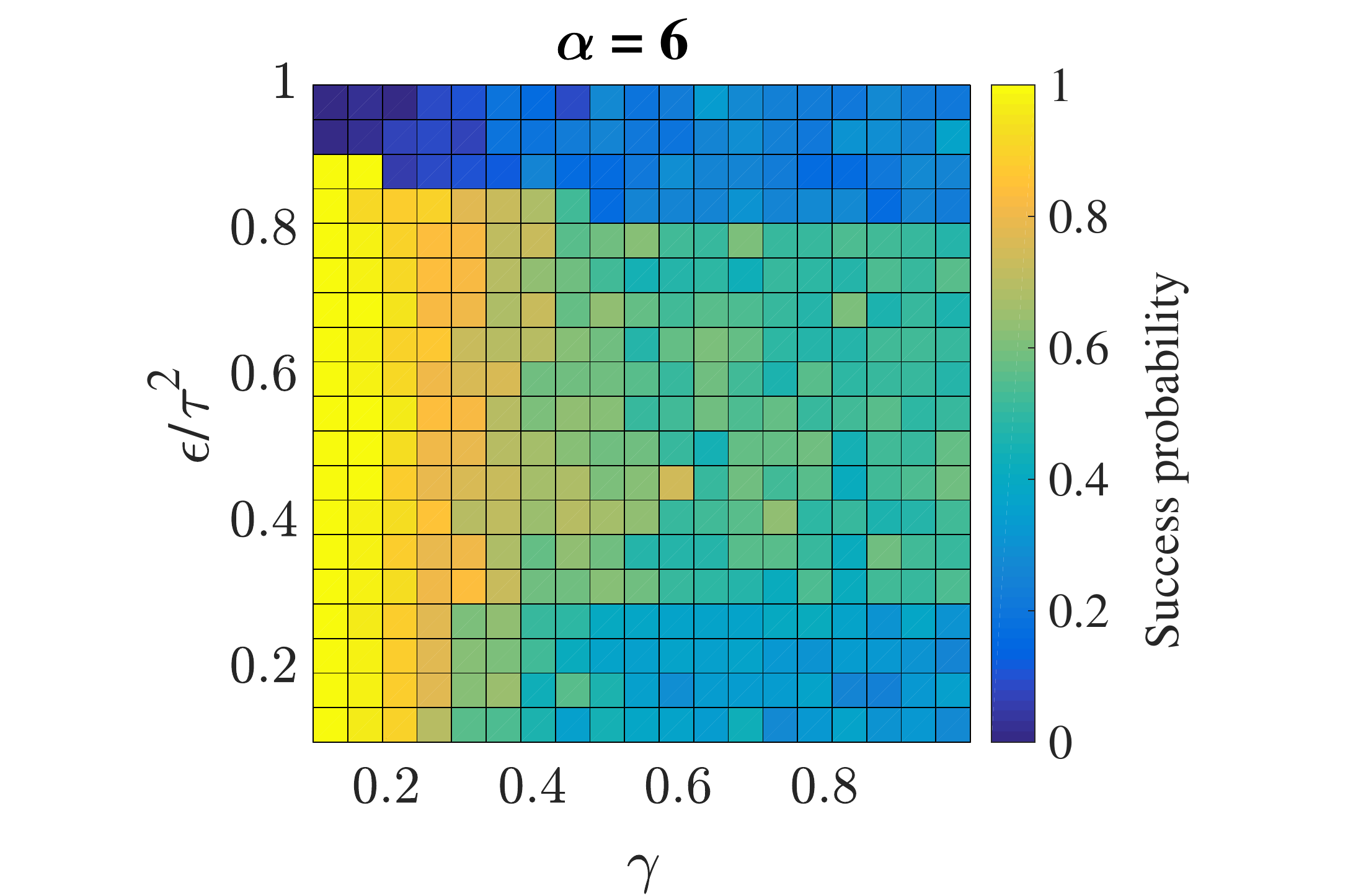}
    \caption{}
  \end{subfigure}
\begin{subfigure}[b]{.32 \textwidth}
  \includegraphics[width=1 \textwidth, clip =true, trim =20ex 1ex 20ex 1ex]
  {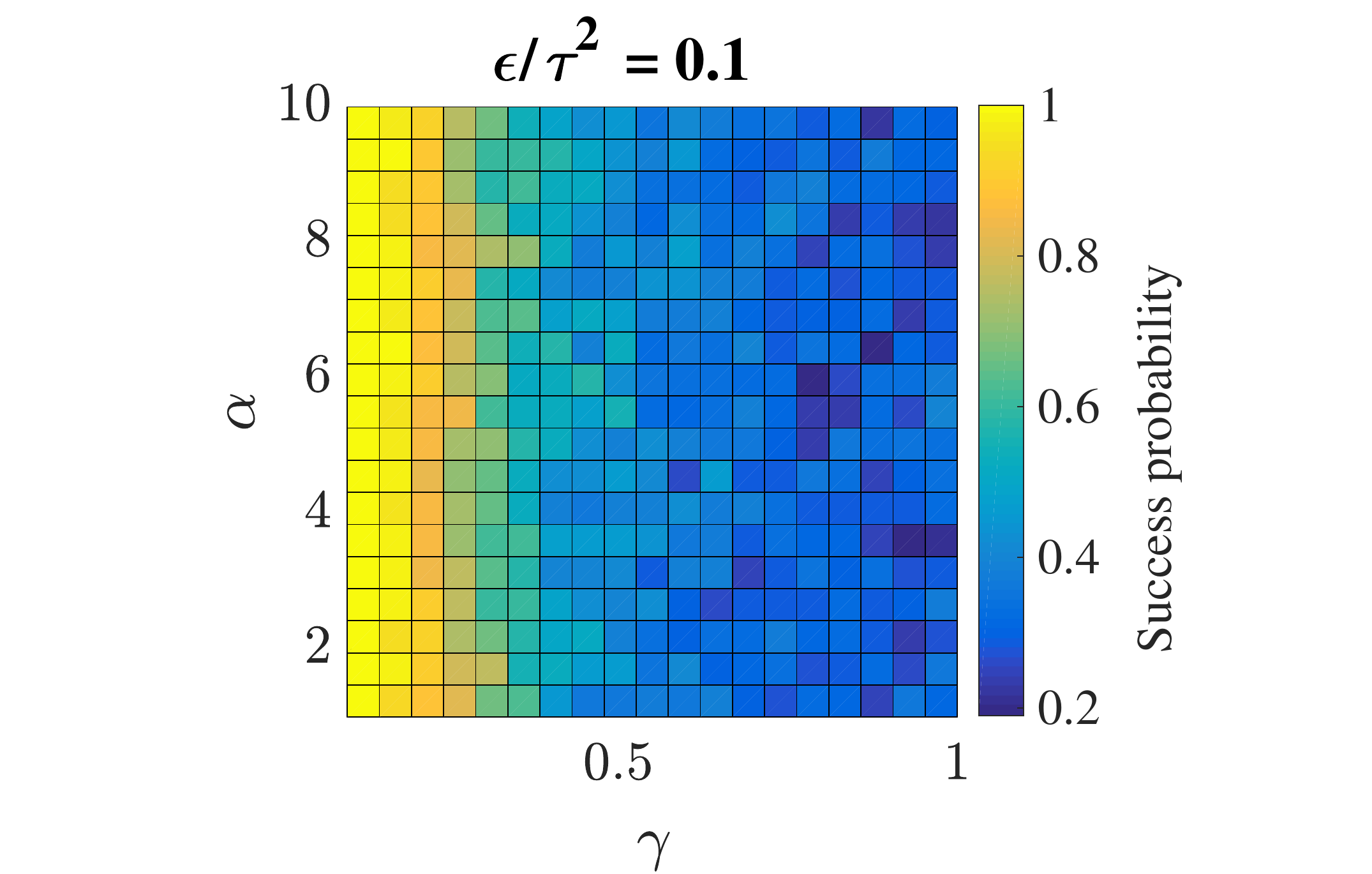}
  \caption{}
  \end{subfigure}
%\begin{subfigure}[b]{.32 \textwidth}
%  \includegraphics[width=1 \textwidth, clip =true, trim =20ex 1ex 20ex 1ex]
%  {./Figs/One_hot_recovery_n_20}
%  \end{subfigure}
  \caption{Heat map of success probability of one-hot as a function of
    $\gamma$, $\alpha$ and $\eps/\tau^2$. (a) and (b) show the success
    probability as a function of $\eps/\tau^2$ for $\alpha = 2$ and $\alpha = 6$.
    In the latter case we see a sudden drop in the success probability
    as $\eps/\tau^2$ increases, even for very small values of $\gamma$.
    (c) shows the success probability as a function of $\alpha$ and $\gamma$
    for fixed $\eps/\tau^2 = 0.1$. Here the success probability is mostly
  controlled by $\gamma$ and does not appear to depend on $\alpha$. }
  \label{num-exp-one-hot-prob-success}
\end{figure}

\subsubsection{Effect Of Number Of Observations In Each Cluster}

For our final set of observations we consider varying the number of observations
in each cluster. We use the same dataset as above with $N = 150$ in three
distinct clusters but this time we vary the number of observed labels
in each cluster. In Figure~\ref{num-exp-one-hot-accuracy-multiple-observations-in-clusters}
(a) and (b) we compare
the accuracy of one-hot with a single observation in all clusters and
with three observations in all clusters respectively. We only consider
noiseless observations in this case. We see a small
perturbation in the error phase transition for larger number of
observations but overall the 
accuracy appears to depend weakly on the number of observations so long as
we have the same number of observations in all clusters. Figure~\ref{num-exp-one-hot-accuracy-multiple-observations-in-clusters}(c)  shows the same
experiment as above except that here we took two observations in
one of the clusters and a single observation in the other two. Figure~\ref{num-exp-one-hot-success-multiple-observations-in-clusters}(d)
shows a similar calculation with three observations in one cluster and a
single observation in the rest. In comparison to Figure~\ref{num-exp-one-hot-accuracy-multiple-observations-in-clusters}(a) and (b) we now see a major change in the
accuracy of one-hot, namely that the jump in error now occurs for smaller
values of $\eps/\tau^2$ indicating that the majority label of one of the
clusters is propagated to the rest of the dataset when we have an unbalanced number
of observations within the clusters. 

In Figure~\ref{num-exp-one-hot-success-multiple-observations-in-clusters} we
show the success probability of one-hot when an unbalanced number of labels
are observed in the clusters: three labels are observed in one cluster
and single labels are seen in the rest. As in previous success probability
calculations we randomly perturbed the observed labels in 100 trials and
approximated the success probability of one-hot as a function of $\eps/\tau^2$
and $\gamma$. As before, we see a
sharp transition in the success probability as $\eps/\tau^2$ grows but below
a certain critical value of $\eps/\tau^2$ the success probability appears to only
depend on $\gamma$. We also note that this critical value of $\eps/\tau^2$
appears to shift towards smaller values for larger $\alpha$.

\as{These experiments reveal an interesting and complicated feature of the probit
  and one-hot minimizers in connection to the balancing of labelled points in clusters
  that warrant future analysis.
  It appears that having a balanced number of labels in different
  clusters allows for a larger range of acceptable $\eps, \tau^2$ parameters; note that
  the blue regions are larger in Figure~\ref{num-exp-one-hot-accuracy-multiple-observations-in-clusters}(a,b)
  when all clusters have the same number of labelled points 
  compared to Figure~\ref{num-exp-one-hot-accuracy-multiple-observations-in-clusters}(c, d)
  where one cluster has more labelled points. This suggests that balancing of labelled
  points in practical applications might lead to better accuracy albeit at a high
  computational cost.
  The sensitivity to the balancing of labelled points further highlights the importance
  hierarchical Bayesian methods can tune the $\tau, \alpha$ parameters
  automatically.
}

\begin{figure}[htp]
  \centering
  \begin{subfigure}[b]{.42 \textwidth}
  \includegraphics[width=1 \textwidth, clip =true, trim =20ex 1ex 20ex 1ex]
  {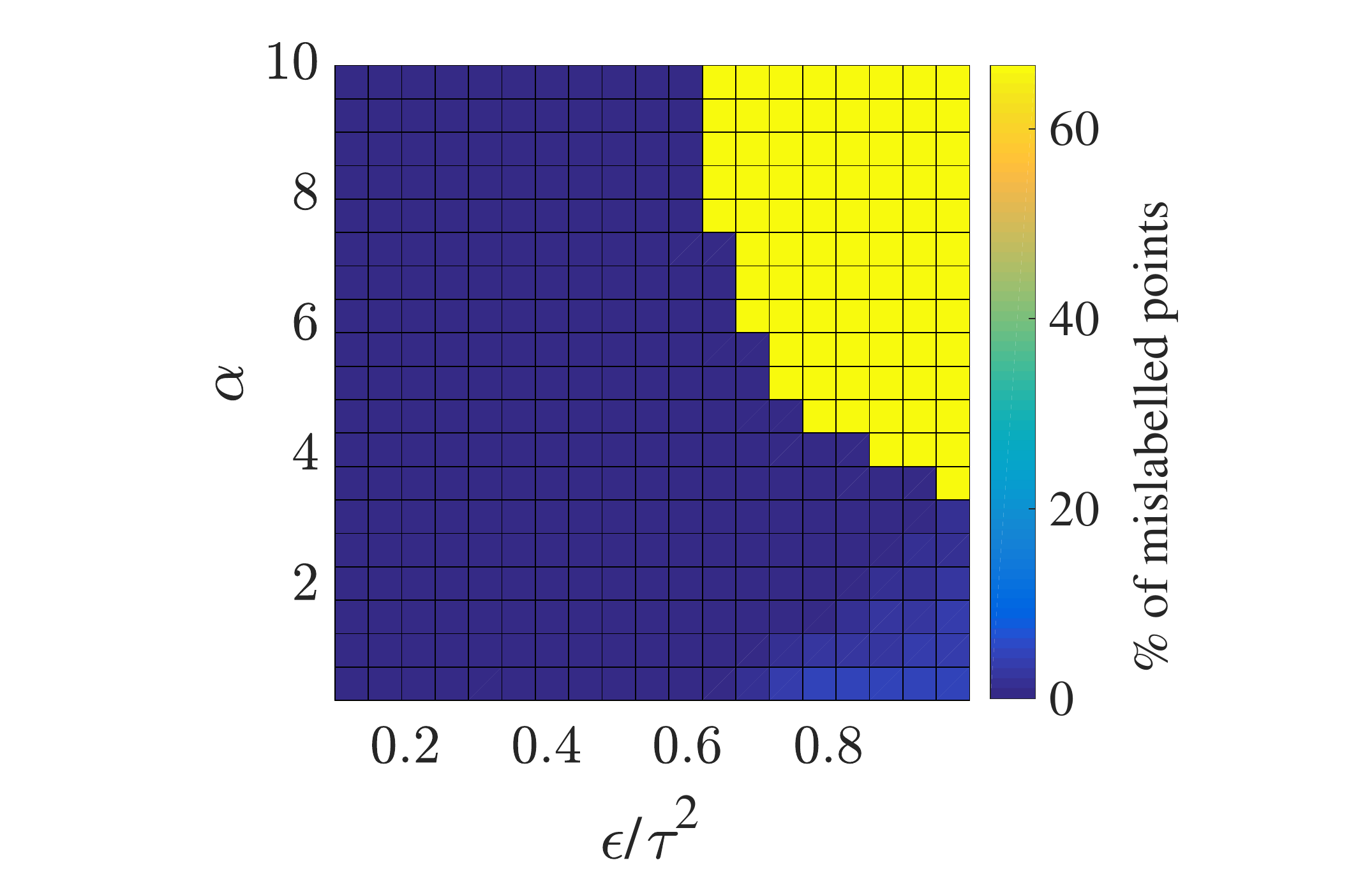}
  \caption{}
\end{subfigure}
\begin{subfigure}[b]{.42 \textwidth}
  \includegraphics[width=1 \textwidth, clip =true, trim =20ex 1ex 20ex 1ex]
  {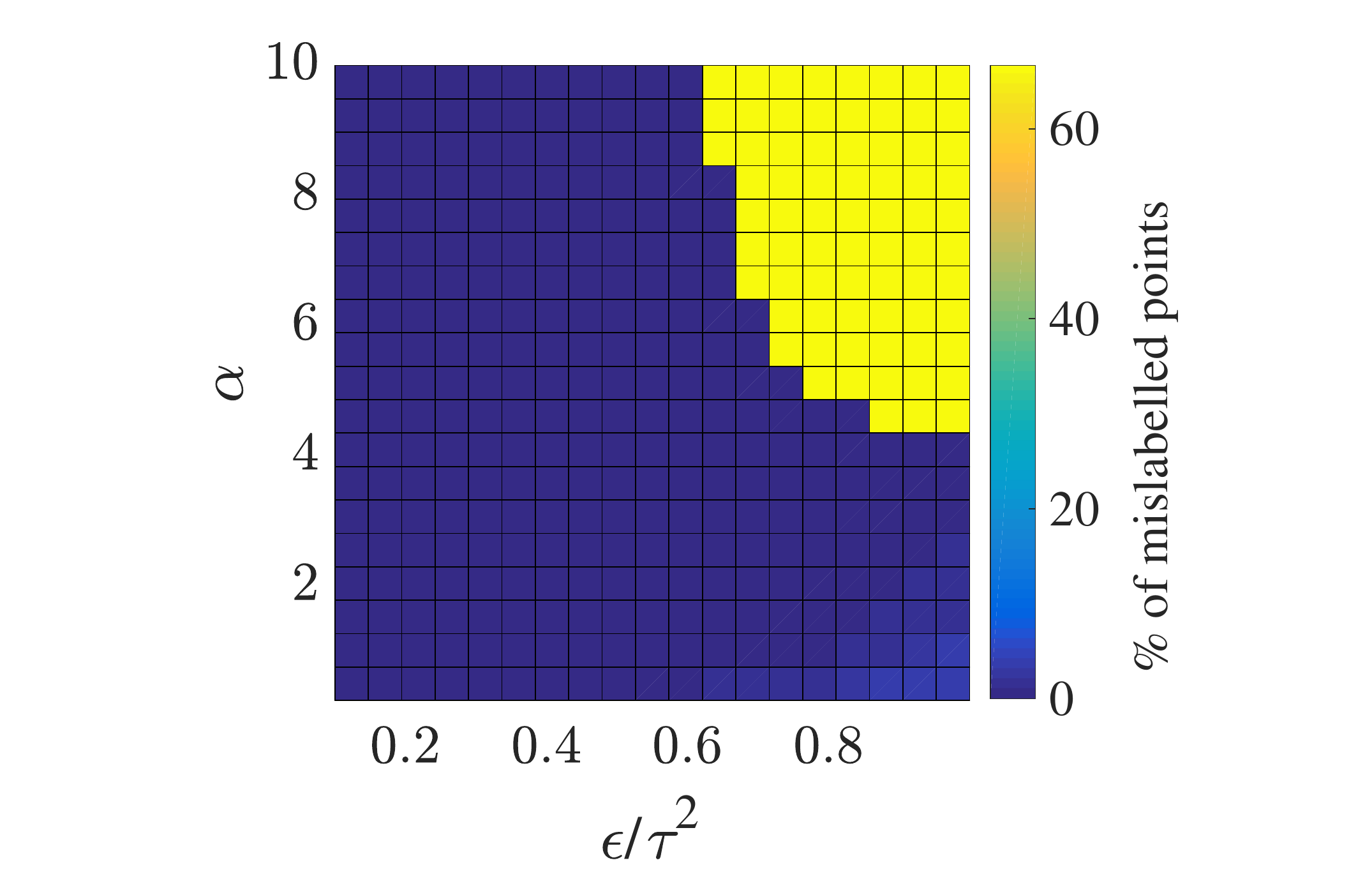}
  \caption{}
  \end{subfigure}\\
   \begin{subfigure}[b]{.42 \textwidth}
    \includegraphics[width=1 \textwidth, clip =true, trim =20ex 1ex 20ex 1ex]
    {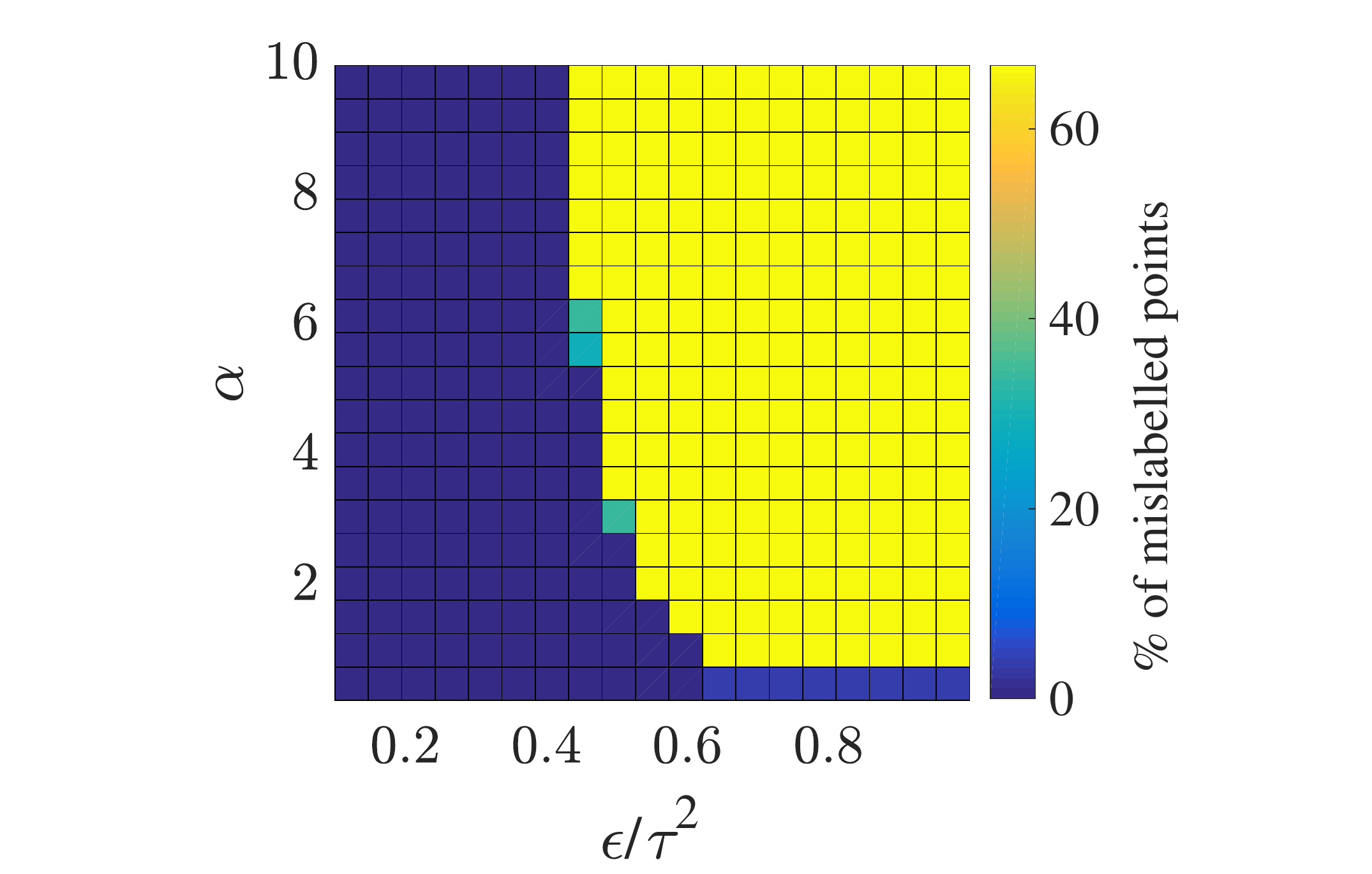}
    \caption{}
  \end{subfigure}
    \begin{subfigure}[b]{.42 \textwidth}
    \includegraphics[width=1 \textwidth, clip =true, trim =20ex 1ex 20ex 1ex]
    {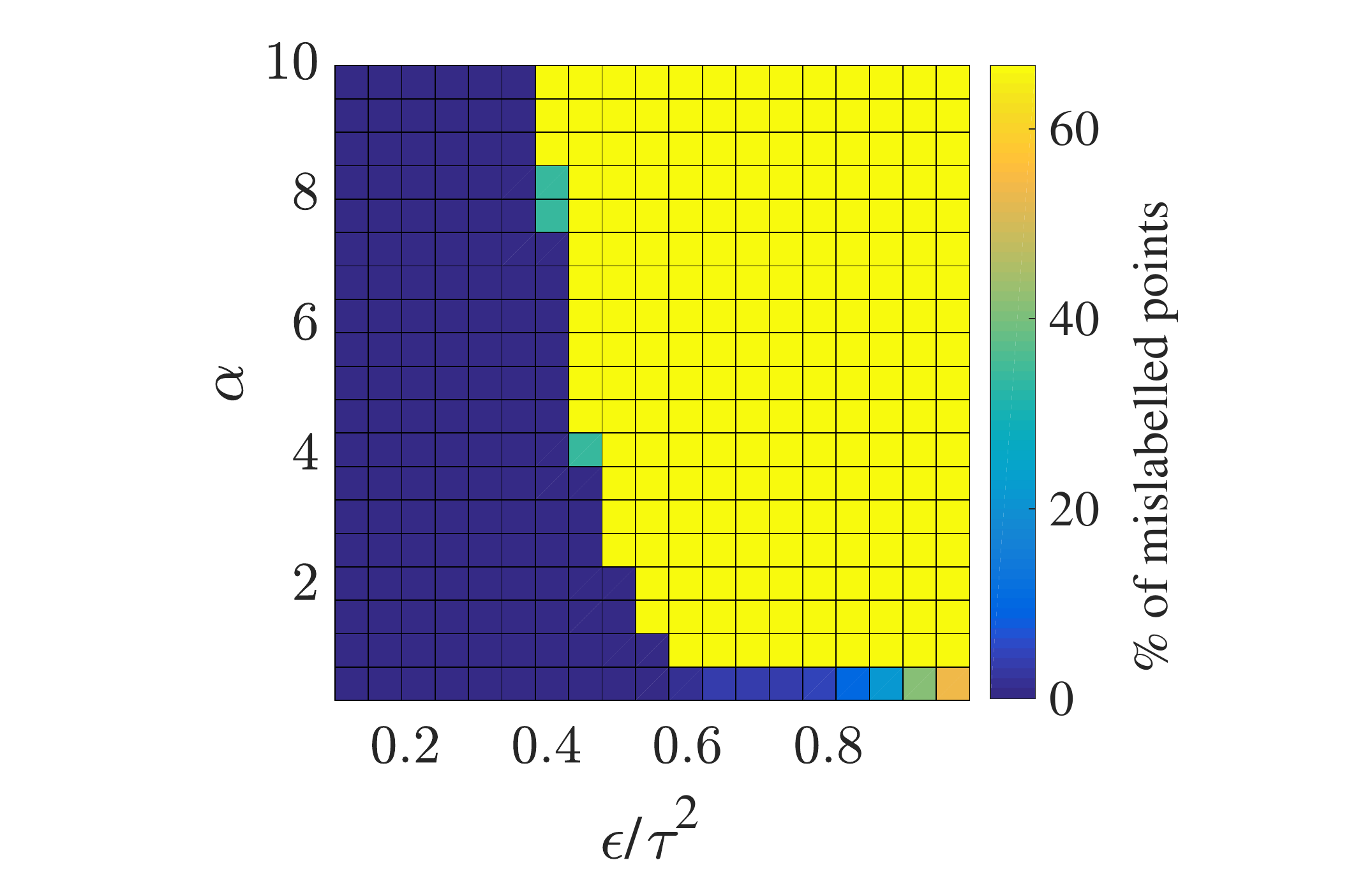}
    \caption{}
  \end{subfigure}
%\begin{subfigure}[b]{.32 \textwidth}
%  \includegraphics[width=1 \textwidth, clip =true, trim =20ex 1ex 20ex 1ex]
%  {./Figs/One_hot_recovery_n_20}
%  \end{subfigure}
  \caption{Effect of the number of observed labels
    on accuracy of   one-hot for $n=10$ and $\gamma =0.5$
    as a function of $\alpha$ and $\eps/\tau^2$. 
    (a) A single label is observed in each of the three clusters. (b)
    Three labels are observed in each cluster. (c) Two observations in
    one cluster and a single observation in the other two. (d) Three observations in
  one cluster and single observations in the other two.}
  \label{num-exp-one-hot-accuracy-multiple-observations-in-clusters}
\end{figure}

\begin{figure}[htp]
  \centering
  \begin{subfigure}[b]{.42 \textwidth}
  \includegraphics[width=1 \textwidth, clip =true, trim =15ex 0ex 15ex 0ex]
  {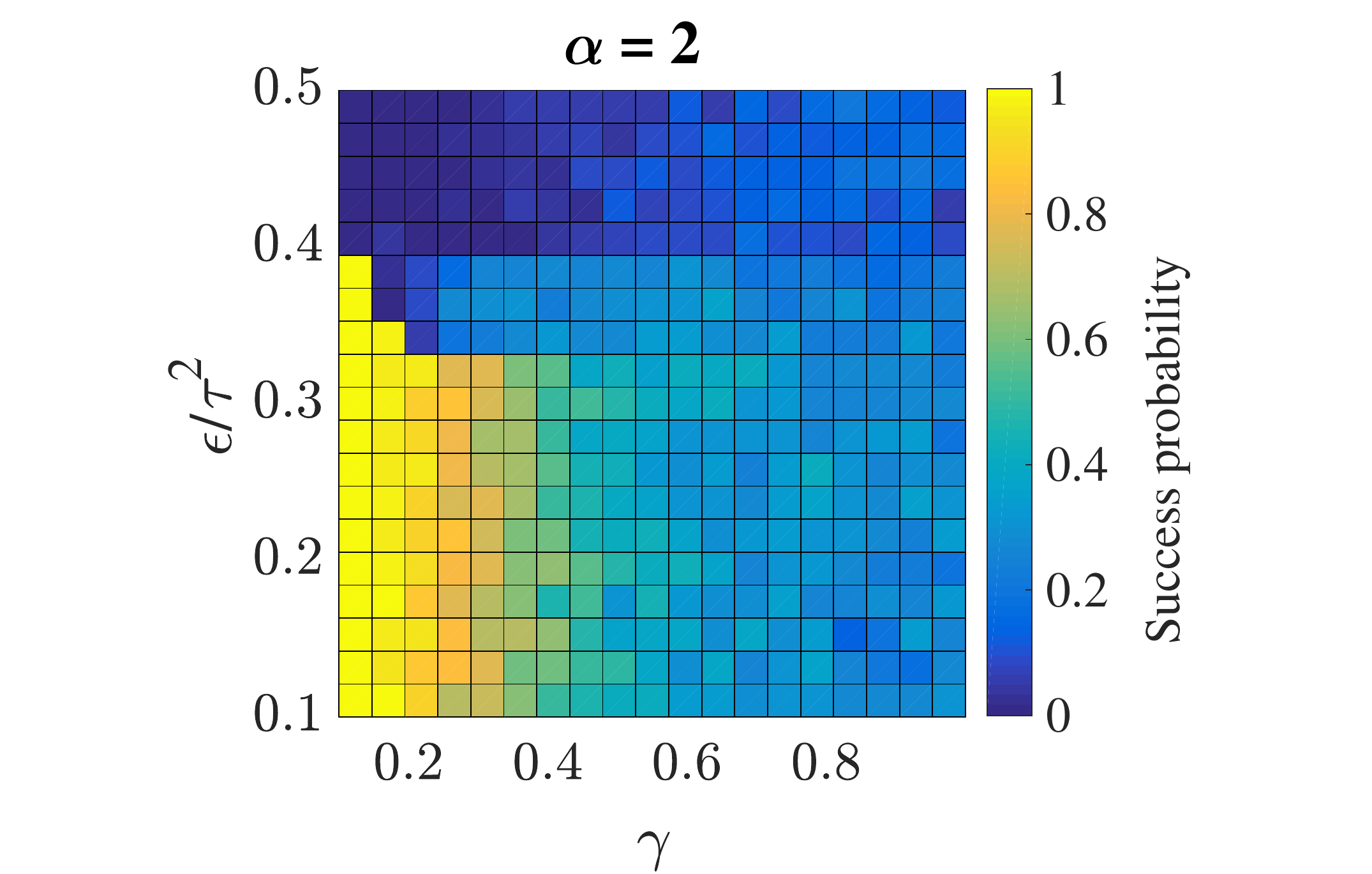}
  \caption{}
\end{subfigure} \hspace{2ex}
\begin{subfigure}[b]{.41 \textwidth}
  \includegraphics[width=1 \textwidth, clip =true, trim =15ex 0ex 15ex 0ex]
  {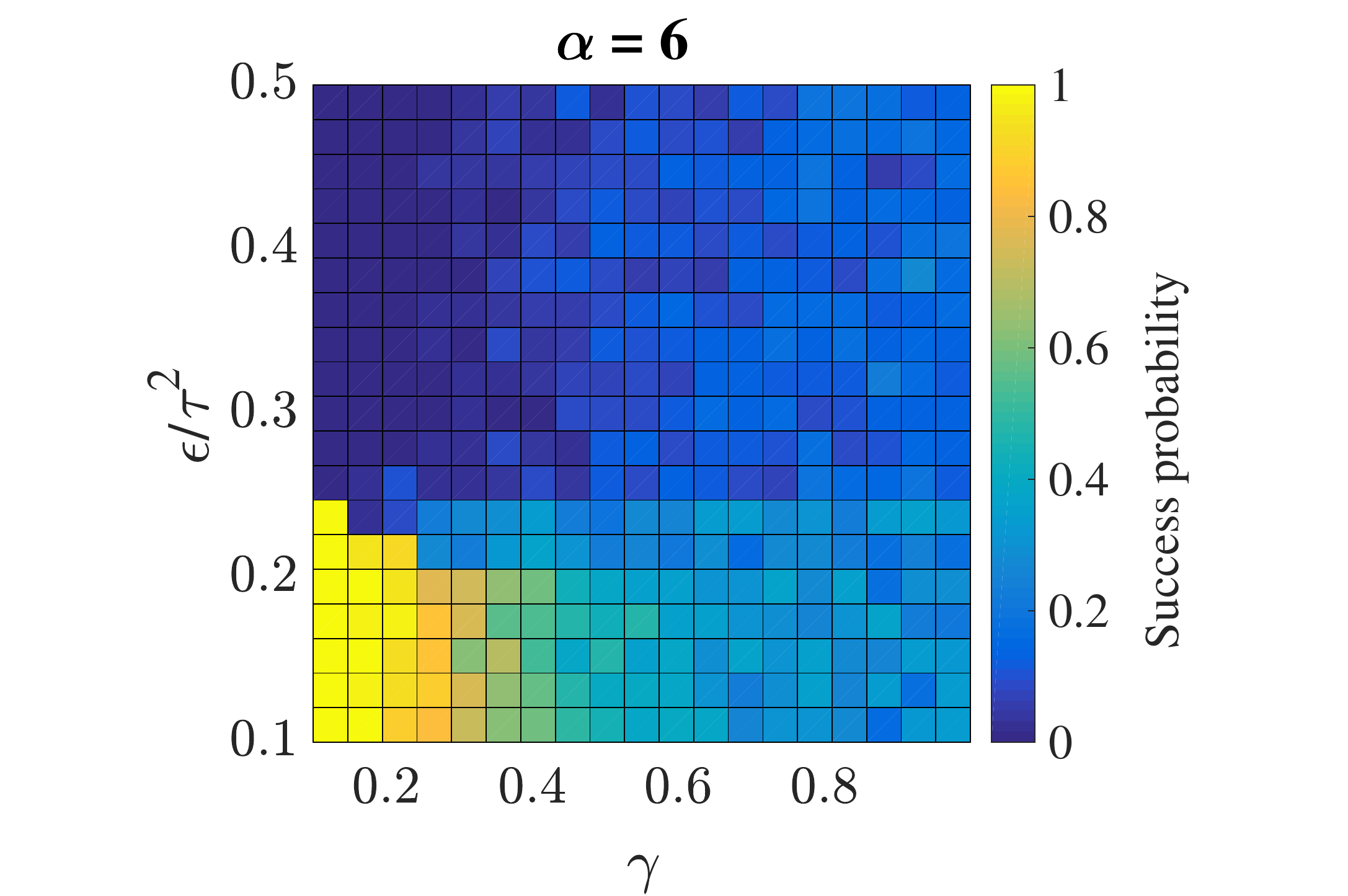}
  \caption{}
  \end{subfigure}
%\begin{subfigure}[b]{.32 \textwidth}
%  \includegraphics[width=1 \textwidth, clip =true, trim =20ex 1ex 20ex 1ex]
%  {./Figs/One_hot_recovery_n_20}
%  \end{subfigure}
  \caption{Success probability of  one-hot with three observations
    in one cluster and single observations in the other two clusters for $n=10$,
    $\alpha = 2$ and $ 6$, and 
    as a function of  $\eps/\tau^2$ and $\gamma$.
    As $\eps/\tau^2$ grows there appears to be a phase transition where
    the success probability suddenly drops.}
  \label{num-exp-one-hot-success-multiple-observations-in-clusters}
\end{figure}

\section{Conclusions}
\label{sec:conc}

We have studied the consistency of the probit and one-hot methods for SSL, demonstrating
that the combination of ideas from unsupervised learning and supervised learning can
lead to consistent labelling of large data sets, given only a few labels. 
Our theory and numerical results demonstrate that with careful choice of the function
of the graph Laplacian appearing in the quadratic penalty, namely the 
 parameters $\alpha$, $\tau$ and $\eps$, correct labelling of the dataset can be achieved asymptotically. 

However, our theory and numerics also indicate that the choice of these parameters is crucial 
and can lead to failure of the methods. For example, we observed that when $\eps/\tau^2$ is too large then 
the methods have a tendency to propagate the majority label rather than matching the observed labels.
 This sensitivity strongly suggests the importance of
hierarchical Bayesian techniques which can determine such choices in a data-driven
fashion. \as{Proving that hierarchical methods can learn the scaling
of $\tau$ in terms of $\epsilon$ that emerges from our analysis would be of interest.} 

There are a number of directions in which this work can be taken, including 
the study of Bayesian posterior consistency, which we
undertake in \cite{BHLMS19} for the harmonic-function based approach to
graph-based SSL, and the  
study of the limit of large unlabelled data sets in
\cite{HHOS2,HHOS1}.
Another interesting question is a detailed analysis of the majority label propagation phenomenon that 
is observed when $\eps/\tau^2$ is too large. Furthermore, throughout this paper we mainly focused on 
the setting where the ground truth function $u^\dagger$ is consistent with the clustering and assigns the 
same label to all points within the same cluster. Then the question arises, how do probit and one-hot behave 
when mislabelled points are present in the data or $u^\dagger$ assigns more than one class to a cluster.
\as{It would also be interesting to study different clustering assumptions,
such as those arising from stochastic block models \cite{lei2015consistency}.}

%
%\fh{[could expand this section, e.g
%\begin{enumerate}
%    \item consequences of theoretical results with key take-aways,
%    \item outcomes of numerics with a quick discussion of take-aways, such as (a) what happens if the parameters $\alpha$, $\eps$, $\tau$ are not chosen correctly, (b) do we learn anything on "good practices" if one wants to use this procedure for SSL, (c) highlight the sharp phase transition in success probability that we observe numerically and that it may be interesting to investigate in the future if one can prove what's going on there analytically - it seems a very nice smooth curve,
%    \item future directions, e.g. could mention the question of what happens if we have mislabelled data (here, we only consider the case of perfect data, or correct data that is perturbed as we are focusing on consistency analysis)
%\end{enumerate}
%]
%}
\vspace{0.1in}

\noindent{\bf Acknowledgements} The authors are grateful to
Nicol{\'a}s Garc{\'i}a-Trillos, Mark Girolami and Omiros Papaspiliopoulos for helpful discussions about the probit
methodology and spectral clustering.
FH is partially supported by Caltech's von K{\'a}rm{\'a}n postdoctoral instructorship.
  BH is supported in part  by an NSERC
  PDF fellowship.
  AMS is grateful to AFOSR (grant FA9550-17-1-0185) 
  and NSF (grant DMS 18189770) for financial support.

\bibliographystyle{abbrv}
\bibliography{discrete_probit_analysis_references}

\appendix

\section{Spectral Analysis Of Covariance Operators}
\label{appendix:perturbation-theory}
Here we study the spectrum of the covariance operators $C_{\tau, 0}$ and
$C_{\tau, \eps}$ and prove  Propositions~\ref{perfectly-separated-c-j}
and \ref{geometry-of-covaraince-functions}. Throughout this
section we use the notation of subsection~\ref{sec:covariance-perturbation}.
We start with some preliminary results regarding the spectrum of $C_{\tau, 0}$.
Recall that Assumption~\ref{assumptions-on-G-0} ensures that $G_0$
consists of $K < N$ disconnected components $\tG_k$ and that the graph
Laplacian restricted to each $\tG_k$ has a one-dimensional null-space
consisting of constant functions on $\tG_k.$

\begin{lemma}\label{C-is-strict-positive-definite}
Suppose  $G_0 =\{ X, W\}$ is a proximity graph satisfying
Assumption~\ref{assumptions-on-G-0}, and let $L_0$ be a graph Laplacian operator on $G_0$ 
as in \eqref{graph-Laplacian} with $p=q$. Then  $L_0$ is positive semi-definite
  for any choice of $p \in \mbb R$ and the matrix 
  $C_{\tau, 0}^{-1}$ as in \eqref{C-tau-zero-definition} is symmetric and
  strictly positive definite for any values of $\tau^2 >0$ and $\alpha > 0$.
\end{lemma}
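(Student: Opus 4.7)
The plan is to establish each claim by reducing to standard facts about symmetric positive semi-definite matrices together with functional calculus. First I would verify that $L_0$ is symmetric: since $W_0$ is symmetric by hypothesis and $D_0$ is diagonal with strictly positive entries (Assumption~\ref{assumptions-on-G-0}(b) precludes zero-degree vertices, as noted in the text following \eqref{uniform-spectral-gap-sub-matrices}), both $D_0 - W_0$ and the diagonal conjugation by $D_0^{-p}$ preserve symmetry, so $L_0 = D_0^{-p}(D_0 - W_0)D_0^{-p}$ is symmetric for any $p \in \mathbb{R}$.

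Next I would establish positive semi-definiteness of $L_0$ by invoking the quadratic form identity \eqref{xLx} which the paper has already recorded for $p=q$. Since the right-hand side of \eqref{xLx} is a non-negative linear combination of squared differences (with non-negative weights $w_{ij}$), we obtain $\langle \mathbf{x}, L_0 \mathbf{x}\rangle \geq 0$ for all $\mathbf{x} \in \mathbb{R}^N$, independent of $p$. Combined with symmetry, this says $L_0$ is symmetric positive semi-definite, hence diagonalizable with a non-negative spectrum.

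From here the remaining claims follow by functional calculus. Writing the spectral decomposition $L_0 = \sum_{k=1}^N \mu_k \mathbf{v}_k \mathbf{v}_k^T$ with $\mu_k \geq 0$ and orthonormal $\mathbf{v}_k$, the shifted matrix $L_0 + \tau^2 I$ has eigenvalues $\mu_k + \tau^2 \geq \tau^2 > 0$ for $\tau^2 > 0$, so it is symmetric and strictly positive definite. Consequently, for any $\alpha > 0$, the matrix
\begin{equation*}
C_{\tau,0} = \tau^{2\alpha}(L_0 + \tau^2 I)^{-\alpha} = \sum_{k=1}^N \frac{\tau^{2\alpha}}{(\mu_k + \tau^2)^\alpha} \mathbf{v}_k \mathbf{v}_k^T
\end{equation*}
is well-defined, symmetric, and strictly positive definite (each eigenvalue lies in $(0, 1]$). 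Inverting termwise, $C_{\tau,0}^{-1} = \sum_k \tau^{-2\alpha}(\mu_k + \tau^2)^\alpha \mathbf{v}_k \mathbf{v}_k^T$ shares these properties, which is the desired conclusion.

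I do not anticipate any serious obstacle: the only subtlety is making sure $D_0^{-p}$ is well-defined for negative or non-integer $p$, which is guaranteed by the spectral gap hypothesis ruling out isolated vertices. Everything else is a textbook use of symmetric functional calculus.
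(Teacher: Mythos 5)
Your proof is correct. The main difference from the paper's argument lies in how positive semi-definiteness of $L_0$ is established: the paper exploits the block-diagonal structure $L_0 = \mathrm{diag}(\tL_1,\dots,\tL_K)$ guaranteed by Assumption~\ref{assumptions-on-G-0} and cites \cite[Lem.~1.7]{chung1997spectral} for each block, whereas you apply the quadratic-form identity \eqref{xLx} directly to the full matrix, which is more elementary and in fact does not use the cluster structure at all (the assumption enters only through the positivity of the degrees, making $D_0^{-p}$ well-defined, as you note). The trade-off is that the paper's route simultaneously records the finer spectral information — that $L_0$ has a kernel of dimension exactly $K$ and that $\lambda_{1,0}=\dots=\lambda_{K,0}=1 < \lambda_{K+1,0}$ — which it reuses in Proposition~\ref{spectrum-perfectly-sep-clusters} and Proposition~\ref{separate-clusters-c-dominance-by-low-spectrum}; your argument proves exactly the stated lemma and nothing more. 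The concluding step via the spectral decomposition of $L_0+\tau^2 I$ and functional calculus is essentially identical in both proofs.
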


\begin{proof}
  Assume $Z = \{ \tZ_1, \tZ_2, \cdots, \tZ_K\}$
  where $\tZ_k$ are the collection of nodes in the $k^{th}$ component $\tG_k$  of $G_0$ and
  let $\tL_k$ denote the graph Laplacian operator on $\tG_k$ 
  constructed as in \eqref{graph-Laplacian} by replacing $D_0$ and $W_0$
  with $\tD_k$ and $\tW_k$
  the submatrices corresponding to  $\tG_k$.

  Then by \cite[Lem.~1.7]{chung1997spectral} the matrices $\tL_k$ are positive
  semi-definite with eigenvalues $0 = \tilde \sigma_{1,k} < \tilde \sigma_{2,k} \le \tilde  \sigma_{3,k} \le \cdots
  \le \tilde \sigma_{N_k, k}$ where $N_k = |\tZ_k|$. It follows that
  $L_0 = \text{diag}(\tL_1, \tL_2, \cdots, \tL_K)$ and so $L_0$ is also
  positive semi-definite with eigenvalues $0 = \sigma_{1,0} = \sigma_{2,0}= \cdots =
  \sigma_{K,0} < \sigma_{K+1,0} \le \sigma_{K+2,0} \le \cdots \le \sigma_{N,0}$.

  Now suppose $\alpha =1$. Then it is straightforward to check that $C_{\tau,0}^{-1}$
  has eigenvalues $\lambda_{k,0}=(\tau^{-2}\sigma_{k,0}+1)$, so that
 $1 = \lambda_{1,0} = \lambda_{2,0} = \lambda_{K,0} < \lambda_{K+1,0}
  \le \lambda_{K+2,0} \le \cdots \le \lambda_{N,0}$ and so it is strictly positive definite.
  The case with $\alpha >0$ then follows since the eigenvalues are
simply $\lambda_{k,0} = (\tau^{-2}\sigma_{k,0} +1)^{\alpha}$.
\end{proof}

\color{black}

\begin{lemma}\label{c-j-eigen-expansion}
  Suppose $\tau^2, \alpha >0$ and let $\{ \lambda_{k,0}\}_{k=1}^N$ and $\{ \pmb \phi_{k,0}\}_{k=1}^N$
  be the eigenvalues and eigenvectors \as{of $C_{\tau,0}^{-1}$ respectively.} Then
  \begin{equation}\label{c-j-eigen-expansion-equation}
    \mbf c_{j,0} = \sum_{k=1}^N \frac{1}{\lambda_{k,0}} (\pmb \phi_{k,0})_j \pmb \phi_{k,0}.
  \end{equation}
\end{lemma}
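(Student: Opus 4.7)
The proof is essentially a direct application of the spectral theorem for symmetric matrices, so the plan is short and the main task is to spell out the identifications cleanly.

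My plan is to first observe that $C_{\tau,0}^{-1} = \tau^{-2\alpha}(L_0 + \tau^2 I)^\alpha$ is symmetric (since $L_0$ is symmetric under the assumption $p=q$), and strictly positive definite by Lemma~\ref{C-is-strict-positive-definite}. Consequently, its inverse $C_{\tau,0}$ is also symmetric and positive definite, and shares the same orthonormal eigenbasis $\{\pmb\phi_{k,0}\}_{k=1}^N$, with eigenvalues $\{1/\lambda_{k,0}\}_{k=1}^N$. By the spectral theorem I may therefore write the resolution of the identity
\begin{equation*}
    C_{\tau,0} = \sum_{k=1}^N \frac{1}{\lambda_{k,0}}\, \pmb\phi_{k,0}\, \pmb\phi_{k,0}^T.
\end{equation*}

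Next I would simply apply this identity to the $j$-th standard basis vector $\mbf e_j$. By definition $\mbf c_{j,0} = C_{\tau,0}\mbf e_j$, and
\begin{equation*}
    C_{\tau,0}\mbf e_j = \sum_{k=1}^N \frac{1}{\lambda_{k,0}}\, \pmb\phi_{k,0}\left(\pmb\phi_{k,0}^T \mbf e_j\right)
    = \sum_{k=1}^N \frac{1}{\lambda_{k,0}}\, (\pmb\phi_{k,0})_j\, \pmb\phi_{k,0},
\end{equation*}
since $\pmb\phi_{k,0}^T \mbf e_j$ picks out the $j$-th entry of $\pmb\phi_{k,0}$. This yields \eqref{c-j-eigen-expansion-equation} verbatim.

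There is no real obstacle here; the only thing to double-check is that $\lambda_{k,0} > 0$ for all $k$ so that division is well-defined, which is guaranteed by Lemma~\ref{C-is-strict-positive-definite}, and that the eigenbasis can be chosen orthonormal, which is standard for symmetric matrices. The lemma is essentially a bookkeeping statement whose value lies in its use later, namely in combination with the perturbation analysis to control the geometry of the columns $\mbf c_{j,0}$ through the low-lying spectrum of $C_{\tau,0}$.
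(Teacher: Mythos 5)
Your proof is correct and follows essentially the same route as the paper: both apply the spectral theorem to $C_{\tau,0}^{-1}$ (the paper via the factorization $Q\Lambda Q^T$, you via the equivalent outer-product resolution of the identity) and then evaluate $C_{\tau,0}\mbf e_j$. The observations about positivity of the $\lambda_{k,0}$ and orthonormality of the eigenbasis are the right things to check and are covered by Lemma~\ref{C-is-strict-positive-definite}.
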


\begin{proof}
  Since $C_{\tau,0}^{-1}$ is self-adjoint and positive it has an eigendecomposition
  \begin{equation*}
    C_{\tau,0}^{-1} = Q \Lambda Q^{-1},
  \end{equation*}
  where $\Lambda$ is a diagonal matrix with diagonal elements
  $\lambda_{k,0}$
  and $Q = [ \pmb \phi_{1,0}, \cdots, \pmb \phi_{N,0}]$ is a unitary matrix
  with columns $\pmb \phi_{k,0}$.
  Substituting this expression in the identity $\mbf c_{j,0} = C_{\tau,0} \mbf e_j$
  and noting that $Q^{-1} = Q^T$ then
  gives
  \begin{equation*}
    \mbf c_{j,0} = Q \Lambda^{-1} Q^{T} \mbf e_j =
    \begin{bmatrix}
      \frac{1}{\lambda_{1,0}} \pmb \phi_{1,0}, \frac{1}{\lambda_{2,0}} \pmb \phi_{2,0},
      \cdots, \frac{1}{\lambda_{N,0}}
      \pmb \phi_{N,0}
    \end{bmatrix}
    \begin{bmatrix}
      (\pmb \phi_{1,0})_j \\
      (\pmb \phi_{2,0})_j \\
      \vdots \\
      (\pmb \phi_{N,0})_j
    \end{bmatrix},
  \end{equation*}
  which concludes the proof.
\end{proof}

  \subsection{Proof of Proposition~\ref{perfectly-separated-c-j}: Disconnected clusters}\label{sec:proof-of-perfectly-sep-spectral-theory}
\begin{proposition}[Disconnected clusters]\label{spectrum-perfectly-sep-clusters}
Let $G_0$ satisfy Assumption~\ref{assumptions-on-G-0} with
$K < N$ components $\tG_k$. Let $\bar{\pmb \chi}_k$ be  as in \eqref{chi-k-definition}.
Then  for $k = 1, \cdots, K$
\begin{equation*}
  \lambda_{k,0} = 1 \quad \text{and} \quad \pmb \phi_{k,0} \in 
  \text{span} \{ \bar{\pmb \chi}_j\}_{j=1}^K,
\end{equation*}
and $\lambda_{K+1,0} > 1$.
\end{proposition}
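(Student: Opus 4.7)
The plan is to exploit the block-diagonal structure of $L_0$ inherited from Assumption~\ref{assumptions-on-G-0}(a) and reduce the problem to an eigenvalue calculation for each sub-Laplacian. By the block-diagonal form $W_0 = \text{diag}(\tW_1, \dots, \tW_K)$ and the corresponding form of $D_0$, the matrix $L_0$ decomposes as $L_0 = \text{diag}(\tL_1, \dots, \tL_K)$, so the spectrum of $L_0$ is the union (with multiplicity) of the spectra of the $\tL_k$, and eigenvectors can be constructed block-wise.

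Next I would identify the null space of each $\tL_k$ explicitly. Using the assumption $p=q$, the change of variable $\mbf y_k := \tD_k^{-p} \mbf x_k$ converts $\tL_k \mbf x_k = 0$ into $(\tD_k - \tW_k)\mbf y_k = 0$; from the identity \eqref{xLx} one sees that this is equivalent to $\mbf y_k$ being constant on the vertex set $\tZ_k$. Assumption~\ref{assumptions-on-G-0}(b) (uniform spectral gap of each $\tL_k$ on the orthogonal complement of $\tD_k^p \mbf 1_k$) rules out further disconnection within any $\tG_k$, and thus forces the null space of $\tL_k$ to be exactly one-dimensional and spanned by $\tD_k^p \mbf 1_k$. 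Assembling these block-wise null vectors and recalling the definition \eqref{chi-k-definition} of $\pmb\chi_k$ (which coincides with $\tD_k^p \mbf 1_k$ on $\tZ_k$ and vanishes off $\tZ_k$), the null space of $L_0$ equals $\text{span}\{\pmb\chi_k\}_{k=1}^K = \text{span}\{\bar{\pmb\chi}_k\}_{k=1}^K$, of dimension exactly $K$. In particular the smallest $K$ eigenvalues of $L_0$ satisfy $\sigma_{1,0} = \cdots = \sigma_{K,0} = 0$ while $\sigma_{K+1,0} > 0$.

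The rest follows from the spectral mapping theorem. Since $C_{\tau,0}^{-1} = \tau^{-2\alpha}(L_0 + \tau^2 I)^\alpha$ commutes with $L_0$, the two matrices share a common eigenbasis, and the eigenvalues are related by $\lambda_{k,0} = (\sigma_{k,0}/\tau^2 + 1)^\alpha$. The $K$-fold zero eigenvalue of $L_0$ then yields $\lambda_{1,0} = \cdots = \lambda_{K,0} = 1$ with corresponding eigenvectors in $\text{span}\{\bar{\pmb\chi}_k\}_{k=1}^K$, while $\sigma_{K+1,0} > 0$ yields $\lambda_{K+1,0} > 1$.

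The only subtle point is confirming that each $\tG_k$ is genuinely pathwise connected, so that the null space of $\tL_k$ is truly one-dimensional; without this, additional zero eigenvalues could appear and inflate the $K$-fold multiplicity at $\lambda = 1$. This is precisely why Assumption~\ref{assumptions-on-G-0}(b) imposes a uniform spectral gap on each $\tL_k$ rather than merely requiring the block structure of $W_0$. Once this connectivity is in hand, the remainder of the argument is a bookkeeping exercise in block-diagonal linear algebra and the functional calculus for the symmetric positive semi-definite matrix $L_0$.
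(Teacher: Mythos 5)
Your proposal is correct and follows essentially the same route as the paper: decompose $L_0$ block-diagonally, identify the null space of each $\tL_k$ as the span of $\tD_k^p\mbf 1_k$ via the quadratic form \eqref{xLx} together with the spectral gap in Assumption~\ref{assumptions-on-G-0}(b), and transfer the conclusion to $C_{\tau,0}^{-1}$ by the spectral mapping $\lambda_{k,0}=(\sigma_{k,0}/\tau^2+1)^\alpha$. The only cosmetic difference is that the paper cites a lemma of Chung for the per-block eigenvalue ordering where you derive it directly from the quadratic form.
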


\begin{proof}
 The statement involving the eigenvalues $\{\lambda_{k,0}\}_{k=1}^{K+1}$ follows from the
  proof of Lemma~\ref{C-is-strict-positive-definite} and so
  we only prove the result regarding the eigenfunctions.  Observe that since
  $G_0$ consists of $K$ disconnected components then
  $L_0 = \text{diag}( \tilde{L}_1, \cdots, \tilde{L}_K)$
  and each block matrix $\tilde{L}_k$ is itself a graph Laplacian. Writing the inner product $\langle\mbf x,L\mbf x\rangle$ in symmetric form as in \eqref{xLx}, it is easy to see that the first
  eigenvector of any graph Laplacian operator $L$ of the form \eqref{graph-Laplacian}
  is 
  %$D^p \mbf 1$
  $D_0^p \mbf 1$ 
  for $p=q$ with corresponding zero eigenvalue. Using this fact for the submatrices $\tilde{L}_k$, we infer that
  %$L_0 \tilde{\pmb \chi}_k = 0$
  $L_0 \bar{\pmb \chi}_k = 0$
  for $k=1, \cdots, K$. We now conclude the proof
  by noting that $C_{\tau,0}^{-1}$ and $L_0$ have the same eigenvectors.
 % (see \cite[Sec.~1.2]{chung1997spectral})  Ref not needed here. Only useful to say that second eigenvalue of a connected graph bounded away from zero.
\end{proof}

\begin{proposition}\label{separate-clusters-c-dominance-by-low-spectrum}
 Suppose $G_0$ satisfies Assumption~\ref{assumptions-on-G-0}. Then
  \begin{equation*}
    \mbf c_{j,0} =  
    \left( \bar{\pmb \chi}_k \right)_j 
    \bar{\pmb \chi}_k 
     +
    \mcl{O}(\tau^{ 2\alpha} ) \qquad \forall j\in \tZ_k.
  \end{equation*}
\end{proposition}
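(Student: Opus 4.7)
My plan is to combine the eigenexpansion of $\mbf c_{j,0}$ from Lemma~\ref{c-j-eigen-expansion} with the spectral information provided by Proposition~\ref{spectrum-perfectly-sep-clusters}, splitting the sum into the $K$-dimensional low-lying eigenspace (where eigenvalues are exactly $1$) and the complementary higher modes (where eigenvalues blow up as $\tau \downarrow 0$).

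Concretely, I would first write
\begin{equation*}
  \mbf c_{j,0} = \sum_{k=1}^K (\pmb \phi_{k,0})_j \pmb \phi_{k,0} + \sum_{k=K+1}^N \frac{1}{\lambda_{k,0}} (\pmb \phi_{k,0})_j \pmb \phi_{k,0},
\end{equation*}
using that $\lambda_{k,0}=1$ for $k=1,\dots,K$. For the first sum, I would invoke Proposition~\ref{spectrum-perfectly-sep-clusters} which says that $\mathrm{span}\{\pmb \phi_{k,0}\}_{k=1}^K = \mathrm{span}\{\bar{\pmb \chi}_k\}_{k=1}^K$. Since the $\bar{\pmb \chi}_k$ have pairwise disjoint supports (the clusters $\tZ_k$ are disjoint) and unit norm by \eqref{chi-k-definition}, they form an orthonormal basis of this $K$-dimensional subspace. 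Both $\sum_{k=1}^K (\pmb \phi_{k,0})_j \pmb \phi_{k,0}$ and $\sum_{k=1}^K (\bar{\pmb \chi}_k)_j \bar{\pmb \chi}_k$ represent the action on $\mbf e_j$ of the orthogonal projector onto this subspace, hence they coincide. Using disjointness of supports, only the term with index $k$ (the cluster containing $j$) survives, giving $(\bar{\pmb \chi}_k)_j \bar{\pmb \chi}_k$ exactly.

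The remaining task is to bound the high-mode tail by $\mcl O(\tau^{2\alpha})$. From the proof of Lemma~\ref{C-is-strict-positive-definite}, the eigenvalues of $C_{\tau,0}^{-1}$ are $\lambda_{k,0} = (\tau^{-2}\sigma_{k,0}+1)^\alpha$. For $k \geq K+1$, Assumption~\ref{assumptions-on-G-0}(b) (uniform spectral gap $\theta>0$ on each $\tL_k$) implies $\sigma_{k,0} \ge \theta$, so $1/\lambda_{k,0} \le (\theta/\tau^2+1)^{-\alpha} \le \theta^{-\alpha}\tau^{2\alpha}$ for $\tau$ small. Then by Parseval applied to $\mbf e_j$ in the orthonormal basis $\{\pmb \phi_{k,0}\}_{k=1}^N$,
\begin{equation*}
  \left\| \sum_{k=K+1}^N \frac{1}{\lambda_{k,0}} (\pmb \phi_{k,0})_j \pmb \phi_{k,0} \right\|^2 = \sum_{k=K+1}^N \frac{|(\pmb \phi_{k,0})_j|^2}{\lambda_{k,0}^2} \le \theta^{-2\alpha}\tau^{4\alpha} \sum_{k=1}^N |(\pmb \phi_{k,0})_j|^2 = \theta^{-2\alpha}\tau^{4\alpha},
\end{equation*}
yielding the desired $\mcl O(\tau^{2\alpha})$ bound in norm.

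There is no real obstacle here: the argument is essentially just projection onto the kernel-like subspace combined with a quantitative spectral gap. The only minor point requiring care is the identification of the two projection representations, which hinges on the orthonormality of $\{\bar{\pmb \chi}_k\}$ (clear from disjoint supports) and the eigenspace identification from Proposition~\ref{spectrum-perfectly-sep-clusters}; once that is noted, the decomposition and the uniform bound $\sigma_{k,0}\ge\theta$ for $k\ge K+1$ deliver the result immediately. This proposition then serves as the $\eps=0$ baseline that Proposition~\ref{perfectly-separated-c-j} sharpens to the squared-norm form and Proposition~\ref{geometry-of-covaraince-functions} later perturbs.
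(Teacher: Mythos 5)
Your proof is correct and follows essentially the same route as the paper's: expand $\mbf c_{j,0}$ in the eigenbasis of $C_{\tau,0}^{-1}$ via Lemma~\ref{c-j-eigen-expansion}, use that $\lambda_{1,0}=\cdots=\lambda_{K,0}=1$ with the low-lying eigenvectors spanning $\{\bar{\pmb\chi}_k\}_{k=1}^K$, collapse the low-mode sum to the single cluster term by disjointness of supports, and bound the tail by $\lambda_{K+1,0}^{-1}=\mcl O(\tau^{2\alpha})$ using the spectral gap $\theta$. The only difference is that you spell out the projector identification and the Parseval tail bound that the paper leaves implicit, which is a welcome clarification rather than a deviation.
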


\begin{proof}
  Follows directly from the expansion \eqref{c-j-eigen-expansion-equation}
  and the observation that $\lambda_{1,0}= \cdots = \lambda_{K,0} =1$ while \as{ $\lambda_{K+1,0}^{-1}
  = \mcl O( \tau^{-2\alpha})$.} Finally, note that
\begin{equation*}
  \sum_{\ell=1}^K (\bar{\pmb \chi}_\ell)_j \bar{\pmb \chi}_\ell =\left( \bar{\pmb \chi}_k\right)_j
  \bar{\pmb \chi}_k \qquad \forall j\in \tZ_k.
   \end{equation*}
  {}
\end{proof}

\subsection{Proof of Proposition~\ref{geometry-of-covaraince-functions}: Weakly connected clusters}
\label{sec:proof-of-weakly-sep-spectral-theory}

We now analyze the spectrum of the $C_{\tau,\eps}$ operators beginning with
auxiliary results regarding the matrices $W_\eps$ and $L_\eps$. 
% \as{Maybe (see below) we need to be explicit in this lemma that "well-defined GL" includes
% the weights being positive everywhere?}

\begin{lemma}\label{L-eps-has-power-expansion}
  Let $W_\eps$  be as in \eqref{W-eps-perturbation-expansion} and
 suppose 
 Assumptions~\ref{assumptions-on-G-0} and \ref{assumptions-on-G-eps} are satisfied.
 Let $L_\eps $  be as in \eqref{L-eps-perturbation-expansion}.
  Then  there exists $\eps_0 >0$ so that for all $\eps \in (0, \eps_0)$
 the matrix $W_\eps$ has non-negative weights and the
    graph Laplacian operator $L_\eps$
    % is a well-defined graph Laplacian, with
  % a  and
  satisfies an expansion of the form
  \begin{equation}
    \label{L-eps-power-expansion}
    L_\eps = L_0 + \sum_{h=1}^\infty\eps^h L^{(h)}
  \end{equation}
  with 
    $\| L^{(h)}\|_2 \in \ell^\infty$.
\end{lemma}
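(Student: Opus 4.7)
The plan is to derive the expansion of $L_\eps$ by expanding each of its three constituent factors separately and then multiplying. Concretely, I would write $L_\eps = D_\eps^{-p}(D_\eps - W_\eps)D_\eps^{-p}$, obtain convergent power series in $\eps$ for each factor, and then combine them via the Cauchy product.

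First I would address non-negativity of $W_\eps$ by splitting the pairs $(i,j)$ with $i\ne j$. When $w_{ij}^{(0)}=0$, condition \eqref{W-k-condition} gives $w_{ij}^{(h)}\ge 0$ for every $h\ge 1$, so $w_{ij}^{(\eps)} = \sum_{h\ge 1}\eps^h w_{ij}^{(h)}\ge 0$ automatically. When $w_{ij}^{(0)}>0$, the uniform bound $M:=\sup_h\|W^{(h)}\|_2 <\infty$ controls each entry, giving $|w_{ij}^{(\eps)}-w_{ij}^{(0)}|\le \sum_{h\ge 1}\eps^h\|W^{(h)}\|_2\le M\eps/(1-\eps)$; since there are only finitely many such pairs and each $w_{ij}^{(0)}$ is strictly positive, taking $\eps_0$ small enough keeps $w_{ij}^{(\eps)}\ge 0$. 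This simultaneously shows that $W_\eps$ is well defined as an absolutely convergent series with coefficients bounded in operator norm.

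Next, since the degree $d_i^{(\eps)}=\sum_j w_{ij}^{(\eps)}$ depends linearly on $W_\eps$, one reads off immediately $D_\eps = D_0 + \sum_{h\ge 1}\eps^h D^{(h)}$ with $D^{(h)}_{ii}=\sum_j w_{ij}^{(h)}$ and $\|D^{(h)}\|_2\le N\|W^{(h)}\|_2$ uniformly bounded. For $D_\eps^{-p}$ I would use that Assumption~\ref{assumptions-on-G-0}(b) excludes zero-degree nodes (this is what makes \eqref{xLx} well defined), so $D_0$ has a strictly positive diagonal and $D_0^{-1}$ exists. Writing $D_\eps = D_0\bigl(I+D_0^{-1}\Delta_\eps\bigr)$ with $\Delta_\eps:=D_\eps-D_0$, one has $\|D_0^{-1}\Delta_\eps\|_2=\mcl O(\eps)$; for $\eps$ sufficiently small the binomial series
\begin{equation*}
D_\eps^{-p} \;=\; D_0^{-p}\sum_{k=0}^\infty \binom{-p}{k}\bigl(D_0^{-1}\Delta_\eps\bigr)^{k}
\end{equation*}
converges absolutely in operator norm (each factor is diagonal so powers are trivial to control). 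Rearranging by powers of $\eps$ yields $D_\eps^{-p}=\sum_{h\ge 0}\eps^h \Pi^{(h)}$ with operator-norm-bounded coefficients.

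Finally, $L_\eps$ is the triple product of three such absolutely convergent series, and the Cauchy product formula gives $L_\eps=\sum_{h\ge 0}\eps^h L^{(h)}$ with $L^{(0)}=L_0$ by inspection. The subtle point, and the main obstacle, is the $\ell^\infty$ bound on $\{\|L^{(h)}\|_2\}$: a naive Cauchy-product estimate of two uniformly bounded coefficient sequences only gives a bound that grows in $h$. To get a genuinely uniform bound I would exploit the absolute convergence on a slightly larger disk $|\eps|\le \eps_0'>\eps_0$, which forces the coefficients of each of the three series to decay geometrically in $h$; the Cauchy product of three geometrically decaying sequences is itself bounded (in fact geometrically decaying), which yields the desired $\ell^\infty$ control. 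The geometric decay comes from the same uniform $\ell^\infty$ bound on $\|W^{(h)}\|_2$ applied on the slightly larger radius, together with the binomial expansion remaining convergent there.
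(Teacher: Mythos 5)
Your overall route is the same as the paper's: non-negativity of $W_\eps$ by splitting on whether $w^{(0)}_{ij}$ vanishes, the expansion of $D_\eps$ from linearity of the degrees, a generalized binomial expansion of $D_\eps^{-p}$ using strict positivity of the unperturbed degrees, and a product of the resulting series. The step that does not go through is exactly the one you flagged as the main obstacle: the $\ell^\infty$ bound on $\|L^{(h)}\|_2$. Your proposed fix --- that absolute convergence on a slightly larger disk forces geometric decay of the coefficients of each factor --- is not available under the stated hypotheses. Assumption~\ref{assumptions-on-G-eps} only gives that $\|W^{(h)}\|_2$ is uniformly bounded, so $\sum_h \eps^h W^{(h)}$ converges on $|\eps|<1$ and no further; a Cauchy estimate on a disk of radius $R<1$ yields $\|W^{(h)}\|_2 \le C R^{-h}$, a bound that \emph{grows} in $h$, and the honest information is boundedness, not decay. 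Worse, the Taylor coefficients of $D_\eps^{-p}$ can genuinely be unbounded: since the $w^{(h)}_{ij}$ may be negative where $w^{(0)}_{ij}>0$, a perturbed degree $d^{(\eps)}_i$ can vanish at some $\eps^\ast$ with $|\eps^\ast|<1$ (take $d^{(h)}_i = -c\,d^{(0)}_i$ for all $h$), so $(d^{(\eps)}_i)^{-p}$ has radius of convergence $<1$ and coefficients growing geometrically. With merely bounded factor coefficients the triple Cauchy product only gives $\|L^{(h)}\|_2=\mcl O(h^2)$, and with unbounded ones it gives nothing.

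The paper circumvents this by \emph{not} fully expanding $D_\eps^{-p}$ into a power series: it writes $D_\eps^{-p} = D_0^{-p} + \eps \hat D^{(p)}$, where $\hat D^{(p)}$ is a single diagonal remainder, allowed to depend on $\eps$ but uniformly bounded for $\eps$ small. Substituting this truncated form into $D_\eps^{-p}(D_\eps-W_\eps)D_\eps^{-p}$, the coefficient of $\eps^h$ is a sum of at most four products of uniformly bounded matrices ($D_0^{-p}$ or $\hat D^{(p)}$ on the outside, $D_0-W_0$ or $D^{(k)}-W^{(k)}$ in the middle), so the number of terms contributing to each $L^{(h)}$ does not grow with $h$ and the uniform bound is immediate. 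The resulting $L^{(h)}$ are then mildly $\eps$-dependent, which is all the downstream spectral estimates (Propositions~\ref{low-lying-spectrum-of-C} and \ref{spectral-gap-existence}) actually use. If you insist on genuinely $\eps$-independent coefficients with an $\ell^\infty$ bound, you would need a stronger hypothesis on the perturbation, e.g.\ summability or geometric decay of $\|W^{(h)}\|_2$, rather than mere boundedness.
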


Note that in the above, the perturbations $L^{(h)}$ are not necessarily of graph Laplacian form.

\begin{proof} 
First, we prove that for sufficiently small $\eps$ the
  $W_\eps$ matrix has non-negative weights and is therefore a well-defined weight matrix for the graph $G_\eps$.
  By \eqref{W-k-condition}, we only need to consider indices $i,j$ for which $w^{(0)}_{ij}>0$ since  $w^{(h)}_{ij}$ may be negative for such indices, \as{i.e., by Assumption~\ref{assumptions-on-G-eps}
    these are only the edge weights within clusters since perturbed edge weights between
  different clusters are constrained to be positive}. Since $\| W^{(h)} \|_2$
  is uniformly bounded we can use the equivalence of the $\ell_2$ and $\ell_\infty$ norms
 to infer that all entries of $W^{(h)}$ are also uniformly bounded. 
  Then for $i,j \in \tZ_k$ we have
  \begin{equation*}
    w^{(\eps)}_{ij} = w^{(0)}_{ij} + \sum_{h=1}^\infty \eps^h w^{(h)}_{ij}
    \ge w^{(0)}_{ij} - \sup_{h} |w^{(h)}_{ij}| \left(\frac{\eps}{1 - \eps} \right), 
\end{equation*}
which is positive for all $i,j \in \tZ_k$ for sufficiently small $\eps \le \eps_k$.
% \as{I can't see that you have assumed $w^{(0)}_{ij}>0$ for all $i,j \in \ell$ within
% Assumptions 1 so I don't see how you get this. I think you may want to assume that
% $w^{(k)}_{ij}=0$ whenever $w^{(0)}_{ij}=0$ to ensure that the perturbed problem has a
% positive weight matrix (which I think is a natural assumption.? Am I missing that this
% is already in there somewhere?). I dont think we want to assume that the sub Laplacians
% are fully connected as that should not be necessary.}
Taking the infimum of $\eps_k$ we get the uniform constant $\eps_0$. 

It is straightforward to check  that
$D_\eps = D_0 + \sum_{h=1}^\infty \eps^h D^{(h)}$ where the $D^{(h)}: = \text{diag}( W^{(h)} \mbf 1)$
are the degree matrices
of the $W^{(h)}$. Let $d^{(0)}_{i}, d^{(\eps)}_{i}$ and $d^{(h)}_{i}$ denote
the diagonal entries of $D_0, D_\eps$ and $D^{(h)}$ respectively.
Then for $i =1, \cdots, N$,
\begin{equation*}
  d^{(\eps)}_{i} = d^{(0)}_{i} \left( 1 + \sum_{h=1}^\infty \eps^h \frac{d^{(h)}_{i}}{d^{(0)}_{i}} \right).
\end{equation*}
% We infer that if $\eps$ is sufficiently small for
% the sums to converge we have for any $\beta \in \mbb R$
% \begin{equation*}
%   (D_\eps)_{ii}^\beta = (D)_{ii}^\beta  \left[ 1+  \sum_{j=1}^\infty {j \choose \beta}
%     \left( \sum_{k=1}^\infty \eps^k \frac{(D_k)_{ii}}{(D)_{ii}} \right)^j   \right].
% \end{equation*}
It follows from Assumption~\ref{assumptions-on-G-0}(b) that the
entries $d^{(0)}_{ii}$ of the degree matrix $D_0$ are strictly positive (i.e., the  components $\tG_k$ are pathwise connected) and so the ratio inside the inner sum
is bounded. Thus, we can further write
\begin{equation*}
  d^{(\eps)}_{i} = d^{(0)}_{i} \left( 1 + \eps \sum_{h=0}^\infty \eps^h \frac{d^{(h+1)}_{i}}{d^{(0)}_{i}} \right)
   = d^{(0)}_{i} (1 + \eps \hat d_{i}).
 \end{equation*}
 where the entries $\hat d_{i}$ are well-defined since the sum inside the bracket converges for
 $\eps <1$.
 Assuming $\eps$ is sufficiently small so that $\eps \hat d_{i} <1$ we
 can use the generalized binomial expansion to write for any $\beta\in \mbb{R}$
 \begin{equation*}
   \begin{aligned}
     \left(d^{(\eps)}_{i}\right)^\beta
     & = \left(d^{(0)}_{i}\right)^\beta \left( 1 + \eps \tilde d_{i} \right)^\beta\\
     & = \left(d^{(0)}_{i}\right)^\beta
     \left( 1+  \sum_{h=1}^\infty {\beta\choose h} \eps^h \left(\hat d_{i}\right)^h \right).
 \end{aligned}
 \end{equation*}
Thus, for any $p\in\mbb{R}$
there exists a diagonal matrix $\hat D^{(p)}$ so that
\begin{equation*}
  D_\eps^{-p} = D_0^{-p} +  \eps \hat D^{(p)},
\end{equation*}
and the entries of $\hat D^{(p)}$  are uniformly bounded for sufficiently small $\eps$. 
Therefore, we can write
\begin{equation*}
  \begin{aligned}
    L_\eps &= D_\eps^{-p} ( D_\eps - W_\eps) D_\eps^{-p} \\
    & = D_\eps^{-p} \left( D_0 + \sum_{h=1}^\infty \eps^h D^{(h)} - W_0 - \sum_{h=1}^\infty \eps^h W^{(h)} \right)
    D_\eps^{-p} \\
    & =  D_\eps^{-p} \left( D_0 - W_0 + \sum_{h=1}^\infty \eps^h (D^{(h)} - W^{(h)}) \right) D_\eps^{-p} \\
    & = D_\eps^{-p} (D_0 - W_0) D_\eps^{-p} + \sum_{h=1}^\infty \eps^h D_\eps^{-p} (D^{(h)} - W^{(h)}) D_\eps^{-p}\\
    & = L_0 + \sum_{h=1}^\infty \eps^h L^{(h)},
  \end{aligned}
\end{equation*}
where the $L^{(h)}$ matrices are obtained by gathering the $\mcl O(\eps^h)$ terms. 
Note that  since the entries of  $W^{(h)}$ and $\hat D^{(p)}$ are uniformly bounded
then all entries of the $L^{(h)}$ are bounded
uniformly from which it follows that $\| L^{(h)} \|_2 \in \ell^\infty$.
{}
\end{proof}

We now characterize the low-lying eigenvalues and eigenvectors of
$C_{\eps, \tau}^{-1}$. 
\begin{proposition}[The spectrum of $C^{-1}_{\tau,\eps}$]\label{low-lying-spectrum-of-C}
  Suppose Assumptions~\ref{assumptions-on-G-0} and \ref{assumptions-on-G-eps} are satisfied
  and let
  $\{\lambda_{j, \eps}, \pmb \phi_{j, \eps}\}$ denote the orthonormal eigenpairs of $C^{-1}_{\tau,\eps}$. Then
  there exists $\eps_0 >0$ so that
  \begin{enumerate}[(i)]
  \item  $\lambda_{1, \eps} =1$ and $\pmb \phi_{1, \eps} =  \bar{\pmb \chi}$,
    as in \eqref{chi-definition}. 
  \item For any $\eps \in (0, \eps_0)$, there exists constants $\Xi_1=\Xi_1(K,\|L^{(1)}\|_2)>0 $ and $\Xi_2=\Xi_2(K,\sup_{h\ge 2}\|L^{(h)}\|_2,\eps_0)>0$ independent of $\eps $ so that
    \begin{equation}
      \label{low-lying-eigenvalue-rate}
      \lambda_{k,\eps} 
      \le \left(1+\Xi_1 \eps \tau^{-2} +\Xi_2\eps^2 \tau^{-2} \right)^{\alpha} , \qquad
      \forall k \in \{2, \cdots, K\}.
      \end{equation}
%\item There exists a constant $\Xi_1(K, \|L_1\|_2)> 0$ independent of $\eps \in (0, \eps_0)$ so that

    \item If there exists a uniform
      constant $\vartheta >0$ so that $\lambda_{K+1, \eps} - 1 \ge \vartheta$
     then 
    there exists a constant $\Xi_3=\Xi_3(K, \|L^{(1)}1\|_2, \vartheta) >0$ 
independent of $\eps \in (0, \eps_0)$ so that
      \begin{equation}
        \label{low-lying-span}
        \left| 1 - \sum_{j=1}^K
       %   \left\langle \pmb \phi_{j,\eps} , \tilde{ \pmb \chi}_k \right\rangle^2 \right| 
          \left\langle \pmb \phi_{j,\eps} , \bar{ \pmb \chi}_k \right\rangle^2 \right|
       \le \Xi_3 \eps^2 + \mcl O (\eps^3),
        \qquad \forall j \in \{ 1,\cdots, K\}, 
      \end{equation}
      with
      $\bar{ \pmb \chi}_k$ as in \eqref{chi-k-definition}.
     % are the normalizations of the weighted indicator functions $\pmb \chi_k$ defined in \eqref{chi-k-definition}.
  \end{enumerate}
\end{proposition}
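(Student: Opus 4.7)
The plan is to handle the three parts in sequence, using elementary spectral identities for part (i), the Courant-Fischer min-max principle for part (ii), and perturbation theory for invariant subspaces (Davis-Kahan) for part (iii).

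For part (i), I would exploit the fundamental identity $(D_\eps - W_\eps)\mbf{1}=0$, which holds by the very definition of the degree matrix (row sums of $W_\eps$ equal the diagonal of $D_\eps$). This yields $L_\eps D_\eps^p \mbf{1} = D_\eps^{-p}(D_\eps - W_\eps)\mbf{1} = 0$, so the properly normalized vector $D_\eps^p \mbf{1}/\|D_\eps^p \mbf{1}\|$ is an eigenvector of $L_\eps$ with eigenvalue $0$, and hence of $C_{\tau,\eps}^{-1}$ with eigenvalue $1$. Identifying this with $\bar{\pmb\chi}$ as defined in \eqref{chi-definition} closes the statement (noting that the $\eps$-dependence enters only through higher-order corrections already accounted for in the remaining parts).

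For part (ii), I would invoke the Courant-Fischer characterization for the $k$-th smallest eigenvalue $\sigma_{k,\eps}$ of $L_\eps$: $\sigma_{k,\eps} \le \max_{\mbf x \in V,\,\|\mbf x\|=1}\langle \mbf x, L_\eps \mbf x\rangle$ for any $k$-dimensional subspace $V$. Choosing $V = \mathrm{span}\{\bar{\pmb\chi}_j\}_{j=1}^k$ and using Proposition~\ref{spectrum-perfectly-sep-clusters} together with the expansion \eqref{L-eps-power-expansion} from Lemma~\ref{L-eps-has-power-expansion}, the unperturbed piece drops out (since $L_0\bar{\pmb\chi}_j=0$), leaving $L_\eps \bar{\pmb\chi}_j = \eps L^{(1)}\bar{\pmb\chi}_j + \eps^2\sum_{h\ge 0}\eps^h L^{(h+2)}\bar{\pmb\chi}_j$. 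A straightforward bound using $\ell^\infty$-summability of $\{\|L^{(h)}\|_2\}$ gives $\sigma_{k,\eps} \le \Xi_1'\eps + \Xi_2'\eps^2$ for the claimed constants. The rescaling $\lambda_{k,\eps} = (1+\tau^{-2}\sigma_{k,\eps})^{\alpha}$ then delivers \eqref{low-lying-eigenvalue-rate}.

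For part (iii), I would appeal to the Davis-Kahan $\sin\Theta$ theorem. Let $P_0$ denote the orthogonal projector onto $\mathrm{span}\{\bar{\pmb\chi}_j\}_{j=1}^K$, which by Proposition~\ref{spectrum-perfectly-sep-clusters} coincides with the projector onto the first $K$ eigenvectors of $C_{\tau,0}^{-1}$; let $Q_\eps$ denote the analogous projector for $C_{\tau,\eps}^{-1}$. Combining Lemma~\ref{L-eps-has-power-expansion} with functional calculus for the $\alpha$-power, one has $\|C_{\tau,\eps}^{-1}-C_{\tau,0}^{-1}\|_2 = \mcl{O}(\eps)$. The assumed gap $\lambda_{K+1,\eps}-1\ge\vartheta$, together with part (ii) controlling the first $K$ eigenvalues, provides a uniform spectral gap separating the top $K$ eigenvalues of $C_{\tau,\eps}^{-1}$ from the rest. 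Davis-Kahan then yields $\|Q_\eps - P_0\|_2 \le \Xi_3^{1/2}\eps + \mcl{O}(\eps^2)$. Crucially, since $\bar{\pmb\chi}_k = P_0\bar{\pmb\chi}_k$ exactly, the identity
\begin{equation*}
1 - \sum_{j=1}^K \langle \pmb\phi_{j,\eps}, \bar{\pmb\chi}_k\rangle^2 = \|(I-Q_\eps)\bar{\pmb\chi}_k\|^2 = \|(P_0 - Q_\eps)\bar{\pmb\chi}_k\|^2
\end{equation*}
yields the claimed $\Xi_3\eps^2 + \mcl{O}(\eps^3)$ bound.

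The main obstacle is obtaining the sharp $\eps^2$ rate in part (iii). A naive quadratic form computation $\langle \bar{\pmb\chi}_k, (C_{\tau,\eps}^{-1}-I)\bar{\pmb\chi}_k\rangle$ only produces $\mcl{O}(\eps)$ because $\langle \bar{\pmb\chi}_k, L_\eps \bar{\pmb\chi}_k\rangle$ is itself $\mcl{O}(\eps)$ (edges crossing clusters contribute at order $\eps$). The spectral-projector route, which exploits the \emph{exact} invariance $\bar{\pmb\chi}_k = P_0\bar{\pmb\chi}_k$, is what kills the first-order contribution and delivers the quadratic rate; careful bookkeeping of the Davis-Kahan bound and of the higher-order terms in the expansion of $(L_\eps+\tau^2 I)^\alpha$ is the main technical work.
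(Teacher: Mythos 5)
Parts (i) and (ii) of your proposal coincide with the paper's proof: part (i) uses the null vector $D_\eps^p\mbf 1$ of $L_\eps$, and part (ii) uses Courant--Fischer on $\mathrm{span}\{\bar{\pmb\chi}_j\}_{j=1}^k$ together with the expansion \eqref{L-eps-power-expansion}, exactly as you describe.

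For part (iii) the paper takes a more elementary route than Davis--Kahan: since $L_0\bar{\pmb\chi}_k=0$, the \emph{second} moment $\|L_\eps\bar{\pmb\chi}_k\|^2=\sum_j\langle\bar{\pmb\chi}_k,\pmb\phi_{j,\eps}\rangle^2\sigma_{j,\eps}^2$ equals $\eps^2\|L^{(1)}\|_2^2+\mcl O(\eps^4)$, and a Chebyshev-type estimate using $\sigma_{j,\eps}\ge\vartheta$ for $j>K$ bounds the residual mass $\sum_{j>K}\langle\bar{\pmb\chi}_k,\pmb\phi_{j,\eps}\rangle^2$ by $\vartheta^{-2}\eps^2\|L^{(1)}\|_2^2+\mcl O(\eps^3)$, with a constant manifestly free of $\tau$. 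Your $\sin\Theta$ argument exploits the same mechanism (the exact invariance $P_0\bar{\pmb\chi}_k=\bar{\pmb\chi}_k$ kills the first-order contribution), and your identity $1-\sum_{j\le K}\langle\pmb\phi_{j,\eps},\bar{\pmb\chi}_k\rangle^2=\|(I-Q_\eps)\bar{\pmb\chi}_k\|^2$ is correct. However, there is one concrete flaw: applying Davis--Kahan to $C_{\tau,\eps}^{-1}-C_{\tau,0}^{-1}$ gives a perturbation whose norm is of order $\eps\,\tau^{-2}$ rather than $\eps$ (for $\alpha=1$ the difference is exactly $\tau^{-2}(L_\eps-L_0)$, and the $\tau$-dependence worsens for larger $\alpha$), so the constant hidden in your ``$\|C_{\tau,\eps}^{-1}-C_{\tau,0}^{-1}\|_2=\mcl O(\eps)$'' degenerates as $\tau\downarrow 0$. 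The resulting $\Xi_3$ would then depend on $\tau$, contradicting the statement and breaking the downstream use of \eqref{low-lying-span} in Proposition~\ref{geometry-of-greens-functions}(b), where the regime $\eps=\Theta(\tau^2)$ requires a $\tau$-uniform $\mcl O(\eps^2)$ eigenvector error. The repair is immediate: apply Davis--Kahan to $L_\eps$ versus $L_0$ (which share their eigenvectors with the covariances), using the Laplacian spectral gap $\sigma_{K+1,\eps}\ge\vartheta$ supplied by Proposition~\ref{spectral-gap-existence}; then $\|L_\eps-L_0\|_2=\mcl O(\eps)$ with a $\tau$-free constant and the claimed rate follows. Note that the paper itself quietly performs the analogous switch, replacing the stated hypothesis on $\lambda_{K+1,\eps}$ by the assumption $\sigma_{K+1,\eps}\ge\vartheta$ inside its proof.
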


\begin{proof}
  (i) Follows from the fact  that  $L_\eps$ is a graph Laplacian operator with first
  eigenvalue $\sigma_{1, \eps} = 0$ and first eigenvector 
  $\pmb \phi_{1, \eps} =  D_\eps^p \mbf 1/ \| D_\eps^p \mbf 1\|$.

  (ii) Let $\sigma_{j,\eps}$ for $j=1,\cdots, N$ denote the eigenvalues of $L_\eps$. 
  By the min-max principle  (see for example \cite[Thm.~8.1.2]{golub2012matrix}, also known as Courant--Fisher theorem)
  \begin{equation}
    \label{min-max-principle}
    \sigma_{k,\eps} = \min_{ \mbb U \in \mbb V_k} \max_{\stackrel{\mbf x \in \mbb U}{\| \mbf x\| = 1}}
    \langle \mbf x, L_{\eps} \mbf x \rangle ,
  \end{equation}
  where $\mbb V_k$ is the set of all $k$-dimensional subsets in $\mbb R^N$.
  Now for $k \le K$ take 
  %$\mbb U = \text{span} \{ \tilde{ \pmb \chi}_j\}_{j=1}^k$.
  $\mbb U = \text{span} \{ \bar{ \pmb \chi}_j\}_{j=1}^k$.
  Since
  the vectors  
  %$\tilde{ \pmb \chi}_j$ 
  $\bar{ \pmb \chi}_j$ 
  are
  by definition orthonormal then $\mbb U$ is a $k$-dimensional subspace of $\mbb R^N$.
  These vectors are also in the null space of  $L_0$ and
  so we have for $i,j\in \{ 1, \dots, K\}$
  \begin{equation}\label{chi-rayleigh-quotient-calculation}
    \begin{aligned}
    %\left\langle \tilde{ \pmb \chi}_i ,   L_\eps  \tilde{ \pmb \chi}_j
    %\right \rangle
    %&= \sum_{h = 1}^\infty \eps^h \langle \tilde{ \pmb \chi}_i, L_h  \tilde{ \pmb \chi}_j \rangle
    % \le \sum_{h =1}^\infty \eps^h \| L_h \|_2\\
     \left\langle \bar{ \pmb \chi}_i ,   L_\eps  \bar{ \pmb \chi}_j
    \right \rangle 
    &= \sum_{h = 1}^\infty \eps^h \langle \bar{ \pmb \chi}_i, L^{(h)}  \bar{ \pmb \chi}_j \rangle
     \le \sum_{h =1}^\infty \eps^h \| L^{(h)} \|_2  \\
     &= \eps\|L^{(1)}\|_2  + \eps^2\left(\|L^{(2)}\|_2  + \left( \sup_{h= 2,3, \cdots}  \| L^{(h)} \|_2\right) \frac{\eps}{1-\eps}\right)\,.
  \end{aligned}
\end{equation}
We can now generalize this bound to all unit vectors $\mbf x \in \mbb U$
 to get 
\begin{equation*}
  \left\langle \mbf x , L_\eps \mbf x \right\rangle \le \Xi_1\eps +\Xi_2\eps^2
   \end{equation*}
   where
$$
\Xi_1:=K^2\|L^{(1)}\|_2\,,\qquad
\Xi_2:=  \frac{K^2}{(1- \eps_0)}\left( \sup_{h= 2,3, \cdots}  \| L^{(h)} \|_2\right) \,. 
$$
   From \eqref{min-max-principle} we now infer that
   \begin{equation}\label{bound-on-low-lying-sigma-eps}
     \sigma_{k,\eps} \le   \Xi_1\eps +\Xi_2\eps^2\,, \qquad \forall  k \in\{ 2, \dots, K\}.
   \end{equation}
   Then  \eqref{low-lying-eigenvalue-rate}  follows by noting that
   $\lambda_{k, \eps} =  \tau^{-2\alpha}( \sigma_{k,\eps} + \tau^2)^\alpha$.

   (iii) Using the fact that $L_0\bar{\pmb \chi}_k = 0$ for $k=1,\dots, K$  we can write
   \begin{align*}
     \left \| L_\eps\bar{ \pmb \chi}_k \right\|^2
     & = \left\langle L_\eps \bar{\pmb \chi}_k ,   L_\eps \bar{\pmb \chi}_k
       \right \rangle 
    %  & =  \langle L_0 \pmb \chi_k, L_\eps \pmb \chi_k \rangle
    %   + \sum_{j=1}^\infty \eps^j \langle L_j \pmb \chi_k, L_\eps \pmb \chi_k \rangle \\
      =  
       \sum_{h=1}^\infty \sum_{\ell =1}^\infty
       \eps^h \eps^\ell  \langle L^{(h)} \bar{\pmb \chi}_k, L^{(\ell)}\bar{\pmb \chi}_k\rangle \\
    %  & \le \eps^2 \| L_1\|_2^2 + \left(\sup_{\ell=2, 3, \cdots} \|L_\ell\|_2^2\right)\left(\sum_{h=2}^\infty \eps^{h}\right)^2 \,.
        & \le \eps^2 \| L^{(1)}\|_2^2 + \left(\sup_{\ell=2, 3, \cdots} \|L^{(\ell)}\|_2^2\right)\left(\frac{\eps^2}{1-\eps}\right)^2 \,.  
   \end{align*}
   Now let $\bar{ \pmb \chi}_k = \sum_{j=1}^N q_{kj} \pmb \phi_{j,\eps}$ and
   assume $\sigma_{K+1, \eps} \ge \vartheta$. Note that $q_{kj}\le 1$ for all $k\in \{1,...,K\}$ and $j\in Z$ since $\bar{ \pmb \chi}_k$ is normalized. Then the above calculation yields  
    \begin{equation*}
       \left\langle L_\eps \bar{ \pmb \chi}_k ,   L_\eps  \bar{ \pmb \chi}_k
       \right \rangle
        = \sum_{j=1}^N q_{kj}^2 \sigma_{j,\eps}^2  \le \eps^2 \| L^{(1)}\|_2^2  + \mcl{O}(\eps^4).
      \end{equation*}
      From this it follows that
      \begin{align*}
        \vartheta^2 \left( 1 - \sum_{j=1}^K q_{kj}^2 \right) & = \vartheta^2 \sum_{j=K+1}^Nq_{kj}^2
        \le \sum_{j=K+1}^N q_{kj}^2 \sigma_{j,\eps}^2 \\
        &\le \eps^2  \| L^{(1)} \|_2^2
        - \sum_{j=1}^Kq_{kj}^2 \sigma_{j,\eps}^2
        + \mcl O(\eps^4).
      \end{align*}
      Now using \eqref{bound-on-low-lying-sigma-eps} we obtain 
      \begin{align*}
        \left|1 - \sum_{j=1}^K\left\langle \pmb \phi_{j,\eps}, \bar{ \pmb \chi}_k  \right\rangle^2 \right|
        & = \left| 1 - \sum_{j=1}^Kq_{kj}^2 \right|\\
        &\le \frac{1}{\vartheta^2} \left( \eps^2 \| L^{(1)}\|_2^2
          + \sum_{j=1}^K \sigma_{j,\eps}^2 + \mcl O (\eps^4)) \right)\\
        &\le \Xi_3 \eps^2 + \mcl O(\eps^3).
      \end{align*}
      The desired result follows since $C_{\eps,\tau}^{-1}$ has the same eigenfunctions
      as $L_\eps$. 
    \end{proof}

    The result in part (iii) of Proposition~\ref{low-lying-spectrum-of-C} is central to the
    rest of our arguments
    as it states that the eigenvectors $\{\pmb \phi_{j,\eps}\}_{j=1}^K$ and
    the  functions $\{ \bar{ \pmb \chi}_j  \}_{j=1}^K$ have nearly
    the same span for small $\eps$ provided that $L_{\eps}$ has a uniform spectral gap
    between $\sigma_{K, \eps}$ and $\sigma_{K+1, \eps}$. We now show that this condition
    is satisfied under very general conditions.

% \as{Thanks for change below. I think it might be clearer if we used a different letter than 
% $\theta$ in statement and proof of previous result because that $\theta$ is not the same as the one below, 
% if I understand correctly (though closely related).}

\begin{proposition}[Existence of spectral gaps]\label{spectral-gap-existence}
  Suppose Assumptions~\ref{assumptions-on-G-0} and \ref{assumptions-on-G-eps} are satisfied.
  Then 
   \begin{equation*}
     \sigma_{K+1, \eps} \ge \theta - \sum_{h=1}^\infty \eps^h \| L^{(h)}\|_2 \qquad
     \text{and} \qquad
     \lambda_{K+1,\eps} \ge
     \tau^{-2\alpha} \left(\tau^2 + \theta  - \sum_{h=1}^\infty \eps^h  \| L^{(h)} \|_2  \right)^\alpha,
   \end{equation*}
   where  $\theta>0$ is the constant appearing
  in Assumption~\ref{assumptions-on-G-0}(b).
\end{proposition}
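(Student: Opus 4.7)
The plan is to derive the bound as a direct application of Weyl's inequality for Hermitian matrices, treating $L_\eps$ as a symmetric perturbation of $L_0$ and using the block-diagonal structure of $L_0$ to pin down $\sigma_{K+1,0}$. Writing $L_\eps = L_0 + E_\eps$ with $E_\eps := \sum_{h=1}^\infty \eps^h L^{(h)}$, Lemma \ref{L-eps-has-power-expansion} ensures that $\{\|L^{(h)}\|_2\}\in \ell^\infty$ and therefore
\begin{equation*}
\|E_\eps\|_2 \le \sum_{h=1}^\infty \eps^h \|L^{(h)}\|_2 < \infty
\end{equation*}
for $\eps \in (0,\eps_0)$. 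Since $p=q$ and each $W^{(h)}$ is self-adjoint by Assumption \ref{assumptions-on-G-eps}(b), both $L_0$ and $L_\eps$ are symmetric, so Weyl's inequality yields $|\sigma_{K+1,\eps} - \sigma_{K+1,0}| \le \|E_\eps\|_2$.

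Next I would lower-bound $\sigma_{K+1,0}$. By Assumption \ref{assumptions-on-G-0}(a), $L_0$ is block-diagonal, $L_0 = \text{diag}(\tL_1, \dots, \tL_K)$, so its spectrum is the disjoint union of the spectra of the $\tL_k$. Each $\tL_k$ has $\tD_k^p \mbf 1_k$ in its null space, contributing a zero eigenvalue, while on the orthogonal complement Assumption \ref{assumptions-on-G-0}(b) gives $\langle \mbf x, \tL_k \mbf x\rangle \ge \theta \|\mbf x\|^2$. Hence the $K$ smallest eigenvalues of $L_0$ are exactly zero, and $\sigma_{K+1,0} = \min_{k} \tilde\sigma_{2,k} \ge \theta$. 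Combining with Weyl's bound gives
\begin{equation*}
\sigma_{K+1,\eps} \ge \theta - \sum_{h=1}^\infty \eps^h \|L^{(h)}\|_2,
\end{equation*}
which is the first claim.

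For the second claim, observe that $C_{\tau,\eps}^{-1} = \tau^{-2\alpha}(L_\eps + \tau^2 I)^\alpha$ shares its eigenvectors with $L_\eps$, and the spectral mapping $s \mapsto \tau^{-2\alpha}(s+\tau^2)^\alpha$ is monotone increasing on $[0,\infty)$ when $\alpha, \tau^2 > 0$. Thus the ordering of the eigenvalues is preserved and $\lambda_{k,\eps} = \tau^{-2\alpha}(\sigma_{k,\eps} + \tau^2)^\alpha$ for every $k$. Inserting the lower bound just obtained for $\sigma_{K+1,\eps}$ yields the claimed lower bound on $\lambda_{K+1,\eps}$.

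There is no serious obstacle here; the argument is essentially routine matrix perturbation theory. The only point requiring care is choosing $\eps$ sufficiently small (which is already guaranteed by $\eps_0$ in Lemma \ref{L-eps-has-power-expansion}) so that the right-hand side is positive and the bound is non-vacuous; in the regime $\eps = o(\tau^2)$ used throughout the consistency analysis this reduces to $\lambda_{K+1,\eps} \gtrsim \tau^{-2\alpha}(\tau^2 + \theta/2)^\alpha$ for small $\eps$, recovering the uniform spectral gap required in the hypothesis of Proposition \ref{low-lying-spectrum-of-C}(iii).
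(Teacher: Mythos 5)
Your proof is correct, and it reaches the stated bound by a slightly different route than the paper. The paper applies the Courant--Fischer max--min principle directly to $L_\eps$, choosing the test subspace $\mbb U=\text{span}\{\bar{\pmb\chi}_j\}_{j=1}^K$ and bounding the Rayleigh quotient of $L_\eps$ on $\mbb U^\perp$ from below by $\theta-\sum_h\eps^h\|L^{(h)}\|_2$ in one step, using the block structure for the $L_0$ part and the operator-norm bound for the perturbation. You instead decouple the two ingredients: first you pin down $\sigma_{K+1,0}\ge\theta$ purely from the block-diagonal structure of $L_0$ (correctly noting that Assumption~\ref{assumptions-on-G-0}(b) forces each block's null space to be exactly $\text{span}\{\tD_k^p\mbf 1_k\}$, so $L_0$ has precisely $K$ zero eigenvalues), and then transfer to $L_\eps$ via Weyl's inequality with $\|E_\eps\|_2\le\sum_h\eps^h\|L^{(h)}\|_2$. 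Since Weyl's inequality is itself a corollary of Courant--Fischer, the two arguments are close cousins and yield the identical constant; your version is somewhat more modular (the unperturbed spectrum and the perturbation size are analyzed separately, and the same Weyl step would handle any symmetric perturbation without reference to the particular test subspace), while the paper's version has the minor advantage of reusing the exact computation $\langle\bar{\pmb\chi}_i,L_\eps\bar{\pmb\chi}_j\rangle$ already needed for the upper bounds on $\sigma_{2,\eps},\dots,\sigma_{K,\eps}$ in Proposition~\ref{low-lying-spectrum-of-C}. The passage to $\lambda_{K+1,\eps}$ via the monotone spectral map $s\mapsto\tau^{-2\alpha}(s+\tau^2)^\alpha$ is the same in both.
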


\begin{proof}
  By the max-min principle
  \begin{equation}
    \label{max-min-principle}
    \sigma_{k+1, \eps} = \max_{\mbb U \in \mbb V_{k}} \min_{\stackrel{\mbf x \bot \mbb U}{\| \mbf x \| =1}}
    \langle \mbf x , L_\eps \mbf x \rangle.
  \end{equation}
  where $\mbb V_{k}$ denotes the set of $k$-dimensional subspaces of $\mbb R^N$ and
  $\mbf x \bot \mbb U$ means the vector $\mbf x$ belongs to the orthogonal complement of $\mbb U$.

  Now take $\mbb U = \text{span} \{\bar{ \pmb \chi}_j \}_{j=1}^K$. 
  %It immediately follows that
  %any vector $\mbf x \bot
  %\mbb U$ sums up to zero over the $K$ connected components. More %precisely
  Let $\mbf x_k\in \mbb R^{N_k}$ denote the restriction of $\mbf x$ to the subset
  of indices $\tZ_k$. 
  Then 
  $\mbf x_k^T (\tD_k^p \mbf 1) = 0$, where 
  we recall $\tD_k$ is the degree matrix of the 
  $k$-th cluster $\tG_k$.
  Now 
  we can write for all $k\in\{1,...,K\}$,
  \begin{equation*}
    \begin{aligned}
      \langle \mbf x, L_\eps \mbf x \rangle & =  \langle \mbf x, L_0 \mbf x \rangle +
       \sum_{h=1}^\infty \eps^h \langle \mbf x, L^{(h)} \mbf x \rangle\\
       & = \sum_{k=1}^K \langle \mbf x_k, \tL_k \mbf x_k \rangle +
       \sum_{h=1}^\infty \eps^h \langle \mbf x, L^{(h)} \mbf x \rangle\\
       & \ge \theta \sum_{k=1}^K \langle \mbf x_k, \mbf x_k \rangle  +
       \sum_{h=1}^\infty \eps^h \langle \mbf x, L^{(h)} \mbf x \rangle\\
       & \ge \theta \| \mbf x\|^2 - \sum_{h=1}^\infty \eps^h \| L^{(h)} \| \| \mbf x\|^2\\
       & = \left(\theta - \sum_{h=1}^\infty \eps^h \| L^{(h)} \|_2 \right) \| \mbf x\|^2\,.
    \end{aligned}
  \end{equation*}
  The lower bound on $\sigma_{K+1,\eps}$ now follows from \eqref{max-min-principle}
 while the lower bound on $\lambda_{K+1,\eps}$ follows from the
  observation that $\lambda_{j,\eps} = \tau^{-2\alpha} ( \sigma_{j,\eps} + \tau^2)^\alpha$
  from the definition of $C^{-1}_{\tau,\eps}$.
  {}
\end{proof}

\begin{example}[Perturbed kernels]
  Consider a proximity graph $G$ where the weight matrix $W_0$ is given by
  \begin{equation*}
    w^{(0)}_{ij} = \kappa(x_i - x_j), \qquad i,j \in Z, 
  \end{equation*}
where $\kappa: \mbb R^N \mapsto \mbb R$ is a positive, uniformly bounded,  radially symmetric and non-increasing kernel with full support and
   $\int_{\mbb R^N} \kappa(x)^2 dx < + \infty$.
  Now let $\tilde \kappa: \mbb R^N \mapsto \mbb R$ be another radially symmetric, positive and
  uniformly  bounded function and define the perturbed   weight matrix $W_\eps$ by
  \begin{equation*}
    w^{(\eps)}_{ij} = w^{(0)}_{ij} + \eps \tilde \kappa(x_i - x_j).
  \end{equation*}
  Then the resulting perturbed graph Laplacian $L_\eps$
  satisfies the conditions of Propositions~\ref{low-lying-spectrum-of-C} and
  \ref{spectral-gap-existence}.
  As a concrete example take $\kappa(x) = \exp( -\| x\|)$ and take
  $\tilde \kappa(x) =1$. For more details on applications using this type of weight matrix, see \cite{zhu2005semi}.
\end{example}

\begin{example}[Adding a few edges]
  As another example consider a weight matrix $W_0$ of block diagonal form
  consisting of two matrices $W_+$ and $W_-$ corresponding to two disjoint
  and connected subgraphs, i.e,
  \begin{equation*}
    W_0 = {\rm diag}( W_+, W_-)
  \end{equation*}
and any pair of nodes, both in $W_+$ (resp. $W_-$) are  connected by a sequence
of edges with strictly positive weights.
  Since each subgraph is assumed to be connected then the graph $G_0$
  satisfies Assumption~\ref{assumptions-on-G-0}.
  Let $Z$ denote the collection of all nodes in the graph and select a
  subset $\tilde Z \subset Z$ such that $2 \le |\tilde Z| \le |Z|$ and
  $\tilde Z$ contains at least one point in each of the two disjoint components.
  Define the matrix
  \begin{equation*}
    w^{(1)}_{ij} =\left\{
    \begin{aligned}
      &1, \qquad i,j \in \tilde Z, i \neq j,\\
      & 0 \qquad \text{otherwise,}
    \end{aligned}\right.
\end{equation*}
and consider the perturbation
\begin{equation*}
  W_\eps = W_0 + \eps W^{(1)}.
\end{equation*}
This perturbation corresponds to weak coupling of two disjoint subgraphs by adding 
edges between subgraphs. Once again, we can directly verify that the resulting
perturbed graph Laplacian $L_\eps$ satisfies the conditions of
Propositions~\ref{low-lying-spectrum-of-C} and \ref{spectral-gap-existence}.
\end{example}

At the end of this section we use our results on the closeness of
the eigenvectors $\{ \pmb \phi_{j,\eps} \}_{j=1}^K$ and 
$\{ \bar{ \pmb \chi}_j\}_{j=1}^K$
to identify the geometry of the $\mbf c_{j,\eps}$ the columns of $C_{\eps, \tau}$.

\begin{proposition}\label{geometry-of-greens-functions}
  Suppose Assumptions~\ref{assumptions-on-G-0} and \ref{assumptions-on-G-eps} are satisfied. Then
  \begin{enumerate}[(a)]
  \item If $\epsilon = o(\tau^2)$  there exists a constant $\Xi_4 >0$ independent of
    $\epsilon$ and $\tau$ so that 
  \begin{equation*}
    \left\| \mbf c_{\ell,\eps} 
    -\left( \bar{\pmb \chi}_k \right)_\ell
    \bar{ \pmb \chi}_k\right\|^2 \le \Xi_4 \left(
      \frac{\eps^{2}}{\tau^{4}}+    \tau^{4\alpha} +  \eps^2  \right)\,, \qquad \forall\,\ell \in \tZ_k.
  \end{equation*}
  That is, the columns of $C_{\tau, \eps}$ have the same geometry as the weighted set functions $\bar{ \pmb \chi}_k$
  when  $\eps, \tau$ and $\eps/\tau^2$ are small.

\item If $\eps/\tau^2= \beta$ is constant,    
 there exists a constant $\Xi_5 >0$
  independent of  $\epsilon$ and $\tau$ so that
 \begin{equation*}
   \left\| \mbf c_{\ell,\eps} - 
   \left[  \left( 1 - \tbeta\right) (\bar{\pmb \chi})_\ell \bar{\pmb \chi} 
   + \tbeta( \bar{\pmb \chi}_k)_\ell\bar{\pmb \chi}_k
   \right]  \right\|^2
   \le \Xi_5 \left( \eps^2+\tau^{4\alpha} \right)\,,
   \qquad \forall\,\ell \in \tZ_k\,,
 \end{equation*}
 where $\tbeta= (1+\Xi_1 \beta)^{-\alpha}$ and $\Xi_1$ is the constant in \eqref{low-lying-eigenvalue-rate}.
 \end{enumerate}
\end{proposition}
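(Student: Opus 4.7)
The plan is to expand $\mbf c_{\ell,\eps}$ in the orthonormal eigenbasis of $C^{-1}_{\tau,\eps}$ via Lemma~\ref{c-j-eigen-expansion}, split the sum into a low-lying part ($k\le K$) and a tail ($k>K$), and then invoke Propositions~\ref{low-lying-spectrum-of-C} and \ref{spectral-gap-existence} to analyze each piece. Writing $P_K^\eps := \sum_{k=1}^K \pmb\phi_{k,\eps}\pmb\phi_{k,\eps}^T$ and $P_0 := \sum_{k=1}^K \bar{\pmb\chi}_k\bar{\pmb\chi}_k^T$ for the orthogonal projections onto $\mathrm{span}\{\pmb\phi_{k,\eps}\}_{k=1}^K$ and $\mathrm{span}\{\bar{\pmb\chi}_k\}_{k=1}^K$ respectively, the identity $\sum_{k=1}^K(\pmb\phi_{k,\eps})_\ell\pmb\phi_{k,\eps} = P_K^\eps\mbf e_\ell$ will be the key algebraic starting point; the fact that $\bar{\pmb\chi}_k$ have disjoint support gives the convenient reduction $P_0\mbf e_\ell = (\bar{\pmb\chi}_k)_\ell \bar{\pmb\chi}_k$ for $\ell\in\tZ_k$.

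I would first dispose of the tail: using $\lambda_{k,\eps}\ge\lambda_{K+1,\eps}$ for $k>K$ together with Proposition~\ref{spectral-gap-existence}, for $\eps$ small enough $\lambda_{K+1,\eps} \ge \tau^{-2\alpha}(\theta/2)^\alpha$, so
\[
\Bigl\|\sum_{k=K+1}^N \lambda_{k,\eps}^{-1}(\pmb\phi_{k,\eps})_\ell \pmb\phi_{k,\eps}\Bigr\|^2 \le \lambda_{K+1,\eps}^{-2}\sum_{k=K+1}^N (\pmb\phi_{k,\eps})_\ell^2 \le \lambda_{K+1,\eps}^{-2} \le \Xi\,\tau^{4\alpha}.
\]
Next, Proposition~\ref{low-lying-spectrum-of-C}(iii) combined with orthonormality gives $\|(I-P_K^\eps)\bar{\pmb\chi}_k\|^2 \le \Xi_3\eps^2 + \mcl O(\eps^3)$, which via a standard argument (e.g.\ comparing two orthonormal systems of the same cardinality in $K$-dimensional subspaces) yields $\|P_K^\eps - P_0\|_2 \le \tilde\Xi\,\eps$; hence $\|P_K^\eps\mbf e_\ell - P_0\mbf e_\ell\|^2 \le \tilde\Xi^2\eps^2$.

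For case (a), Proposition~\ref{low-lying-spectrum-of-C}(ii) gives $|\lambda_{k,\eps}^{-1}-1|\le C\eps/\tau^2$ for $k=1,\dots,K$ when $\eps=o(\tau^2)$. Writing the low-lying sum as
\[
\sum_{k=1}^K \lambda_{k,\eps}^{-1}(\pmb\phi_{k,\eps})_\ell\pmb\phi_{k,\eps} = P_K^\eps\mbf e_\ell + \sum_{k=1}^K(\lambda_{k,\eps}^{-1}-1)(\pmb\phi_{k,\eps})_\ell\pmb\phi_{k,\eps},
\]
the second term has squared norm bounded by $K(C\eps/\tau^2)^2$, and the first equals $P_0\mbf e_\ell + \mcl O(\eps) = (\bar{\pmb\chi}_k)_\ell\bar{\pmb\chi}_k + \mcl O(\eps)$ by the projection comparison. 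Combining with the tail estimate yields the bound $\Xi_4(\eps^2/\tau^4 + \tau^{4\alpha} + \eps^2)$ claimed in (a). For case (b), the crucial observation is to separate the trivial eigenpair: by Proposition~\ref{low-lying-spectrum-of-C}(i), $\lambda_{1,\eps}=1$ exactly with $\pmb\phi_{1,\eps}=\bar{\pmb\chi}$, while for $k=2,\dots,K$ a two-sided estimate $\lambda_{k,\eps}^{-1} = \tbeta + \mcl O(\eps)$ with $\tbeta=(1+\Xi_1\beta)^{-\alpha}$ is available (the upper bound of $\lambda_{k,\eps}^{-1}$ follows directly from \eqref{low-lying-eigenvalue-rate}, and the matching lower bound comes from the fact that $\eps=\beta\tau^2$ with $\beta>0$ fixed forces $\sigma_{k,\eps}\ge c\eps$ uniformly, which can be read off the min-max argument in the proof of Proposition~\ref{low-lying-spectrum-of-C}(ii)). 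Then
\[
\sum_{k=1}^K \lambda_{k,\eps}^{-1}(\pmb\phi_{k,\eps})_\ell\pmb\phi_{k,\eps} = \bar{\pmb\chi}_\ell\bar{\pmb\chi} + \tbeta\bigl(P_K^\eps\mbf e_\ell - \bar{\pmb\chi}_\ell\bar{\pmb\chi}\bigr) + \mcl O(\eps),
\]
and replacing $P_K^\eps$ by $P_0$ at cost $\mcl O(\eps)$ produces $(1-\tbeta)\bar{\pmb\chi}_\ell\bar{\pmb\chi} + \tbeta(\bar{\pmb\chi}_k)_\ell\bar{\pmb\chi}_k$. Adding the $\mcl O(\tau^{4\alpha})$ tail yields (b).

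The main obstacle I anticipate is establishing the two-sided control $|\lambda_{k,\eps}^{-1}-\tbeta|=\mcl O(\eps)$ for $k=2,\dots,K$ in case (b): the upper bound on $\lambda_{k,\eps}$ is already in \eqref{low-lying-eigenvalue-rate}, but a matching lower bound on $\sigma_{k,\eps}$ is not directly recorded, and would likely require a second-order perturbation argument using the decomposition $L_\eps = L_0 + \eps L^{(1)} + \mcl O(\eps^2)$ together with the fact that $L^{(1)}$ restricted to $\mathrm{span}\{\bar{\pmb\chi}_j\}_{j=1}^K$ induces a nontrivial symmetric operator whose second smallest eigenvalue (when restricted to the orthogonal complement of $\bar{\pmb\chi}$ within this span) is strictly positive under Assumption~\ref{assumptions-on-G-eps}. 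The algebraic bookkeeping in combining the three error sources ($\eps^2/\tau^4$ from eigenvalue perturbation in (a), $\eps^2$ from eigenvector rotation, and $\tau^{4\alpha}$ from the tail) is otherwise routine.
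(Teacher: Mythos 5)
Your proposal follows essentially the same route as the paper's proof: expand $\mbf c_{\ell,\eps}$ via Lemma~\ref{c-j-eigen-expansion}, kill the tail $j>K$ with the spectral gap of Proposition~\ref{spectral-gap-existence} (your bound $\sum_{j>K}\lambda_{j,\eps}^{-2}(\pmb\phi_{j,\eps})_\ell^2\le\lambda_{K+1,\eps}^{-2}$ is in fact slightly cleaner than the paper's $(N-K)^2\lambda_{K+1,\eps}^{-2}$), control the low-lying eigenvalue deviations $\lambda_{j,\eps}^{-1}-1$ (resp.\ $\lambda_{j,\eps}^{-1}-\tbeta$) via Proposition~\ref{low-lying-spectrum-of-C}(ii), and replace the eigenprojection $P_K^\eps$ by $P_0$ at cost $\mcl O(\eps)$ using part (iii); the paper carries out this last step with the residuals $\mbf r_k=(I-P_\eps)\tchi_k$, $\mbf s_j=(I-P_0)\pmb\phi_{j,\eps}$ rather than an operator-norm bound on $P_K^\eps-P_0$, but the content is identical. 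Part (a) is complete as you have written it.

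The obstacle you flag in part (b) is genuine, and your proposed fix does not quite close it. You need $|\lambda_{k,\eps}^{-1}-\tbeta|=\mcl O(\eps)$ for $k=2,\dots,K$ with $\tbeta=(1+\Xi_1\beta)^{-\alpha}$, where $\Xi_1$ is the \emph{specific} constant from the upper bound \eqref{low-lying-eigenvalue-rate}; a lower bound of the form $\sigma_{k,\eps}\ge c\eps$ for some unspecified $c>0$ read off the min--max argument gives only $\lambda_{k,\eps}^{-1}\le(1+c\beta)^{-\alpha}+\mcl O(\eps)$, which does not match $\tbeta$ unless $c=\Xi_1+\mcl O(\eps)$. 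What is actually required is a first-order expansion $\sigma_{k,\eps}=\Xi_1\eps+\mcl O(\eps^2)$ with the \emph{same} leading coefficient for all $k\in\{2,\dots,K\}$, which generically fails: the restriction of $L^{(1)}$ to $\mathrm{span}\{\tchi_j\}_{j=1}^K$ typically has $K-1$ distinct nonzero eigenvalues. You should be aware that the paper's own proof has exactly this weakness --- the step $\bigl|\lambda_{j,\eps}-\tbeta^{-1}\bigr|\le\bigl|(\tbeta^{-1/\alpha}+\eps\beta\Xi_2)^{\alpha}-\tbeta^{-1}\bigr|$ silently invokes a matching lower bound on $\sigma_{j,\eps}$ that is never established --- so you have correctly isolated the one non-routine point; but neither your sketch nor the paper supplies the missing second-order perturbation argument, and as stated the constant $\tbeta$ in (b) should really be read as eigenvalue-dependent unless additional structure on $L^{(1)}$ is assumed.
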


\begin{proof} Let $P_\eps \in \mbb R^{N \times N}$ denote the projection matrix  onto
    ${\rm span}\{ \pmb \phi_{j,\eps}\}_{j=1}^K$ and
    $P_0 \in \mbb R^{N \times N}$ denote the projection onto
    ${\rm span} \{ \bar{ \pmb \chi}_k\}_{k=1}^K$.
     Define the residuals
    \begin{align*}
        \mbf r_k &:= (I-P_\eps)\tchi_k
        =\bar{\pmb \chi}_k - \sum_{j=1}^K \langle
        \pmb \phi_{j,\eps}, \bar{\pmb \chi}_k \rangle  
        \pmb \phi_{j,\eps}\,,\\
         \mbf s_j &:= (I-P_0)\pmb\phi_{j,\eps}
         =\pmb\phi_{j,\eps} - \sum_{k=1}^K \langle
        \pmb \phi_{j,\eps}, \bar{\pmb \chi}_k \rangle  
         \tchi_k\,.
      \end{align*}
    By Proposition~\ref{low-lying-spectrum-of-C}(iii) there exists a uniform constant
    $\Xi_6 >0$ so that 
    \begin{equation}\label{close-subspaces-prop-display-1}
    \sum_{k=1}^K\|\mbf r_k\|^2 = \sum_{j=1}^K\|\mbf s_j\|^2
    =
    K - \sum_{k=1}^K\sum_{j=1}^K
          \left\langle \pmb \phi_{j,\eps} , \tchi_k \right\rangle^2  \le K \Xi_6 \eps^2\,.
    \end{equation}
Writing $\pmb\phi_{j,\eps}=P_0\pmb\phi_{j,\eps} + \mbf s_j$ for any $j\in\{1,...,N\}$ and using the fact that the $\{ \pmb \phi_{j,\eps}\}_{j=1}^K$ and  $\{ \tchi_k\}_{k=1}^K$ are unit vectors, we can estimate
\begin{align}\label{term3}
    \left\|\sum_{j=1}^K  \left( \pmb \phi_{j, \eps} \right)_\ell
  \pmb \phi_{j, \eps}
  - \sum_{k=1}^K\left(\tchi_k\right)_\ell \tchi_k \right\|^2
  &= \left\|\sum_{j=1}^K  \left( \pmb \phi_{j, \eps} \right)_\ell
  P_0\pmb \phi_{j, \eps}
  - \sum_{k=1}^K\left(\tchi_k\right)_\ell \tchi_k 
  + \sum_{j=1}^K  \left( \pmb \phi_{j, \eps} \right)_\ell
   \mbf s_j\right\|^2\notag\\
  &\le 2\left\|\sum_{j=1}^K  \left( \pmb \phi_{j, \eps} \right)_\ell
  P_0\pmb \phi_{j, \eps}
  - \sum_{k=1}^K\left(\tchi_k\right)_\ell \tchi_k \right\|^2
  + 2K\sum_{j=1}^K \left\|\mbf s_j\right\|^2\notag\\
 &= 2\left\|\sum_{k=1}^K \left[\left(\sum_{j=1}^K  \left( \pmb \phi_{j, \eps} \right)_\ell \langle \tchi_k,\pmb\phi_{j,\eps}\rangle\right)
  - \left(\tchi_k\right)_\ell\right] \tchi_k \right\|^2
  + 2K\sum_{j=1}^K \left\|\mbf s_j\right\|^2\notag\\
  &= 2\left\|\sum_{k=1}^K \left((P_\eps-I)\tchi_k\right)_\ell \tchi_k \right\|^2
  + 2K\sum_{j=1}^K \left\|\mbf s_j\right\|^2\notag\\ 
&\le 2K\sum_{k=1}^K \left\|(P_\eps-I)\tchi_k \right\|^2
  + 2K\sum_{j=1}^K \left\|\mbf s_j\right\|^2\notag\\ 
&= 2K\sum_{k=1}^K \left\|\mbf r_k \right\|^2
  + 2K\sum_{j=1}^K \left\|\mbf s_j\right\|^2
  \le 4K^2\Xi_6\eps^2\,,
\end{align}
where the last inequality follows from \eqref{close-subspaces-prop-display-1}.

Now recall the expansion \eqref{c-j-eigenvector-expansion} for columns of $C_{\eps,\tau}$,
\begin{equation*}
  \begin{aligned}
  \mbf c_{\ell,\eps} & = \sum_{j=1}^N \frac{1}{\lambda_{j,\eps}} \left( \pmb \phi_{j, \eps} \right)_\ell
  \pmb \phi_{j, \eps}.
\end{aligned}
\end{equation*}
For (a), we want to estimate
\begin{align}\label{eq:3terms(a)}
   \left\| \mbf c_{\ell,\eps} - \sum_{k=1}^K\left(\tchi_k\right)_\ell \tchi_k\right\|^2\notag
   & \le 3\left\|\sum_{j=K+1}^N \frac{1}{\lambda_{j,\eps}} \left( \pmb \phi_{j, \eps} \right)_\ell
  \pmb \phi_{j, \eps} \right\|^2
  + 3\left\|\sum_{j=1}^K \left(\frac{1}{\lambda_{j,\eps}} -1\right)\left( \pmb \phi_{j, \eps} \right)_\ell
  \pmb \phi_{j, \eps} \right\|^2 \\
  &+ 3\left\|\sum_{j=1}^K  \left( \pmb \phi_{j, \eps} \right)_\ell
  \pmb \phi_{j, \eps}
  - \sum_{k=1}^K\left(\tchi_k\right)_\ell \tchi_k \right\|^2
\end{align}
The last term is of order $\eps^2$ by \eqref{term3}.
The first term can be controlled using Proposition~\ref{spectral-gap-existence} together with the same bound as in \eqref{chi-rayleigh-quotient-calculation},
\begin{align*}
   & \left\|\sum_{j=K+1}^N \frac{1}{\lambda_{j,\eps}} \left( \pmb \phi_{j, \eps} \right)_\ell
  \pmb \phi_{j, \eps} \right\|^2
  \le \left(\frac{(N-K)}{\lambda_{K+1,\eps}}\right)^2\\
  &\quad\le (N-K)^2 \tau^{4\alpha} \left(\tau^2 + \theta  - \eps\|L^{(1)}\|_2 -\eps^2\left(\sup_{h\ge 2}  \| L^{(h)} \|_2 \right) \right)^{-2\alpha}\\
  &\quad\le (N-K)^2\theta^{-2\alpha} \tau^{4\alpha}
  +  \mcl O\left(\tau^{4\alpha}\left(\tau^2+\eps\right)\right)\,,
\end{align*}
   where  $\theta>0$ is the constant appearing
  in Assumption~\ref{assumptions-on-G-0}(b).
Next, when $\epsilon/ \tau^2$ is small, we can estimate the second term in the right-hand side of \eqref{eq:3terms(a)} using Proposition~\ref{low-lying-spectrum-of-C}(ii): recall that $\lambda_{1,\eps}=1$, and so
\begin{align}\label{lambdaest(a)}
    \left|\frac{1}{\lambda_{j,\eps}}-1\right|=\frac{|1-\lambda_{j,\eps}|}{|\lambda_{j,\eps}|} 
    \le |1-\lambda_{j,\eps}|
    \le \alpha \Xi_1\frac{\eps}{\tau^2} +\mcl O\left(\eps^2\tau^{-4}\right)\qquad \text{ for } j\in\{2,...,K\}\,,
\end{align}
and therefore
\begin{align*}
    \left\|\sum_{j=1}^K \left(\frac{1}{\lambda_{j,\eps}} -1\right)\left( \pmb \phi_{j, \eps} \right)_\ell
  \pmb \phi_{j, \eps} \right\|^2
  &\le K \sum_{j=1}^K \left|\frac{1}{\lambda_{j,\eps}} -1\right|^2\left|\left( \pmb \phi_{j, \eps} \right)_\ell\right|^2
  \|\pmb \phi_{j, \eps}\|^2\\
  &\le  \alpha ^2 K^2 \Xi_1^2 \left(\frac{\eps}{\tau^2}\right)^2 +\mcl O\left(\eps^3\tau^{-6}\right)\,.
\end{align*}
Putting the above estimates together, we can find a constant $\Xi_7=\Xi_7(N,K,\theta,\alpha,\eps_0)>0$ independent of $\eps$ and $\tau$ such that for all $\ell\in Z$,
\begin{align*}\label{est(a)}
   &\left\| \mbf c_{\ell,\eps} - \sum_{k=1}^K\left(\tchi_k\right)_\ell \tchi_k\right\|^2
%   %%%%%%%%%%%%%%%%%%%%%%%%
%   &\quad \le 3\left\|\sum_{j=K+1}^N \frac{1}{\lambda_{j,\eps}} \left( \pmb \phi_{j, \eps} \right)_\ell
%   \pmb \phi_{j, \eps} \right\|^2
%   + 3\left\|\sum_{j=1}^K \left(\frac{1}{\lambda_{j,\eps}} -1\right)\left( \pmb \phi_{j, \eps} \right)_\ell
%   \pmb \phi_{j, \eps} \right\|^2
%   + 3\left\|\sum_{j=1}^K  \left( \pmb \phi_{j, \eps} \right)_\ell
%   \pmb \phi_{j, \eps}
%   - \sum_{k=1}^K\left(\tchi_k\right)_\ell \tchi_k \right\|^2\\
%   %%%%%%%%%%%%%%%%%%%%%%%
%   &\quad \le 3 (N-K)^2\theta^{-2\alpha} \tau^{4\alpha}
%   +  \mcl O\left(\tau^{4\alpha}\left(\tau^2+\eps)\right)\right)
%   + 3\alpha ^2 K^2 \Xi_1^2 \left(\frac{\eps}{\tau^2}\right)^2 +\mcl O\left(\eps^3\tau^{-6}\right)
%   + 12K^2 \Xi_6 \eps^2\\
  %%%%%%%%%%%%%%%%%%%%%5
%   &\quad 
   \le \Xi_7\left( \tau^{4\alpha}  +  \left(\frac{\eps}{\tau^2}\right)^2 +\eps^2\right)
   + \mcl O\left(\tau^{4\alpha}\left(\tau^2+\eps)\right)\right)
   +\mcl O\left(\left(\frac{\eps}{\tau^{2}}\right)^3\right)\,.
\end{align*}
Finally, note that for each $\ell\in Z$, there exists a unique $k_0\in\{1,...,K\}$ such that $\ell\in \tZ_{k_0}$. Then 
$$
\sum_{k=1}^K\left(\tchi_k\right)_\ell \tchi_k
= \left(\tchi_{k_0}\right)_\ell \tchi_{k_0},
%= \left(d_{\ell}^{(0)}\right)^p\tchi_{k_0}\,,
$$
which concludes the proof of statement (a).

To prove (b), the argument is similar. For $\beta:= \eps/\tau^2$, we have
$$
\tbeta=\left(1+\beta \Xi_1\right)^{-\alpha}>0\,.
$$
Then, instead of \eqref{lambdaest(a)}, we use the bound in Proposition~\ref{low-lying-spectrum-of-C}(ii) to estimate
\begin{align*}\label{lambdaest(b)}
    \left|\frac{1}{\lambda_{j
\eps}}- \tbeta\right|
    &=\frac{|1- \tbeta\lambda_{j,\eps}|}{|\lambda_{j,\eps}|} 
    \le |1- \tbeta\lambda_{j,\eps}|
    = \tbeta\left|\lambda_{j,\eps}-\frac{1}{\tbeta}\right|\\
    &\le  \tbeta\left|\left(\tbeta^{-1/\alpha}+\eps\beta\Xi_2\right)^\alpha - \tbeta^{-1}\right|
    = \frac{\alpha\beta\Xi_2}{\left(1+\beta \Xi_1\right)}   \eps +\mcl O(\eps^2)\qquad \text{ for } j\in\{2,...,K\}.
\end{align*}
Therefore, our goal is to control
\begin{equation}\label{eq:3terms(b)}
\begin{aligned}
   \bigg\| \mbf c_{\ell,\eps} - (1-\tbeta) & \left( \pmb \phi_{1, \eps} \right)_\ell  \pmb \phi_{1, \eps}
   - \tbeta\sum_{k=1}^K\left(\tchi_k\right)_\ell \tchi_k\bigg\|^2 \\
   %%%%%%%%%%%%%%%
%   &\quad 
    & \le  3\left\|\sum_{j=K+1}^N \frac{1}{\lambda_{j,\eps}} \left( \pmb \phi_{j, \eps} \right)_\ell
  \pmb \phi_{j, \eps} \right\|^2\\
  &\quad + 3\left\|\sum_{j=1}^K \frac{1}{\lambda_{j,\eps}} \left( \pmb \phi_{j, \eps} \right)_\ell
  \pmb \phi_{j, \eps} 
  - (1-\tbeta) \left( \pmb \phi_{1, \eps} \right)_\ell  \pmb \phi_{1, \eps}
  - \tbeta\left(\sum_{j=1}^K  \left( \pmb \phi_{j, \eps} \right)_\ell
  \pmb \phi_{j, \eps}\right)
  \right\|^2 \\
  & \quad + 3\left\| \tbeta\left(\sum_{j=1}^K  \left( \pmb \phi_{j, \eps} \right)_\ell
  \pmb \phi_{j, \eps}
  - \sum_{k=1}^K\left(\tchi_k\right)_\ell \tchi_k\right) \right\|^2\\
  &
   = 3\left\|\sum_{j=K+1}^N \frac{1}{\lambda_{j,\eps}} \left( \pmb \phi_{j, \eps} \right)_\ell
  \pmb \phi_{j, \eps} \right\|^2 \\
  & \quad + 3\left\|\sum_{j=2}^K \left(\frac{1}{\lambda_{j,\eps}} - \tbeta\right)\left( \pmb \phi_{j, \eps} \right)_\ell
  \pmb \phi_{j, \eps} 
  \right\|^2 \\
  & \quad + 3\tbeta^2\left\|\sum_{j=1}^K  \left( \pmb \phi_{j, \eps} \right)_\ell
  \pmb \phi_{j, \eps}
  - \sum_{k=1}^K\left(\tchi_k\right)_\ell \tchi_k \right\|^2\,.
\end{aligned}
\end{equation}
Thanks to the previous estimate, the second term can be bounded by
\begin{align*}
    \left\|\sum_{j=2}^K \left(\frac{1}{\lambda_{j,\eps}} - \tbeta\right)\left( \pmb \phi_{j, \eps} \right)_\ell
  \pmb \phi_{j, \eps} 
  \right\|^2
  \le  (K-1)^2\left(\frac{\alpha\beta\Xi_2}{\left(1+\beta \Xi_1\right)}  \right)^2 \eps^2 +\mcl O(\eps^3)\,.
\end{align*}
The first and last terms in \eqref{eq:3terms(b)} can be estimated as for part (a), and so we can find a constant $\Xi_8=\Xi_8(N,K,\theta,\alpha,\eps_0)>0$ independent of $\eps$ and $\tau$ such that for all $\ell\in Z$,
\begin{align*}
     \bigg\| \mbf c_{\ell,\eps} - (1- \tbeta) & \left( \pmb \phi_{1, \eps} \right)_\ell  \pmb \phi_{1, \eps}
   - \tbeta\sum_{k=1}^K\left(\tchi_k\right)_\ell \tchi_k\bigg\|^2\\
   &\le \Xi_8\left( \tau^{4\alpha} +\eps^2\right)
   + \mcl O\left(\tau^{4\alpha}\left(\tau^2+\eps)\right)\right)
   +\mcl O\left(\eps^3\right)\,.
\end{align*}
From Proposition~\ref{low-lying-spectrum-of-C}(i), we know that
$\pmb \phi_{1, \eps}  =  D_\eps^p \mbf 1/ \| D_\eps^p \mbf 1\|$.
Further, using the expansion $D_\eps^p=D_0^p+\eps \hat{D}^{(p)}$ derived in the proof of Lemma~\ref{L-eps-has-power-expansion}, we can simplify the middle term in the expression on the left-hand side above,
\begin{align*}
    (1-\tbeta) \left( \pmb \phi_{1, \eps} \right)_\ell  \pmb \phi_{1, \eps}
    = (1 - \tbeta) (\bar{\pmb \chi})_\ell \bar{\pmb \chi} +\mcl O(\eps)\,.
\end{align*}
This concludes the proof of Proposition~\ref{geometry-of-greens-functions}.
\end{proof}

% \begin{corollary}\label{C-and-tilde-C-become-block-diagonal}
%   Suppose Assumption~\ref{assumptions-on-G-eps} holds. Then as
%   $\eps, \tau^2 \downarrow 0$ the matrix $C_{\eps,\tau}$ 
%   approaches a block diagonal matrix.
% \end{corollary}

\end{document}